\theoremstyle{plain}
\newtheorem{theorem}{Theorem}[section]
\newtheorem{corollary}[theorem]{Corollary}
\theoremstyle{definition}
\newtheorem{assumption}[theorem]{Assumption}
\theoremstyle{remark}
\newtheorem{remark}[theorem]{Remark}
\newcommand{\E}{\mathbb{E}}
\newcommand{\p}{\mathbb{P}}
\newcommand{\D}{\mathcal{D}}
\newcommand{\Hnull}{\D_{\mathrm{test}}^{\mathrm{null}}}
\title{Derandomized novelty detection with FDR control via conformal e-values}
\author{%
  Meshi Bashari \\
  Department of Electrical and Computer Engineering\\
  Technion IIT\\
  Haifa, Israel \\
  \texttt{meshi.b@campus.technion.ac.il} \\
  \And
  Amir Epstein \\
  Citi Innovation Lab \\
  Tel Aviv, Israel \\
  \texttt{amir.epstein@citi.com} \\
  \AND
  Yaniv Romano \\
  Department of Electrical and Computer Engineering \\
  Department of Computer Science \\
  Technion IIT\\
  Haifa, Israel \\
  \texttt{yromano@technion.ac.il} \\
  \And
  Matteo Sesia \\
  Department of Data Sciences and Operations \\
  University of Southern California \\
  Los Angeles, California, USA \\
  \texttt{sesia@marshall.usc.edu} \\
}
\begin{document}

\maketitle

\begin{abstract}
Conformal inference provides a general distribution-free method to rigorously calibrate the output of any machine learning algorithm for novelty detection. 
While this approach has many strengths, it has the limitation of being randomized, in the sense that it may lead to different results when analyzing twice the same data, and this can hinder the interpretation of any findings.
We propose to make conformal inferences more stable by leveraging suitable conformal {\em e-values} instead of {\em p-values} to quantify statistical significance. This solution allows the evidence gathered from multiple analyses of the same data to be aggregated effectively while provably controlling the false discovery rate.
Further, we show that the proposed method can reduce randomness without much loss of power compared to standard conformal inference, partly thanks to an innovative way of weighting conformal e-values based on additional side information carefully extracted from the same data. Simulations with synthetic and real data confirm this solution can be effective at eliminating random noise in the inferences obtained with state-of-the-art alternative techniques, sometimes also leading to higher power.
\end{abstract}

\section{Introduction}

\subsection{Background and motivation}

A common problem in statistics and machine learning is to determine which samples, among a collection of new observations, were drawn from the same distribution as a reference data set \citep{wilks1963multivariate,riani2009finding,chandola2009anomaly}.
This task is known as {\em novelty detection, out-of-distribution testing, or testing for outliers}, and it arises in numerous applications
within science, engineering, and business, including for example in the context of medical diagnostics~\citep{tarassenko1995novelty}, security monitoring~\citep{zhang2013medmon}, and fraud detection~\citep{ahmed2016survey}.
This paper looks at the problem from a model-free perspective, in the sense that it does not rely on parametric assumptions about the data-generating distributions, which are generally unknown and complex.
Instead, we apply powerful machine learning models for one-class \citep{moya1993one} or binary classification to score the new samples based on how they {\em conform} to patterns observed in the reference data, and then we translate such scores into rigorous tests using conformal inference.

Conformal inference \citep{vovk2005algorithmic,lei2013distribution} provides flexible tools for extracting provably valid novelty detection tests from any {\em black-box} model.
The simplest implementation is based on random sample splitting. This consists of training a classifier on a subset of the reference data,
and then ranking the output score for each test point against the corresponding scores evaluated out-of-sample for the hold-out reference data.
As the latter do not contain outliers, the aforementioned rank is uniformly distributed under the null hypothesis that the test point is not an outlier \citep{laxhammar2015inductive,smith2015conformal,guan2019prediction}, as long as some relatively mild {\em exchangeability} assumptions hold.
In other words, this calibration procedure yields a conformal {\em p-value} that can be utilized to test for outliers while rigorously controlling the probability of making a {\em false discovery}---incorrectly labeling an {\em inlier} data point as an ``outlier".
Further, split-conformal inference produces only weakly dependent p-values for different test points \citep{conformal-p-values}, allowing exact control of the expected proportion of false discoveries---the {\em false discovery rate} (FDR)---with the powerful Benjamini-Hochberg (BH) filter \citep{BH}.

As visualized in Figure~\ref{fig:illustration-oc-conformal}, a limitation of split-conformal inference is that it is randomized---its results for a given data set are unpredictable because they depend on how the reference samples are divided between the training and calibration subsets.
However, higher stability is desirable in practice, as randomized methods generally tend to be less reliable and more difficult to interpret~\citep{murdoch2019definitions,yu2020}.
This paper addresses the problem of making conformal inferences more stable by developing a principled method to powerfully aggregate tests for outliers obtained with repeated splits of the same data set, while retaining provable control of the FDR.
This problem is challenging because dependent p-values for the same hypothesis are difficult to aggregate without incurring into a significant loss of power \citep{vovk2020combining,vovk2022admissible}. 
\begin{figure}[!htb]
    \centering
    \begin{subfigure}[t]{0.35\textwidth}
\includegraphics[width=\textwidth]{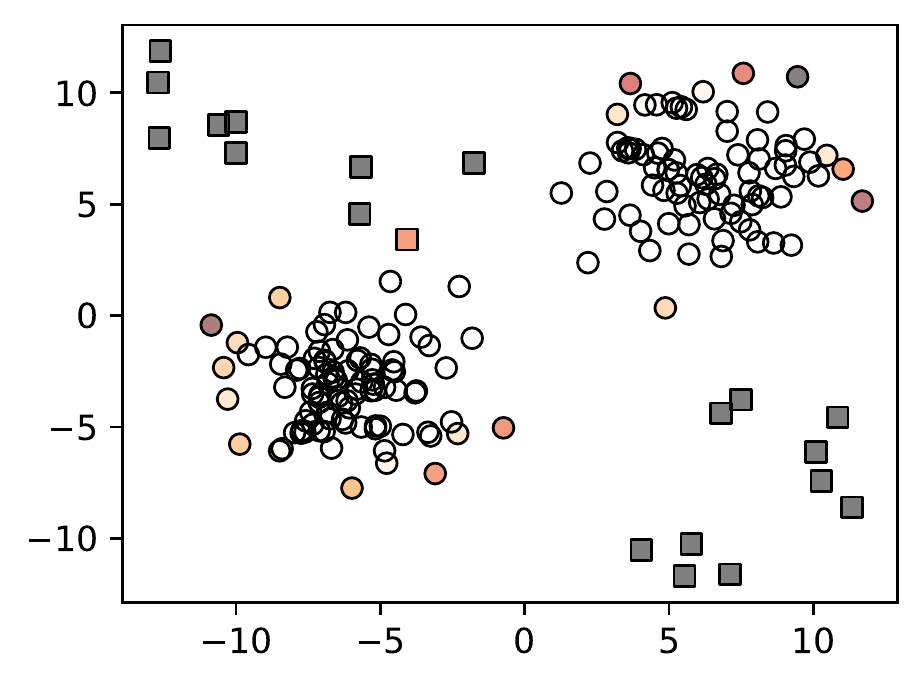}
    \caption{\texttt{Standard conformal.}}
    \label{fig:illustration-oc-conformal}
    \end{subfigure}
    \begin{subfigure}[t]{0.35\textwidth}
\includegraphics[width=\textwidth]{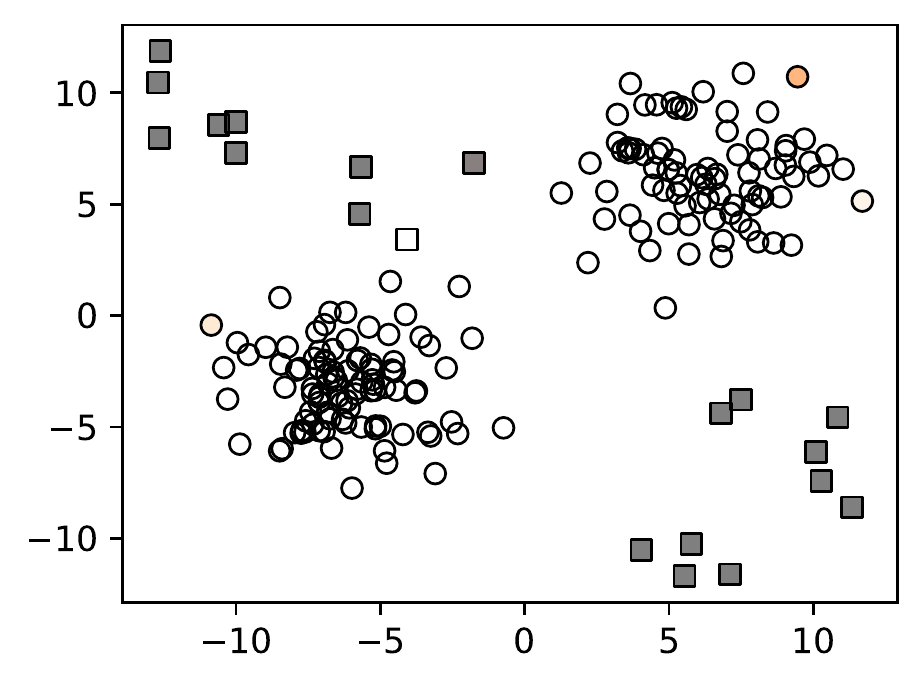}
    \caption{\texttt{derandomized conformal.}}
    \label{fig:illustration-e-oc-conformal}
    \end{subfigure}
    \centering
    \includegraphics[width=0.16\textwidth]{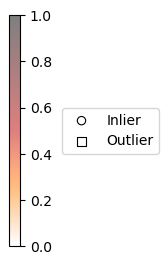}
    \caption{Demonstration on two-dimensional synthetic data of standard conformal (a) and derandomized conformal (b) inferences for novelty detection. Circles denote true inliers and squares denote outliers. The colors indicate how often each test point is reported as a possible outlier over 100 independent analyses of the same data. By carefully aggregating evidence from 10 distinct analyses based on independent splits of the same data, the proposed derandomized approach discovers the same outliers consistently and is less likely to make random false discoveries.
}
    \label{fig:illustration}
\end{figure}

\subsection{Main contributions}

This paper utilizes carefully constructed conformal {\em e-values} \citep{e-value} instead of {\em p-values} to quantify statistical significance when testing for outliers under FDR control.
The advantage of e-values is that they make it possible to aggregate the results of mutually dependent tests in a relatively simple way, enabling an effective approach to derandomize conformal inferences.
Our contribution is to develop a martingale-based method inspired by \citet{drand-kn} that leverages e-value ideas efficiently, as different types of e-values can be constructed but not all would be powerful in our context due to the discrete nature of the statistical evidence in conformal inference. 
We further refine this method and boost power by adaptively weighting our conformal e-values based on an estimate of the out-of-sample accuracy of each underlying machine learning model.
A preview of the performance of our solution is given by Figure~\ref{fig:illustration-e-oc-conformal}, which shows that our method can achieve power comparable to that of standard conformal inferences while mitigating the algorithmic randomness.


\subsection{Related work}

This paper builds upon {\em e-values} \citep{e-value}: quantitative measures of statistical evidence, alternative to p-values, that lend themselves well to the derandomization of data-splitting procedures and to FDR control under dependence \citep{eBH}.
There exist several generic methods for converting any p-value into an e-value \citep{e-value}. While those {\em p-to-e calibrators} could be applied for our novelty detection problem, their power turns out to be often quite low due to the fact that conformal p-values are discrete and cannot take very small values unless the sample size is extremely large; see the Supplementary Section \ref{app:baselines} for more details.

Therefore, we propose a novel construction of (slightly generalized) e-values inspired by the work of \citet{drand-kn} on the derandomization of the knockoff filter \citep{barber2015controlling}, which focused on a completely different high-dimensional variable selection problem.
A different approach for producing e-values in the context of conformal inference can also be found in  \citet{ignatiadis2023evalues}, although the latter did not focus on derandomization. Our approach differs from that of \citet{ignatiadis2023evalues} because we construct e-values simultaneously for the whole test set, aiming to control the FDR, instead of operating one test point at a time. Simulations show that our approach tends to yield higher power, especially if the test data contain many outliers.

Our second novelty consists of developing a principled method for assigning data-driven weights to conformal e-values obtained from different machine learning models, in such a way as to further boost power.
This solution re-purposes {\em transductive} \citep{vovk2013transductive} conformal inference ideas to leverage information contained in the test data themselves while calibrating the conformal inferences, increasing the power to detect outliers similarly to \citet{ml-fdr} and \citet{liang2022integrative}.

While this paper focuses on derandomizing split-conformal inferences, there exist other distribution-free methods that can provide finite-sample tests for novelty detection, such as full-conformal inference \citep{vovk2005algorithmic} and cross-validation+ \citep{barber2019predictive}.
Those techniques are more computationally expensive but have the advantage of yielding relatively more stable conformal p-values because they do not rely on a single random data split. However, full-conformal inference and cross-validation+ also produce conformal p-values with more complicated dependencies, which make it difficult to control the FDR without large losses in power \citep{benjamini2001control} or very expensive computations \citep{fithian2022conditional,liang2022integrative}.

Finally, prior works studied how to stabilize conformal predictors by calibrating the output of an {\em ensemble} of simpler models \citep{lofstrom2013effective,beganovic2018ensemble,linusson2020efficient,kim2020predictive,gupta2022nested}. However, we consider a distinct problem as we focus on derandomizing conformal novelty detection methods while controlling the FDR.

\section{Relevant technical background}

\subsection{Notation and problem setup}
\label{sec:notations}
Consider $n$ observations, $X_i\in \mathbb{R}^d$, sampled exchangeably (or, for simplicity, independent and identically distributed) from some unknown distribution $P_0$, for all $i \in \mathcal{D} = [n] = \{1,\ldots,n\}$.
Then, imagine observing a test set of $n_{\text{test}}$ ``unlabeled'' samples $X_j \in \mathbb{R}^d$.
The problem is to test, for each $j \in \D_{\mathrm{test}} = [n+n_{\text{test}}] \setminus [n]$, the {\em null hypothesis} that $X_j$ is also an {\em inlier}, in the sense that it was randomly sampled from $P_0$ exchangeably with the data in $\mathcal{D}$. We refer to a rejection of this null hypothesis as the {\em discovery} that $X_j$ is an {\em outlier}, and we indicate the set of true inlier test points as $\D^{\mathrm{null}}_{\mathrm{test}}$, with $n^{\mathrm{null}}_{\mathrm{test}}=|\D^{\mathrm{null}}_{\mathrm{test}}|$.
For each $j \in \D_{\mathrm{test}}$, define $R_j$ as the binary indicator of whether $X_j$ is labeled by our method as an outlier. Then, the goal is to discover as many true outliers as possible while controlling the FDR, defined as $\mathrm{FDR}=\mathbb{E}[(\sum_{j \in \D^{\mathrm{null}}_{\mathrm{test}}} R_j) / \max\{1, \sum_{j \in \D_{\mathrm{test}}} R_j\}]$.

\subsection{Review of FDR control with conformal p-values} \label{sec:fdr-conformal}

After randomly partitioning $\D$ into two disjoint subsets $\D_{\mathrm{train}}$ and $\D_{\mathrm{cal}}$, of cardinality $n_{\mathrm{train}}$ and $n_{\mathrm{cal}}=n-n_{\mathrm{train}}$ respectively, the standard approach for computing split-conformal p-values begins by training a one-class classification model on the data indexed by $\D_{\mathrm{train}}$. This model is applied out-of-sample to compute conformity scores $\hat{S}_i$ and $\hat{S}_j$ for all calibration and test points $i \in \D_{\mathrm{cal}}$ and $j \in \D_{\mathrm{test}}$, with the convention that larger scores suggest evidence of an outlier.
Assuming without loss of generality that all scores take distinct values (otherwise, ties can be broken at random by adding a little noise), a conformal p-value $\hat{u}(X_j)$ for each $j \in \D_{\mathrm{test}}$ is then calculated by taking the relative rank of $\hat{S}_j$ among the $\hat{S}_i$ for all $i \in \D_{\mathrm{cal}}$: $\hat{u}(X_j) = ( 1 + \sum_{i \in \D_{\mathrm{cal}}} \mathbb{I}\{ \hat{S}_j \leq \hat{S}_i \})  / (1+n_{\mathrm{cal}})$.
If the null hypothesis for $X_j$ is true, $\hat{S}_j$ is exchangeable with $\hat{S}_i$ for all $i \in \D_{\mathrm{cal}}$, and $\hat{u}(X_j)$ is uniformly distributed on $\{1/(1+n_{\mathrm{cal}}), 2/(1+n_{\mathrm{cal}}), \ldots, 1\}$. Since this distribution is stochastically larger than the continuous uniform distribution on $[0,1]$,  one can say that $\hat{u}(X_j)$ is a valid conformal p-value.
Note however that the p-values $\hat{u}(X_j)$ and $\hat{u}(X_{j'})$ for two different test points $j,j' \in \D_{\mathrm{test}}$ are not independent of one another, even conditional on $\D_{\mathrm{train}}$, because they share the same calibration data.

Despite their mutual dependence, conformal p-values can be utilized within the BH filter to simultaneously probe the $n_{\text{test}}$ hypotheses for all test points while controlling the FDR.
A convenient way to explain the BH filter is as follows \citep{storey2002direct}. Imagine rejecting the null hypothesis for all test points $j$ with $\hat{u}(X_j) \leq s$, for some threshold $s \in [0,1]$.
By monotonicity of $\hat{u}(X_j)$, this amounts to rejecting the null hypothesis for all test points $j$ with $\hat{S}_j \geq t$, for some appropriate threshold $t \in \mathbb{R}$.
An intuitive estimate of the proportion of false discoveries incurred by this rule is:
\begin{align}
\label{eq:fdp_hat-classic}
    \widehat{\text{FDP}}(t)
    = \frac{ n_{\mathrm{test}} }{1+n_{\mathrm{cal}} } \cdot
        \frac{ 1 + \sum_{i\in \D_{\mathrm{cal}}} \mathbb{I} \{ \hat{S}_i \geq t\} }{
    \sum_{j\in \D_{\mathrm{test}}} \mathbb{I} \{ \hat{S}_j \geq t\}}.
\end{align}
This can be understood by noting that $\sum_{j\in \D_{\mathrm{test}}} \mathbb{I} \{ S_j^{(k)}\geq t \}$ is the total number of discoveries, while the numerator should behave similarly to the (latent) number of false discoveries in $\D_{\mathrm{test}}$ due to the exchangeability of $\hat{S}_i$ and $\hat{S}_j$ under the null hypothesis.
With this notation, it can be shown that the BH filter applied at level $\alpha \in (0,1)$ computes an adaptive threshold
\begin{align}
\label{eq:threshold-classic}
\hat{t}^{\mathrm{BH}} = \min \left\{ t\in \{\hat{S}_i\}_{i \in \mathcal{D}_{\mathrm{cal}} \cup \mathcal{D}_{\mathrm{test}}} : \widehat{\text{FDP}}(t) \leq \alpha \right\},
\end{align}
and rejects all null hypotheses $j$ with $\hat{S}_j \geq \hat{t}^{\mathrm{BH}}$; see \citet{fairness} for a derivation of this connection.
This procedure was proved by \citet{conformal-p-values} to control the FDR below $\alpha$.

\subsection{Review of FDR control with \texttt{AdaDetect}}  \label{sec:adadetect}

Recently, \citet{ml-fdr} proposed \texttt{AdaDetect}, a more sophisticated version of the method reviewed in Section~\ref{sec:fdr-conformal}.
The main innovation of \texttt{AdaDetect} is that it leverages a binary classification model instead of a one-class classifier. In particular, \texttt{AdaDetect} trains a binary classifier to distinguish the inlier data in $\D_{\mathrm{train}}$ from the mixture of inliers and outliers contained in the union of $\D_{\mathrm{cal}}$ and $\D_{\mathrm{test}}$.
The key idea to achieve FDR control is that the training process should remain invariant to permutations of the calibration and test samples.
While the true inlier or outlier nature of the observations in $\D_{\mathrm{test}}$ is obviously unknown at training time, \texttt{AdaDetect} can still extract some useful information from the test data which would otherwise be ignored by the more traditional split-conformal approach reviewed in Section~\ref{sec:fdr-conformal}. In particular, \texttt{AdaDetect} can leverage the test data to automatically tune any desired model hyper-parameters in order to approximately maximize the number of discoveries.
A similar idea also motivates the alternative method of {\em integrative} conformal p-values proposed by \citet{liang2022integrative}, although the latter requires the additional assumption that some labeled outlier data are available, and is therefore not discussed in equal detail within this paper.

Despite a more sophisticated use of the available data compared to the split-conformal method reviewed in Section~\ref{sec:fdr-conformal}, \texttt{AdaDetect} still suffers from the same limitation that it must calibrate its inferences based on a single random data subset $\D_{\mathrm{cal}}$, and thus its results remain aleatory.
For simplicity, Section~\ref{sec:weighted-agg-conformal-e-values} begins by explaining how to derandomize standard split-conformal inferences; then, the proposed method will be easily extended in Section~\ref{sec:ada-training} to derandomize \texttt{AdaDetect}.

\section{Method}

\subsection{Derandomizing split-conformal inferences}
\label{sec:weighted-agg-conformal-e-values}

Consider $K \geq 1$ repetitions of the split-conformal analysis reviewed in Section~\ref{sec:fdr-conformal}, each starting with an independent split of the same reference data into $\D_{\mathrm{train}}^{(k)}$ and $\D_{\mathrm{cal}}^{(k)}$.
For each repetition $k \in [K]$, after training the machine learning model on $\D_{\mathrm{train}}^{(k)}$ and computing conformity scores on $\D_{\mathrm{cal}}^{(k)}$ and $\D_{\mathrm{test}}$, one can estimate the false discovery proportion corresponding to the rejection of all test points with scores above a fixed rejection threshold $t \in \mathbb{R}$, similarly to~\eqref{eq:fdp_hat-classic}, with:
\begin{equation}
\label{eq:fdp_hat}
    \widehat{\text{FDP}}^{(k)}(t)
    = \frac{ n_{\mathrm{test}} }{n_{\mathrm{cal}} } \cdot
        \frac{ \sum_{i\in \D_{\mathrm{cal}}^{(k)}} \mathbb{I} \{ \hat{S}_i^{(k)} \geq t \} }{
    \sum_{j\in \D_{\mathrm{test}}} \mathbb{I} \{ \hat{S}_j^{(k)}\geq t \}}.
\end{equation}
Note that the estimate in~\eqref{eq:fdp_hat} differs slightly from that in~\eqref{eq:fdp_hat-classic} as it lacks the ``+1'' constant term in the numerator and denominator.
While it is possible to include the ``+1'' terms in~\eqref{eq:fdp_hat}, this is not needed by our theory and we have observed that it often makes our method unnecessarily conservative.
For any fixed $\alpha_{\mathrm{bh}} \in (0,1)$, let $\hat{t}^{(k)}$ be the corresponding BH threshold~\eqref{eq:threshold-classic} at the nominal FDR level $\alpha_{\mathrm{bh}}$:
\begin{equation}
\label{eq:threshold}
\hat{t}^{(k)} = \min \{ t\in \tilde{\mathcal{D}}^{(k)}_{\mathrm{cal-test}} : \widehat{\text{FDP}}^{(k)}(t) \leq \alpha_{\mathrm{bh}} \},
\end{equation}
where $\tilde{\mathcal{D}}^{(k)}_{\mathrm{cal-test}} = \{ \hat{S}_{i}^{(k)} \}_{i \in \D_{\mathrm{test}} \cup \D_{\mathrm{cal}}^{(k)} }$.
For each test point $j \in \D_{\mathrm{test}}$, define the following rescaled indicator of whether $\hat{S}_j^{(k)}$ exceeds $\hat{t}^{(k)}$:
\begin{align} \label{e-value}
    e_j^{(k)}
        = (1 + n_{\mathrm{cal}}) \cdot
          \frac{\mathbb{I}\{ \hat{S}_j^{(k)}\geq \hat{t}^{(k)}\} }
               {1+\sum_{i\in \D_{\mathrm{cal}}^{(k)}} \mathbb{I} \{ \hat{S}_i^{(k)}\geq \hat{t}^{(k)} \} }.
\end{align}
Intuitively, this quantifies not only whether the $j$-th null hypothesis would be rejected by the BH filter at the nominal FDR level $\alpha_{\mathrm{bh}}$, but also how extreme $\hat{S}_j^{(k)}$ is relative to the calibration scores. In other words, a large $e_j^{(k)}$ suggests that the test point may be an outlier, where this variable can take any of the following values: 0, 1, $(1+n_{\mathrm{cal}})/n_{\mathrm{cal}}$, $(1+n_{\mathrm{cal}})/(n_{\mathrm{cal}}-1)$, \dots, $(1+n_{\mathrm{cal}})$. This approach, inspired by \citet{drand-kn}, is not the only possible way of constructing e-values to derandomize conformal inferences, as discussed in Supplementary Section \ref{app:baselines}. However, we will show that it works well in practice and it typically achieves higher power compared to standard p-to-e calibrators \citep{e-value} applied to conformal p-values.
This advantage partly derives from the fact that~\eqref{e-value} can gather strength from many different test points, and partly from the fact that it is not a proper e-value according to the original definition of \citet{e-value}, in the sense that its expected value may be larger than one even if $X_j$ is an inlier.
Instead, we will show that our e-values satisfy a relaxed {\em average validity} property \citep{drand-kn} that is sufficient to guarantee FDR control while allowing more numerous discoveries.

After evaluating~\eqref{e-value} for all $j \in \D_{\mathrm{test}}$ and all $k \in [K]$, we aggregate the evidence against the $j$-th null hypothesis into a single statistic $\bar{e}_j$ by taking a weighted average:
$$
\bar{e}_j = \sum_{k=1}^K w^{(k)} e_j^{(k)}, \qquad \sum_{k=1}^{K} w^{(k)} = 1,
$$
based on some appropriate normalized weights $w^{(k)}$.
Intuitively, the role of $w^{(k)}$ is to allow for the possibility that the machine learning models based on different realizations of the training subset may not all be equally powerful at separating inliers from outliers.
In the remainder of this section, we will take these weights to be known a-priori for all $k \in [K]$, thus representing relevant {\em side information}; e.g., in the sense of \citet{genovese2006false} and \citet{ren2020knockoffs}.
For simplicity, one may think for the time being of trivial uninformative weights $w^{(k)} = 1/K$.
Of course, it would be preferable to allow these weights to be data-driven, but such an extension is deferred to Section~\ref{sec:weights} for conciseness.

Having calculated aggregate {\em e-values} $\bar{e}_j$ with the procedure described above, which is outlined by Algorithm~\ref{drand-e-value} in the Supplementary Material, our method rejects the null hypothesis for all $j\in \D_{\mathrm{test}}$ whose $\bar{e}_j$ is greater than an adaptive threshold calculated by applying the eBH filter of \citet{eBH}, which is outlined for completeness by Algorithm~\ref{alg:ebh} in the Supplementary Material.
We refer to \citet{eBH} for a more detailed discussion of the eBH filter. Here, it suffices to recall that the eBH filter computes an adaptive rejection threshold based on the $n_{\mathrm{test}}$ input e-values and on the desired FDR level $\alpha \in (0,1)$.
Then, our following result states that the overall procedure is guaranteed to control the FDR below $\alpha$, under a relatively mild exchangeability assumption.

\begin{assumption} \label{exchangeable}
The inliers in $\mathcal{D}$ and the null test points are exchangeable conditional on the non-null test points.
\end{assumption}

\begin{theorem}\label{thm:e-fdr}
Suppose Assumption~\ref{exchangeable} holds.
Then, the e-values computed by Algorithm~\ref{drand-e-value} satisfy:
\begin{equation} \label{eq:e-values-valid}
    \sum_{j\in \Hnull} \E \left[ \bar{e}_j\right] \leq n_{\mathrm{test}}.
\end{equation}
\end{theorem}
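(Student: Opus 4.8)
The plan is to reduce \eqref{eq:e-values-valid} to a per-repetition bound and then establish that bound with a hypergeometric martingale argument in the spirit of \citet{barber2015controlling} and \citet{drand-kn}. By linearity of expectation and the fact that the weights are nonnegative and sum to one, I would first write $\sum_{j\in\Hnull}\E[\bar e_j] = \sum_{k=1}^K w^{(k)}\sum_{j\in\Hnull}\E[e_j^{(k)}]$, so that it suffices to prove $\sum_{j\in\Hnull}\E[e_j^{(k)}]\le n_{\mathrm{test}}$ for each fixed $k$. Fixing $k$ and dropping the superscript, I would introduce $V(t)=\sum_{j\in\Hnull}\mathbb{I}\{\hat S_j\ge t\}$ and $C(t)=\sum_{i\in\D_{\mathrm{cal}}}\mathbb{I}\{\hat S_i\ge t\}$, so that from \eqref{e-value} the target becomes
\[
\E\!\left[(1+n_{\mathrm{cal}})\,\frac{V(\hat t)}{1+C(\hat t)}\right]\le n_{\mathrm{test}} .
\]

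The second step is to set up exchangeability by conditioning on the fitted model (equivalently on $\D_{\mathrm{train}}$) and on the non-null test points. Under Assumption~\ref{exchangeable} the $n_{\mathrm{cal}}$ calibration inliers and the $n^{\mathrm{null}}_{\mathrm{test}}$ null test points are then exchangeable, and since the conformity scores are a fixed function of these points given the model, the pooled collection of null scores is exchangeable. Equivalently, if we list the pooled null scores in decreasing order, the labels ``calibration'' versus ``null-test'' form a uniformly random arrangement of $n_{\mathrm{cal}}$ and $n^{\mathrm{null}}_{\mathrm{test}}$ symbols. The non-null test scores are constants under this conditioning and enter only through the denominator of the FDP estimate \eqref{eq:fdp_hat}.

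The heart of the argument is a reverse-time martingale over this random arrangement. Indexing the pooled null scores from largest to smallest, let $C_r$ and $V_r$ be the numbers of calibration and null-test points among the top $r$ (so $C_r+V_r=r$), and set $M_r=V_r/(1+C_r)$; note $M_{n_{\mathrm{cal}}+n^{\mathrm{null}}_{\mathrm{test}}}=n^{\mathrm{null}}_{\mathrm{test}}/(1+n_{\mathrm{cal}})$. Revealing labels from the smallest score upward (i.e.\ raising the threshold) generates a filtration for which $C_r,V_r$ — hence the estimate \eqref{eq:fdp_hat} at each candidate cutoff — are measurable. Writing $c=C_r$, $v=V_r$, the label at position $r$ is ``calibration'' with probability $c/r$ (giving $M_{r-1}=v/c$) and ``null-test'' with probability $v/r$ (giving $M_{r-1}=(v-1)/(1+c)$), so
\[
\frac{c}{r}\cdot\frac{v}{c}+\frac{v}{r}\cdot\frac{v-1}{1+c}
=\frac{v}{r}\cdot\frac{c+v}{1+c}
=\frac{v}{1+c}=M_r ,
\]
showing that $(M_r)$ is a martingale in this reverse scan. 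The BH rule \eqref{eq:threshold} stops at the largest rejection set whose estimate \eqref{eq:fdp_hat} is at most $\alpha_{\mathrm{bh}}$, which corresponds to a stopping time $\hat r$ for this filtration; optional stopping then yields $\E[V(\hat t)/(1+C(\hat t))]\le n^{\mathrm{null}}_{\mathrm{test}}/(1+n_{\mathrm{cal}})$, whence $\sum_{j\in\Hnull}\E[e_j^{(k)}]\le n^{\mathrm{null}}_{\mathrm{test}}\le n_{\mathrm{test}}$. Averaging over the conditioning and summing the weighted bounds over $k$ gives \eqref{eq:e-values-valid}.

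I expect the main obstacle to be the rigorous verification that $\hat t$ is a genuine stopping time for the reverse (hypergeometric) filtration: one must check that the fixed non-null test scores, which enter the denominator of \eqref{eq:fdp_hat} but not the null-label martingale, preserve adaptedness, and that since the candidate thresholds in \eqref{eq:threshold} range over both calibration and test scores, the effective cutoff can be taken at a null score where $C_r,V_r$ jump (so that $V(\hat t),C(\hat t)$ coincide with $V_r,C_r$ at the stopped index). Handling these indexing details carefully, together with the edge case of no rejections, is the delicate part; the exchangeability reduction and the martingale identity itself are comparatively routine.
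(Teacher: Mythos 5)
Your proposal is correct and follows essentially the same route as the paper's proof: after conditioning on the non-null test points (the paper conditions on the unordered score multisets $\tilde{\mathcal{D}}^{(k)}_{\mathrm{cal-test}}$ and $\tilde{\mathcal{D}}^{(k)}_{\mathrm{test-nn}}$), both arguments show that $M(t)=V_{\mathrm{test}}(t)/(1+V_{\mathrm{cal}}(t))$ is a martingale with respect to the backward filtration that reveals calibration/null-test labels from the smallest pooled null score upward, with exactly the hypergeometric transition probabilities and algebra you wrote, and then apply optional stopping at the BH threshold to get $\E[M(\hat t^{(k)})]=n^{\mathrm{null}}_{\mathrm{test}}/(1+n_{\mathrm{cal}})$ before summing the weighted bounds over $k$. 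The stopping-time and indexing details you flag as delicate are treated no more explicitly in the paper, which likewise simply invokes the optional stopping theorem following \citet{fairness}.
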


The proof of Theorem~\ref{thm:e-fdr} is in the Supplementary Section \ref{app:proofs}. Combined with Theorem~2 from~\citet{drand-kn}, this result implies our method controls the FDR below the desired target level $\alpha$.

\begin{corollary}[\citet{drand-kn}] \label{thm:e-fdr-cor}
The eBH filter of \citet{eBH} applied at level $\alpha \in (0,1)$ to e-values $\{\bar{e}_j\}_{j \in \D_{\mathrm{test}}}$, satisfying~\eqref{eq:e-values-valid} guarantees FDR $\leq \alpha$.
\end{corollary}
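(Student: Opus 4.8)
The plan is to exploit the deterministic \emph{self-consistency} of the eBH rejection rule together with the average validity guaranteed by the hypothesis~\eqref{eq:e-values-valid}; crucially, no further probabilistic structure on the aggregated e-values is required beyond~\eqref{eq:e-values-valid} itself. Write $m = n_{\mathrm{test}}$ and recall that the eBH filter of \citet{eBH}, applied at level $\alpha$ to the e-values $\{\bar{e}_j\}_{j\in\D_{\mathrm{test}}}$, sorts them in decreasing order $\bar{e}_{(1)}\geq\cdots\geq\bar{e}_{(m)}$, sets $\hat{k} = \max\{ k\in[m] : \bar{e}_{(k)} \geq m/(\alpha k)\}$ (with $\hat{k} = 0$ if no such $k$ exists), and rejects the $\hat{k}$ hypotheses with the largest e-values. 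I would first establish the combinatorial fact that makes the whole argument work: letting $R = \hat{k}$ denote the total number of rejections, a test point $j$ is rejected if and only if $\bar{e}_j \geq m/(\alpha R)$.

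Given this characterization, the core of the proof is a pointwise bound on the false discovery proportion. For every null index $j\in\Hnull$, the rejection indicator $R_j$ satisfies
\begin{equation*}
  R_j \;\leq\; \frac{\alpha R}{m}\,\bar{e}_j,
\end{equation*}
which holds because $R_j\in\{0,1\}$: if $R_j = 1$ then $\bar{e}_j \geq m/(\alpha R)$ by the characterization above, so the right-hand side is at least $1$; and if $R_j = 0$ the inequality is trivial since the e-values are nonnegative. Summing over $j\in\Hnull$ and dividing by $\max\{1,R\}$, I obtain on the event $\{R\geq 1\}$
\begin{equation*}
  \mathrm{FDP}
  = \frac{\sum_{j\in\Hnull} R_j}{\max\{1,R\}}
  \;\leq\; \frac{1}{R}\cdot \frac{\alpha R}{m}\sum_{j\in\Hnull}\bar{e}_j
  = \frac{\alpha}{m}\sum_{j\in\Hnull}\bar{e}_j,
\end{equation*}
while on $\{R = 0\}$ one has $\mathrm{FDP} = 0$, so the same bound holds with probability one.

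Finally I would take expectations and invoke the average validity property. Since the bound $\mathrm{FDP}\leq (\alpha/m)\sum_{j\in\Hnull}\bar{e}_j$ holds almost surely,
\begin{equation*}
  \mathrm{FDR} = \E[\mathrm{FDP}]
  \;\leq\; \frac{\alpha}{m}\sum_{j\in\Hnull}\E[\bar{e}_j]
  \;\leq\; \frac{\alpha}{m}\cdot m = \alpha,
\end{equation*}
where the last inequality is exactly the hypothesis~\eqref{eq:e-values-valid}.

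I expect the main obstacle to be rigorously justifying the deterministic characterization ``$j$ rejected $\iff \bar{e}_j \geq m/(\alpha R)$,'' which is a purely combinatorial property of the step-up structure of eBH. The delicate point is to show that the number of e-values at or above the implied threshold $m/(\alpha\hat{k})$ is \emph{exactly} $\hat{k}$: indeed, if there were $N>\hat{k}$ such e-values, then $\bar{e}_{(N)}\geq m/(\alpha\hat{k}) \geq m/(\alpha N)$ would make $N$ satisfy the defining inequality, contradicting the maximality of $\hat{k}$; hence $N=\hat{k}$ and the top-$\hat{k}$ rejection set coincides with $\{j:\bar{e}_j\geq m/(\alpha R)\}$, with the edge case $\hat{k}=0$ giving an empty rejection set. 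Once this self-consistency is in hand, the remaining steps are elementary and use no independence or exchangeability assumptions beyond what is already encoded in~\eqref{eq:e-values-valid}. Since this argument simply reproduces Theorem~2 of \citet{drand-kn}, one may alternatively cite that result directly.
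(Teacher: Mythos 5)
Your proof is correct and is essentially the argument the paper relies on: the paper gives no proof of this corollary itself, importing it verbatim as Theorem~2 of \citet{drand-kn}, and your combination of the step-up self-consistency of eBH with the pointwise bound $R_j \leq (\alpha R/m)\,\bar{e}_j$ for null $j$, followed by linearity of expectation against the average-validity hypothesis~\eqref{eq:e-values-valid}, is exactly the standard proof of that cited result. The one delicate step you flag --- that the number of e-values at or above $m/(\alpha\hat{k})$ is exactly $\hat{k}$, so the rejection set coincides with $\{j : \bar{e}_j \geq m/(\alpha R)\}$ --- is handled correctly by your maximality argument (and in fact only the forward implication, rejected $\Rightarrow \bar{e}_j \geq m/(\alpha R)$, is needed for the FDP bound), so there is no gap.
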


\begin{remark} Assumption~\ref{exchangeable} does not require that the inliers are independent of the outliers.
\end{remark}

\begin{remark} Theorem~\ref{thm:e-fdr} holds regardless of the value of the hyper-parameter $\alpha_{\mathrm{bh}}$ of Algorithm~\ref{drand-e-value}, which appears in~\eqref{eq:threshold}. See Section~\ref{sec:alpha_t} for further details about the choice of $\alpha_{\mathrm{bh}}$.
\end{remark}

\subsection{Leveraging data-driven weights} \label{sec:weights}

Our method can be extended to leverage adaptive weights based on the data in $\mathcal{D}$ and $\mathcal{D}_{\mathrm{test}}$, as long as each weight $w^{(k)}$ is invariant to permutations of the test point with the corresponding calibration samples in $\mathcal{D}_{\mathrm{cal}}^{(k)}$.
In other words, we only require that these weights be written in the form of
\begin{align} \label{eq:invariant-weights}
w^{(k)} = \omega ( \tilde{\mathcal{D}}^{(k)}_{\mathrm{cal-test}} ).
\end{align}
The function $\omega$ may depend on $\D_{\mathrm{train}}^{(k)}$ but not on $\D_{\mathrm{cal}}^{(k)}$ or $\D_{\mathrm{test}}$.
An example of a useful weighting scheme satisfying this property is at the end of this section.
The general method is summarized by Algorithm~\ref{drand-e-value-adaptive} in the Supplementary Material, which extends Algorithm~\ref{drand-e-value}.
This produces e-values that control the FDR in conjunction with the eBH filter of \citet{eBH}.

\begin{theorem}\label{thm:e-fdr-adaptive}
Suppose Assumption~\ref{exchangeable} holds.
Then, the e-values computed by Algorithm~\ref{drand-e-value-adaptive} satisfy~\eqref{eq:e-values-valid}, as long as the adaptive weights obey~\eqref{eq:invariant-weights}.
\end{theorem}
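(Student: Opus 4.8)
The plan is to reduce Theorem~\ref{thm:e-fdr-adaptive} to the already-established Theorem~\ref{thm:e-fdr} by isolating the single new ingredient: the permutation invariance~\eqref{eq:invariant-weights} of the data-driven weights. Writing out the aggregate e-value and exchanging the two finite sums, the target quantity is
\[
\sum_{j\in\Hnull}\E[\bar e_j] = \sum_{k=1}^{K}\sum_{j\in\Hnull}\E\!\left[w^{(k)}e_j^{(k)}\right],
\]
so it suffices to control, for each repetition $k$, the weighted contribution $\sum_{j\in\Hnull}\E[w^{(k)}e_j^{(k)}]$. The difficulty compared with the fixed-weight case is that $w^{(k)}$ is now random and cannot simply be pulled outside the expectation; I must instead condition on a $\sigma$-algebra that renders $w^{(k)}$ measurable while still permitting the exchangeability argument behind Theorem~\ref{thm:e-fdr}.

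First I would fix $k$ and introduce the conditioning $\sigma$-algebra $\mathcal{G}^{(k)}$ generated by the training fold $\D_{\mathrm{train}}^{(k)}$, the scores of the non-null test points, and the \emph{unordered} multiset of null-pool scores $\{\hat S_i^{(k)} : i\in\Hnull\cup\D_{\mathrm{cal}}^{(k)}\}$. The key observation is that $w^{(k)}=\omega(\tilde{\mathcal{D}}^{(k)}_{\mathrm{cal-test}})$ is $\mathcal{G}^{(k)}$-measurable: the multiset $\tilde{\mathcal{D}}^{(k)}_{\mathrm{cal-test}}$ of all calibration and test scores is recovered from the null-pool multiset together with the non-null test scores, both captured by $\mathcal{G}^{(k)}$, and by~\eqref{eq:invariant-weights} the map $\omega$ depends only on this multiset (and on $\D_{\mathrm{train}}^{(k)}$), never on how the pool is partitioned into calibration versus null-test indices. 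This is precisely the content of the restriction that $\omega$ may use $\D_{\mathrm{train}}^{(k)}$ but not $\D_{\mathrm{cal}}^{(k)}$ or $\D_{\mathrm{test}}$ separately.

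Next I would invoke the conditional form of the bound underlying Theorem~\ref{thm:e-fdr}. Under Assumption~\ref{exchangeable}, conditional on $\mathcal{G}^{(k)}$ the assignment of the null-pool scores to calibration versus null-test labels is exchangeable, and the same exchangeability argument that proves Theorem~\ref{thm:e-fdr} applies after this conditioning, giving $\E[\sum_{j\in\Hnull}e_j^{(k)}\mid\mathcal{G}^{(k)}]\leq n_{\mathrm{test}}$. Since $w^{(k)}$ is $\mathcal{G}^{(k)}$-measurable and nonnegative, the tower property yields
\[
\sum_{j\in\Hnull}\E\!\left[w^{(k)}e_j^{(k)}\right] = \E\!\left[w^{(k)}\,\E\!\left[\textstyle\sum_{j\in\Hnull}e_j^{(k)}\,\middle|\,\mathcal{G}^{(k)}\right]\right] \leq n_{\mathrm{test}}\,\E\!\left[w^{(k)}\right].
\]
Summing over $k$ and using the normalization $\sum_{k=1}^{K}w^{(k)}=1$ then closes the argument, since $\sum_{j\in\Hnull}\E[\bar e_j]\leq n_{\mathrm{test}}\sum_{k}\E[w^{(k)}]=n_{\mathrm{test}}\,\E[\sum_k w^{(k)}]=n_{\mathrm{test}}$.

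I expect the only genuine obstacle to be verifying that the exchangeability argument behind Theorem~\ref{thm:e-fdr} goes through in the conditional form stated above---that is, that conditioning on $\mathcal{G}^{(k)}$ leaves enough randomness in the calibration/test partition for the self-consistent BH threshold $\hat t^{(k)}$ in~\eqref{eq:threshold} to satisfy the required inequality. Everything else is bookkeeping: the measurability of $w^{(k)}$ with respect to $\mathcal{G}^{(k)}$ is exactly what~\eqref{eq:invariant-weights} buys, and it is what prevents the adaptive weights from leaking information about which pool elements are genuine outliers and thereby inflating $\sum_{j\in\Hnull}\E[\bar e_j]$ beyond $n_{\mathrm{test}}$.
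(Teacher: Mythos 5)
Your architecture is the same as the paper's: interchange the two sums, condition on the unordered score multisets (your $\mathcal{G}^{(k)}$ carries exactly the same information as the paper's pair $\tilde{\mathcal{D}}^{(k)}_{\mathrm{cal-test}}$, $\tilde{\mathcal{D}}^{(k)}_{\mathrm{test-nn}}$), observe that~\eqref{eq:invariant-weights} makes $w^{(k)}$ measurable with respect to that conditioning so it can be pulled out, bound the conditional expectation of $\sum_{j\in\Hnull} e_j^{(k)}$, and finish with the tower property and $\sum_k w^{(k)}=1$. That is precisely how the paper handles the adaptive weights, and your identification of measurability as ``exactly what~\eqref{eq:invariant-weights} buys'' is the right insight.

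The gap is the step you defer. You propose to import the conditional bound $\E[\sum_{j\in\Hnull} e_j^{(k)}\mid\mathcal{G}^{(k)}]\leq n_{\mathrm{test}}$ from ``the exchangeability argument behind Theorem~\ref{thm:e-fdr},'' but in this paper Theorem~\ref{thm:e-fdr} has no independent proof: it is stated as a special case of Theorem~\ref{thm:e-fdr-adaptive}, so there is no unconditional argument available to conditionalize, and your reduction is circular as written. The actual content of that bound is nontrivial because $\hat{t}^{(k)}$ in~\eqref{eq:threshold} is data-dependent. The paper handles this by showing that $M^{(k)}(t)=V_{\mathrm{test}}^{(k)}(t)/(1+V_{\mathrm{cal}}^{(k)}(t))$, where $V_{\mathrm{test}}^{(k)}$ and $V_{\mathrm{cal}}^{(k)}$ count null-test and calibration scores above $t$, is a backward-running martingale under the exchangeable assignment of the null-pool scores to calibration versus null-test labels (a one-step computation using the conditional probabilities $V_{\mathrm{test}}/(V_{\mathrm{test}}+V_{\mathrm{cal}})$ and $V_{\mathrm{cal}}/(V_{\mathrm{test}}+V_{\mathrm{cal}})$ for which type of point is revealed next), and then invoking the optional stopping theorem at $\hat{t}^{(k)}$ — which is a legitimate stopping time for the backward filtration because $\widehat{\mathrm{FDP}}^{(k)}(t)$ depends only on counts already revealed. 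This yields the exact identity $\E[M^{(k)}(\hat{t}^{(k)})\mid\cdots]=n_{\mathrm{test}}^{\mathrm{null}}/(1+n_{\mathrm{cal}})$, hence $\sum_{j\in\Hnull}\E[\bar{e}_j]=n_{\mathrm{test}}^{\mathrm{null}}\leq n_{\mathrm{test}}$. Until you supply this martingale-plus-optional-stopping argument (or an equivalent leave-one-out exchangeability computation), your proposal identifies the correct skeleton but omits the proof's essential technical core.
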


An example of a valid weighting function applied in this paper is the following.
Imagine having some prior side information suggesting that the proportion of outliers in $\D_{\mathrm{test}}$ is approximately $\gamma \in (0,1)$.
Then, a natural choice to measure the quality of the $k$-th model is to let $\tilde{w}^{(k)} = |\tilde{v}^{(k)}|$, where $\tilde{v}^{(k)}$ is the standard t-statistic for testing the difference in means between the top $\lceil n_{\mathrm{test}} \cdot \gamma \rceil $ largest values in $\tilde{\mathcal{D}}^{(k)}_{\mathrm{cal-test}}$ and the remaining ones.
See Algorithm~\ref{alg:t-weights} in the Supplementary Material for further details.
Intuitively, Algorithm~\ref{alg:t-weights} tends to assign larger weights to models achieving stronger out-of-sample separation between inliers and outliers.
Of course, this approach may not always be optimal but different weighting schemes could be easily accommodated within our framework.

\subsection{Derandomizing \texttt{AdaDetect} with \texttt{E-AdaDetect}} \label{sec:ada-training}

The requirement discussed in Section~\ref{sec:weights} that the data-adaptive weights should be invariant to permutations of the calibration and test samples is analogous to the idea utilized by \texttt{AdaDetect} \citep{ml-fdr} to train more powerful machine learning models leveraging also the information contained in the test set; see Section~\ref{sec:adadetect}.
This implies that Theorem~\ref{thm:e-fdr-adaptive} remains valid even if our method is implemented based on $K$ machine learning models each trained by looking also at the unordered union of all data points in $\D_{\mathrm{cal}}^{(k)} \cup \D_{\mathrm{test}}$, for each $k \in [K]$.
See Algorithm~\ref{drand-e-value-adaptive-ada} in the Supplementary Material for a detailed implementation of this extension of our method, which we call \texttt{E-AdaDetect}.

\subsection{Tuning the FDR hyper-parameter}
\label{sec:alpha_t}

As explained in Section~\ref{sec:weighted-agg-conformal-e-values}, our method involves a hyper-parameter $\alpha_{\mathrm{bh}}$ controlling the BH thresholds $\hat{t}^{(k)}$ in~\eqref{eq:threshold}.
Intuitively, higher values of $\alpha_{\mathrm{bh}}$ tend to increase the number of both test and calibration scores exceeding the rejection threshold at each of the $K$ iterations.
Such competing effects make it generally unclear whether increasing $\alpha_{\mathrm{bh}}$ leads to larger e-values in~\eqref{e-value} and hence higher power. This trade-off was studied by \citet{drand-kn} while derandomizing the knockoff filter, and they suggested setting $\alpha_{\mathrm{bh}} < \alpha$.
In this paper, we adopt $\alpha_{\mathrm{bh}} = \alpha/10$, which we have observed to work generally well in our context, although even higher power can sometimes be obtained with different values of $\alpha_{\mathrm{bh}}$, especially if the number of outliers in the test set is large.
While we leave it to future research to determine whether further improvements are possible, it is worth noting that a straightforward extension of our method, not explicitly implemented in this paper, can be obtained by further averaging e-values obtained with different choices of $\alpha_{\mathrm{bh}}$.
Such extension does not affect the validity of~\eqref{eq:e-values-valid} due to the linearity of expected values.

\section{Numerical experiments}

\subsection{Setup and performance metrics}

This section compares empirically the performance of \texttt{AdaDetect} and our proposed derandomized method described in Section~\ref{sec:ada-training}, namely \texttt{E-AdaDetect}.
Both procedures are deployed using a binary logistic regression classifier \citep{ml-fdr} as the base predictive model. 
The reason why we focus on derandomizing \texttt{AdaDetect} instead of traditional split-conformal inferences based on a one-class classifier \citep{conformal-p-values} is that we have observed that \texttt{AdaDetect} often achieves higher power on the data considered in this paper, which makes it a more competitive benchmark.
However, additional experiments reporting on the performance of our derandomization method applied in combination with one-class classifiers can be found in the Supplementary Sections \ref{app:synthetic-experiments-OC} and \ref{app:real-experiments-OC}.

As the objective of this paper is to powerfully detect outliers while mitigating algorithmic randomness, we assess the performance of each method over $M=100$ independent analyses based on the same fixed data and the same test set.
For each repetition $m$ of the novelty detection analysis based on the fixed data, we identify a subset $\mathcal{R}^{(m)} \subseteq \D_{\mathrm{test}}$ of likely outliers (the rejected null hypotheses) and evaluate the average power and false discovery proportion, namely
\begin{align} \label{eq:def-power-fdr}
  & \widehat{\text{Power}} = \frac{1}{M} \sum_{m=1}^M \frac{|\mathcal{R}^{(m)} \cap \mathcal{D}_{\text{test}}^{\text{non-null}} |}{ |\mathcal{D}_{\text{test}}^{\text{non-null} }|},
  & \widehat{\text{FDR}} = \frac{1}{M} \sum_{m=1}^M \frac{|\mathcal{R}^{(m)} \cap \mathcal{D}_{\text{test}}^{\text{null}} |}{\max\{|\mathcal{R}^{(m)}|,1\}},
\end{align}
 where $\mathcal{D}_{\text{test}}^{\text{non-null}} = \mathcal{D}_{\text{test}} \setminus \mathcal{D}_{\text{test}}^{\text{null}}$ indicates the true outliers in the test set.
The average false discovery proportion defined in~\eqref{eq:def-power-fdr} is not the FDR, which is the quantity we can theoretically guarantee to control. In fact, $\text{FDR} = \mathbb{E}[\widehat{\text{FDR}}]$, with expectation taken with respect all randomness in the data. Nonetheless, we will see that this average false discovery proportion is also controlled in practice within all data sets considered in this paper. 
The advantage of this setup is that it makes it natural to estimate algorithmic variability by observing the consistency of each rejection across independent analyses. In particular, after defining $R_{j,m}$ as the indicator of whether the $j$-th null hypothesis was rejected in the $m$-th analysis, we can evaluate the average variance in the rejection events:
\begin{align} \label{eq:def-var}
        \widehat{\text{Variance}} & = \frac{1}{n_{\text{test}}}\sum_{j=1}^{n_{\text{test}}} \frac{1}{M-1} \sum_{m=1}^{M} \left( R_{j,m} - \bar{R}_j\right)^2,
\end{align}
where $\bar{R}_j = (1/M) \sum_{m=1}^{M} R_{j,m}$.
Intuitively, it would be desirable to maximize power while simultaneously minimizing both the average false discovery proportion and the variability. 
In practice, however, these metrics often compete with one another; hence why we focus on comparing power and variability for methods designed to control the FDR below the target level $\alpha=0.1$.

\subsection{Experiments with synthetic data} \label{sec:synthetic}

Synthetic reference and test data consisting of 100-dimensional vectors $X$ are generated as follows.
The reference set contains only inliers, drawn i.i.d.~from the standard normal distribution with independent components, $\mathcal{N}(0, I_{100})$.
Unless specified otherwise, the test set contains 90\% inliers and 10\% outliers, independently sampled from $\mathcal{N}(\mu, I_{100})$.
The first 5 entries of $\mu$ are equal to a constant parameter, to which we refer as the {\em signal amplitude}, while the remaining 95 entries are zeros.
The size of the reference set is $n=2000$, with 1000 samples in the training subset and $1000$ in the calibration subset.
The size of the test set is $n_{\mathrm{test}}=1000$.
Both \texttt{E-AdaDetect} and \texttt{AdaDetect} are applied based on the same logistic regression classifier with default hyper-parameters.

\subsubsection{The effect of the signal strength}

Figure \ref{fig:data_difficulty} compares the performance of \texttt{E-AdaDetect} (applied with $K=10$) to that of \texttt{AdaDetect}, as a function of the signal amplitude. The results confirm both methods control the FDR but ours is less variable, as expected. 
The comparison becomes more interesting when looking at power: \texttt{AdaDetect} tends to detect more outliers on average if the signal strength is low, but \texttt{E-AdaDetect} can also outperform by that metric if the signals are strong.
This may be explained as follows.
If the signal strength is high, most true discoveries produced by \texttt{AdaDetect} are relatively stable across different analyses, while false discoveries may be more aleatory, consistently with the illustration of Figure~\ref{fig:illustration}. 
Such situation is ideal for derandomization, which explains why \texttt{E-AdaDetect} is able to simultaneously achieve high power and low false discovery proportion. 
By contrast, if the signals are weak, the true outlier discoveries produced by \texttt{AdaDetect} are relatively scarce and unpredictable, thus behaving not so differently from the false findings.
In this case, one could argue that stability becomes even more important to facilitate the interpretation of any findings, and that may justify some loss in average power.

\begin{figure*}[!htb]
  \centering
  \includegraphics[width=0.31\textwidth]{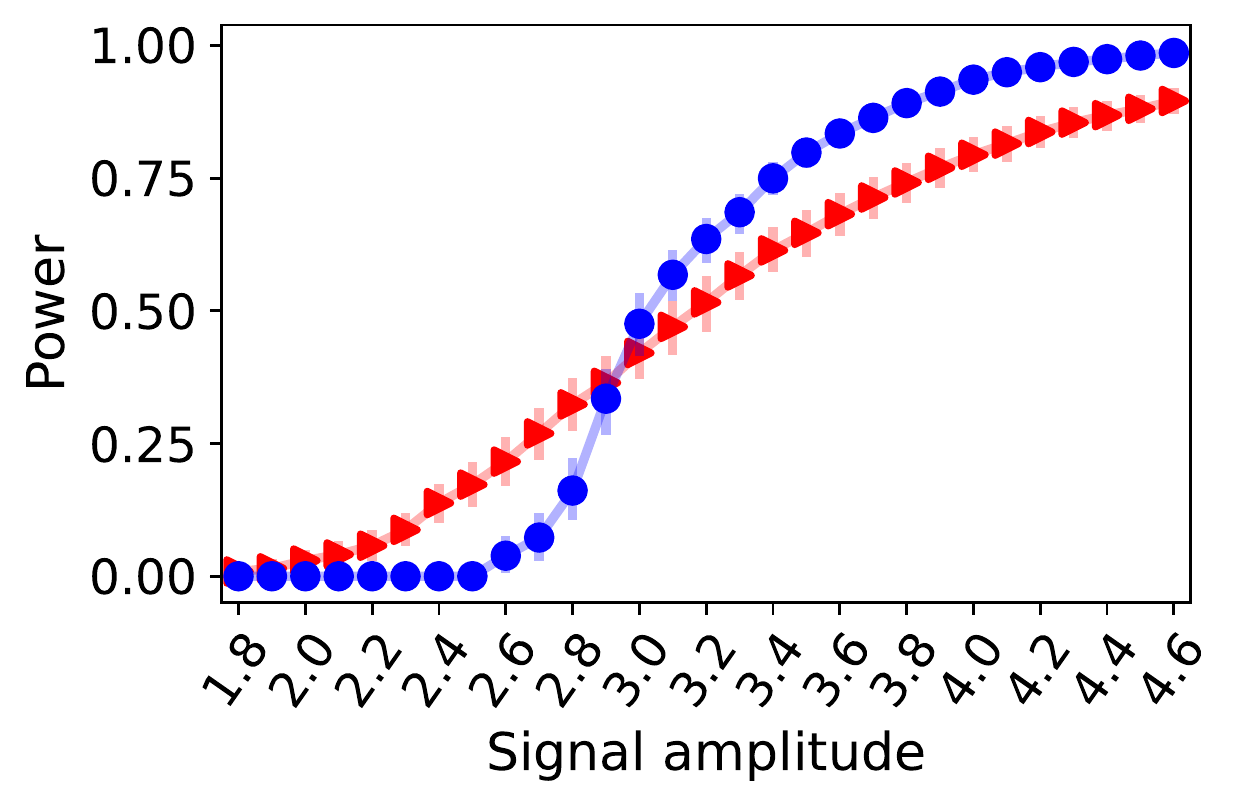}
  \includegraphics[width=0.31\textwidth]{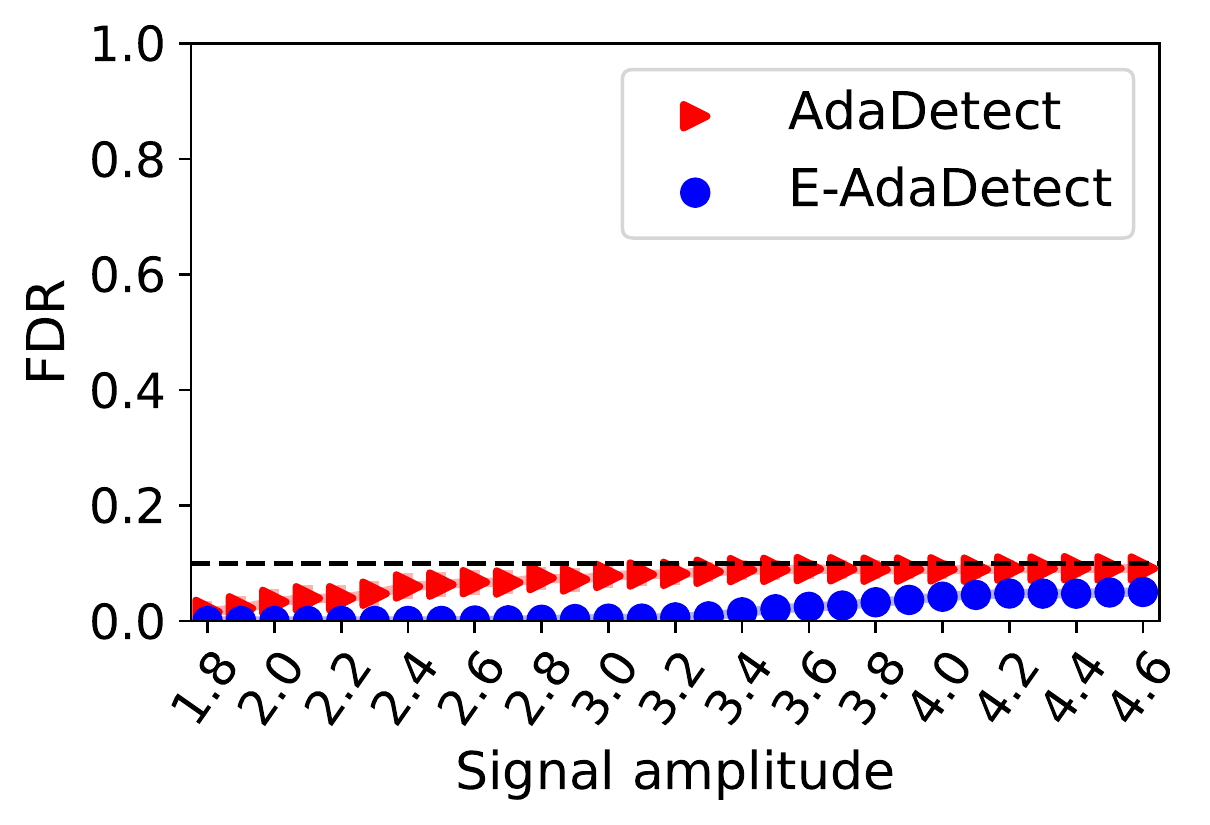}
  \includegraphics[width=0.31\textwidth]{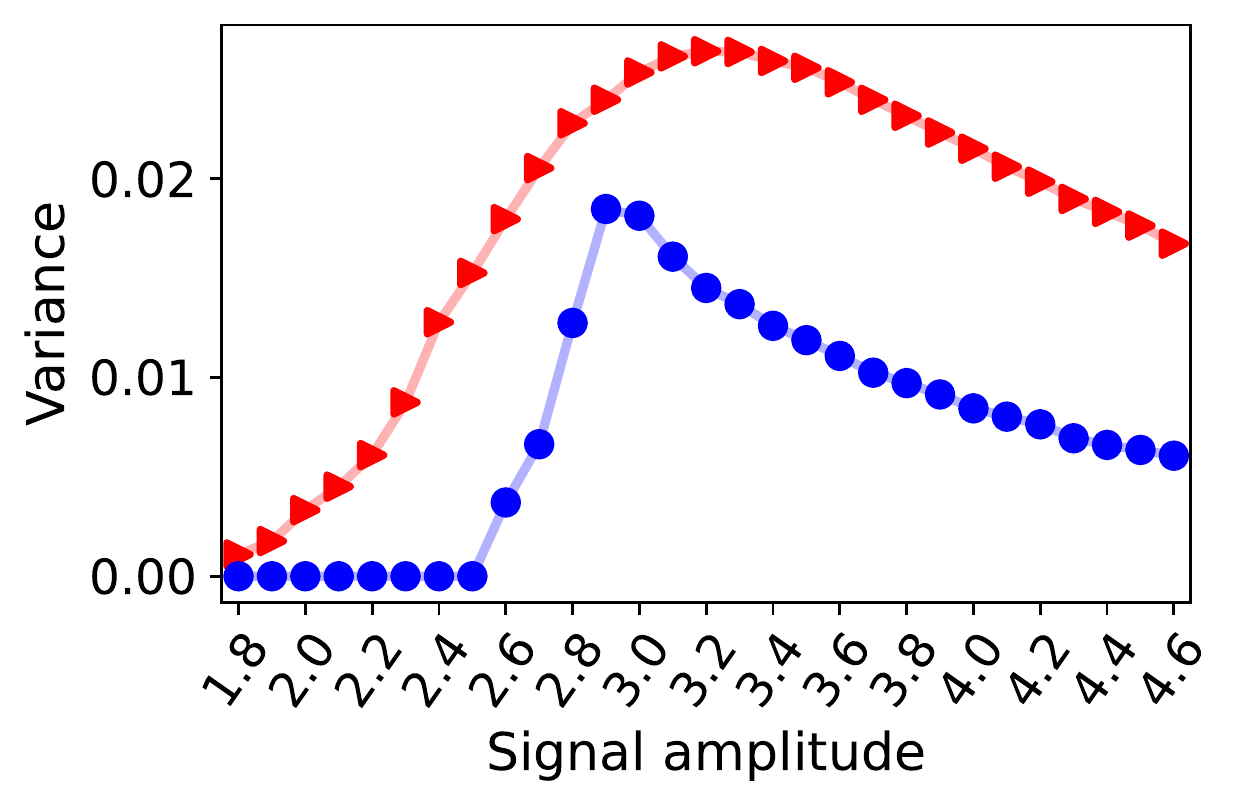}
  \caption{Performance on synthetic data of the proposed derandomized outlier detection method, \texttt{E-AdaDetect}, applied with $K=10$, compared to that of its randomized benchmark, \texttt{AdaDetect}, as a function of the signal strength. Both methods leverage a logistic regression binary classifier. Left: average proportion of true outliers that are discovered (higher is better). Center: average proportion of false discoveries (lower is better). Right: variability of the findings (lower is better).}
\label{fig:data_difficulty}
  \end{figure*}

\subsubsection{The effect of the number of analyses $K$}

Figure~\ref{fig:iterations} investigates the effect of varying the number of analyses $K$ aggregated by \texttt{E-AdaDetect}. 
Here, the signal amplitude is fixed to 3.4 (strong signals), while $K$ is varied between 1 and 30.
As expected, the results show that the variability of the findings obtained with \texttt{E-AdaDetect} decreases as $K$ increases.
The average proportion of false discoveries obtained with \texttt{E-AdaDetect} also tends to decrease when $K$ is large, which can be understood by noting that spurious findings are less likely to be reproduced consistently across multiple independent analyses of the same data.
Regarding power, the average number of true outliers detected by \texttt{E-AdaDetect} appears to monotonically increase with $K$, although this is not always true in other situations, as shown in the Supplementary Section \ref{app:synthetic-experiments-adadetect}.
In fact, if the signals are weak, \texttt{E-AdaDetect} may lose some power with larger values of $K$ (although some $K>1$ may be optimal), consistently with the results shown in Figure~\ref{fig:data_difficulty}. 
Thus, we recommend practitioners to utilize larger values of $K$ in applications where higher power is expected.
Finally, note that the power of \texttt{E-AdaDetect} is generally lower compared to that of \texttt{AdaDetect} in the special case of $K=1$, although this is not a practically relevant value of $K$ because it does not allow any derandomization.
The reason why the power of \texttt{E-AdaDetect} is lower when $K=1$ is that this method relies on the eBH filter. The latter is relatively conservative as an FDR-controlling strategy because it requires no assumptions about the dependencies of the input statistics.

\begin{figure*}[!htb]
  \centering
\includegraphics[width=0.32\textwidth]{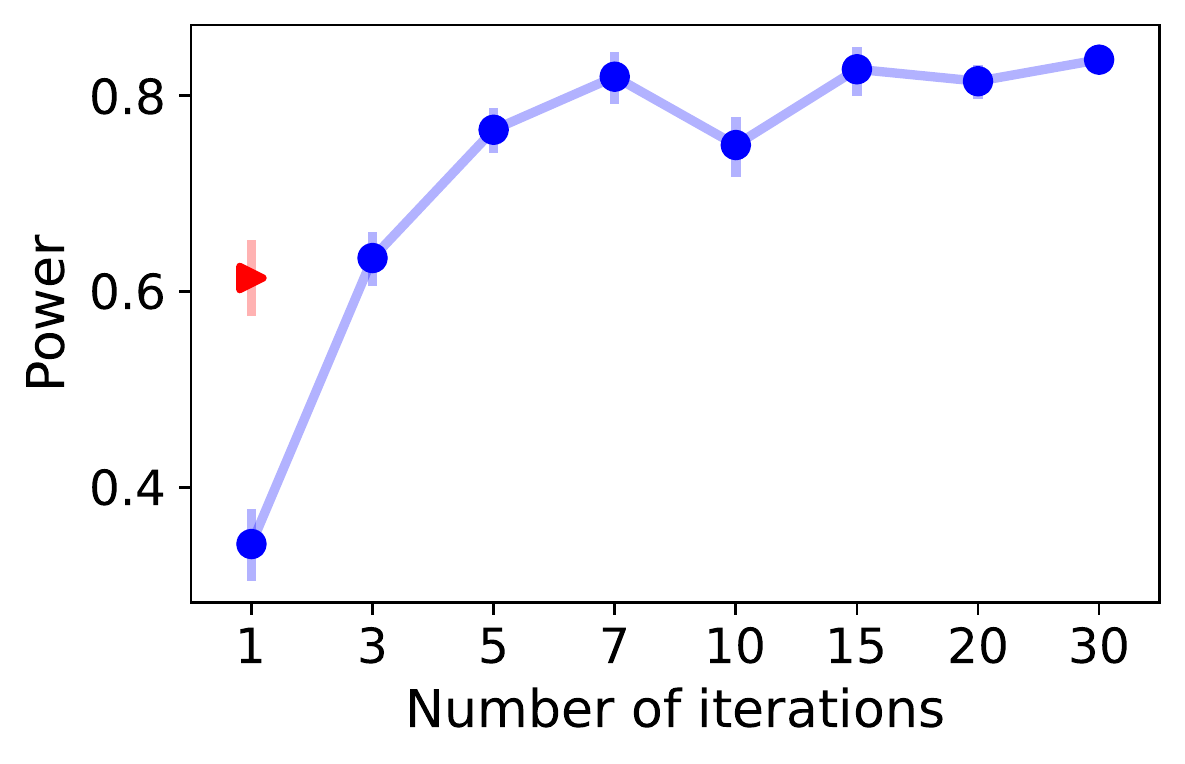}
\includegraphics[width=0.32\textwidth]{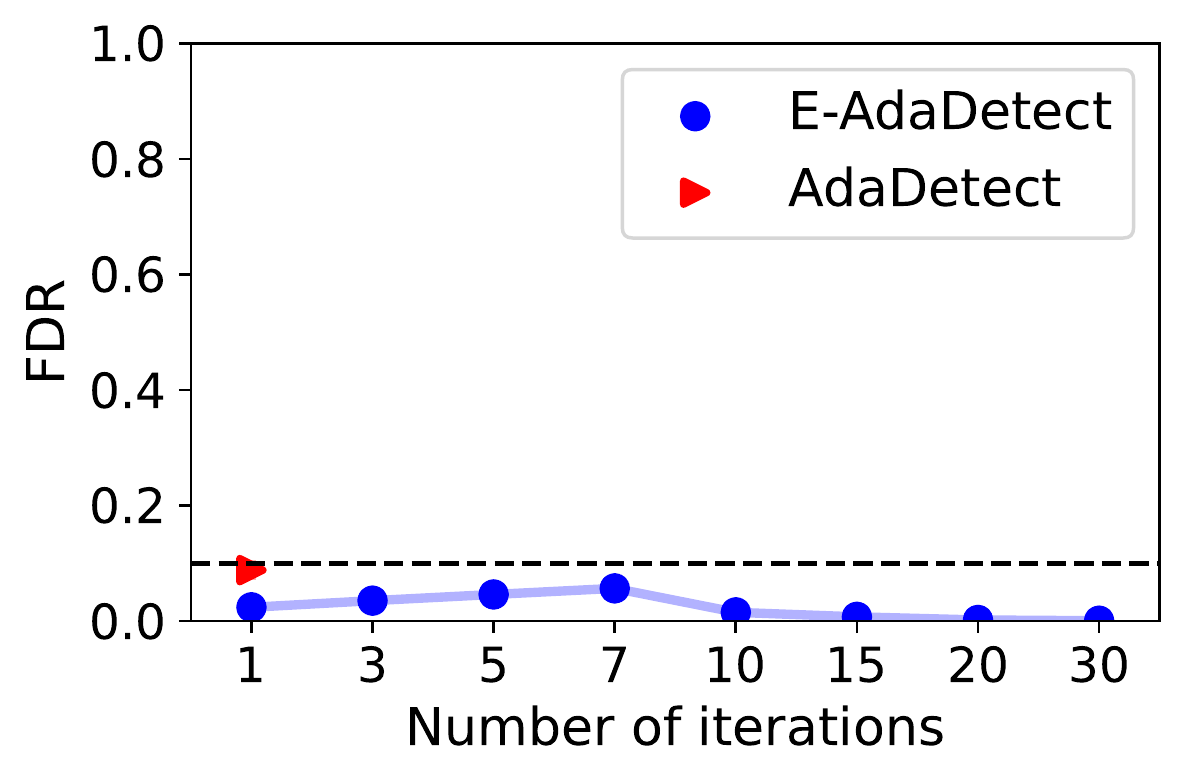}
\includegraphics[width=0.32\textwidth]{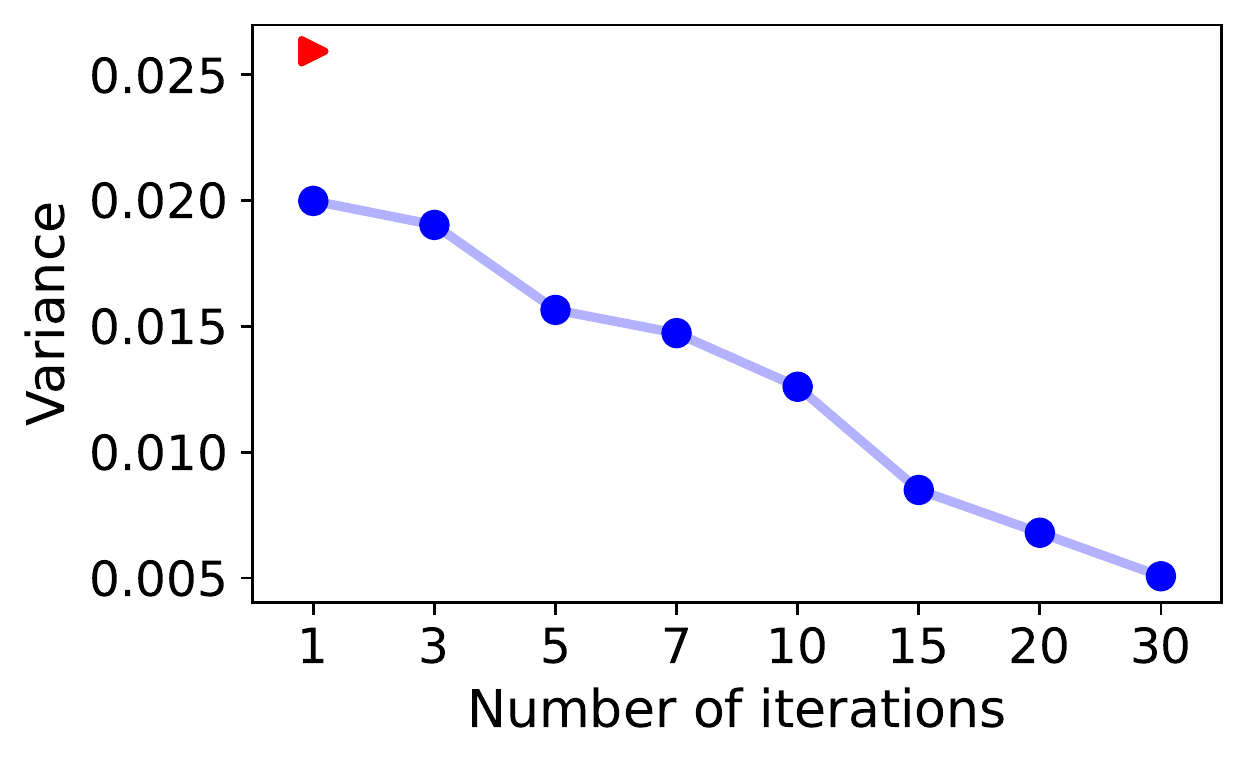}
  \caption{Performance on synthetic data of \texttt{E-AdaDetect}, as a function of the number $K$ of derandomized analyses, compared to \texttt{AdaDetect}. Note that the latter can only be applied with a single data split (or iteration).
The signal amplitude is $3.4$.
Other details are as in Figure~\ref{fig:data_difficulty}. 
}
\label{fig:iterations}
  \end{figure*}

\subsubsection{The effect of the weighting strategy}

This section highlights the practical advantage of being able to use data-adaptive model weights within \texttt{E-AdaDetect}. For this purpose, we carry out experiments similar to those of Figure~\ref{fig:data_difficulty},
but leveraging a logistic regression model trained with different choices of hyper-parameters in each of the $K$ analyses.
Specifically, we fit a sparse logistic regression model using $K=10$ different values of the regularization parameter. To induce higher variability in the predictive rules, one model was trained with a regularization parameter equal to $0.0001$, while the others were trained with regularization parameters equal to $1$, $10$, $50$, and $100$, respectively.
Then, we apply \texttt{E-AdaDetect} using different weighting schemes: constant equal weights (`uniform'), data-driven weights calculated with the t-statistic approach (``t-test'') summarized by Algorithm~\ref{alg:t-weights}, and a simple alternative {\em trimmed average} data-driven approach (``avg. score'') outlined by Algorithm~\ref{alg:avg-weights} in the Supplementary Material.
The results in Figure \ref{fig:weight} show that the data-driven aggregation scheme based on t-statistics is the most effective one, often leading to much higher power.
We have chosen not to compare \texttt{E-AdaDetect} to the automatic \texttt{AdaDetect} hyper-parameter tuning strategy proposed in Section 4.5 of \citet{ml-fdr} because we found that it does not perform very well in our experiments, possibly due to the relatively low sample size.

\begin{figure*}[!htb]
  \centering
\includegraphics[width=0.32\textwidth]{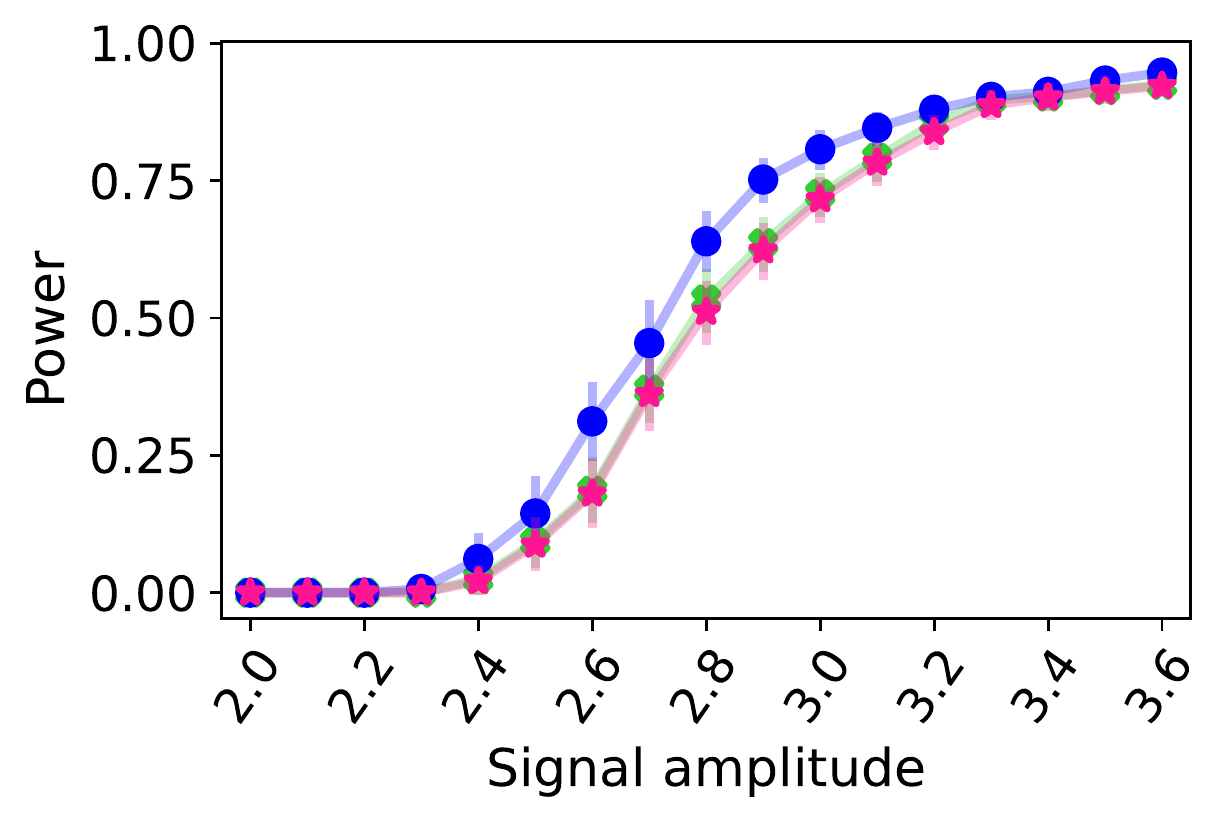}
\includegraphics[width=0.32\textwidth]{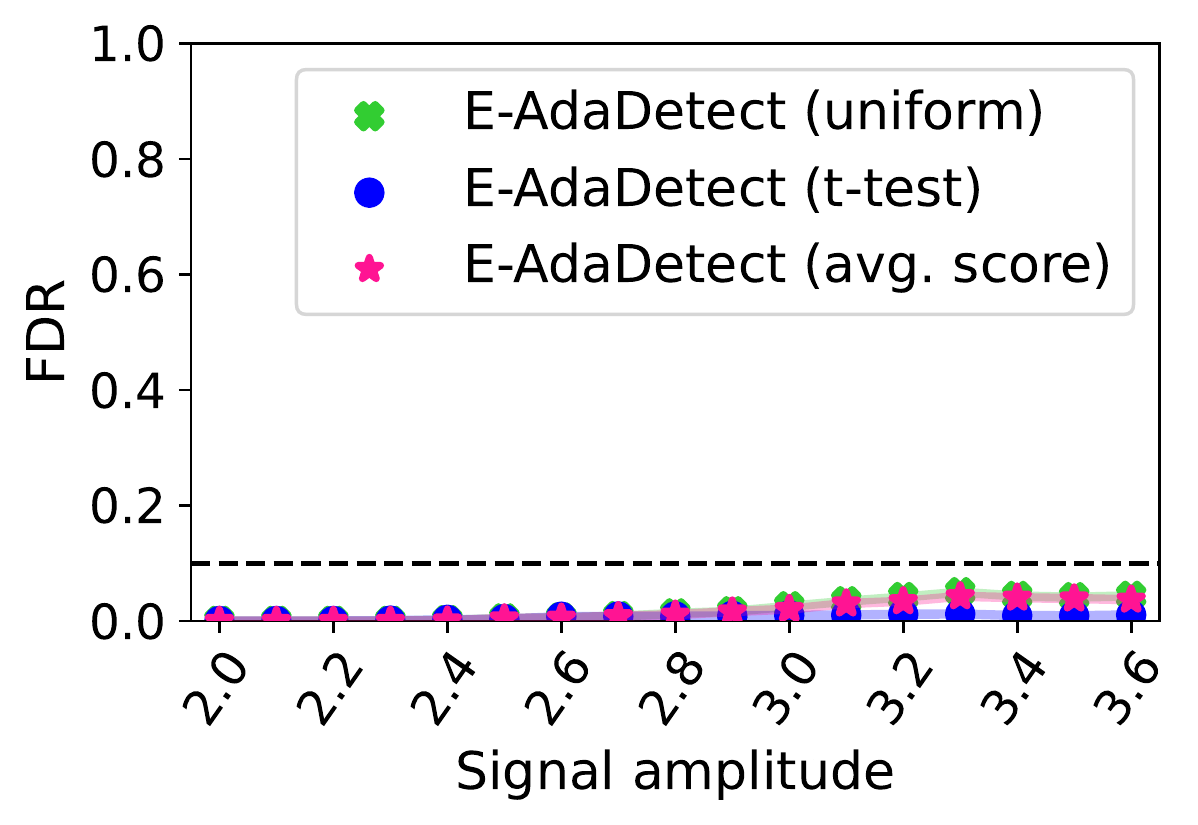}
\includegraphics[width=0.32\textwidth]{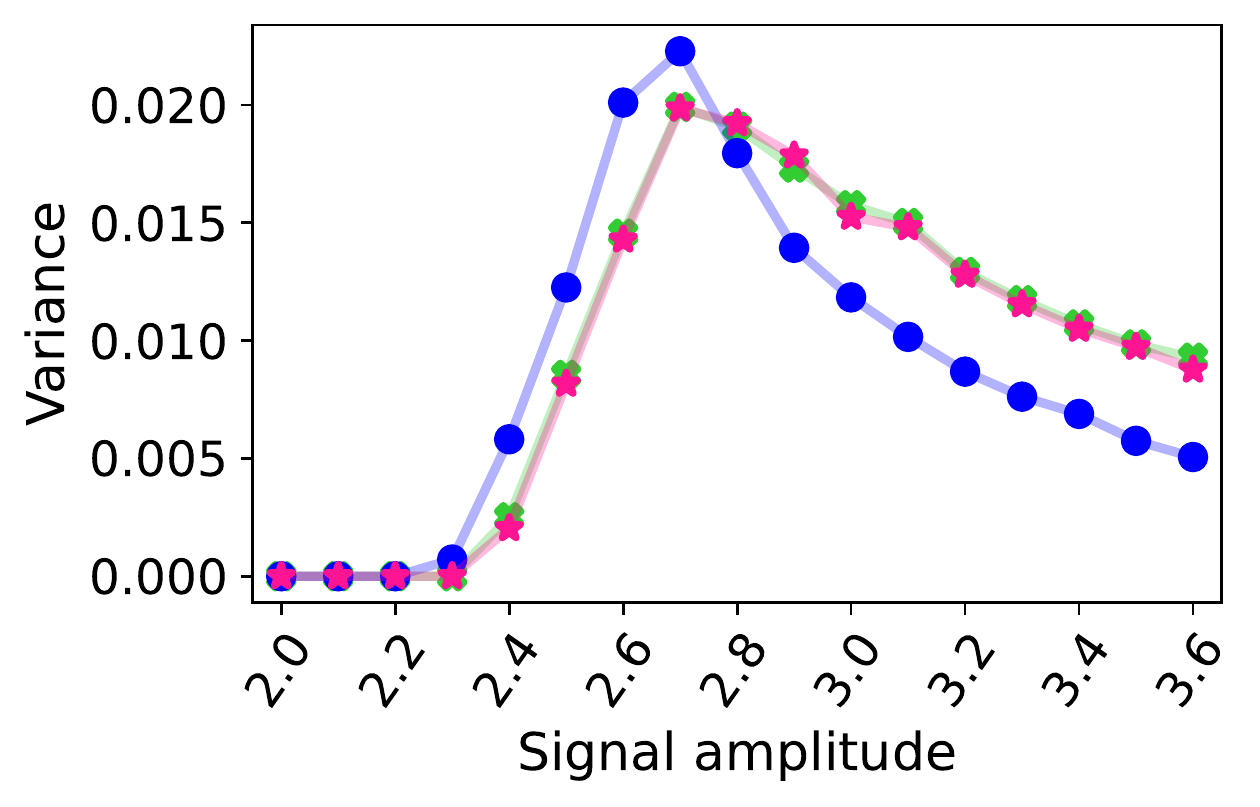}
  \caption{Performance on synthetic data of \texttt{E-AdaDetect} applied with different model weighting schemes, as a function of the signal strength.
The model weights are designed to prioritize the results of randomized analyses based on models that are more effective at separating inliers from outliers.
The t-test approach tends to lead to higher power.
Other details are as in Figure~\ref{fig:data_difficulty}. 
  }
\label{fig:weight}
  \end{figure*}

To further demonstrate the effectiveness of data-driven weighting, we turn to analyze the performance of \texttt{E-AdaDetect} on four real-world outlier detection data sets: {\em musk}, {\em shuttle}, {\em KDDCup99}, and {\em credit card}. We refer to Supplementary Section \ref{app:real-experiments} for more information regarding these data. 
Similar to Figure \ref{fig:weight}, our \texttt{E-AdaDetect} method is applied $K=10$ times to each data set, each time leveraging a different predictive model as follows. Half of the models are random forests implemented with varying max-depth hyper-parameters (10, 12, 20, 30, and 7), while the other half are support vector machines with an RBF kernel with varying width hyper-parameters (0.1, 0.001, 0.5, 0.2, and 0.03). 
This setup is interesting because different models often tend to perform differently in practice, and it is usually unclear a-priori which combination of model and hyper-parameters is optimal for a given data set. 
Figure \ref{fig:weight-real-data} summarizes the results, demonstrating that both data-driven weighting schemes (``t-test'' and ``avg. score'') lead to more numerous discoveries compared to the ``uniform'' weighting baseline, and that the ``t-test'' approach is the most powerful weighting scheme here. These results are in line with the synthetic experiment presented in Figure \ref{fig:weight}. Lastly, the variance metrics reported in Figure \ref{fig:weight-real-data} also suggest that data-driven weighting further enhances the algorithmic stability.

\begin{figure*}[!htb]
  \centering
\includegraphics[width=0.32\textwidth]{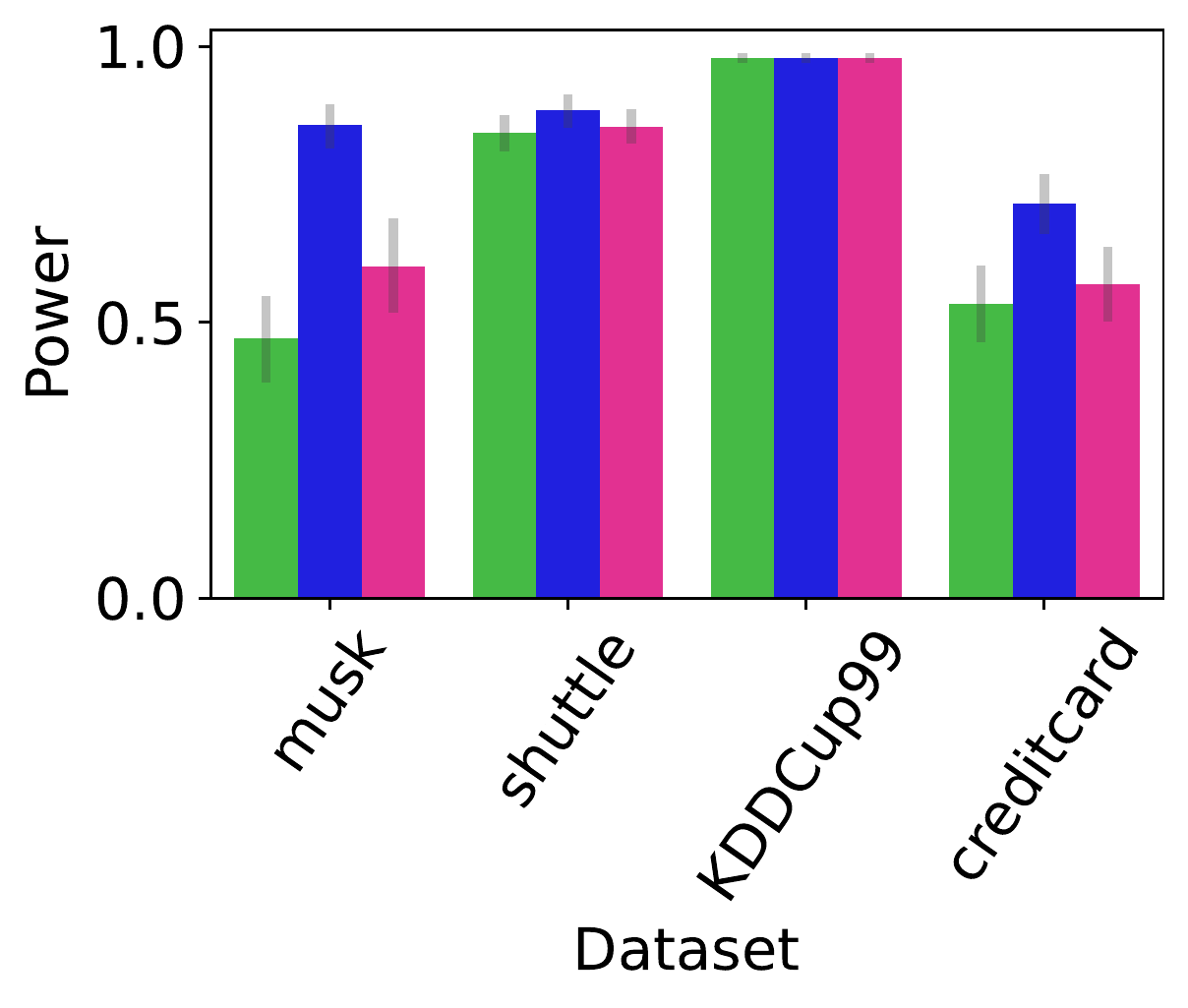}
\includegraphics[width=0.32\textwidth]{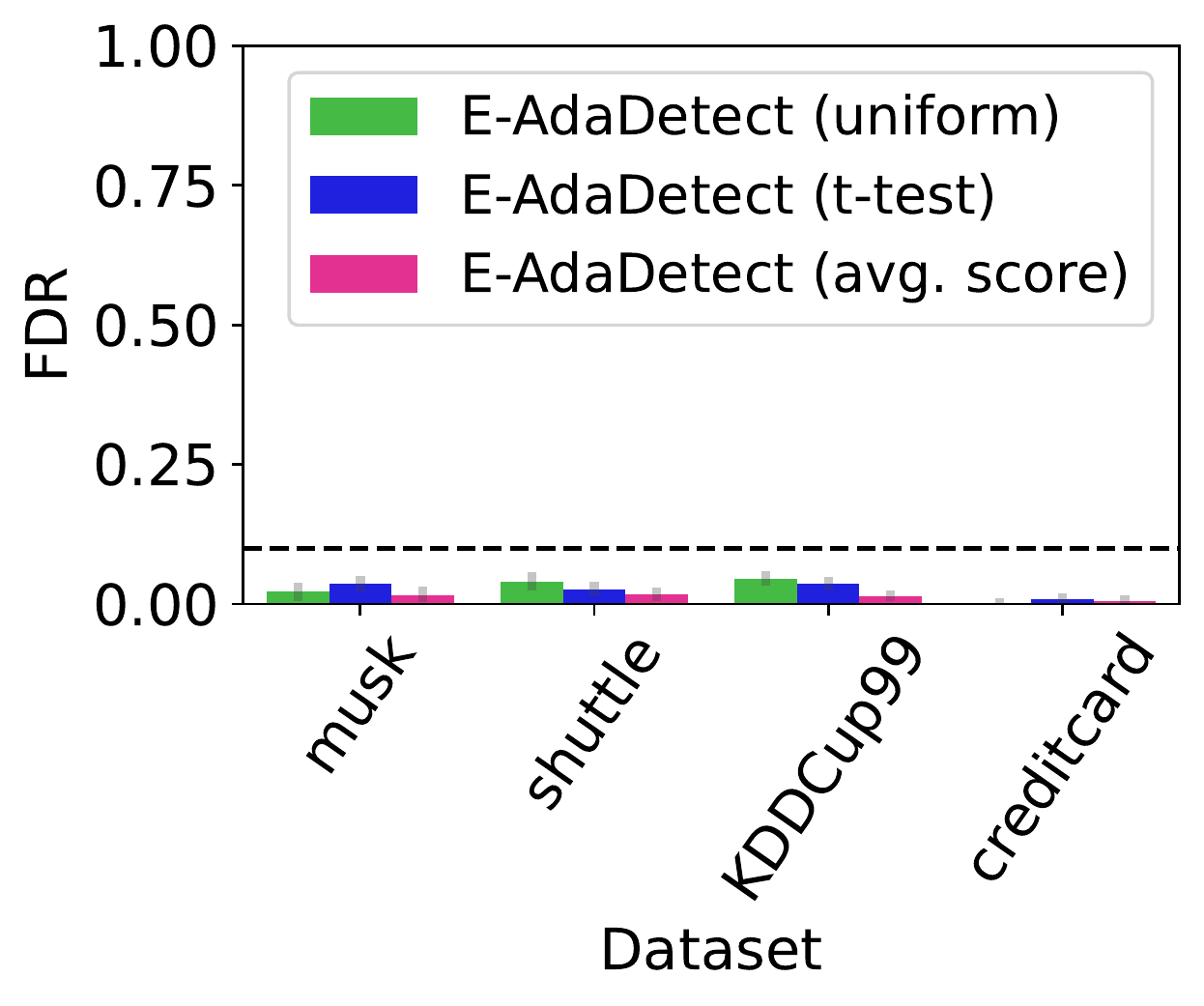}
\includegraphics[width=0.32\textwidth]{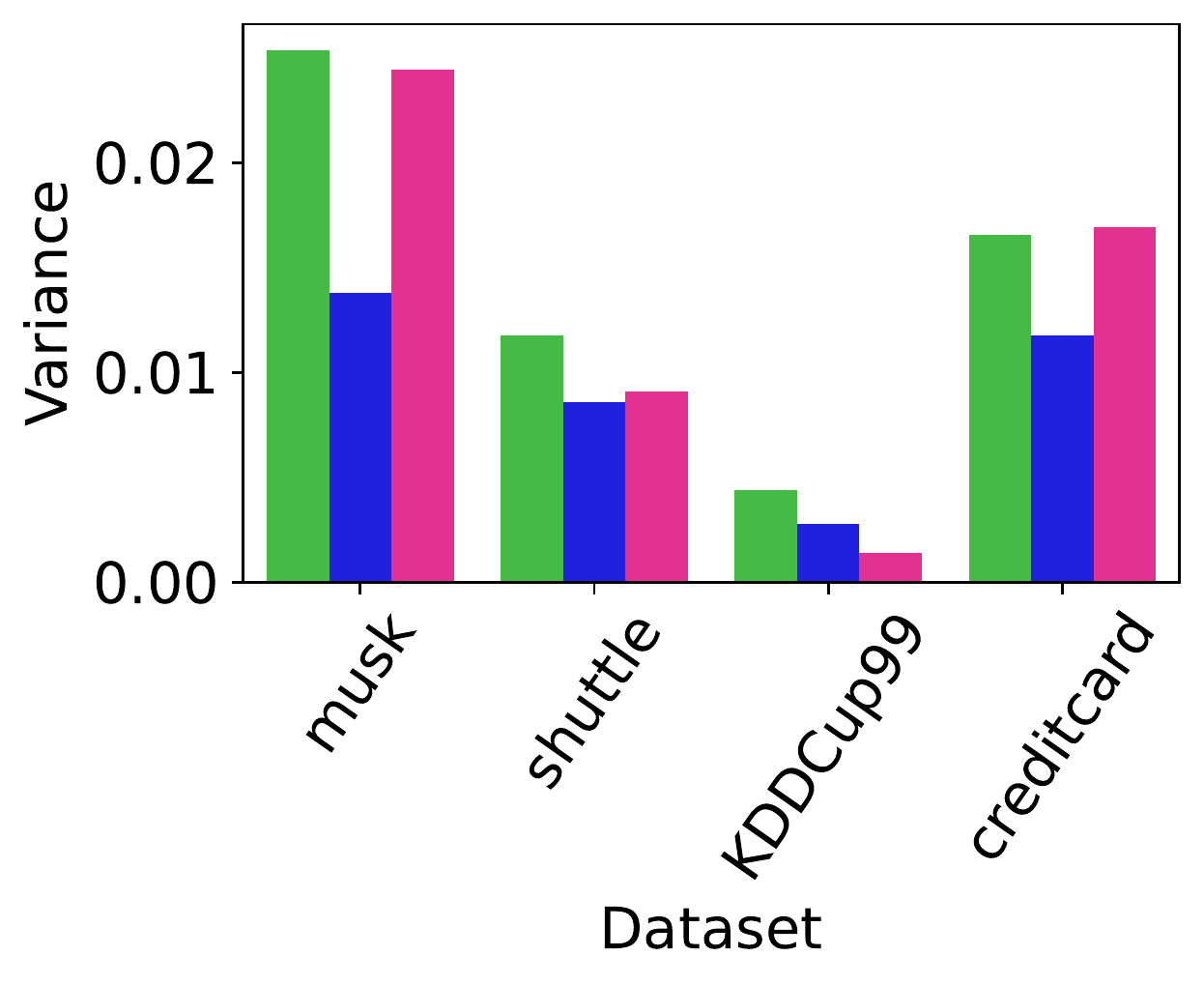}
  \caption{Performance of \texttt{E-AdaDetect}, applied with different model weighting schemes, on four real data sets. All methods utilize a combination of random forest and support vector machine models. The ``t-test'' weighting approach leads to the highest power and lowest algorithmic variability, all while controlling the FDR below the nominal 10\% level.
  }
\label{fig:weight-real-data}
  \end{figure*}

\subsection{Additional results from experiments with synthetic and real data}

Sections~\ref{app:synthetic-experiments}--\ref{app:real-experiments} in the Supplementary Material present the results of several additional experiments.
In particular, Section \ref{app:synthetic-experiments} focuses on experiments with synthetic data.
Section \ref{app:baselines} describes comparisons with alternative derandomization approaches based on different types of p-to-e calibrators \citep{e-value} operating one test point at a time, which turn out to yield lower power compared to our martingale-based method. 
The results also show that our method compares favorably to an alternative derandomization method based on an e-value construction that first appeared in~\citet{ignatiadis2023evalues}, especially if the test set contains numerous outliers or if the nominal FDR level is not too low.
Finally, Section \ref{app:real-experiments} describes additional numerical experiments based on several real data sets also studied in \citet{conformal-p-values} and \citet{ml-fdr}. These results confirm that our martingale-based e-value method can mitigate the algorithmic randomness of standard conformal inferences and \texttt{AdaDetect} while retaining relatively high power.

\section{Discussion}

Our experience suggests that e-values are often less powerful than p-values in measuring the statistical evidence against a {\em single hypothesis}. Yet, e-values can be useful to aggregate multiple dependent tests of the same hypothesis \citep{e-value}---a task that would otherwise require very conservative adjustments within the p-value framework \citep{vovk2022admissible}.
Further, we have shown that e-values lend themselves well to {\em multiple testing} because they allow efficient FDR control under arbitrary dependence \citep{eBH}, and even their relatively weak {\em individual} evidence can accumulate rapidly when a large number of hypotheses is probed.
The opportunity arising from the combination of these two key properties was recently leveraged to derandomize knockoffs \citep{drand-kn}, but until now it had not been fully exploited in the context of conformal inference.

While this paper has focused on derandomizing split-conformal and \texttt{AdaDetect} inferences for novelty detection, the key ideas could be easily extended. For example, one may utilize e-values to derandomize conformal prediction intervals in regression~\citep{lei2014distribution,romano2019conformalized} or prediction sets for classification \citep{lei2013distribution,romano2020classification} while controlling the false coverage rate over a large test set \citep{weinstein2020online}.
A different direction for future research may explore the derandomization of cross-validation+ \citep{barber2019predictive}. 

We conclude by discussing two limitations of this work.
First, the proposed method is more computationally expensive compared to standard conformal inference or \texttt{AdaDetect}, and therefore one may be limited to applying it with relatively small numbers $K$ of analysis repetitions when working with very large data sets.
That being said, it is increasingly recognized that stability is an important goal in data science \citep{murdoch2019definitions}, and thus mitigating algorithmic randomness may often justify the deployment of additional computing resources.
Second, we have shown that our method sometimes leads to a reduction in the average number of findings compared to randomized alternatives, especially in applications where few discoveries are expected. Therefore, in situations with few anticipated discoveries, practitioners considering applying our method should carefully weigh the anticipated gains in stability versus a possible reduction in power.

Mathematical proofs and additional supporting results are in the Supplementary Material. Software implementing the algorithms described in this paper and enabling the reproduction of the associated numerical experiments is available at \href{https://github.com/Meshiba/derandomized-novelty-detection}{https://github.com/Meshiba/derandomized-novelty-detection}.

\clearpage
\section*{Acknowledgements}

Y.~R. and M.~B.~were supported by the Israel Science Foundation (grant No. 729/21). Y.~R.~thanks the Career Advancement Fellowship, Technion, for providing research support. Y.~R.~also thanks Citi Bank for the generous financial support.
M.~S.~was partially supported by NSF grant DMS 2210637 and by an Amazon Research Award.

\bibliographystyle{abbrvnat}  

\clearpage

\begin{center}
{
\hrule height 4pt
\vskip 20pt
\vskip -\parskip%
\LARGE \textbf{Supplementary Material for: ``Derandomized novelty detection with FDR control via conformal e-values''}
\vskip 20pt
\vskip -\parskip%
\hrule height 1pt
\vskip 9pt%
}
\end{center}

\begin{abstract}
This document contains mathematical proof, additional details, comparisons to baseline methods, and other supporting information accompanying the paper ``Derandomized novelty detection with FDR control via conformal e-values''.
\end{abstract}


\appendix
\renewcommand{\thesection}{S\arabic{section}}
\renewcommand{\theequation}{S\arabic{equation}}
\renewcommand{\thetheorem}{S\arabic{theorem}}
\renewcommand{\thecorollary}{S\arabic{corollary}}
\renewcommand{\theproposition}{S\arabic{proposition}}
\renewcommand{\thelemma}{S\arabic{lemma}}
\renewcommand{\thetable}{S\arabic{table}}
\renewcommand{\thefigure}{S\arabic{figure}}
\renewcommand{\thealgorithm}{S\arabic{algorithm}}

\setcounter{section}{0}
\setcounter{figure}{0}
\setcounter{table}{0}
\setcounter{theorem}{0}
\setcounter{algorithm}{0}

\section*{Contents}

The supplementary material is organized as follows:
\begin{itemize}
    \item All algorithmic details are summarized in Section~\ref{app:algorithms}.
    \item Mathematical proofs of theorems presented in the paper can be found in Section~\ref{app:proofs}.
    \item Section~\ref{app:experiments-details} provides details on the training strategy and choice of hyper-parameters for the models utilized in the paper, along with information about the computational resources needed to conduct the experiments.
    \item Additional synthetic experiments involving our derandomization framework in combination with \texttt{AdaDetect} and \texttt{OC-Conformal} are in Section~\ref{app:synthetic-experiments}.
    \item A discussion of alternative approaches for constructing e-values and corresponding comparisons to our martingale-based e-value construction are in Section~\ref{app:baselines}.
    \item Real data experiments using \texttt{AdaDetect} and \texttt{OC-Conformal}, along with their derandomized versions, are in Section~\ref{app:real-experiments}.
\end{itemize}
\clearpage

\section{Algorithmic details} \label{app:algorithms}

\begin{algorithm}[!htb]
\caption{Aggregation of conformal e-values with fixed model weights}
\label{drand-e-value}
\begin{algorithmic}[1]
\STATE \textbf{Input:} {
{inlier data set $\D\equiv \left\{ X_i\right\}_{i=1}^n$};
{test set $\D_{\mathrm{test}}$};
{size of calibration-set $n_{\mathrm{cal}}$};
{number of iterations $K$};
{one-class or binary black-box classification algorithm $\mathcal{A}$};
{normalized model weights $w^{(k)}$, for $k \in [K]$};
{hyper-parameter $\alpha_{\mathrm{bh}} \in (0,1)$};
}
\FOR{$k=1,...,K$}
\STATE Randomly split $\D$ into $\D_{\mathrm{cal}}^{(k)}$ and $\D_{\mathrm{train}}^{(k)}$, with $|\D_{\mathrm{cal}}^{(k)}|=n_{\mathrm{cal}}$
\STATE Train the model: $\mathcal{M}^{(k)}\gets\mathcal{A}(\D_{\mathrm{train}}^{(k)})$ \COMMENT{possibly including additional labeled outlier data if available}
\STATE Compute the calibration scores $S^{(k)}_i=\mathcal{M}^{(k)}(X_i)$, for all $i\in \D_{\mathrm{cal}}^{(k)}$
\STATE Compute the test scores $S^{(k)}_j=\mathcal{M}^{(k)}(X_j)$, for all $j\in \D_{\mathrm{test}}$
\STATE Compute the threshold $\hat{t}^{(k)}$ according to \eqref{eq:threshold} \COMMENT{this depends on the hyper-parameter $\alpha_{\mathrm{bh}}$ }
\STATE Compute the e-values $e^{(k)}_{j}$ for all $j\in \left| \D_{\mathrm{test}}\right|$ according to \eqref{e-value}
\ENDFOR
\STATE Aggregate the e-values $\bar{e}_j = \sum_{k=1}^K w^{(k)} \cdot e^{(k)}_j$
\STATE \textbf{Output:} e-values $\bar{e}_j$ for all $j \in \mathcal{D}_{\mathrm{test}}$ that can be filtered with Algorithm~\ref{alg:ebh} to control the FDR.
\end{algorithmic}
\end{algorithm}

\begin{algorithm}[!htb]
\caption{eBH filter of \citet{eBH}} \label{alg:ebh}
\begin{algorithmic}[1]
\STATE \textbf{Input:} {e-values $\left\{ e_j \right\}_{j=1}^N$ corresponding to $N$ null hypotheses to be tested};
{target FDR level $\alpha \in (0,1)$}
\STATE Compute the order statistics of the e-values: $e_{(1)}\geq \dots \geq e_{(N)}$
\STATE Find the rejection threshold $i_{\max} = \max\{i \in [N]: e_{(i)} \geq N / (\alpha \cdot i) \}$
\STATE Construct the rejection set $\mathcal{R} = \{ j \in [N] :\ e_j \geq e_{(i_{\max})}\}$
\STATE \textbf{Output:} a list of rejected null hypotheses $\mathcal{R} \subseteq [N]$.
\end{algorithmic}
\end{algorithm}

\begin{algorithm}[!htb]
\caption{Aggregation of conformal e-values with data-adaptive model weights}
\label{drand-e-value-adaptive}
\begin{algorithmic}[1]
\STATE \textbf{Input:} {
{inlier data set $\D\equiv \left\{ X_i\right\}_{i=1}^n$};
{test set $\D_{\mathrm{test}}$};
{size of calibration-set $n_{\mathrm{cal}}$};
{number of iterations $K$};
{one-class or binary black-box classification algorithm $\mathcal{A}$};
{a model weighting function $\omega$};
{hyper-parameter $\alpha_{\mathrm{bh}} \in (0,1)$};
}
\FOR{$k=1,...,K$}
\STATE Randomly split $\D$ into $\D_{\mathrm{cal}}^{(k)}$ and $\D_{\mathrm{train}}^{(k)}$, with $|\D_{\mathrm{cal}}^{(k)}|=n_{\mathrm{cal}}$
\STATE Train the model: $\mathcal{M}^{(k)}\gets\mathcal{A}(\D_{\mathrm{train}}^{(k)})$ \COMMENT{possibly including additional labeled outlier data if available}
\STATE Compute the calibration scores $S^{(k)}_i=\mathcal{M}^{(k)}(X_i)$, for all $i\in \D_{\mathrm{cal}}^{(k)}$
\STATE Compute the test scores $S^{(k)}_j=\mathcal{M}^{(k)}(X_j)$, for all $j\in \D_{\mathrm{test}}$
\STATE Compute the weights $\tilde{w}^{(k)} = \omega \left( \{ S_{i}^{(k)} \}_{i \in \D_{\mathrm{test}} \cup \D_{\mathrm{cal}}^{(k)} }  \right)$ \COMMENT{invariant un-normalized model weights}
\STATE Compute the threshold $\hat{t}^{(k)}$ according to \eqref{eq:threshold} \COMMENT{this depends on the hyper-parameter $\alpha_{\mathrm{bh}}$} \STATE Compute the e-values $e^{(k)}_{j}$ for all $j\in \left| \D_{\mathrm{test}}\right|$ according to \eqref{e-value}
\ENDFOR
\FOR{$k=1,...,K$}
\STATE $w^{(k)} = \tilde{w}^{(k)} / \sum_{k'=1}^{K} \tilde{w}^{(k')}$ \COMMENT{normalize the model weights}
\ENDFOR
\STATE Aggregate the e-values $\bar{e}_j = \sum_{k=1}^K w^{(k)} \cdot e^{(k)}_j$
\STATE \textbf{Output:} e-values $\bar{e}_j$ for all $j \in \mathcal{D}_{\mathrm{test}}$ that can be filtered with Algorithm~\ref{alg:ebh} to control the FDR.
\end{algorithmic}
\end{algorithm}

\begin{algorithm}[!htb]
\caption{Aggregation of conformal e-values with data-adaptive model weights and \texttt{AdaDetect} training}
\label{drand-e-value-adaptive-ada}
\begin{algorithmic}[1]
\STATE \textbf{Input:} {
{inlier data set $\D\equiv \left\{ X_i\right\}_{i=1}^n$};
{test set $\D_{\mathrm{test}}$};
{size of calibration-set $n_{\mathrm{cal}}$};
{number of iterations $K$};
{black-box binary classification algorithm $\mathcal{A}$};
{a model weighting function $\omega$};
{hyper-parameter $\alpha_{\mathrm{bh}} \in (0,1)$};
}
\FOR{$k=1,...,K$}
\STATE Randomly split $\D$ into $\D_{\mathrm{cal}}^{(k)}$ and $\D_{\mathrm{train}}^{(k)}$, with $|\D_{\mathrm{cal}}^{(k)}|=n_{\mathrm{cal}}$
\STATE Train the binary classifier, $\mathcal{M}^{(k)}\gets\mathcal{A}(\D_{\mathrm{train}}^{(k)}, \D_{\mathrm{cal}}^{(k)}\cup \D_{\mathrm{test}})$ \COMMENT{treating the data in $\D_{\mathrm{cal}}^{(k)}\cup \D_{\mathrm{test}}$ as outliers}
\STATE Compute the calibration scores $S^{(k)}_i=\mathcal{M}^{(k)}(X_i)$, for all $i\in \D_{\mathrm{cal}}^{(k)}$
\STATE Compute the test scores $S^{(k)}_j=\mathcal{M}^{(k)}(X_j)$, for all $j\in \D_{\mathrm{test}}$
\STATE Compute the weights $\tilde{w}^{(k)} = \omega \left( \{ S_{i}^{(k)} \}_{i \in \D_{\mathrm{test}} \cup \D_{\mathrm{cal}}^{(k)} }  \right)$ \COMMENT{invariant un-normalized model weights}
\STATE Compute the threshold $\hat{t}^{(k)}$ according to \eqref{eq:threshold} \COMMENT{this depends on the hyper-parameter $\alpha_{\mathrm{bh}}$}
\STATE Compute the e-values $e^{(k)}_{j}$ for all $j\in \left| \D_{\mathrm{test}}\right|$ according to \eqref{e-value}
\ENDFOR
\FOR{$k=1,...,K$}
\STATE $w^{(k)} = \tilde{w}^{(k)} / \sum_{k'=1}^{K} \tilde{w}^{(k')}$ \COMMENT{normalize the model weights}
\ENDFOR
\STATE Aggregate the e-values $\bar{e}_j = \sum_{k=1}^K w^{(k)} \cdot e^{(k)}_j$
\STATE \textbf{Output:} e-values $\bar{e}_j$ for all $j \in \mathcal{D}_{\mathrm{test}}$ that can be filtered with Algorithm~\ref{alg:ebh} to control the FDR.
\end{algorithmic}
\end{algorithm}

\begin{algorithm}[!htb]
\caption{Adaptive model weighting via t-tests} \label{alg:t-weights}
\begin{algorithmic}[1]
\STATE \textbf{Input:} {Scores $\{ S_{i} \}_{i=1}^{N} $}; a guess $\gamma$ for the proportion of outliers in the data.
\STATE Compute the order statistics of the scores: $S_{(1)}\leq \dots \leq S_{(N)}$
\STATE Denote $n_2= \lceil \gamma N \rceil$ and $n_1 = N - n_2$
\STATE Divide the sorted scores into two groups with size $n_1$ and $n_2$: $\mathcal{I}_1=\{S_{(i)}\}_{i=1}^{n_1}$, $\mathcal{I}_2=\{S_{(i)}\}_{i=n_1 + 1}^{N}$
\STATE Estimate the means of the two groups: $\mu_1 = (1/n_1) \sum_{i\in \mathcal{I}_1} S_{(i)}$ and $\mu_2 = (1/n_2) \sum_{i\in \mathcal{I}_2} S_{(i)}$
\STATE Estimate the pooled variance: $z = (1/(n_1+n_2-2)) \left[ \sum_{i\in \mathcal{I}_1} (S_{(i)} - \mu_1)^2 + \sum_{i\in \mathcal{I}_2} (S_{(i)} - \mu_2)^2 \right]$
\STATE Compute the t-statistic: $\tilde{v}= (\mu_1 - \mu_2) / \sqrt{z \cdot (1/n_1 + 1/n_2)}$
\STATE \textbf{Output:} model weight $\tilde{w} = |\tilde{v}|$
\end{algorithmic}
\end{algorithm}

\begin{algorithm}[!htb]
\caption{Adaptive model weighting via trimmed mean} \label{alg:avg-weights}
\begin{algorithmic}[1]
\STATE \textbf{Input:} {Scores $\{ S_{i} \}_{i=1}^{N} $}; a guess $\gamma$ for the proportion of outliers in the data.
\STATE Compute the order statistics of the scores: $S_{(1)}\leq \dots \leq S_{(N)}$
\STATE Denote $\tilde{n}= N - \lceil \gamma N \rceil$
\STATE Compute the mean of the trimmed group: $\hat{\mu} = \sum_{i=1}^{\tilde{n}} S_{(i)}$
\STATE \textbf{Output:} model weight $\tilde{w} = e^{-\hat{\mu}}$
\end{algorithmic}
\end{algorithm}

\clearpage

\section{Mathematical proofs} \label{app:proofs}

\begin{proof}[Proof of Theorem~\ref{thm:e-fdr}]
This result is implied by Theorem~\ref{thm:e-fdr-adaptive}, to whose proof we refer.
\end{proof}

\begin{proof}[Proof of Theorem~\ref{thm:e-fdr-adaptive}]

The proof follows a martingale argument similar to that of \citet{fairness}.
For each fixed $k$, define the following two quantities as functions of $t \in \mathbb{R}$:
\begin{align}
    V_{\mathrm{test}}^{(k)}(t) = \sum_{j \in \D_{\mathrm{test}}^{\mathrm{null}}} \mathbb{I} \left\{ S_j^{(k)}\geq t \right\},
\end{align}
and
\begin{align}
    V_{\mathrm{cal}}^{(k)}(t) = \sum_{i \in \D_{\mathrm{cal}}^{(k)}} \mathbb{I} \left\{ S_i^{(k)}\geq t \right\}.
\end{align}
For each $k \in [K]$, define also the unordered set of conformity scores for non-null test points as:
\begin{align*}
  \tilde{\mathcal{D}}^{(k)}_{\mathrm{test-nn}} = \{ \hat{S}_{i}^{(k)} \}_{i \in \D_{\mathrm{test}} \setminus \D_{\mathrm{test}}^{\mathrm{null}}},
\end{align*}
and the unordered set of conformity scores for all calibration and test points as:
\begin{align*}
\tilde{\mathcal{D}}^{(k)}_{\mathrm{cal-test}} = \{ \hat{S}_{i}^{(k)} \}_{i \in \D_{\mathrm{test}} \cup \D_{\mathrm{cal}}^{(k)} }.
\end{align*}

With this premise, we can write:
\begin{align*}
  \sum_{j\in \Hnull} \E \left[ \bar{e}_j\right]
  & =  \sum_{j\in \Hnull}  \E \left[ \sum_{k=1}^{K} w^{(k)} e_j^{(k)} \right] \\
  & = \sum_{k=1}^{K} \sum_{j\in \Hnull} \E \left[ w^{(k)} e_j^{(k)} \right]  \\
  & = \sum_{k=1}^{K} \sum_{j\in \Hnull} \E \left[ w^{(k)} \left( 1 +  n_{\mathrm{cal}} \right) \frac{ \mathbb{I} \left\{ S_j^{(k)}\geq \hat{t}^{(k)}\right\}}{1 + V_{\mathrm{cal}}^{(k)}(\hat{t}^{(k)})} \right] \\
  & = \sum_{k=1}^{K} \E \left[ w^{(k)} \left( 1 +  n_{\mathrm{cal}} \right) \mathbb{E}\left[ \frac{ V_{\mathrm{test}}^{(k)}(\hat{t}^{(k)}) }{1 + V_{\mathrm{cal}}^{(k)}(\hat{t}^{(k)})} \mid \tilde{\mathcal{D}}^{(k)}_{\mathrm{cal-test}} , \tilde{\mathcal{D}}^{(k)}_{\mathrm{test-nn}} \right] \right] \\
  & = \sum_{k=1}^{K} \E \left[ w^{(k)} \cdot n_{\mathrm{test}}^{\mathrm{null}}  \right] \\
  & = n_{\mathrm{test}}^{\mathrm{null}} \leq n_{\mathrm{test}}.
\end{align*}
Above, the third-to-last equality follows from the assumption that $w^{(k)}$ is a deterministic function of $\tilde{\mathcal{D}}^{(k)}_{\mathrm{cal-test}}$, and the second-to-last equality follows from the fact that $M^{(k)}(t)$, defined as
\begin{align*}
    M^{(k)}(t) = \frac{V_{\mathrm{test}}^{(k)}(t)}{1 + V_{\mathrm{cal}}^{(k)}(t)},
\end{align*}
is a martingale conditional on $\tilde{\mathcal{D}}^{(k)}_{\mathrm{cal-test}}$ and $\tilde{\mathcal{D}}^{(k)}_{\mathrm{test-nn}}$, and therefore it is possible to show that
\begin{align*}
  \mathbb{E}\left[ M^{(k)}(\hat{t}^{(k)}) \mid \tilde{\mathcal{D}}^{(k)}_{\mathrm{cal-test}}, \tilde{\mathcal{D}}^{(k)}_{\mathrm{test-nn}}\right]
    = \frac{n_{\mathrm{test}}^{\mathrm{null}}}{1+n_{\mathrm{cal}}}
\end{align*}
by applying the optional stopping theorem.
This last statement is proved below, following the same strategy as in \citet{fairness}.

For each $l \in \{1,\ldots, n_{\mathrm{test}}^{\mathrm{null}} + n_{\mathrm{cal}}\}$, define $t_l$ as the unique discrete threshold belonging to $\tilde{\mathcal{D}}^{(k)}_{\mathrm{cal-test}}$ at which exactly $l$ inliers have scores exceeding $t_l$, across all calibration and null test points; i.e.,
\begin{align*}
  t_l^{(k)} = \inf \left\{ t \in \tilde{\mathcal{D}}^{(k)}_{\mathrm{cal-test}} : V_{\mathrm{test}}^{(k)}(t) + V_{\mathrm{cal}}^{(k)}(t) = l \right\}.
\end{align*} 
Note that this is always well-defined as long as there are no ties between scores (which can always be achieved by adding a negligible noise).

By convention, define also $t_0^{(k)} = \infty$. Consider then a discrete-time filtration indexed by $l$:
\begin{align*}
  \mathcal{F}_{l}^{(k)} = \left\{ \sigma\left( V_{\mathrm{test}}^{(k)}(t_{l'}^{(k)}), V_{\mathrm{cal}}^{(k)}(t_{l'}^{(k)}) \right) \right\}_{l \leq l' \leq n_{\mathrm{test}}^{\mathrm{null}} + n_{\mathrm{cal}}}.
\end{align*}
Note that $\mathcal{F}_{l}^{(k)}$ is a backward-running filtration because $\mathcal{F}_{l_2}^{(k)} \subset \mathcal{F}_{l_1}^{(k)}$ for any $l_1 < l_2$.

It now remains to be proved that $M^{(k)}(t)$ is a martingale.
Since we assumed that there are no ties between scores, we get that for every two consecutive thresholds $t_{l}^{(k)}$ and $t_{l-1}^{(k)}$ the following holds by definition:
\begin{align*}
    V_{\mathrm{test}}^{(k)}(t_{l-1}^{(k)}) + V_{\mathrm{cal}}^{(k)}(t_{l-1}^{(k)}) = V_{\mathrm{test}}^{(k)}(t_{l}^{(k)}) + V_{\mathrm{cal}}^{(k)}(t_{l}^{(k)}) - 1.
\end{align*}
The discrepancy between these two thresholds corresponds to a singular score whose value is larger than $t_{l}^{(k)}$ but smaller than $t_{l-1}^{(k)} > t_{l}^{(k)}$. 
This score can either correspond to a calibration or null test point. Therefore, we should consider the following two mutually exclusive events:
\begin{align*}
    E_1 = \left\{V_{\mathrm{cal}}^{(k)}(t_{l-1}^{(k)}) = V_{\mathrm{cal}}^{(k)}(t_{l}^{(k)}) \right\} \cap \left\{ V_{\mathrm{test}}^{(k)}(t_{l-1}^{(k)}) = V_{\mathrm{test}}^{(k)}(t_{l}^{(k)}) - 1\right\}, \\
    E_2 = \left\{V_{\mathrm{cal}}^{(k)}(t_{l-1}^{(k)}) = V_{\mathrm{cal}}^{(k)}(t_{l}^{(k)}) - 1\right\} \cap \left\{ V_{\mathrm{test}}^{(k)}(t_{l-1}^{(k)}) = V_{\mathrm{test}}^{(k)}(t_{l}^{(k)})\right\}.
\end{align*}
By Assumption~\ref{exchangeable},
\begin{align*}
    \p \left( E_1 \mid \mathcal{F}_{l}^{(k)} , \tilde{\mathcal{D}}^{(k)}_{\mathrm{cal-test}}, \tilde{\mathcal{D}}^{(k)}_{\mathrm{test-nn}} \right) = \frac{V_{\mathrm{test}}^{(k)}(t_{l}^{(k)})}{V_{\mathrm{test}}^{(k)}(t_{l}^{(k)}) + V_{\mathrm{cal}}^{(k)}(t_{l}^{(k)})}, \\
    \p \left( E_2 \mid \mathcal{F}_{l}^{(k)} , \tilde{\mathcal{D}}^{(k)}_{\mathrm{cal-test}}, \tilde{\mathcal{D}}^{(k)}_{\mathrm{test-nn}} \right) = \frac{V_{\mathrm{cal}}^{(k)}(t_{l}^{(k)})}{V_{\mathrm{test}}^{(k)}(t_{l}^{(k)}) + V_{\mathrm{cal}}^{(k)}(t_{l}^{(k)})}.
\end{align*}

Then, for any $l \in \{1,\ldots, n_{\mathrm{test}}^{\mathrm{null}} + n_{\mathrm{cal}}\}$, it follows from the law of total probability that
\begin{align*}
  \mathbb{E} & \left[ M^{(k)}(t_{l-1}^{(k)}) \mid \mathcal{F}_{l}^{(k)} , \tilde{\mathcal{D}}^{(k)}_{\mathrm{cal-test}}, \tilde{\mathcal{D}}^{(k)}_{\mathrm{test-nn}} \right] \\
  & \qquad = \frac{V_{\mathrm{test}}^{(k)}(t_{l}^{(k)})-1}{1 + V_{\mathrm{cal}}^{(k)}(t_{l}^{(k)})} \cdot \frac{V_{\mathrm{test}}^{(k)}(t_{l}^{(k)})}{V_{\mathrm{cal}}^{(k)}(t_{l}^{(k)}) + V_{\mathrm{test}}^{(k)}(t_{l}^{(k)})} + \frac{V_{\mathrm{test}}^{(k)}(t_{l}^{(k)})}{V_{\mathrm{cal}}^{(k)}(t_{l}^{(k)})} \cdot \frac{V_{\mathrm{cal}}^{(k)}(t_{l}^{(k)})}{V_{\mathrm{cal}}^{(k)}(t_{l}^{(k)}) + V_{\mathrm{test}}^{(k)}(t_{l}^{(k)})}\\
  & \qquad = \frac{V_{\mathrm{test}}^{(k)}(t_{l}^{(k)})-1}{1 + V_{\mathrm{cal}}^{(k)}(t_{l}^{(k)})} \cdot \frac{V_{\mathrm{test}}^{(k)}(t_{l}^{(k)})}{V_{\mathrm{cal}}^{(k)}(t_{l}^{(k)}) + V_{\mathrm{test}}^{(k)}(t_{l}^{(k)})} + \frac{V_{\mathrm{test}}^{(k)}(t_{l}^{(k)})}{V_{\mathrm{cal}}^{(k)}(t_{l}^{(k)}) + V_{\mathrm{test}}^{(k)}(t_{l}^{(k)})}\\
  & \qquad = \frac{V_{\mathrm{test}}^{(k)}(t_{l}^{(k)}) \cdot [V_{\mathrm{test}}^{(k)}(t_{l}^{(k)})-1] + V_{\mathrm{test}}^{(k)}(t_{l}^{(k)}) \cdot [1 + V_{\mathrm{cal}}^{(k)}(t_{l}^{(k)})]}{[1 + V_{\mathrm{cal}}^{(k)}(t_{l}^{(k)})] \cdot [V_{\mathrm{cal}}^{(k)}(t_{l}^{(k)}) + V_{\mathrm{test}}^{(k)}(t_{l}^{(k)})]}\\
  & \qquad = \frac{V_{\mathrm{test}}^{(k)}(t_{l}^{(k)})}{1 + V_{\mathrm{cal}}^{(k)}(t_{l}^{(k)})} \\
  & \qquad = M^{(k)}(t_{l}^{(k)}).
\end{align*}

By the optional stopping theorem, this implies that
\begin{align*}
  \mathbb{E}\left[ M^{(k)}(\hat{t}^{(k)}) \mid \tilde{\mathcal{D}}^{(k)}_{\mathrm{cal-test}}, \tilde{\mathcal{D}}^{(k)}_{\mathrm{test-nn}} \right]
  & = \mathbb{E}\left[ M^{(k)}(t_{n_{\mathrm{test}}^{\mathrm{null}}+n_{\mathrm{cal}}}) \mid \tilde{\mathcal{D}}^{(k)}_{\mathrm{cal-test}}, \tilde{\mathcal{D}}^{(k)}_{\mathrm{test-nn}} \right]
    = \frac{n_{\mathrm{test}}^{\mathrm{null}}}{1+n_{\mathrm{cal}}},
\end{align*}
and this completes the proof.
\end{proof}

\clearpage

\section{Implementation details for one-class and binary classifiers}
\label{app:experiments-details}

We have applied the following models, from the \texttt{scikit-learn} \citep{sklearn_api} Python library, to compute the scores.

\begin{itemize}
    \item Synthetic experiments:
\begin{itemize}
    \item Binary classifer: logistic regression with \texttt{scikit-learn} default ridge regularization parameter.
    \item  One class classifier: one-class kernel SVM with RBF kernel with \texttt{scikit-learn} default kernel width parameter.
\end{itemize}
\item Real-data experiments:
\begin{itemize}
    \item Binary classifier: random forest with 100 estimators with a maximum depth of 10. All other hyper-parameters are set to \texttt{scikit-learn} default values.
    \item  One class classifier: isolation forest with 100 estimators, each is fitted to a random subset of $\min(256, n\_\text{samples})$ training samples. All other hyper-parameters are set to \texttt{scikit-learn} default values.
\end{itemize}
\end{itemize}

Unless specified otherwise, our derandomization method is implemented by setting $\alpha_{\mathrm{bh}} = 0.1\cdot 
\alpha$, where $\alpha$ is the target FDR level. All the experiments were conducted on our local CPU cluster.

\section{Additional synthetic experiments} \label{app:synthetic-experiments}

\subsection{Derandomized AdaDetect} \label{app:synthetic-experiments-adadetect}

Section~\ref{sec:synthetic} of the main manuscript studies the performance of the proposed method focusing on the algorithmic variability for different analyses of the same synthetic data set. Here, we conduct the following additional experiments.
\begin{itemize}
    \item Figure~\ref{fig-app:synth-classic-FDR-Power} confirms the reproducibility of the results presented in Figure~\ref{fig:data_difficulty} of the main manuscript, showing that the FDR is controlled over 100 independent realizations of the data. These results are investigated as a function of the signal strength.
    \item Figure~\ref{fig:iterations} of the main manuscript studies the effect of the number $K$ of derandomized analyses for strong signal amplitude. We conduct a similar study in Figure~\ref{fig-app:iterations-low} but for a lower power regime. When the number of iterations is relatively high, our method achieves much smaller algorithmic variably as measured by the selection variance, although at the cost of reduced power. This drop in power can be explained by noting that, in the low power regime, the base outlier detection methods tend to make inconsistent selections across multiple analyses of the same data, which are likely to be filtered out by the derandomization procedure. 

    \item  Figure~\ref{fig-app:synth-classic-FDR-Power-iterations} confirms the reproducibility of the results observed in Figure~\ref{fig:iterations} of the main manuscript and Figure~\ref{fig-app:iterations-low} of the Supplementary Material, by evaluating the average FDR and power over 100 independent realizations of the data. These results are investigated as a function of the number of analysis repetitions $K$.
    
\end{itemize}

\begin{figure}[!htb]
  \centering
\includegraphics[width=0.45\textwidth]{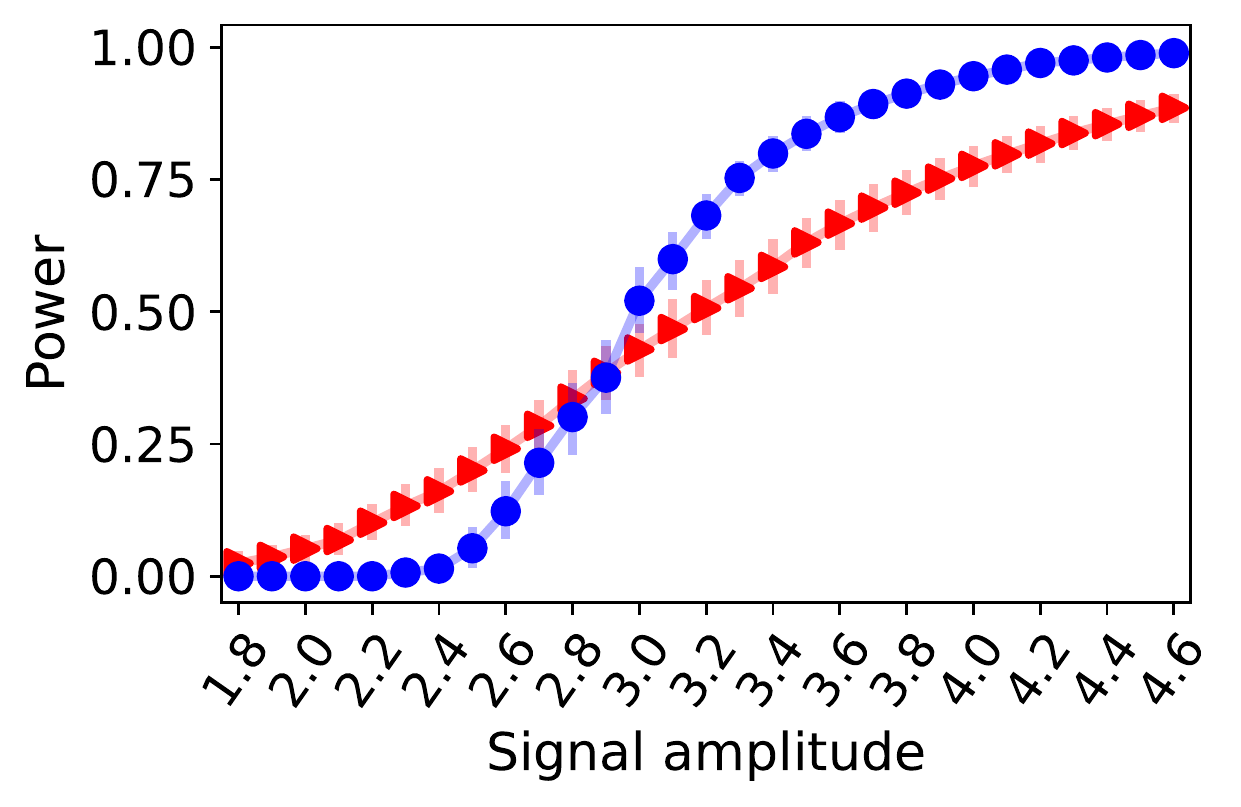}
\includegraphics[width=0.45\textwidth]{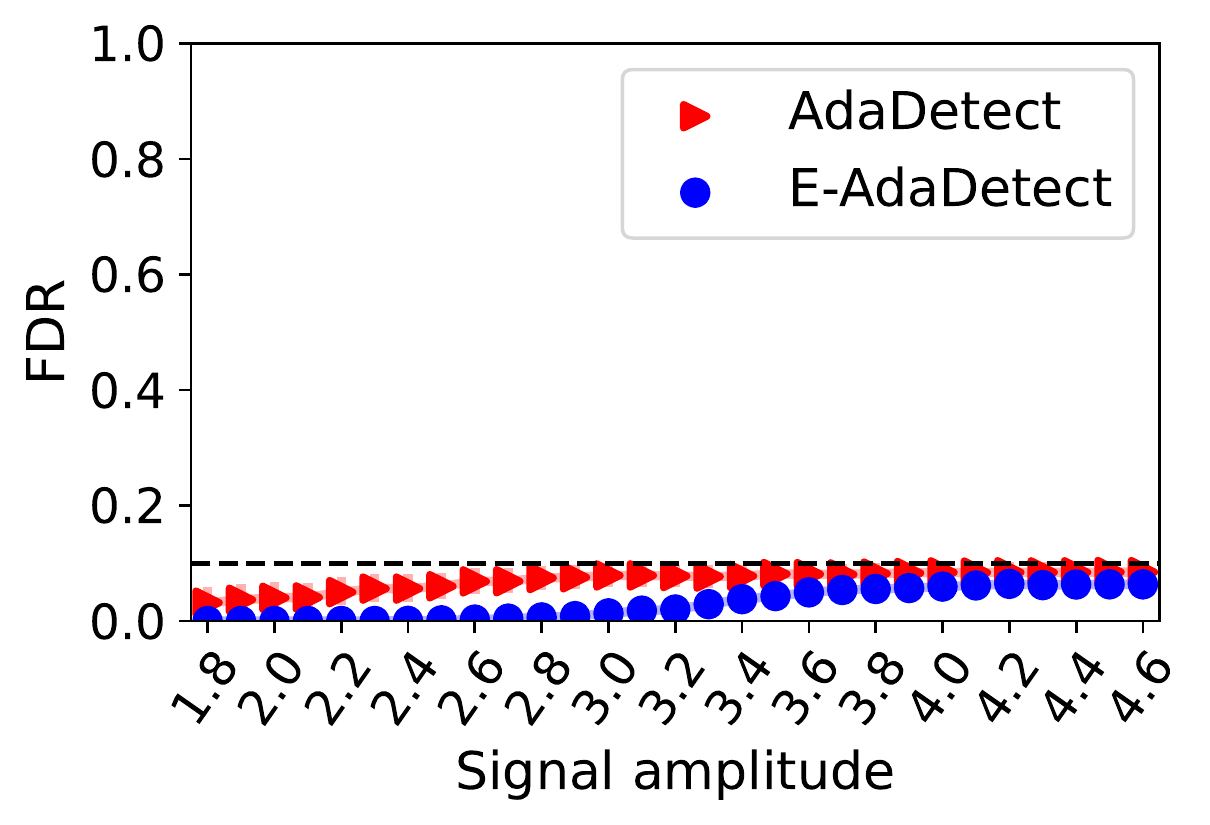}
  \caption{
Performance on synthetic data of the proposed derandomized outlier detection method, \texttt{E-AdaDetect}, applied with $K=10$ analysis repetitions. The results are compared to the performance of the randomized benchmark, \texttt{AdaDetect}, as a function of the signal strength, averaging over 100 independent realizations of the data. 
Left: average proportion of true outliers that are discovered (higher is better). Right: average proportion of false discoveries (lower is better).
Other results are as in Figure~\ref{fig:data_difficulty}.
}
\label{fig-app:synth-classic-FDR-Power}
  \end{figure}

\begin{figure*}[!htb]
  \centering
\includegraphics[width=0.32\textwidth]{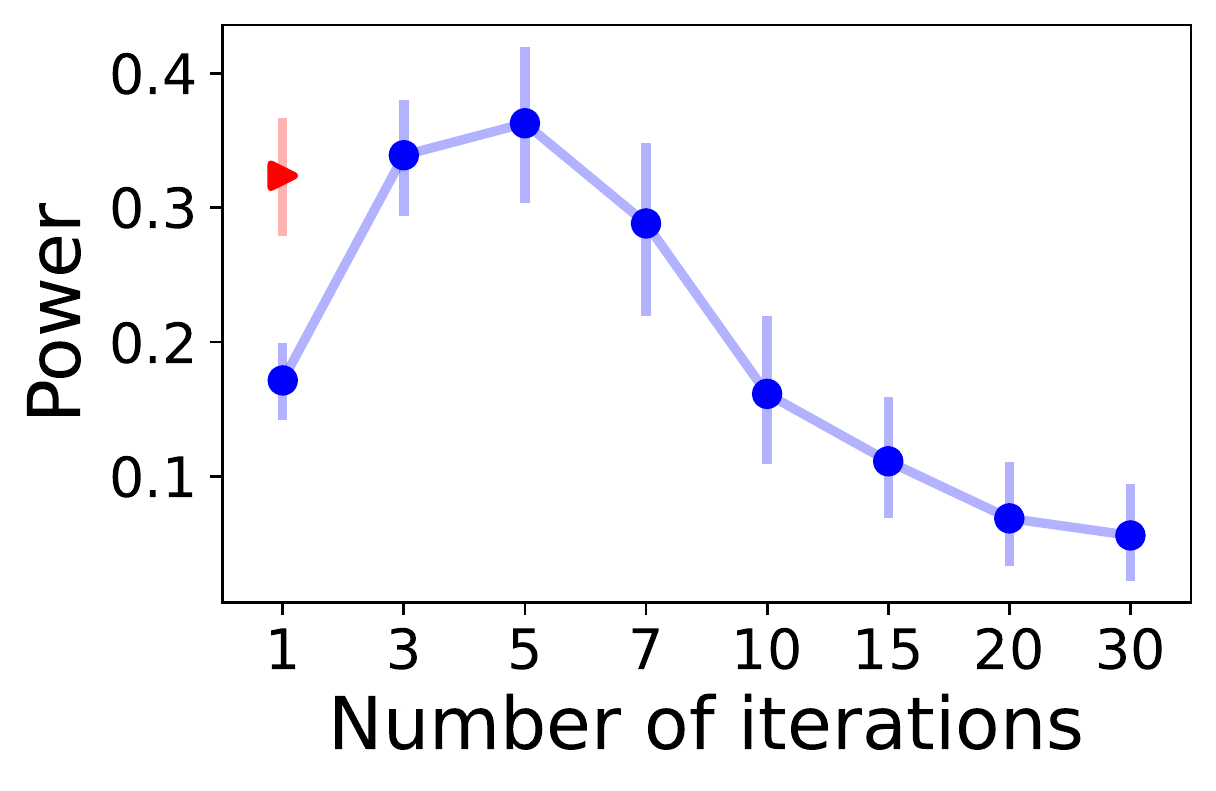}
\includegraphics[width=0.32\textwidth]{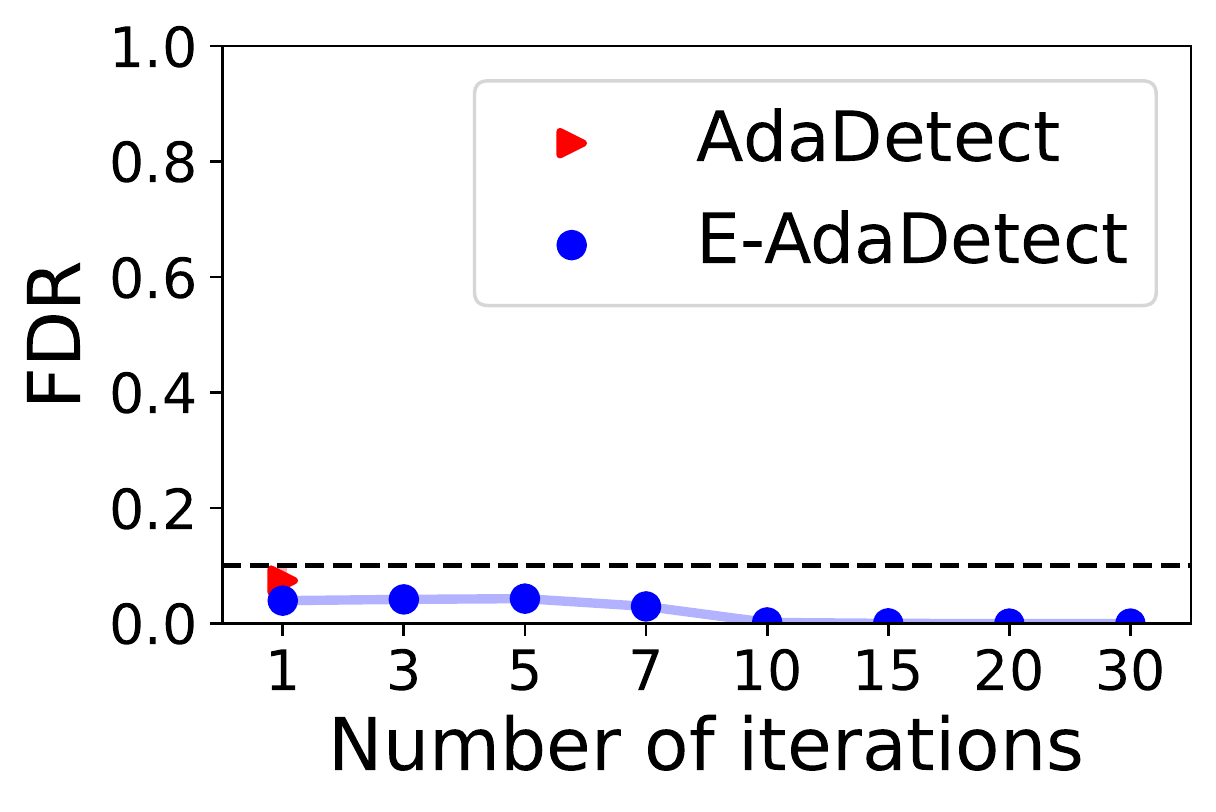} 
\includegraphics[width=0.32\textwidth]{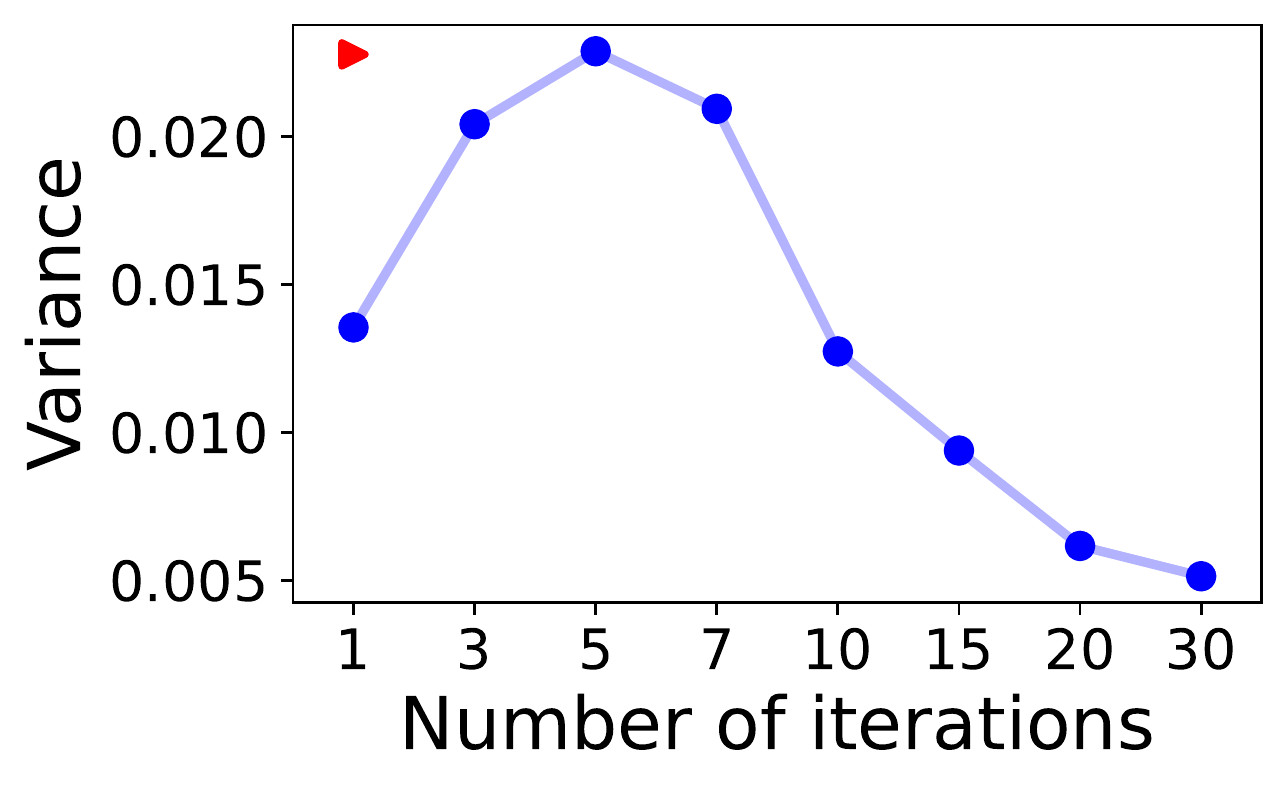}
  \caption{Performance on synthetic data of \texttt{E-AdaDetect}, as a function of the number $K$ of analysis repetitions, compared to \texttt{AdaDetect}. Note that the latter can only be applied with a single data split (or iteration).
Low power regime with signal amplitude $2.8$.
Other details are as in Figure~\ref{fig:iterations}.
}
\label{fig-app:iterations-low}
  \end{figure*}

\begin{figure}[!htb]
  \centering
\includegraphics[width=0.45\textwidth]{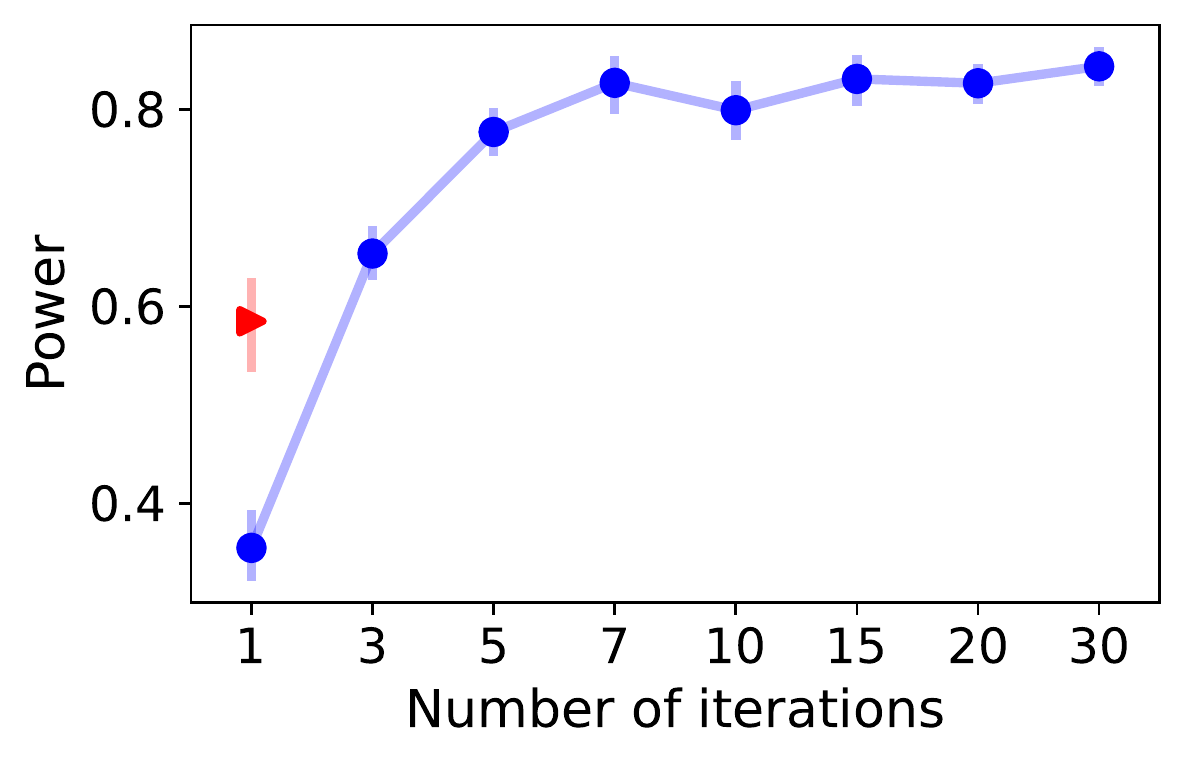}
\includegraphics[width=0.45\textwidth]{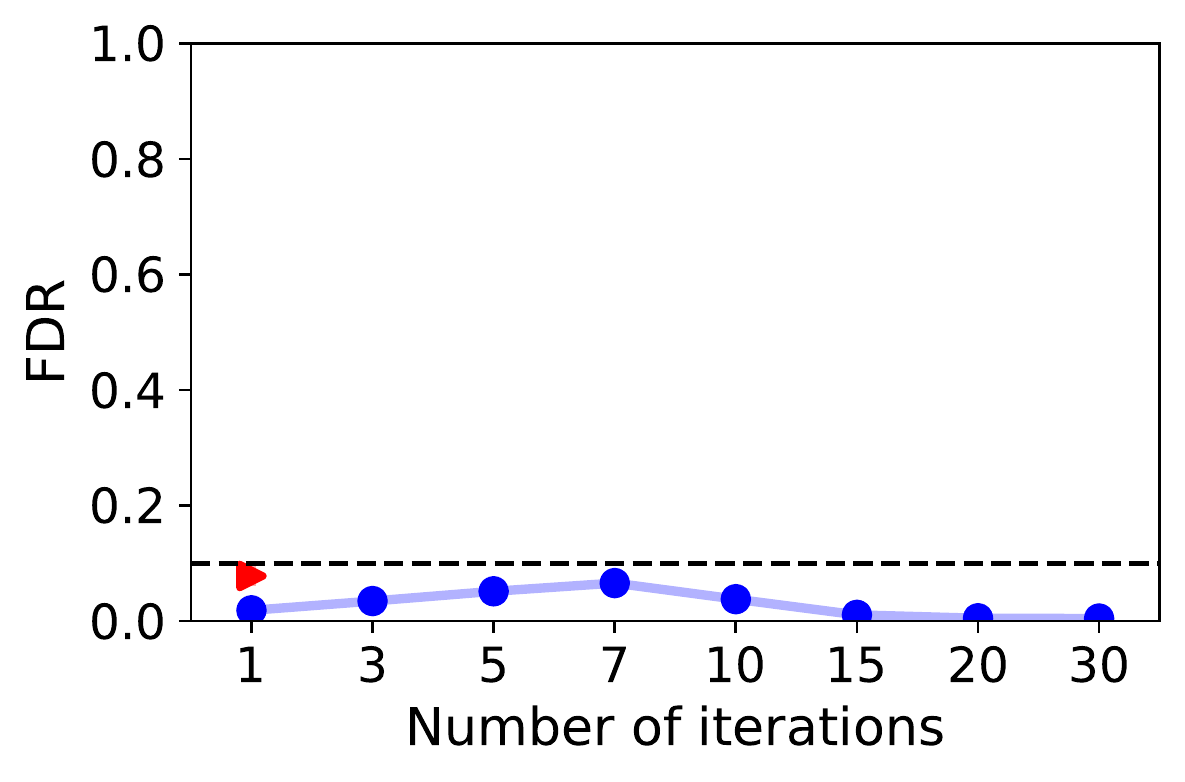}
\\
\includegraphics[width=0.45\textwidth]{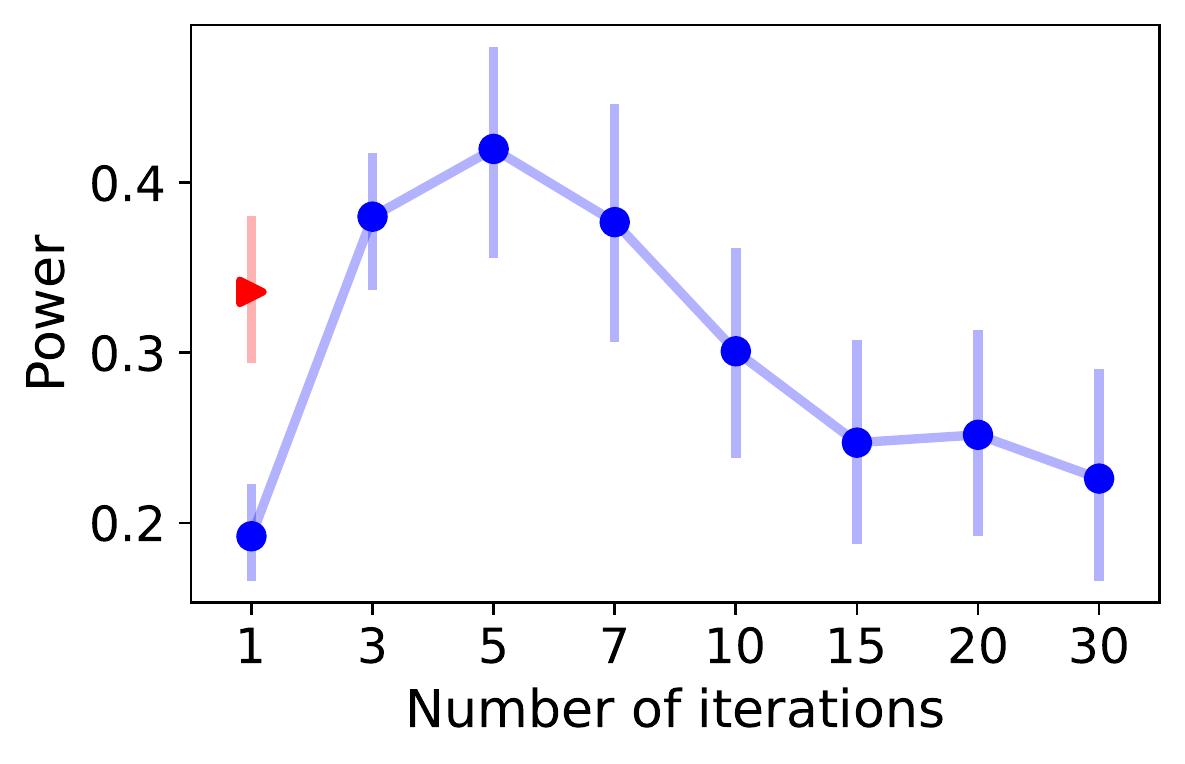}
\includegraphics[width=0.45\textwidth]{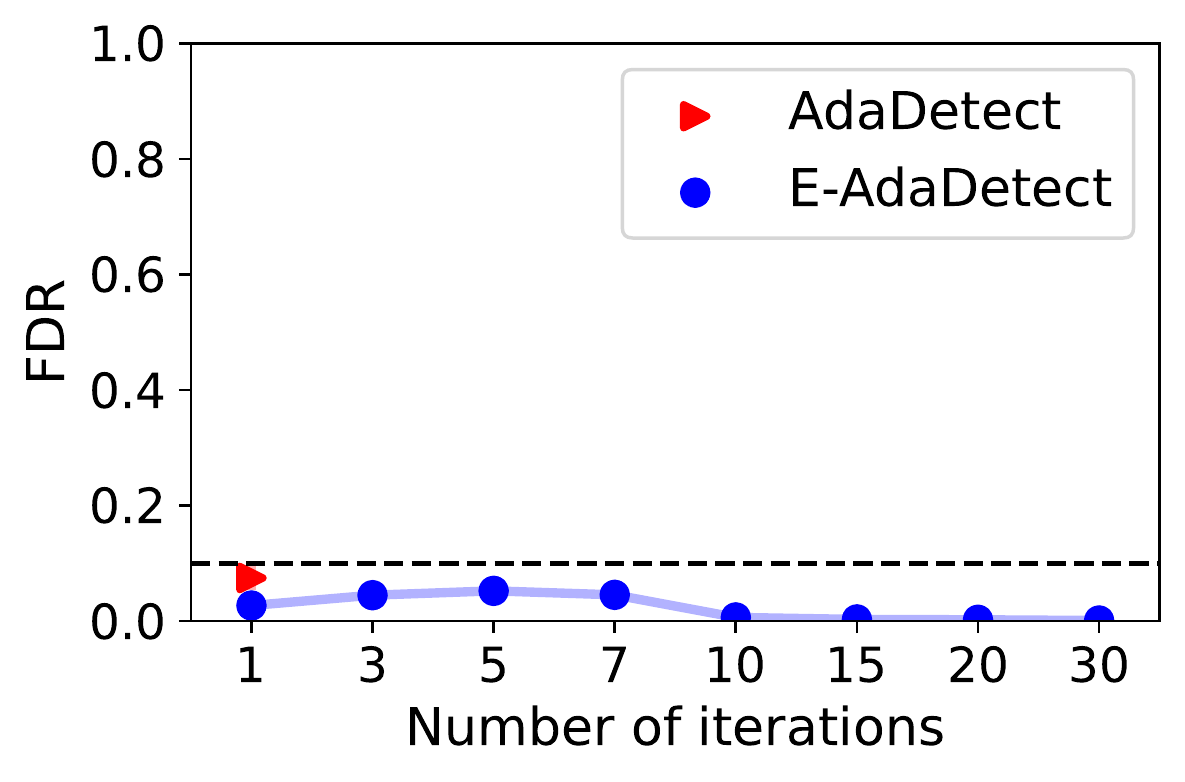}
  \caption{Performance on synthetic data of \texttt{E-AdaDetect}, as a function of the number $K$ of analysis repetitions, compared to that of its randomized benchmark \texttt{AdaDetect}.
The results are averaged over 100 independent realizations of the data. Top: high-power regime with signal amplitude $3.4$. Bottom: low-power regime with signal amplitude $2.8$.
Other results are as in Figure~\ref{fig:iterations}.
}
\label{fig-app:synth-classic-FDR-Power-iterations}
  \end{figure}

\subsection{Derandomized One-Class Conformal} \label{app:synthetic-experiments-OC}

We now turn to explore the effect of our derandomization method on the performance of \texttt{OC-Conformal}. Specifically, Figure~\ref{fig-app:data_difficulty_ocsvm} compares the power, false discovery proportion, and variance of \texttt{OC-Conformal} to those of \texttt{E-OC-Conformal} as a function of the signal amplitude, on one realization of the synthetic data. 
The results show that the false discovery proportion is controlled for both methods, but this error metric is lower for \texttt{E-OC-Conformal}. The variance of \texttt{E-OC-Conformal} is also reduced, but often at the cost of reduced power. Figure~\ref{fig-app:synth-classic-FDR-Power-data_difficulty_ocsvm} reports the corresponding average FDR and power across 100 realizations, confirming the validity of our method.

\begin{figure}[!htb]
  \centering
  \includegraphics[width=0.32\textwidth]{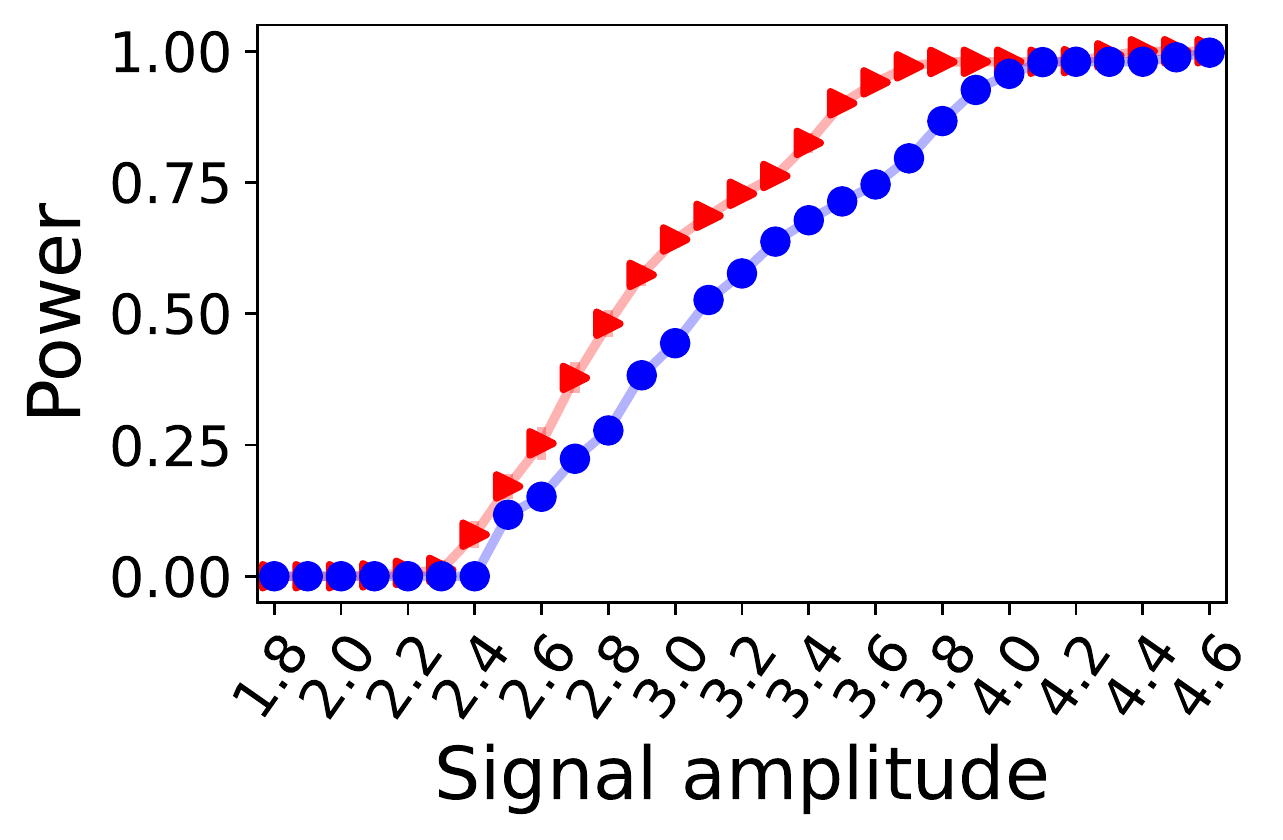}
  \includegraphics[width=0.32\textwidth]{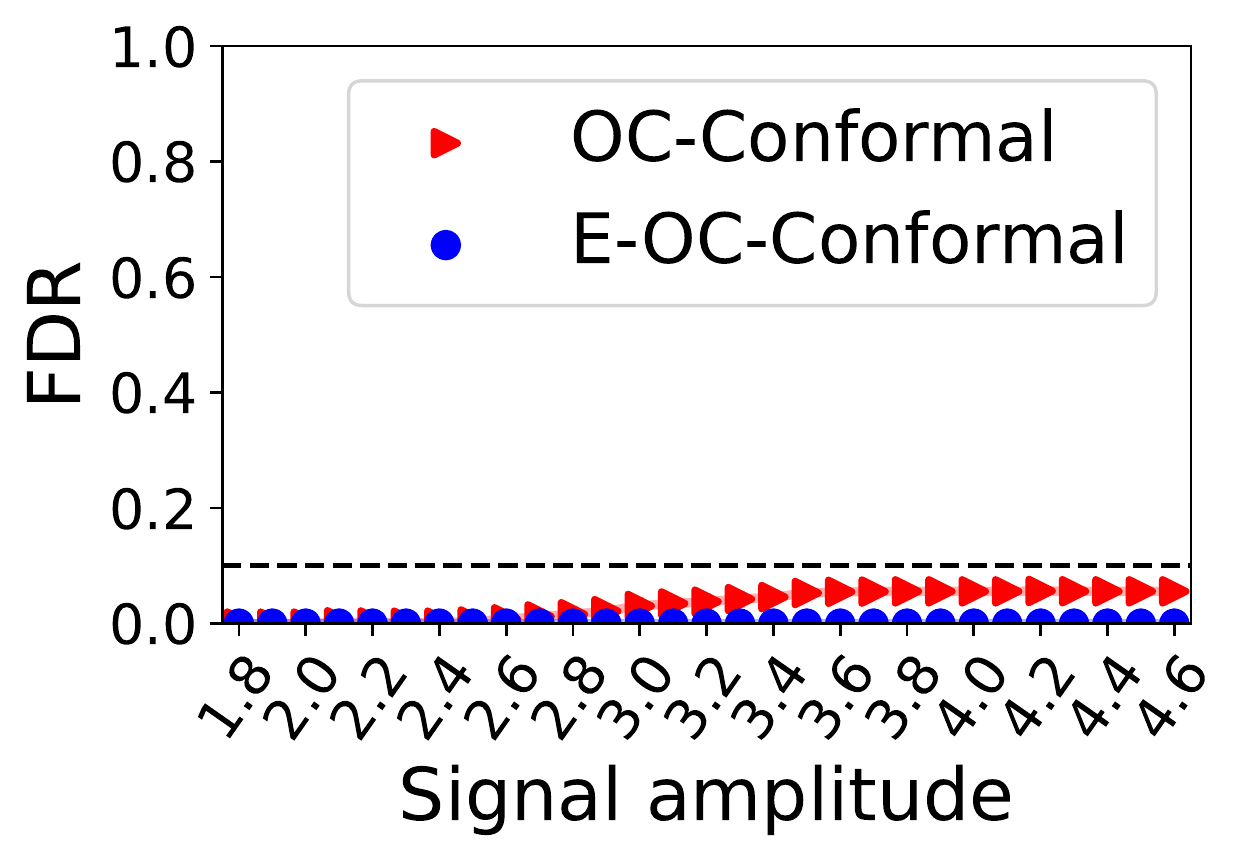}
  \includegraphics[width=0.32\textwidth]{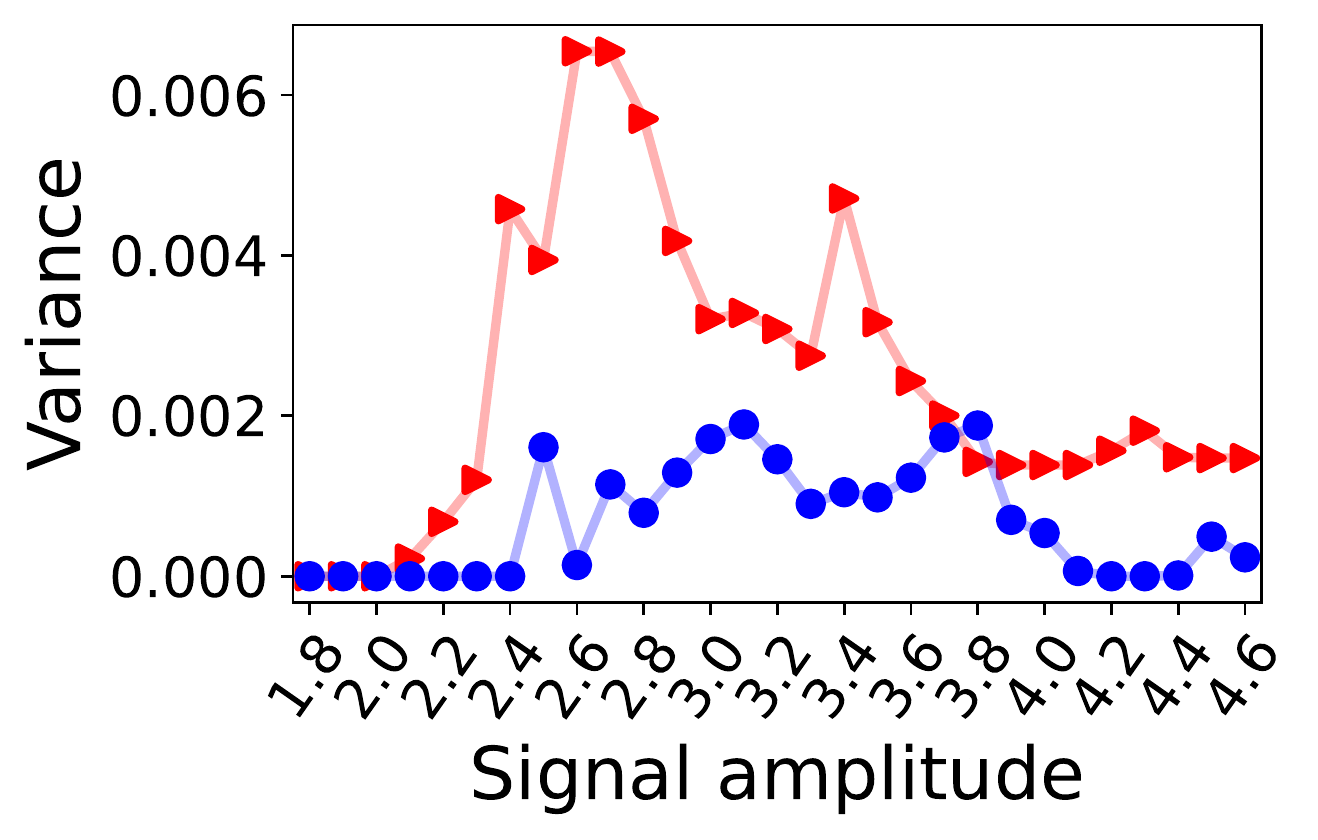}
  \caption{Performance on synthetic data of the proposed derandomized outlier detection method, \texttt{E-OC-Conformal}, applied with $K=10$ analysis repetitions. The results are compared to the performance of the randomized benchmark, \texttt{OC-Conformal}, as a function of the signal strength. Both methods leverage a one-class support vector classifier. Left: average proportion of true outliers that are discovered (higher is better). Center: average proportion of false discoveries (lower is better). Right: variability of the findings (lower is better).
Other details are as in Figure~\ref{fig:data_difficulty}. Note that these results correspond to 100 repeated experiments based on a single realization of the labeled and test data, hence why the results appear a little noisy.
}
\label{fig-app:data_difficulty_ocsvm}
  \end{figure}

\begin{figure}[!htb]
  \centering
  \includegraphics[width=0.45\textwidth]{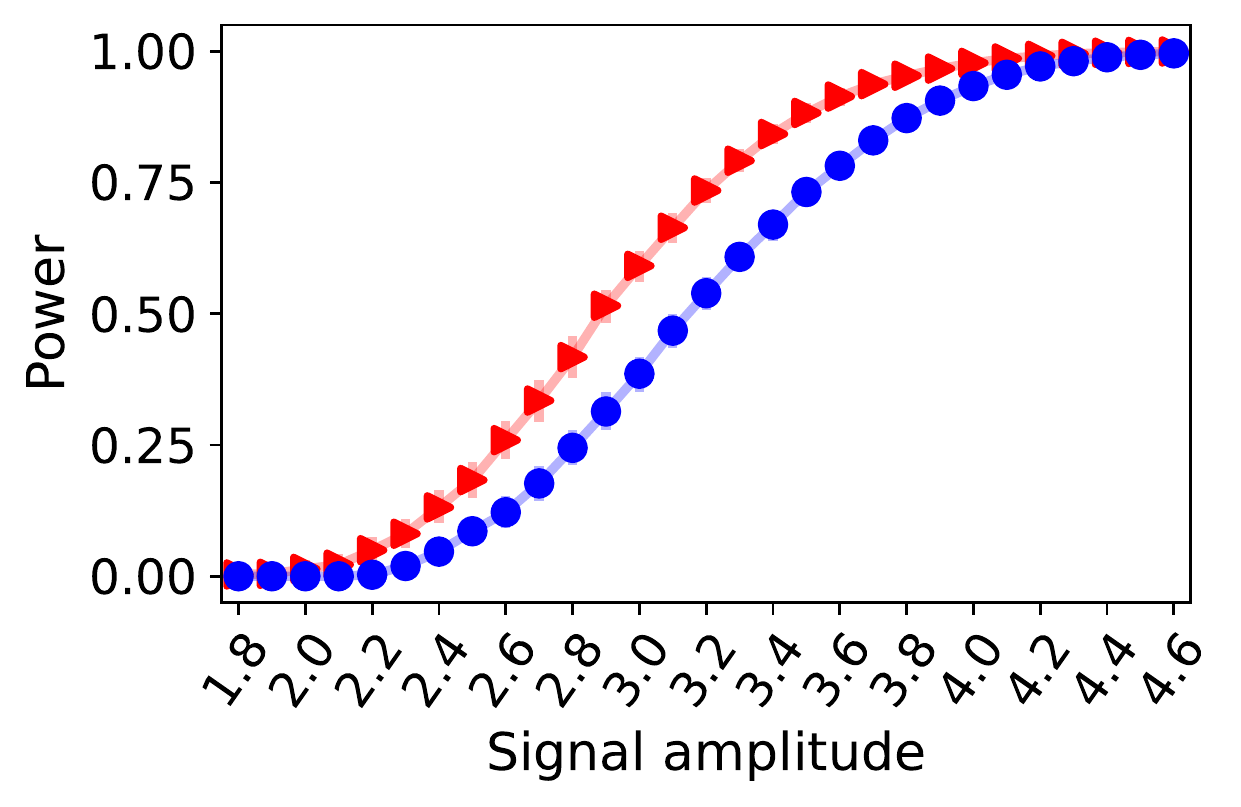}
  \includegraphics[width=0.45\textwidth]{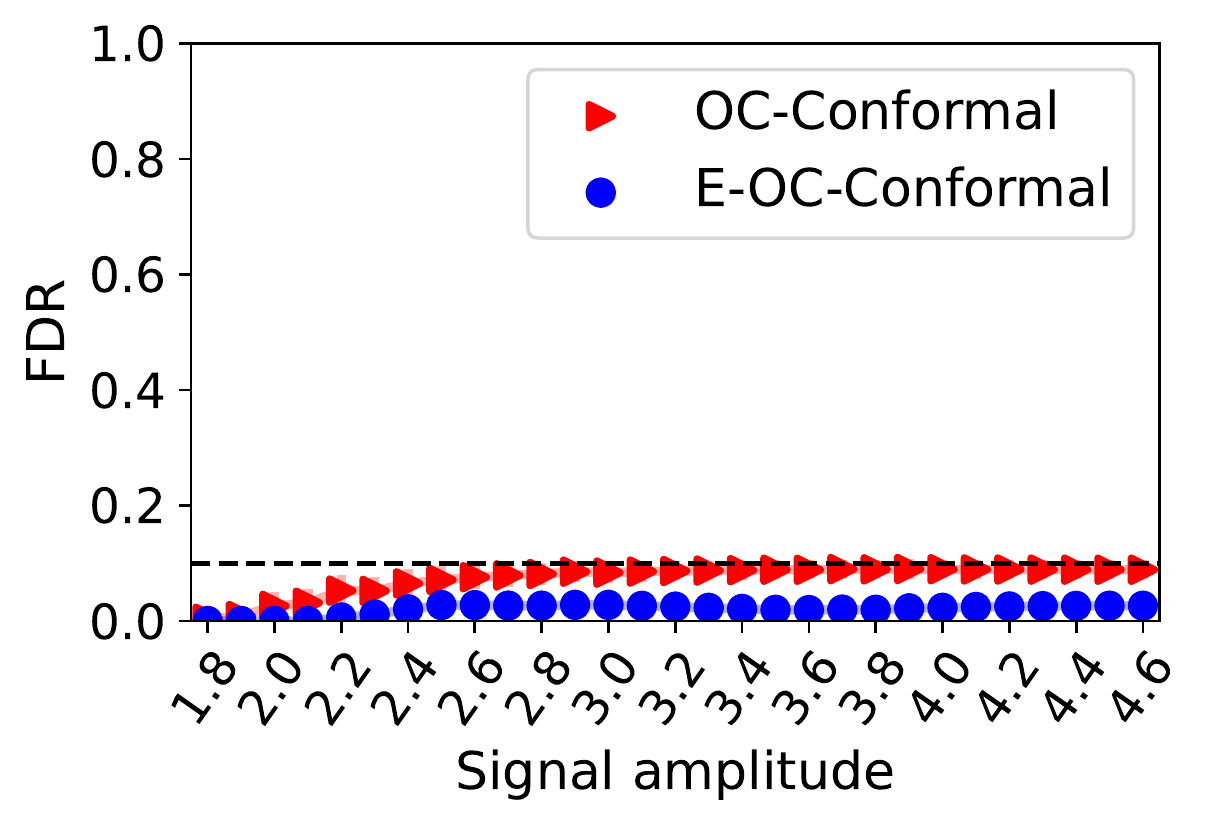}
  \caption{Performance on synthetic data of the proposed derandomized outlier detection method, \texttt{E-OC-Conformal}, applied with $K=10$ analysis repetitions. The results are compared to the performance of the randomized benchmark, \texttt{OC-Conformal}, as a function of the signal strength. Both methods leverage a one-class support vector classifier. The results are averaged over 100 independent realizations of the data. Left: average proportion of true outliers that are discovered (higher is better). Right: average proportion of false discoveries (lower is better).
Other details are as in Figure~\ref{fig:data_difficulty}.
}
\label{fig-app:synth-classic-FDR-Power-data_difficulty_ocsvm}
  \end{figure}

\FloatBarrier

\subsection{Hyper-parameter tuning} \label{app:alpha_bh}

In all of the experiments presented in the main manuscript, we set $\alpha_{\mathrm{bh}}$ to be $\alpha / 10$. We found this to be a reasonable choice in general, although it may not always be optimal.
In this section, we explore the effect of $\alpha_{\mathrm{bh}}$ across four scenarios: low and high power regimes, as well as small and large proportions of outliers in the test set. Following Figure~\ref{app-fig:alpha_bh-OC}, we can see that the choice $\alpha_{\mathrm{bh}}=\alpha / 10$ is not always ideal for the \texttt{E-OC-Conformal} algorithm. Here, a larger value of $\alpha_{\mathrm{bh}}=\alpha / 2$ seems to be a better choice when the proportion of outliers is large, as it results in more test outlier samples with non-zero e-values. By contrast, a smaller value of $\alpha_{\mathrm{bh}}=\alpha / 10$ is suitable when the proportion of outliers is small, as it leads to larger e-values for the outliers.

We now repeat the same experiment but with \texttt{E-AdaDetect}, summarizing the results in Figure~\ref{app-fig:alpha_bh-AdaDetect}. In contrast with \texttt{E-OC-Conformal}, a fixed $\alpha / 10$ is an appropriate choice for $\alpha_{\mathrm{bh}}$ when applying our method with \texttt{AdaDetect} for all the scenarios we studied, indicating that this method is more robust to the choice of $\alpha_{\mathrm{bh}}$. In the remaining supplementary experiments, we will utilize a fixed $\alpha_{\mathrm{bh}}=\alpha / 2$ for \texttt{E-OC-Conformal} and a fixed $\alpha_{\mathrm{bh}}=\alpha / 10$ for \texttt{E-AdaDetect}.

\begin{figure*}[!htb]
  \centering
  \begin{subfigure}[b]{0.3\textwidth}
    \includegraphics[width=\textwidth]{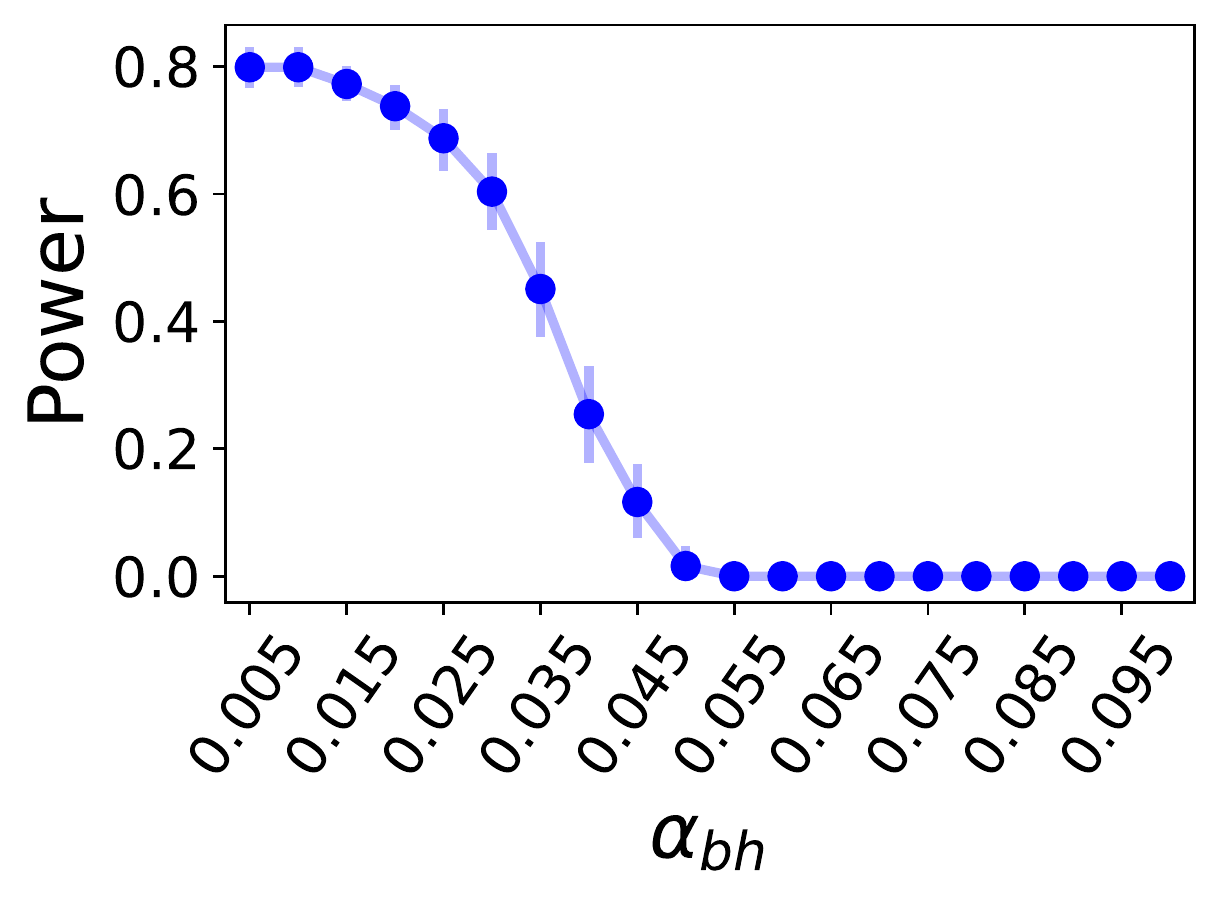}
    \caption{high power, $10\%$ outliers}
  \end{subfigure}
  \begin{subfigure}[b]{0.3\textwidth}
    \includegraphics[width=\textwidth]{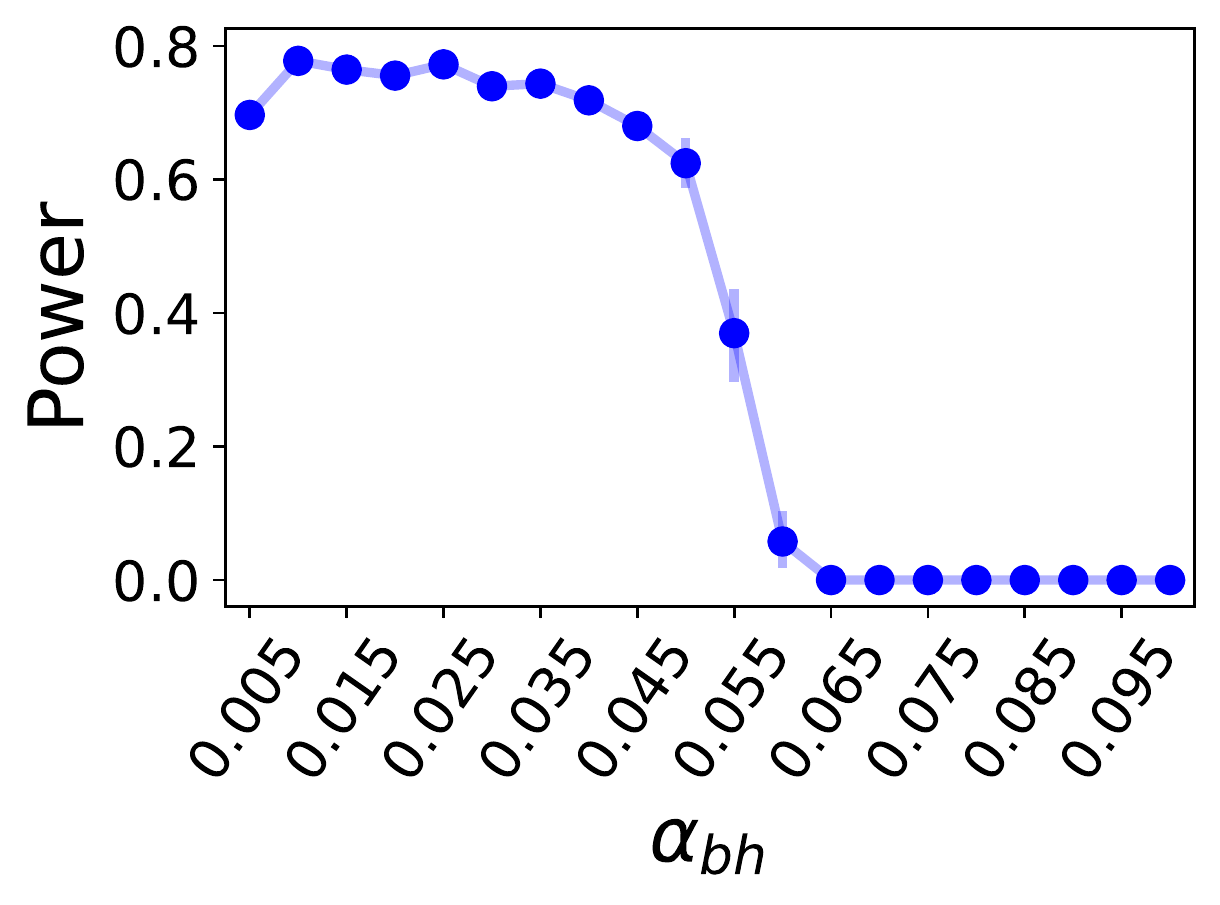}
    \caption{high power, $50\%$ outliers}
  \end{subfigure}
    \\
  \begin{subfigure}[b]{0.3\textwidth}
    \includegraphics[width=\textwidth]{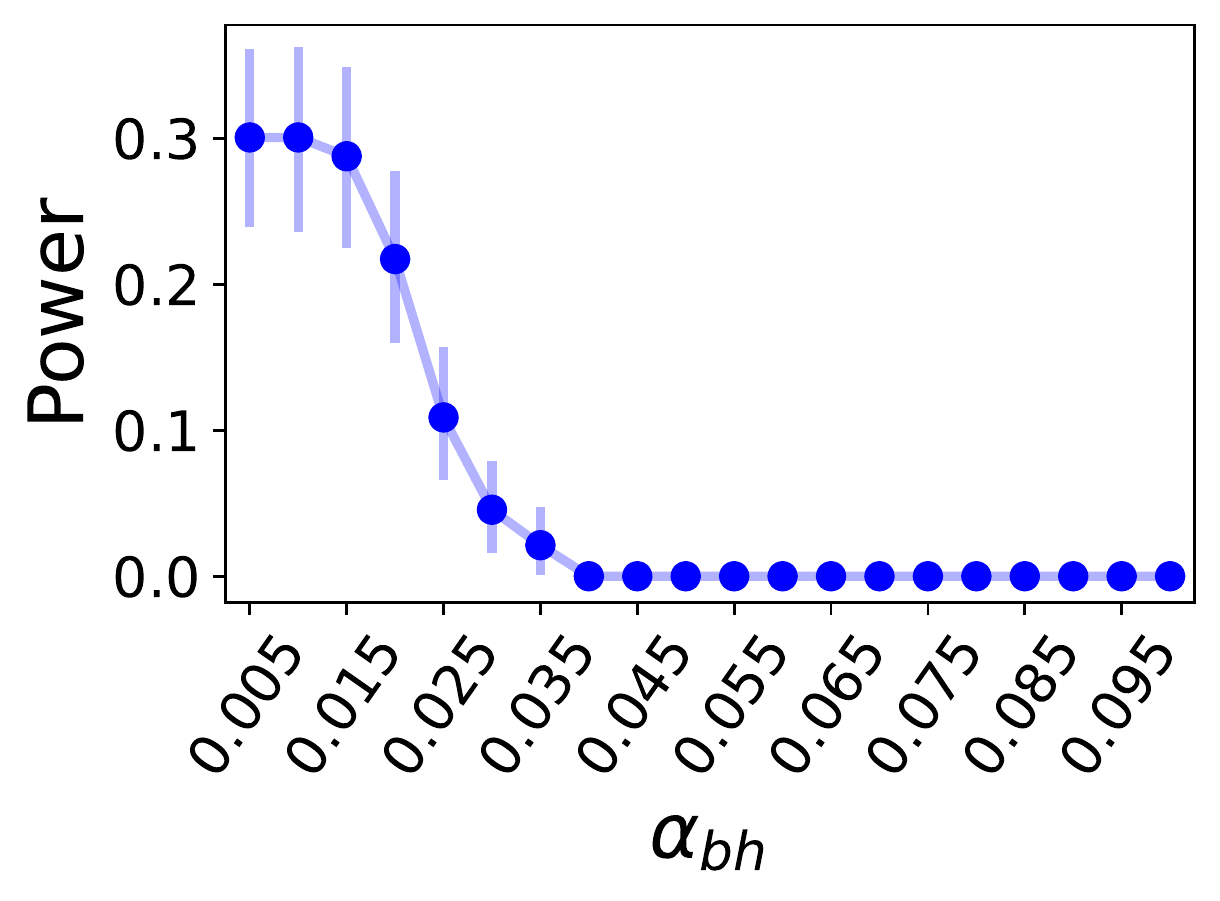}
    \caption{low power, $10\%$ outliers}
  \end{subfigure}
  \begin{subfigure}[b]{0.3\textwidth}
    \includegraphics[width=\textwidth]{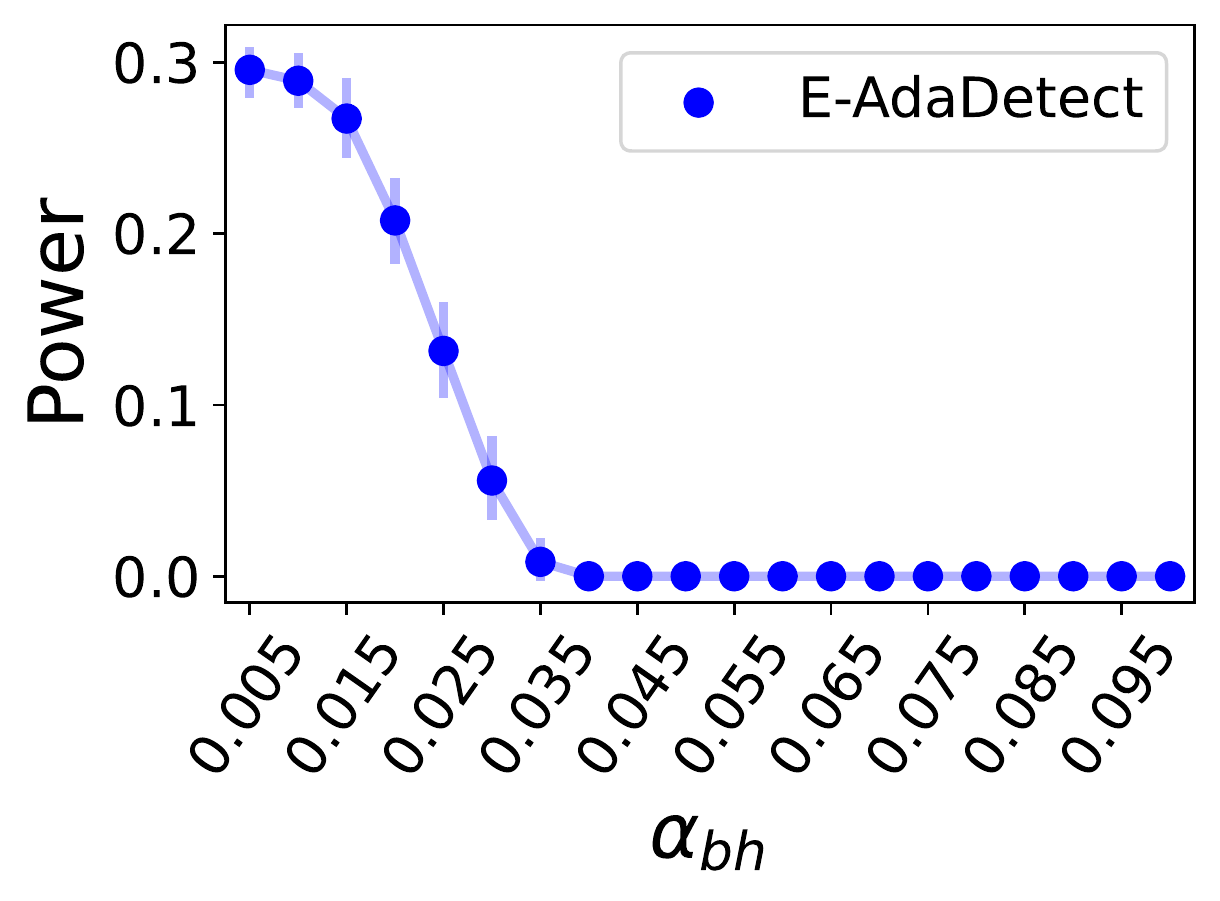}
    \caption{low power, $50\%$ outliers}
  \end{subfigure}
  \caption{
  Performance on synthetic data of the proposed derandomized outlier detection method, \texttt{E-AdaDetect}, applied with $K=10$ as a function of $\alpha_{\mathrm{bh}}$. The results are averaged over 100 independent realizations of the data.
  Top: high-power regime with signal amplitude $3.4$ for $10\%$ outliers and $1.6$ for $50\%$ outliers. Bottom: low-power regime with signal amplitude $2.8$ for $10\%$ outliers and $1.1$ for $50\%$ outliers. Left: $10\%$ outliers in the test-set. Right: $50\%$ outliers in the test-set. Other details are as in Figure~\ref{fig:data_difficulty}.}
\label{app-fig:alpha_bh-AdaDetect}
  \end{figure*}

  \begin{figure*}[!htb]
  \centering
  \begin{subfigure}[b]{0.3\textwidth}
    \includegraphics[width=\textwidth]{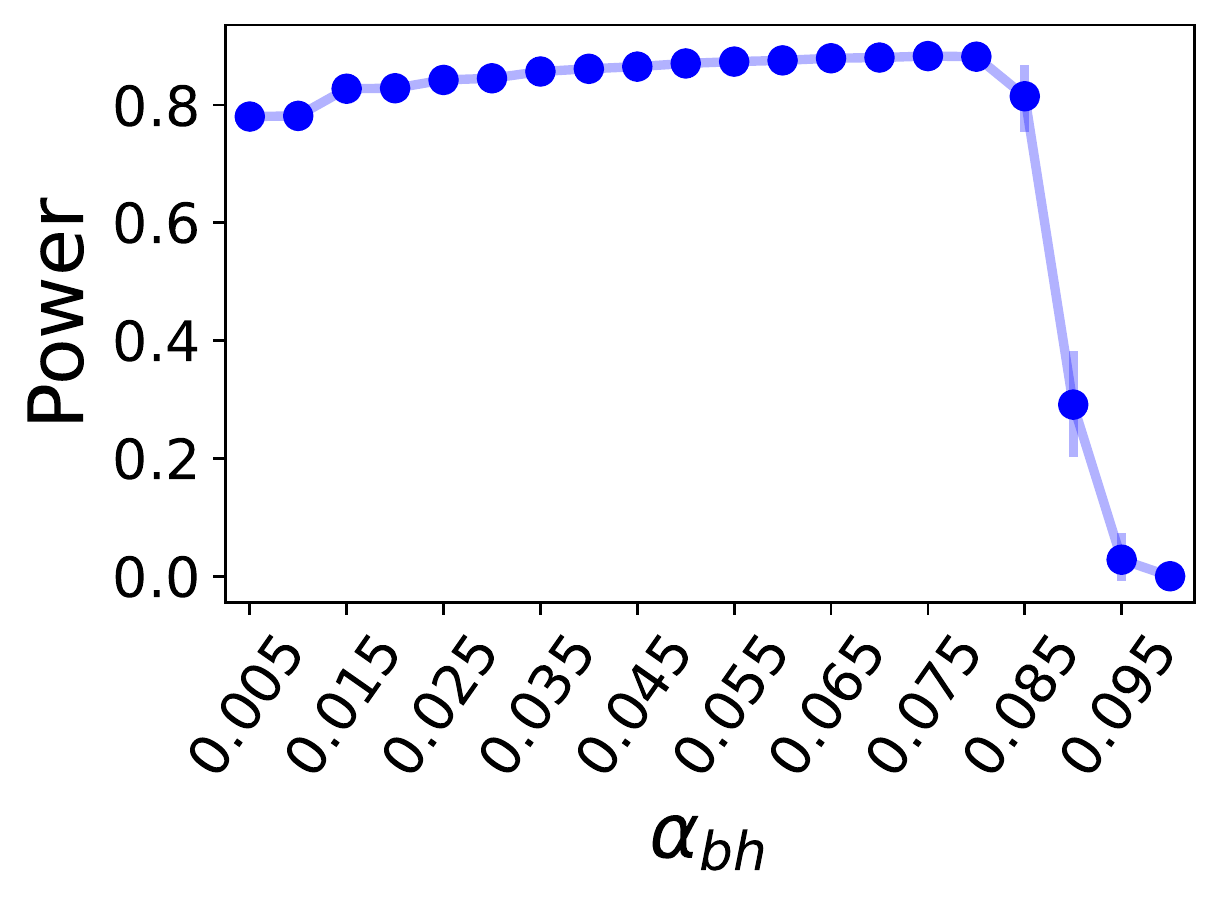}
    \caption{high power, $10\%$ outliers}
  \end{subfigure}
  \begin{subfigure}[b]{0.3\textwidth}
    \includegraphics[width=\textwidth]{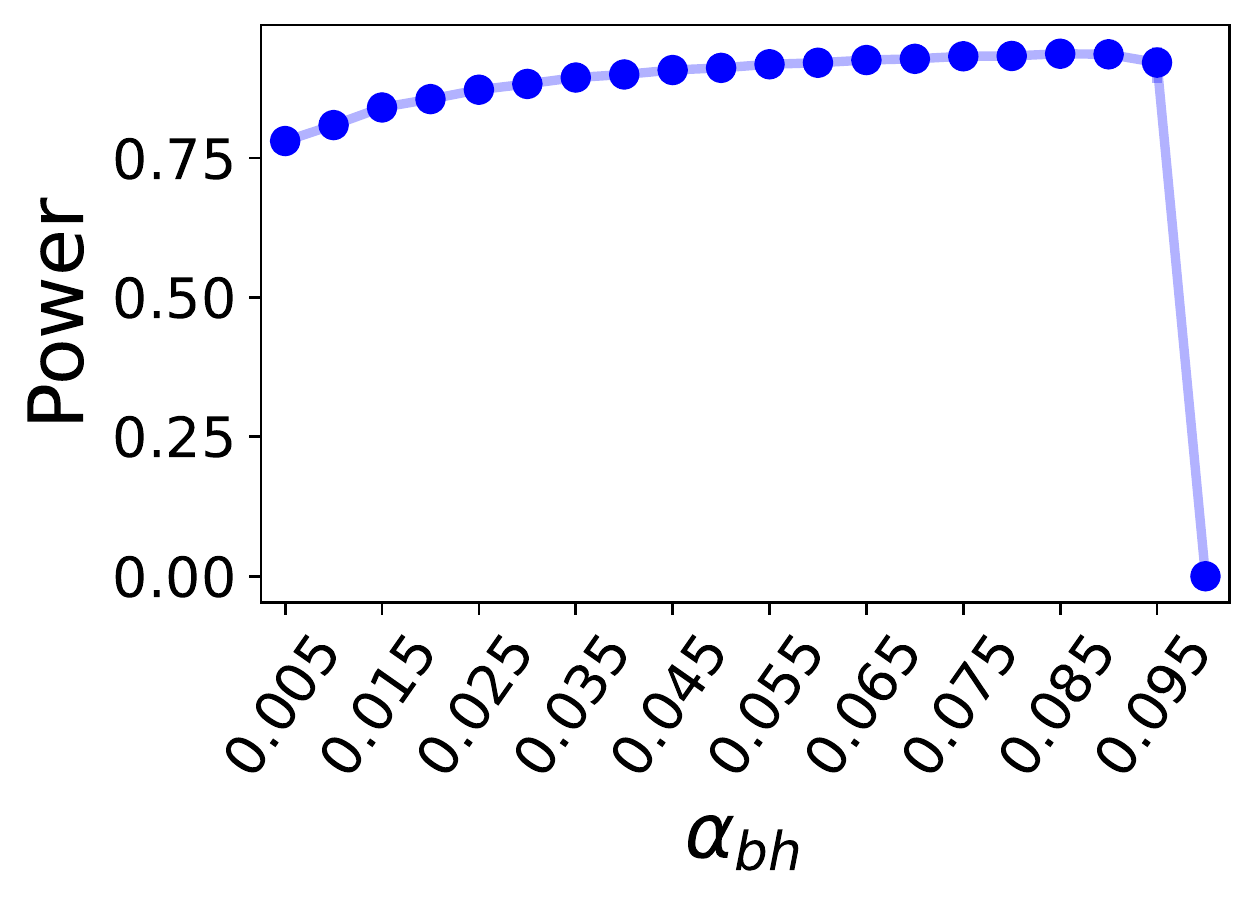}
    \caption{high power, $50\%$ outliers}
  \end{subfigure}
    \\
  \begin{subfigure}[b]{0.3\textwidth}
    \includegraphics[width=\textwidth]{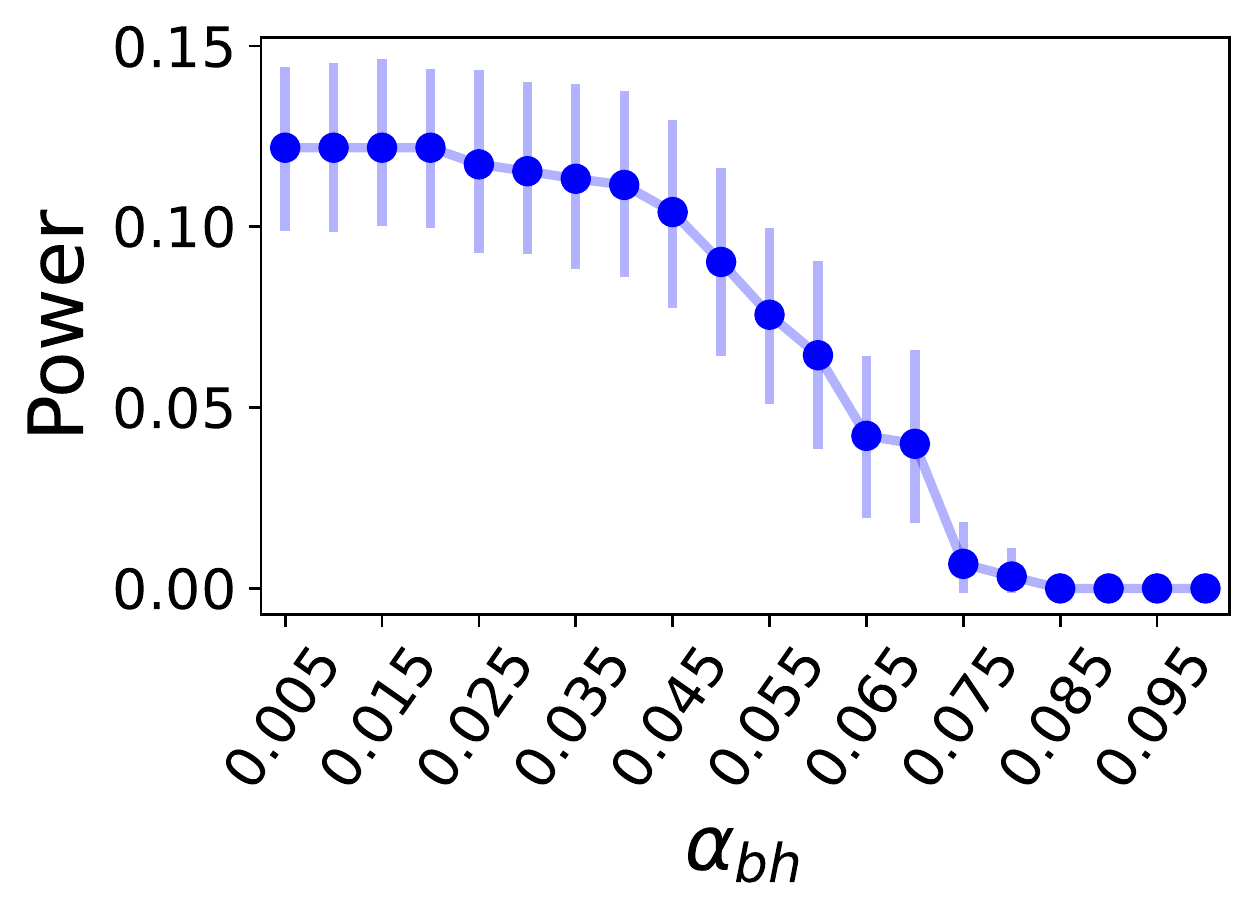}
    \caption{low power, $10\%$ outliers}
  \end{subfigure}
  \begin{subfigure}[b]{0.3\textwidth}
    \includegraphics[width=\textwidth]{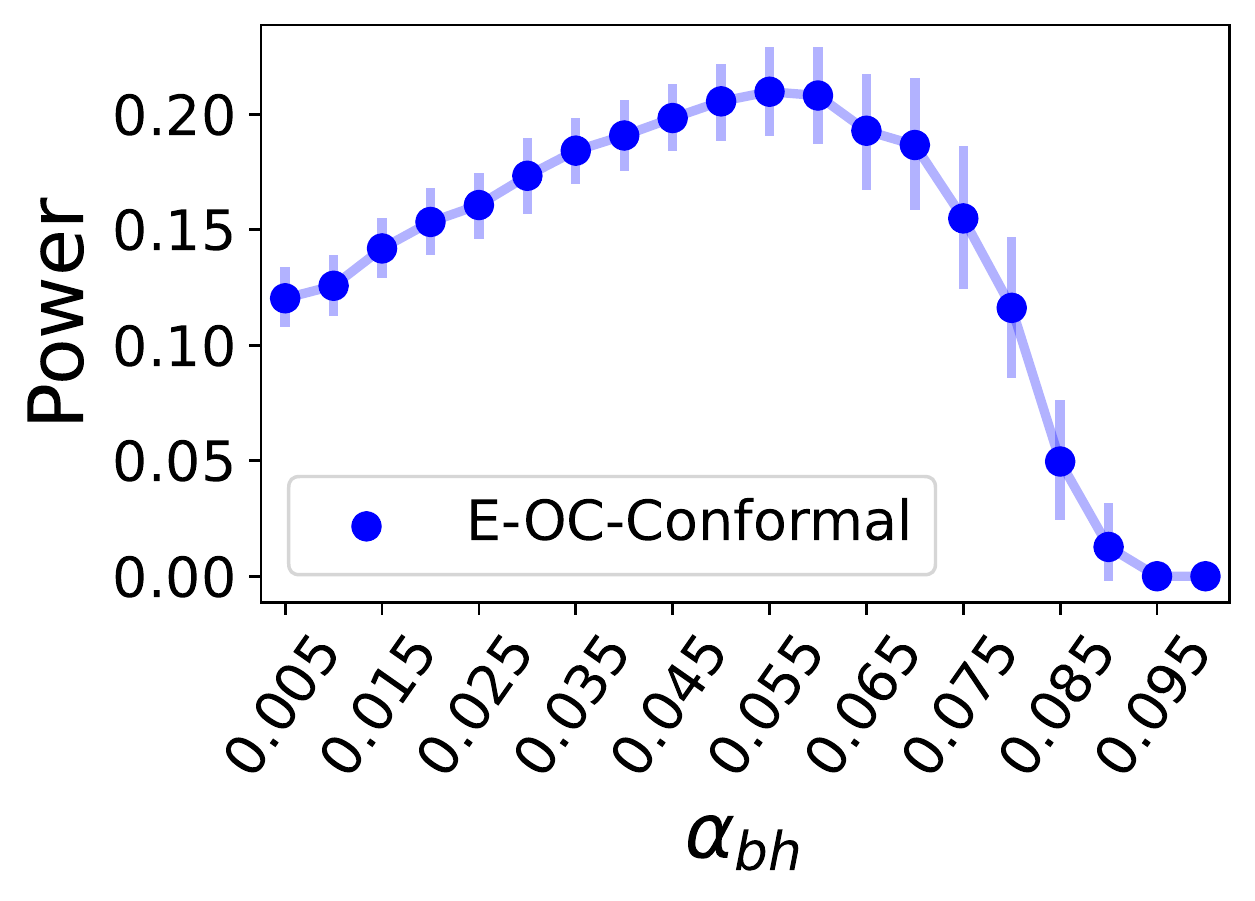}
    \caption{low power, $50\%$ outliers}
  \end{subfigure}
  \caption{Performance on synthetic data of the proposed derandomized outlier detection method, \texttt{E-OC-Confromal}, applied with $K=10$ as a function of $\alpha_{\mathrm{bh}}$. The method leverages a one-class support vector classifier. The results are averaged over 100 independent realizations of the data.
  Top: high-power regime with signal amplitude $3.6$ for $10\%$ outliers and $3.4$ for $50\%$ outliers. Bottom: low-power regime 
  with signal amplitude $2.6$ for $10\%$ outliers and $2.3$ for $50\%$ outliers. Left: $10\%$ outliers in the test-set. Right: $50\%$ outliers in the test-set. Other details are as in Figure~\ref{fig:data_difficulty}.}
\label{app-fig:alpha_bh-OC}
  \end{figure*}

  \FloatBarrier
  \clearpage

\section{Comparisons to alternative e-values constructions}
\label{app:baselines}

In this section, we discuss alternative methods for constructing conformal e-values to derandomize split conformal inferences, comparing their performance to that of our proposed martingale-based approach.
These alternative e-value constructions were proposed in prior works, as detailed below, but they had not been previously utilized for the purpose of de-randomizing conformal inferences.

\subsection{Review of {p-to-e calibrators}}
\label{app:p-to-e-calibrators}
One strategy for generating e-values is to utilize {\em p-to-e calibrators}, whose goal is to transform valid p-values into valid e-values \citep[Section~2]{e-value}. This strategy can be specifically applied to conformal p-values $\hat{u}$. Various types of calibrators are available, including Shafer's calibrator:
\begin{equation}
\label{eq:def-shafer-calibrator}
    S(\hat{u}):= \frac{1}{\sqrt{\hat{u}}} - 1,
\end{equation}
as well as the following family of calibrators,
\begin{equation}
\label{eq:def-class-calibrator}
    F(\hat{u})= \epsilon \cdot \hat{u}^{\epsilon - 1},
\end{equation}
where $\epsilon \in (0,1)$ is a hyper-parameter. To bypass the problem of choosing the hyper-parameter $\epsilon$ in \eqref{eq:def-class-calibrator}, one can use an over-optimistic estimate of the maximum of \eqref{eq:def-class-calibrator}, known as the VS calibrator \citep{e-value}: 
\begin{equation}
\label{eq:vs}
\text{VS}(\hat{u}):={\sup}_{\epsilon} \ \epsilon \hat{u}^{\epsilon - 1} = \left\{
  \begin{array}{@{}ll@{}}
    -e^{-1} / (\hat{u}\text{ ln }\hat{u}), & \text{if}\ \hat{u}\leq e^{-1} \\
    1, & \text{otherwise} 
  \end{array}\right..
\end{equation}
The VS calibrator does not produce a valid e-value, but it can still serve as an informative baseline because it approximates the most powerful possible calibrator within the family of functions~\eqref{eq:def-class-calibrator} \citep{e-value}.
An alternative way to eliminate the influence of $\epsilon$ is to integrate over it, leading to the following calibrator:
\begin{equation}
\label{eq:calibrator-integral}
F(\hat{u}):= \int_0^{1} \epsilon \hat{u}^{\epsilon - 1} d\epsilon = \frac{1-\hat{u}+\hat{u}\ln{\hat{u}}}{\hat{u}(-\ln{\hat{u}})^2}.
\end{equation}

Armed with a p-to-e calibrator, one can then derandomize split conformal inferences by proceeding similarly to the main manuscript; this approach is summarized for completeness in Algorithm~\ref{drand-p-to-e-adaptive}.

\begin{algorithm}[!htb]
\caption{Aggregation of conformal e-values computed by a p-to-e calibrator with data-adaptive model weights}
\label{drand-p-to-e-adaptive}
\begin{algorithmic}[1]
\STATE \textbf{Input:} {
{inlier data set $\D\equiv \left\{ X_i\right\}_{i=1}^n$};
{test set $\D_{\mathrm{test}}$};
{size of calibration-set $n_{\mathrm{cal}}$};
{number of iterations $K$};
{p-to-e calibrator function $F$};
{one-class or binary black-box classification algorithm $\mathcal{A}$};
{a model weighting function $\omega$};
}
\FOR{$k=1,...,K$}
\STATE Randomly split $\D$ into $\D_{\mathrm{cal}}^{(k)}$ and $\D_{\mathrm{train}}^{(k)}$, with $|\D_{\mathrm{cal}}^{(k)}|=n_{\mathrm{cal}}$
\STATE Train the model: $\mathcal{M}^{(k)}\gets\mathcal{A}(\D_{\mathrm{train}}^{(k)})$ \COMMENT{possibly including additional labeled outlier data if available}
\STATE Compute the calibration scores $S^{(k)}_i=\mathcal{M}^{(k)}(X_i)$, for all $i\in \D_{\mathrm{cal}}^{(k)}$
\STATE Compute the test scores $S^{(k)}_j=\mathcal{M}^{(k)}(X_j)$, for all $j\in \D_{\mathrm{test}}$
\STATE Compute the weights $\tilde{w}^{(k)} = \omega \left( \{ S_{i}^{(k)} \}_{i \in \D_{\mathrm{test}} \cup \D_{\mathrm{cal}}^{(k)} }  \right)$ \COMMENT{invariant un-normalized model weights}
\STATE Compute the p-values for all $j\in \left| \D_{\mathrm{test}} \right|$ : $\hat{u}_j^{(k)} = ( 1 + \sum_{i \in \D_{\mathrm{cal}}} \mathbb{I}\{ S_j^{(k)} \leq S_i^{(k)} \})  / (1+n_{\mathrm{cal}})$ \STATE Compute the e-values $e^{(k)}_{j}$ for all $j\in \left| \D_{\mathrm{test}}\right|$ using the p-to-e calibrator: $e^{(k)}_{j}=F\left(\hat{u}_j^{(k)}\right)$
\ENDFOR
\FOR{$k=1,...,K$}
\STATE $w^{(k)} = \tilde{w}^{(k)} / \sum_{k'=1}^{K} \tilde{w}^{(k')}$ \COMMENT{normalize the model weights}
\ENDFOR
\STATE Aggregate the e-values $\bar{e}_j = \sum_{k=1}^K w^{(k)} \cdot e^{(k)}_j$
\STATE \textbf{Output:} e-values $\bar{e}_j$ for all $j \in \mathcal{D}_{\mathrm{test}}$ that can be filtered with Algorithm~\ref{alg:ebh} to control the FDR.
\end{algorithmic}
\end{algorithm}

\FloatBarrier

\subsubsection{Comparing p-to-e to the martingale-based approach} \
\begin{figure}[!htb]
    \centering
    \includegraphics[width=0.45\textwidth]{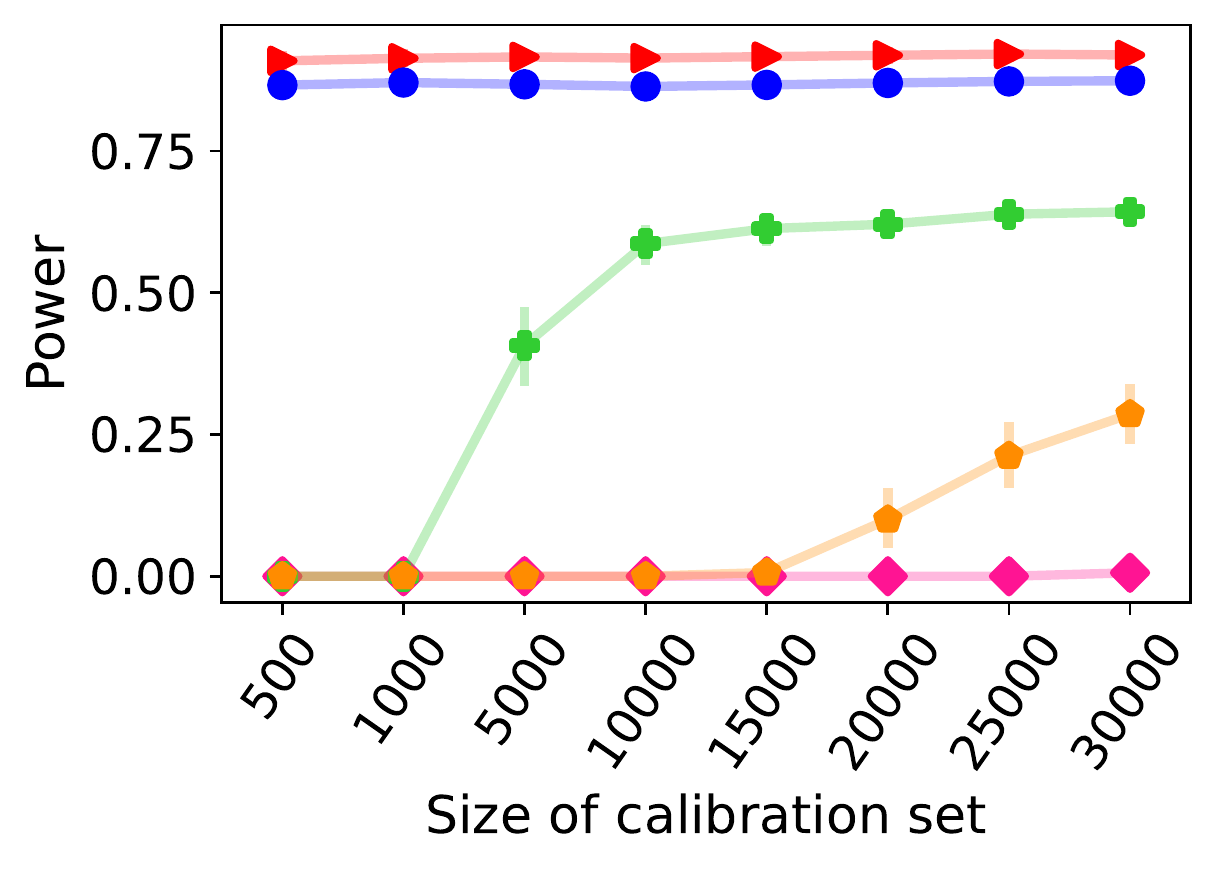}
    \includegraphics[width=0.45\textwidth]{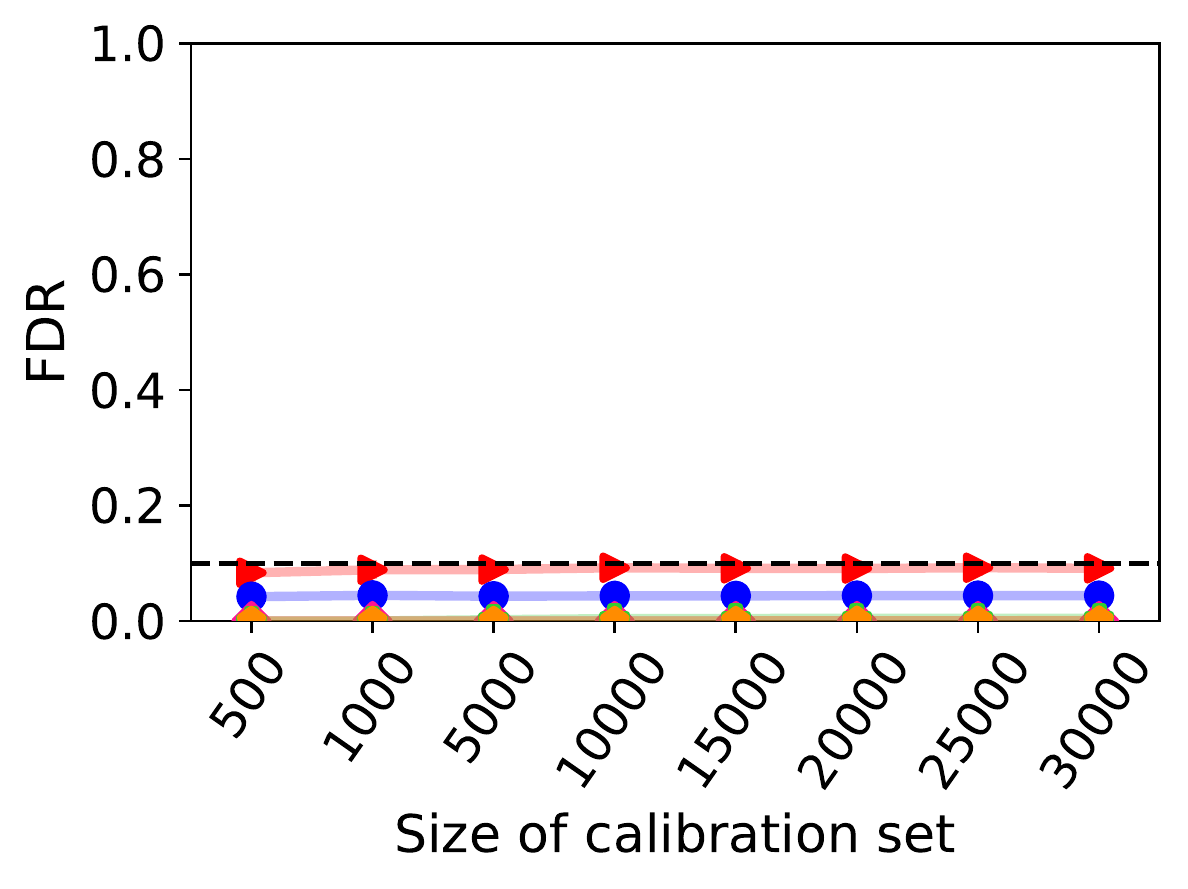}
    \\
    \includegraphics[width=0.45\textwidth]{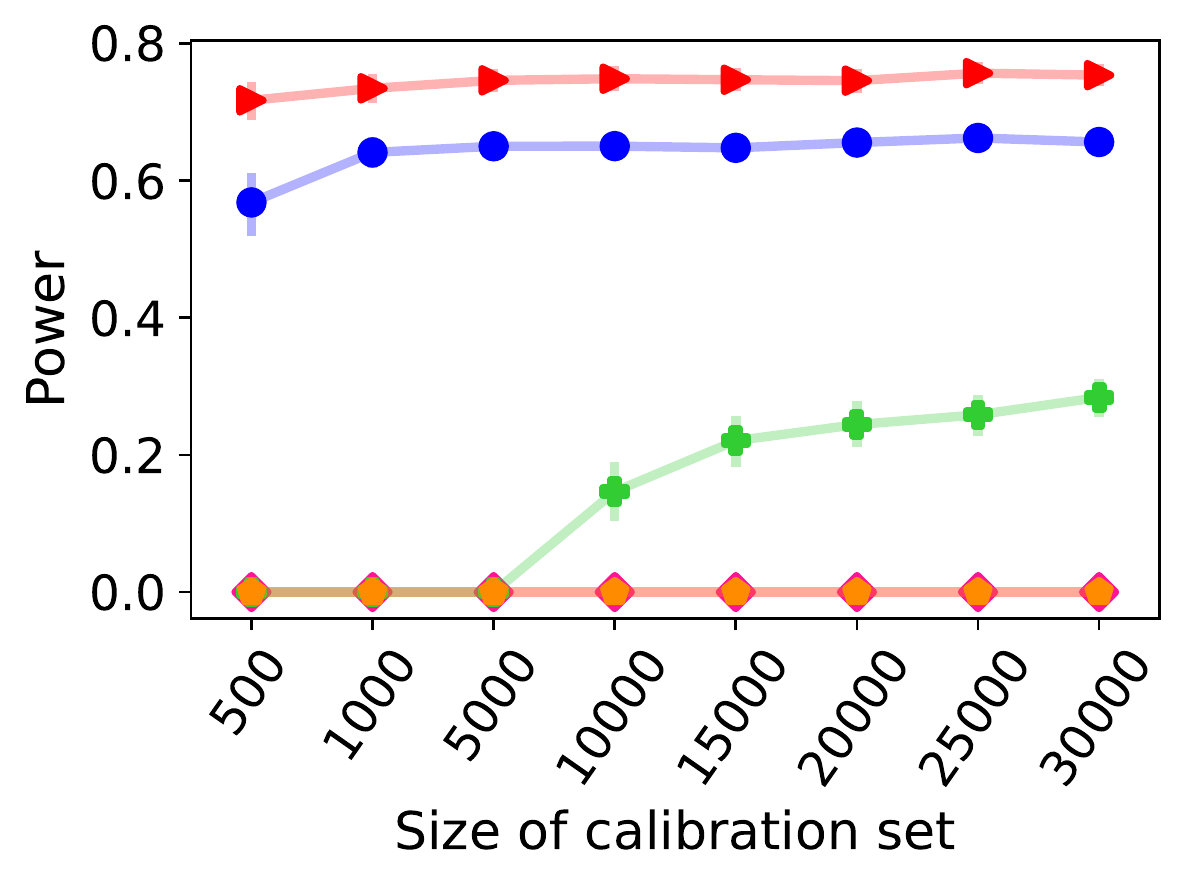}
    \includegraphics[width=0.45\textwidth]{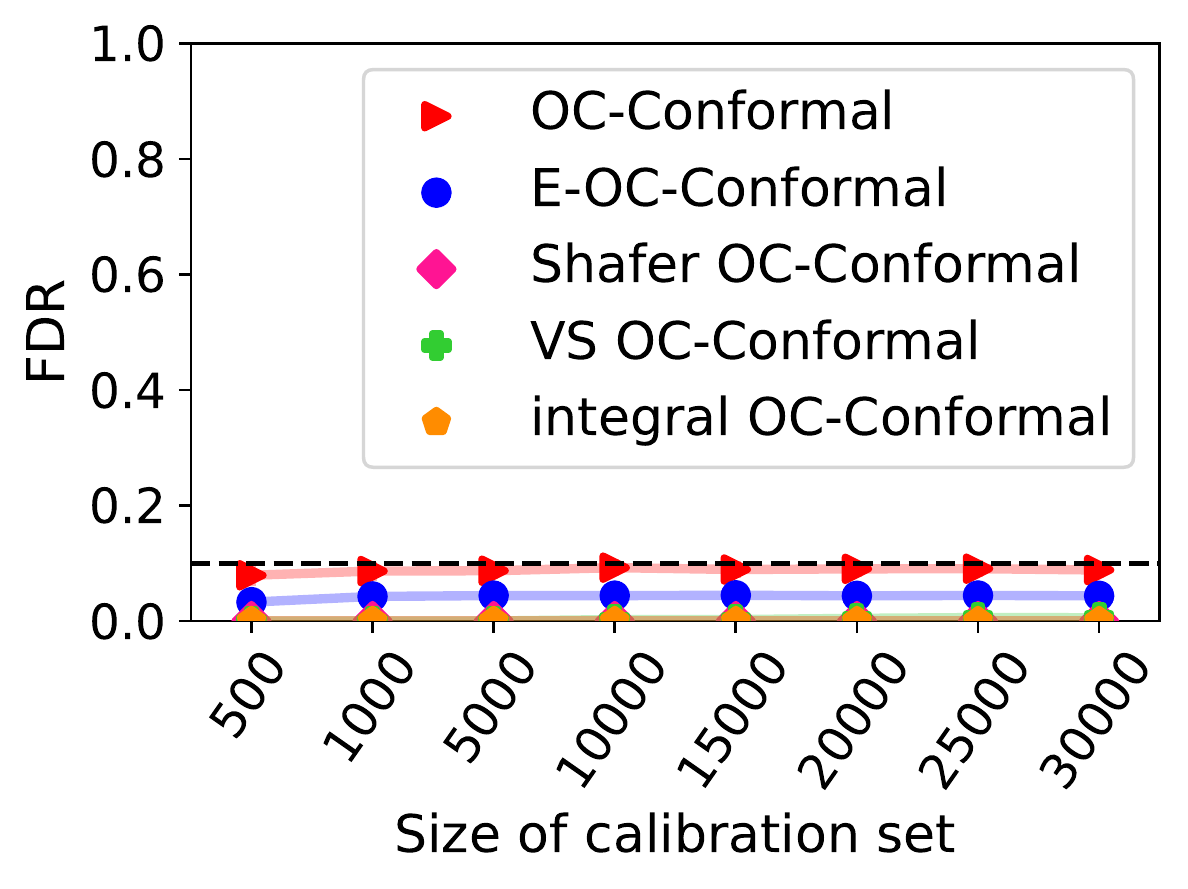}
    \caption{Performance on synthetic data of the proposed derandomized outlier detection method, \texttt{E-OC-Conformal}, applied with $K=10$, compared to that of its randomized benchmark, \texttt{OC-Conformal}. We also compare the performance of these methods to p-to-e calibrators, applied with $K=10$, as a function of the number of inlier calibration points. The number of training inliers is fixed and equals $1000$. All methods leverage a one-class support vector classifier.
  Top: high-power regime with signal amplitude $3.6$. Bottom: low-power regime with signal amplitude $3.2$. The dashed horizontal line indicates the nominal false discovery rate level $\alpha=0.1$. The results are averaged over 100 independent realizations of the data.}
    \label{app-fig:calibrators-calib_size-OCSVM-test_p_0.1-df}
\end{figure} 

When implementing the p-to-e derandomization approach in the synthetic experiments described in Section~\ref{sec:synthetic}, we observed that the power was nearly zero. Increasing the size of the calibration set can be beneficial to improve the power of this approach. This is because the size of the calibration set determines the minimum attainable conformal p-value, given by ${1}/({n_{\text{cal}} + 1})$. Consequently, the size of this set influences the maximum achievable e-value through p-to-e calibrators: smaller input p-values result in larger outputs from the calibrator functions \eqref{eq:def-shafer-calibrator}, \eqref{eq:vs}, and \eqref{eq:calibrator-integral}.

Following the above discussion, we compare the performance of the p-to-e approach to our martingale-based method as a function of the size of the calibration set. According to Figure~\ref{app-fig:calibrators-calib_size-OCSVM-test_p_0.1-df}, we can see that the p-to-e approach has lower power than our method, where both derandomization methods are combined with \texttt{OC-Conformal}.  Among the studied p-to-e calibrators, the VS calibrator demonstrates relatively higher power. However, it should be noted that the VS calibrator generates invalid e-values, as this calibrator outputs an overly optimistic estimate of the maximum value in \eqref{eq:def-class-calibrator}. Nevertheless, even the VS calibrator is considerably less powerful than our proposed method.

Since there is a significant performance gap between our martingale-based e-value construction and the p-to-e approaches, we do not provide any further comparisons between these methods. We also do not repeat this experiment with 
\texttt{AdaDetect}, since the p-to-e approach requires a large calibration set to yield meaningful power, which is far from an ideal setup for \texttt{AdaDetect}. The latter approach suggests fitting a binary classifier on the observed data, treating both the calibration and test points as outliers.
A large amount of inlier calibration points can lower \texttt{AdaDetect}'s power since the wrong labeling of the calibration points as outliers is likely to reduce the classifier’s ability to provide large scores for test outliers.
 
 \FloatBarrier

\subsection{Review of soft-rank permutation e-test}
\label{app:soft-rank}
The soft-rank e-values introduced by \citet{ignatiadis2023evalues} is another approach to construct e-values for permutation tests, including split conformal. This method constructs an e-value for each test point by comparing its relative rank to the calibration samples, employing a similar methodology to the construction of conformal p-values described in \citet{conformal-p-values}. Here, we present a slightly modified version that begins with normalizing the conformity score for each test point as well as the calibration scores. 
Consider a single hypothesis corresponding to a single test point. Let $S_0$ be the corresponding test conformity score, and $S_1,\dots,S_{n_{\mathrm{cal}}}$ be the $n_{\mathrm{cal}}$ conformity scores correspond to the calibration set. 
With this in place, denote by $S_{\mathrm{max}}$ and $S_{\mathrm{min}}$ the maximum and minimum scores among $S_0,\dots, S_{n_{\mathrm{cal}}}$. Then, for each $b\in [0,n_{\mathrm{cal}}]$ we define the normalized score as
\begin{equation}
\label{eq:soft-rank-normalize}
L_b = \frac{S_b - S_{\mathrm{min}}}{S_{\mathrm{max}} - S_{\mathrm{min}}}.
\end{equation}
Having defined the normalized score, we construct an e-value for each test point by following the set of steps described in \citet{ignatiadis2023evalues}. Define $L_*=\min_{b=0,\dots,n_{\mathrm{cal}}}L_b$. For $b=0,\dots,n_{\mathrm{cal}}$, compute the transformed statistic as
\begin{equation}
\label{eq:soft-rank-transform}
    R_b= \frac{e^{rL_b} - e^{rL_*}}{r},
\end{equation}
where $r>0$ is a hyper-parameter.
In the case where $r=0$, the transformed statistic simplifies to $R_b = L_b - L_*$. Overall, the soft-ranking transformation presented above preserves the ordering of the test statistics while ensuring that the random variable $R_b$ is non-negative.
Leveraging the transformed non-negative variables, we can construct a valid e-value for the test point  by computing \citep{ignatiadis2023evalues}
\begin{equation}
\label{eq:soft-rank}
    e_0:= (n_{\mathrm{cal}}+1) \frac{R_0}{\sum_{i=0}^{n_{\mathrm{cal}}} R_i}.
\end{equation}
Similarly to p-to-e calibrators, we can combine the soft-rank e-value approach with our novel derandomization procedure, as outlined in Algorithm~\ref{drand-soft-rank-adaptive}, and further compare the performance of this method to our martingale-based e-values.

Before doing so, we pause to discuss the choice of the hyper-parameter $r$. Since there is no simple rule on how to set this parameter, we repeat the same analysis from Section~\ref{app:alpha_bh} and study the effect of $r$ across four scenarios: low/high power regimes and small/large proportions of test outliers. The results, presented in Figures~\ref{app-fig:soft_rank-r-AdaDetect} and \ref{app-fig:soft_rank-r-OC}, suggest that a suitable choice for $r$ could be $500$ for \texttt{AdaDetect} and $r=75$ for \texttt{OC-Conformal}. The latter choice takes into account the trade-off in power across Figures~\ref{app-fig:soft_rank-r-OC-high-0.5} and \ref{app-fig:soft_rank-r-OC-low-0.1}. We use these choices for all the soft-rank experiments provided in this Supplementary Material.

\begin{algorithm}[!htb]
\caption{Aggregation of soft-rank e-values with data-adaptive model weights}
\label{drand-soft-rank-adaptive}
\begin{algorithmic}[1]
\STATE \textbf{Input:} {
{inlier data set $\D\equiv \left\{ X_i\right\}_{i=1}^n$};
{test set $\D_{\mathrm{test}}$};
{size of calibration-set $n_{\mathrm{cal}}$};
{number of iterations $K$};
{one-class or binary black-box classification algorithm $\mathcal{A}$};
{a model weighting function $\omega$};
{hyper-parameter $r \in [0,\infty) $};
}
\FOR{$k=1,...,K$}
\STATE Randomly split $\D$ into $\D_{\mathrm{cal}}^{(k)}$ and $\D_{\mathrm{train}}^{(k)}$, with $|\D_{\mathrm{cal}}^{(k)}|=n_{\mathrm{cal}}$
\STATE Train the model: $\mathcal{M}^{(k)}\gets\mathcal{A}(\D_{\mathrm{train}}^{(k)})$ \COMMENT{possibly including additional labeled outlier data if available}
\STATE Compute the calibration scores $S^{(k)}_i=\mathcal{M}^{(k)}(X_i)$, for all $i\in \D_{\mathrm{cal}}^{(k)}$
\STATE Compute the test scores $S^{(k)}_j=\mathcal{M}^{(k)}(X_j)$, for all $j\in \D_{\mathrm{test}}$
\STATE Compute the weights $\tilde{w}^{(k)} = \omega \left( \{ S_{i}^{(k)} \}_{i \in \D_{\mathrm{test}} \cup \D_{\mathrm{cal}}^{(k)} }  \right)$ \COMMENT{invariant un-normalized model weights}
\STATE Normalize the scores according to \eqref{eq:soft-rank-normalize}
\STATE Compute the transformed score for all $j\in \left| \D_{\mathrm{test}} \right|$ according to \eqref{eq:soft-rank-transform} \COMMENT{this depends on the hyper-parameter $r$} \STATE Compute the e-values $e^{(k)}_{j}$ for all $j\in \left| \D_{\mathrm{test}}\right|$ according to \eqref{eq:soft-rank} 
\ENDFOR
\FOR{$k=1,...,K$}
\STATE $w^{(k)} = \tilde{w}^{(k)} / \sum_{k'=1}^{K} \tilde{w}^{(k')}$ \COMMENT{normalize the model weights}
\ENDFOR
\STATE Aggregate the e-values $\bar{e}_j = \sum_{k=1}^K w^{(k)} \cdot e^{(k)}_j$
\STATE \textbf{Output:} e-values $\bar{e}_j$ for all $j \in \mathcal{D}_{\mathrm{test}}$ that can be filtered with Algorithm~\ref{alg:ebh} to control the FDR.
\end{algorithmic}
\end{algorithm}

\begin{figure*}[!htb]
  \centering
  \begin{subfigure}[b]{0.3\textwidth}
    \includegraphics[width=\textwidth]{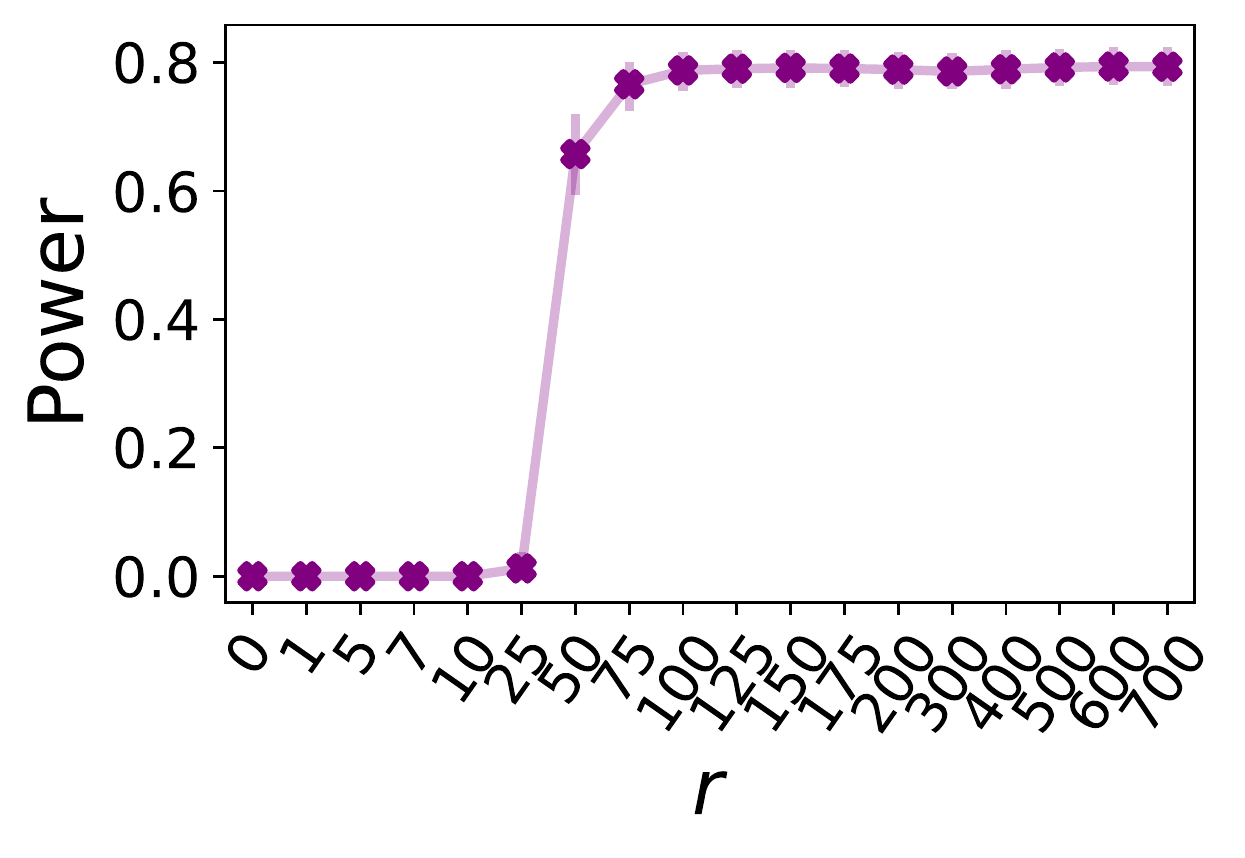}
    \caption{high power, $10\%$ outliers}
  \end{subfigure}
  \begin{subfigure}[b]{0.3\textwidth}
    \includegraphics[width=\textwidth]{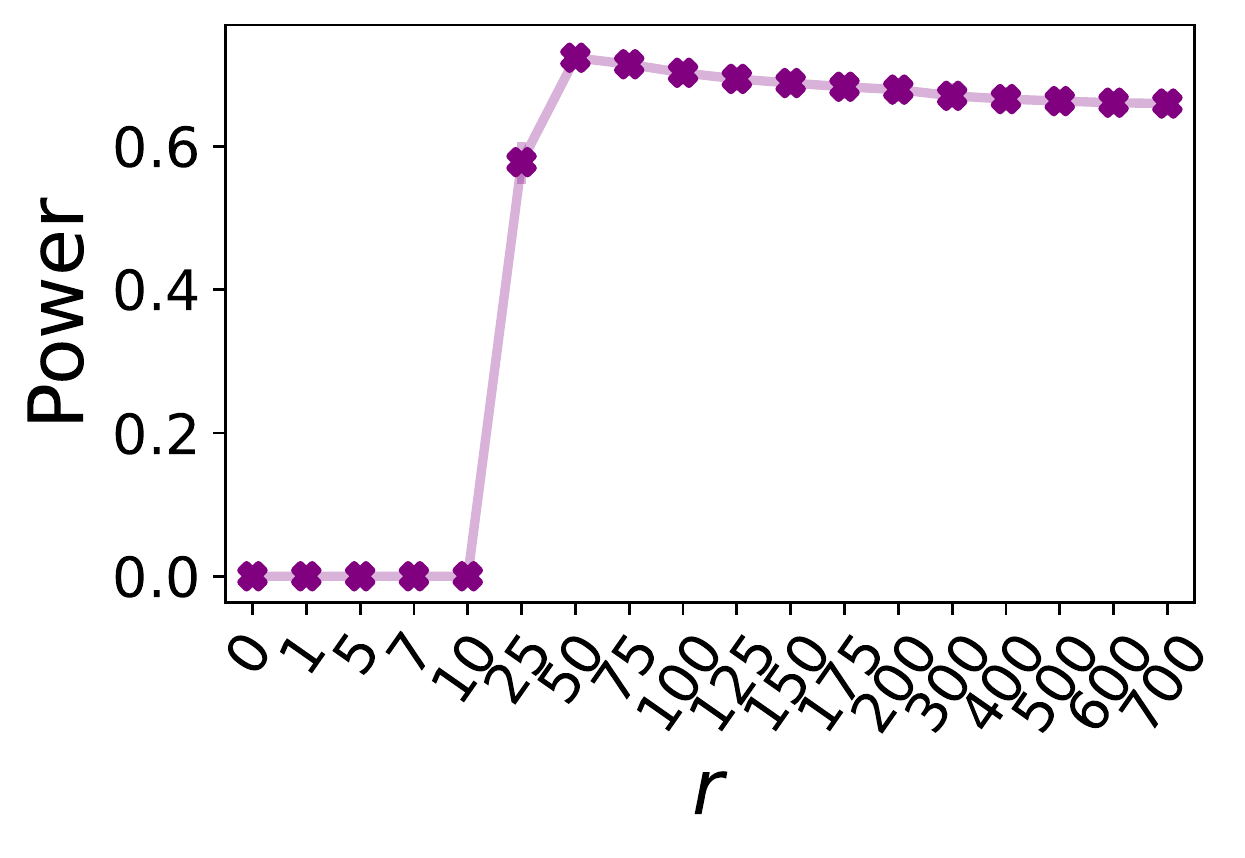}
    \caption{high power, $50\%$ outliers}
  \end{subfigure}
    \\
  \begin{subfigure}[b]{0.3\textwidth}
    \includegraphics[width=\textwidth]{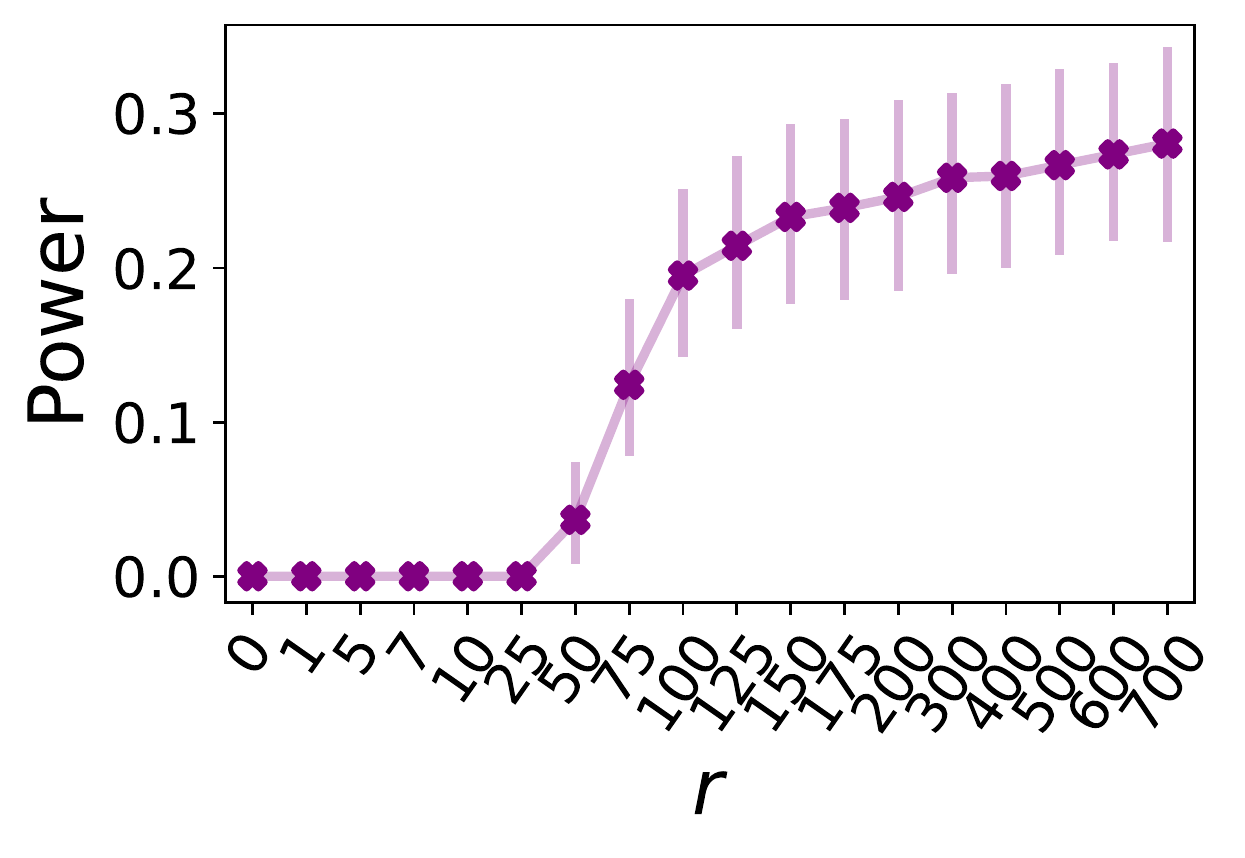}
    \caption{low power, $10\%$ outliers}
  \end{subfigure}
  \begin{subfigure}[b]{0.3\textwidth}
    \includegraphics[width=\textwidth]{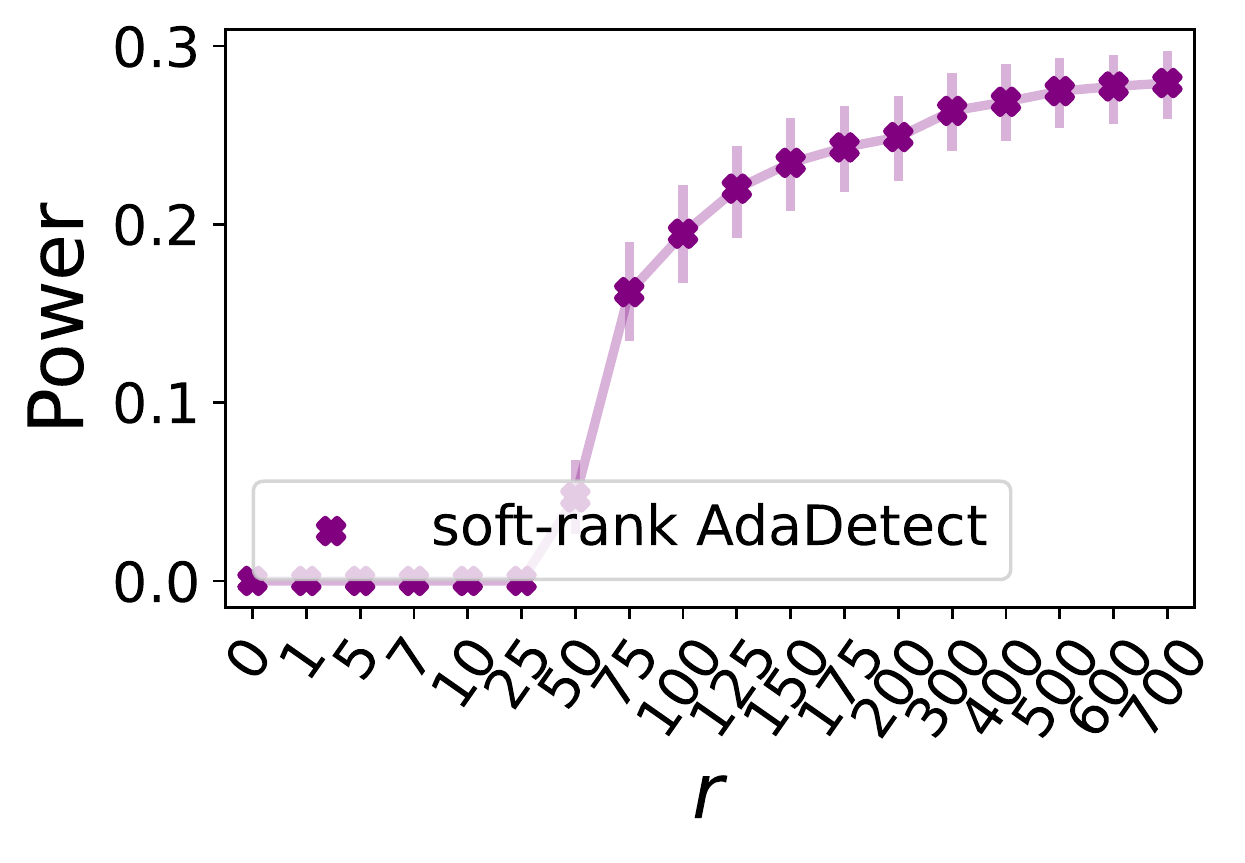}
    \caption{low power, $50\%$ outliers}
  \end{subfigure}
  \caption{Performance on synthetic data of the proposed derandomized outlier detection method applied with soft-rank e-values, \texttt{soft-rank OC-Conformal}, applied with $K=10$ as a function of $r$ hyper-parameter. The results are averaged over 100 independent realizations of the data.
  Top: high-power regime with signal amplitude $3.4$ for $10\%$ outliers and $1.6$ for $50\%$ outliers. Bottom: low-power regime with signal amplitude $2.8$ for $10\%$ outliers and $1.1$ for $50\%$ outliers. Left: $10\%$ outliers in the test-set. Right: $50\%$ outliers in the test-set. Other details are as in Figure~\ref{fig:data_difficulty}.}
\label{app-fig:soft_rank-r-AdaDetect}
  \end{figure*}

\begin{figure*}[!htb]
  \centering
  \begin{subfigure}[b]{0.3\textwidth}
    \includegraphics[width=\textwidth]{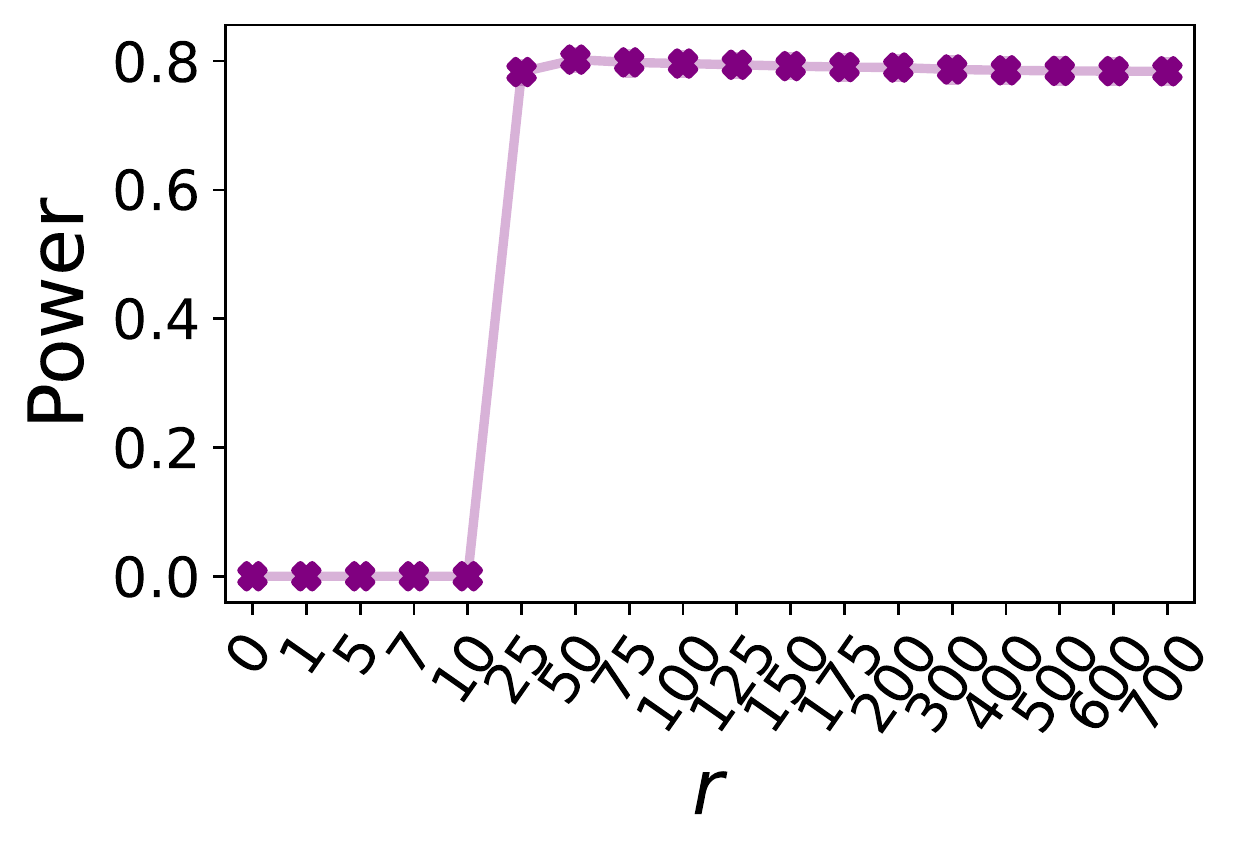}
    \caption{high power, $10\%$ outliers}
    \label{app-fig:soft_rank-r-OC-high-0.1}
  \end{subfigure}
  \begin{subfigure}[b]{0.3\textwidth}
    \includegraphics[width=\textwidth]{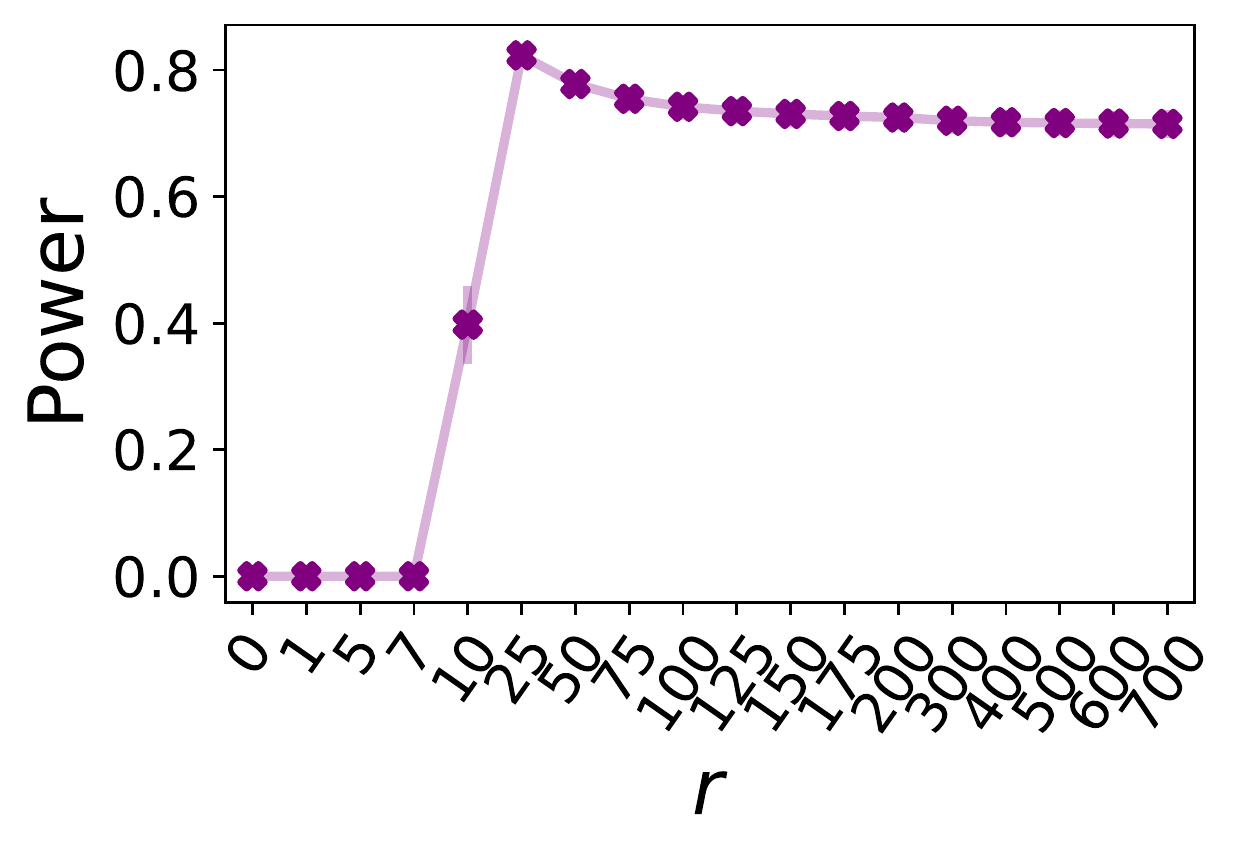}
    \caption{high power, $50\%$ outliers}
    \label{app-fig:soft_rank-r-OC-high-0.5}
  \end{subfigure}
    \\
  \begin{subfigure}[b]{0.3\textwidth}
    \includegraphics[width=\textwidth]{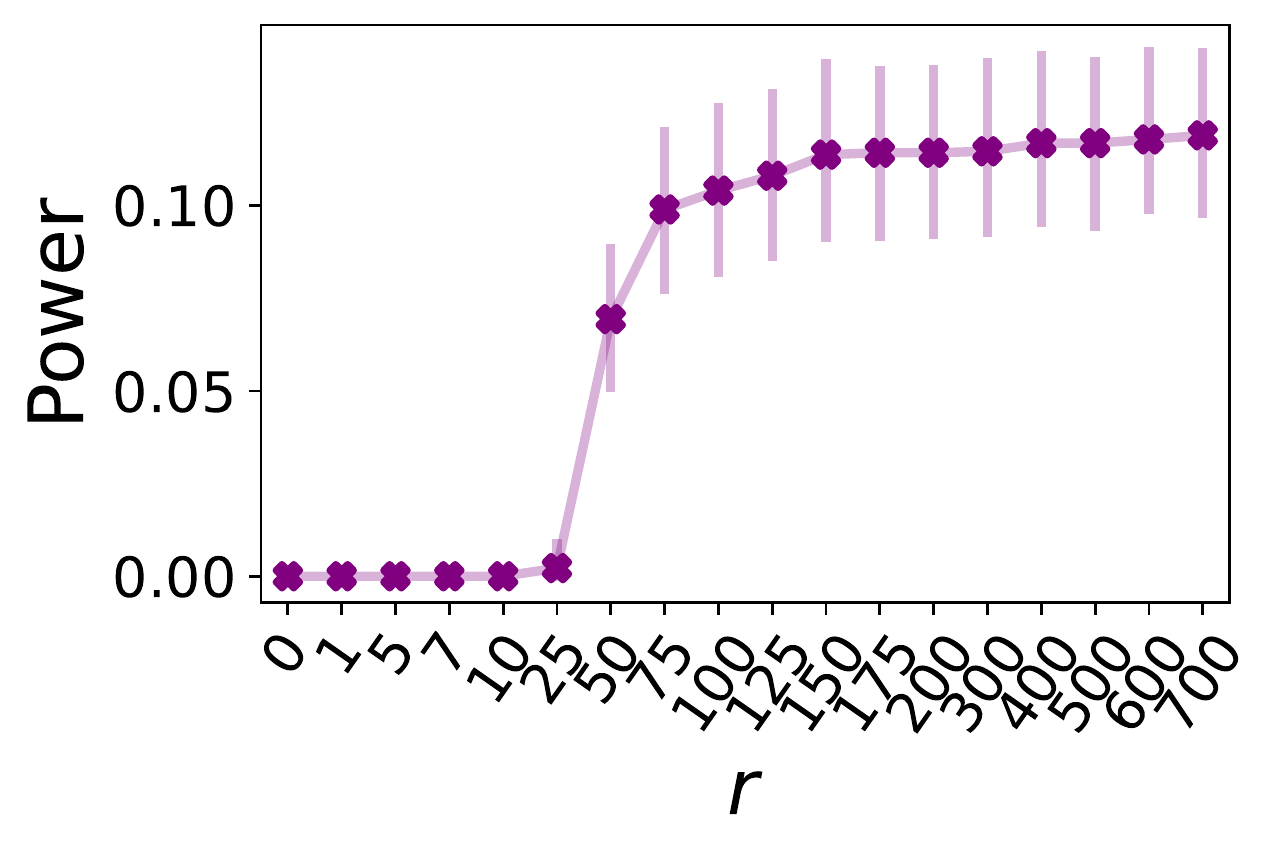}
    \caption{low power, $10\%$ outliers}
    \label{app-fig:soft_rank-r-OC-low-0.1}
  \end{subfigure}
  \begin{subfigure}[b]{0.3\textwidth}
    \includegraphics[width=\textwidth]{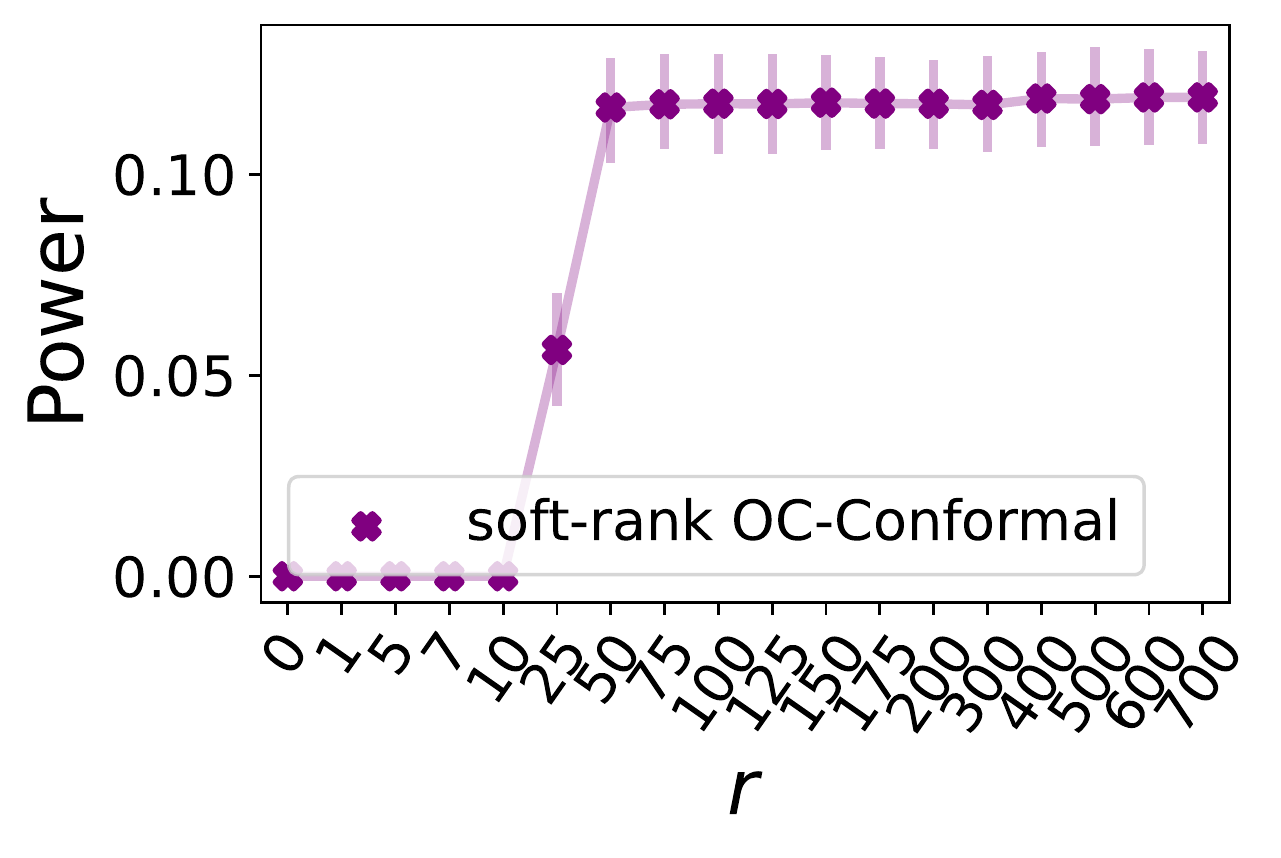}
    \caption{low power, $50\%$ outliers}
    \label{app-fig:soft_rank-r-OC-low-0.5}
  \end{subfigure}
  \caption{Performance on synthetic data of the proposed derandomized outlier detection method applied with soft-rank e-values, \texttt{soft-rank OC-Conformal}, applied with $K=10$ as a function of $r$ hyper-parameter. The method leverages a one-class support vector classifier. The results are averaged over 100 independent realizations of the data.
  Top: high-power regime with signal amplitude $3.6$ for $10\%$ outliers and $3.4$ for $50\%$ outliers. Bottom: low-power regime with signal amplitude $2.6$ for $10\%$ outliers and $2.3$ for $50\%$ outliers. Left: $10\%$ outliers in the test set. Right: $50\%$ outliers in the test set. Other details are as in Figure~\ref{fig:data_difficulty}.}
\label{app-fig:soft_rank-r-OC}
  \end{figure*}

 \FloatBarrier

\subsubsection{Comparing soft-rank e-values to the martingale-based e-values}

In striking contrast with the soft-rank e-values that are constructed separately for each test point, our martingale-based e-values are constructed jointly by looking at all test scores. Intuitively, by leveraging the additional information present in the test set, the martingale-based e-values may achieve higher power. Mathematically, recall that the soft-rank e-value is valid by construction, implying that $\E[e] \leq 1$ under the null hypothesis. By contrast, our martingale-based e-values satisfy a more relaxed property for which $\sum_{j\in \Hnull} \E \left[ {e}_j\right] \leq n_{\mathrm{test}}$. Consequently, in settings where the proportion of test outliers is large, each of the inlier e-values can exceed the value 1 as long as their sum is bounded by $n_{\mathrm{test}}$, in expectation. This can be attractive since we anticipate the non-null e-values that correspond to outlier points to have larger values than the null ones. Indeed, the following experiments indicate that the martingale-based approach tends to be more powerful than the soft-rank e-values when the proportion of outliers in the test set is relatively large. 

In more detail, we compare in Figure~\ref{app-fig:outliers-proportion} the soft-rank approach with our martingale-based method by varying the proportion of outliers present in the test set. That figure is obtained by adjusting the signal amplitude level such that the power of the randomized method (\texttt{AdaDetect}/\texttt{OC-Conformal}) is fixed at around 80\% for all the range of outlier proportions we studied. It is evident from that figure that the gap between the soft-rank e-value and our martingale-based e-value increases as the proportion of outliers increases, and that our proposal is more powerful than the soft-rank approach. For completeness, a comprehensive comparison considering various proportions of outliers as a function of the signal amplitude can be found in Figure~\ref{app-fig:signal-amplitude_LR} and Figure~\ref{app-fig:signal-amplitude_OC}. 

We also investigate the impact of the target FDR level on the performance of the soft-rank and our martingale-based methods. The performance metrics, shown in Figures~\ref{app-fig:target-FDR} and \ref{app-fig:target-FDR-OC}, reveal that our method is more powerful, displaying greater adaptability to the FDR level. This aligns with our expectations, as our hyper-parameter $\alpha_{\mathrm{bh}}$ is set proportionally to the target FDR level $\alpha$. By contrast, the soft-rank hyper-parameter $r$ remains fixed across different target FDR levels; it is unclear how to refine the choice of this parameter in this setting, or even to conclude whether the best choice of $r$ is affected by the target FDR level.

\begin{figure}[!htb]
    \centering
    \begin{subfigure}[b]{0.48\textwidth}
    \includegraphics[width=0.47\textwidth]{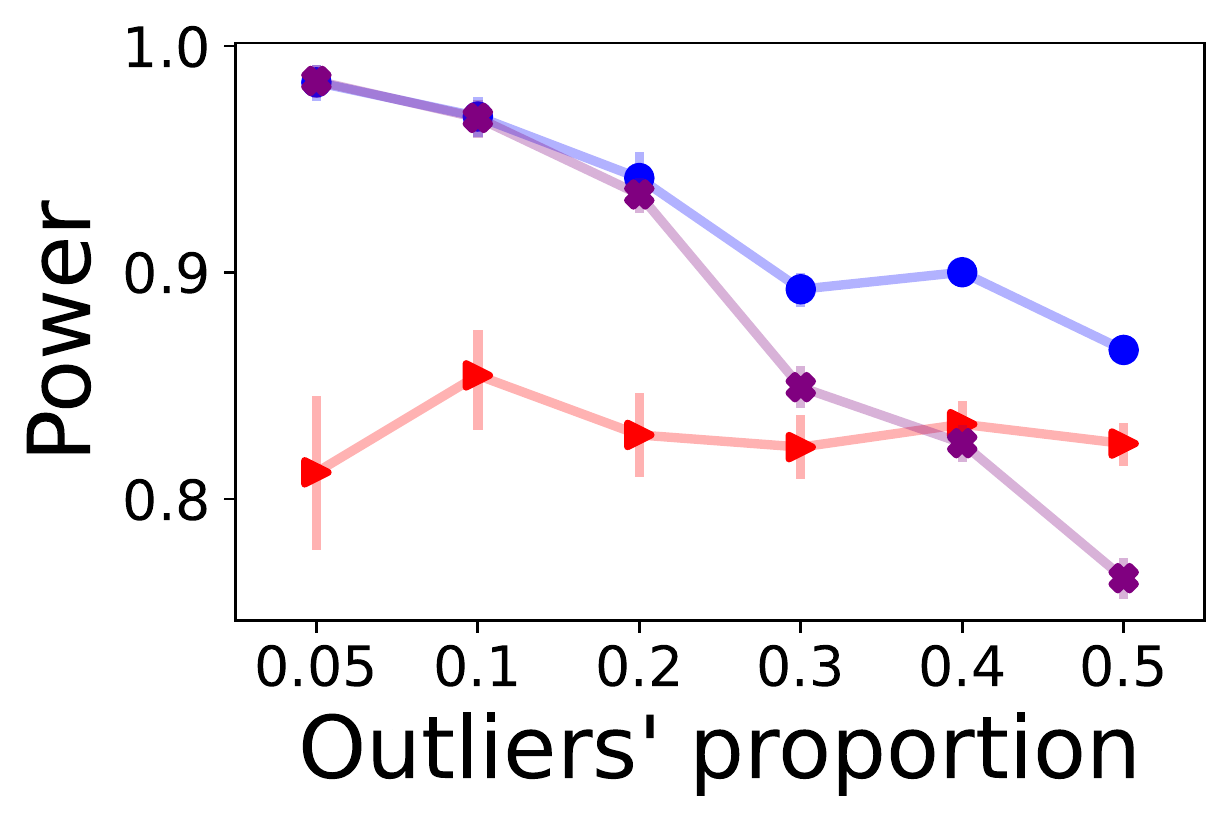}
    \includegraphics[width=0.47\textwidth]{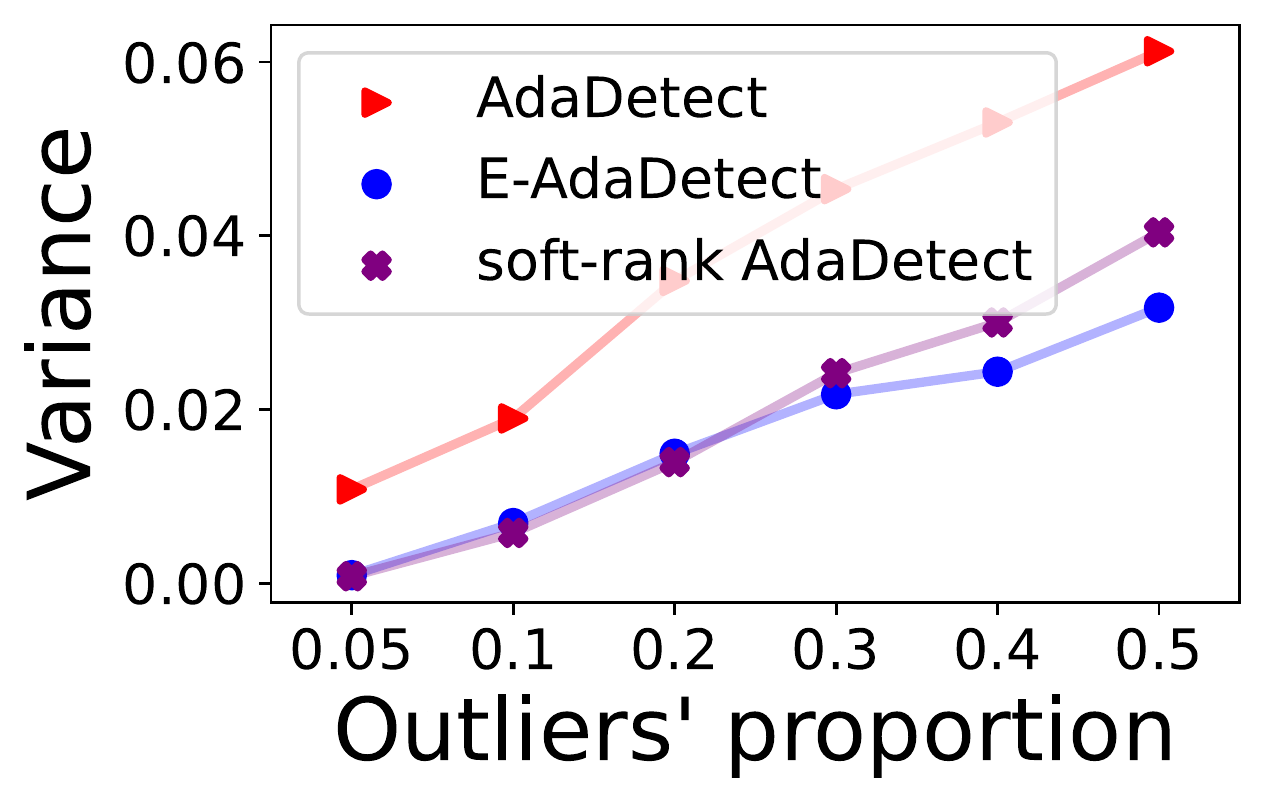}
    \caption{\texttt{AdaDetect}}
    \label{fig:outliers-proportion-adadetect}
    \end{subfigure}
    \begin{subfigure}[b]{0.48\textwidth}
    \includegraphics[width=0.47\textwidth]{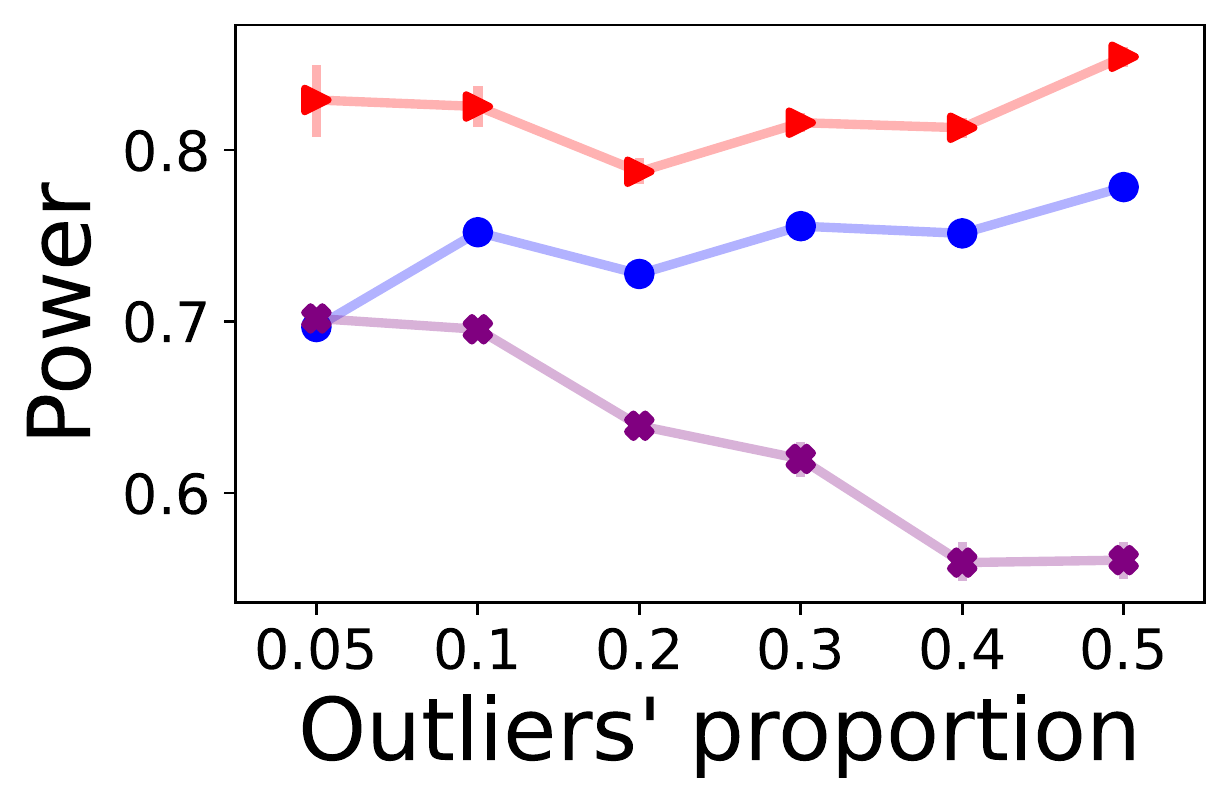}
    \includegraphics[width=0.5\textwidth]{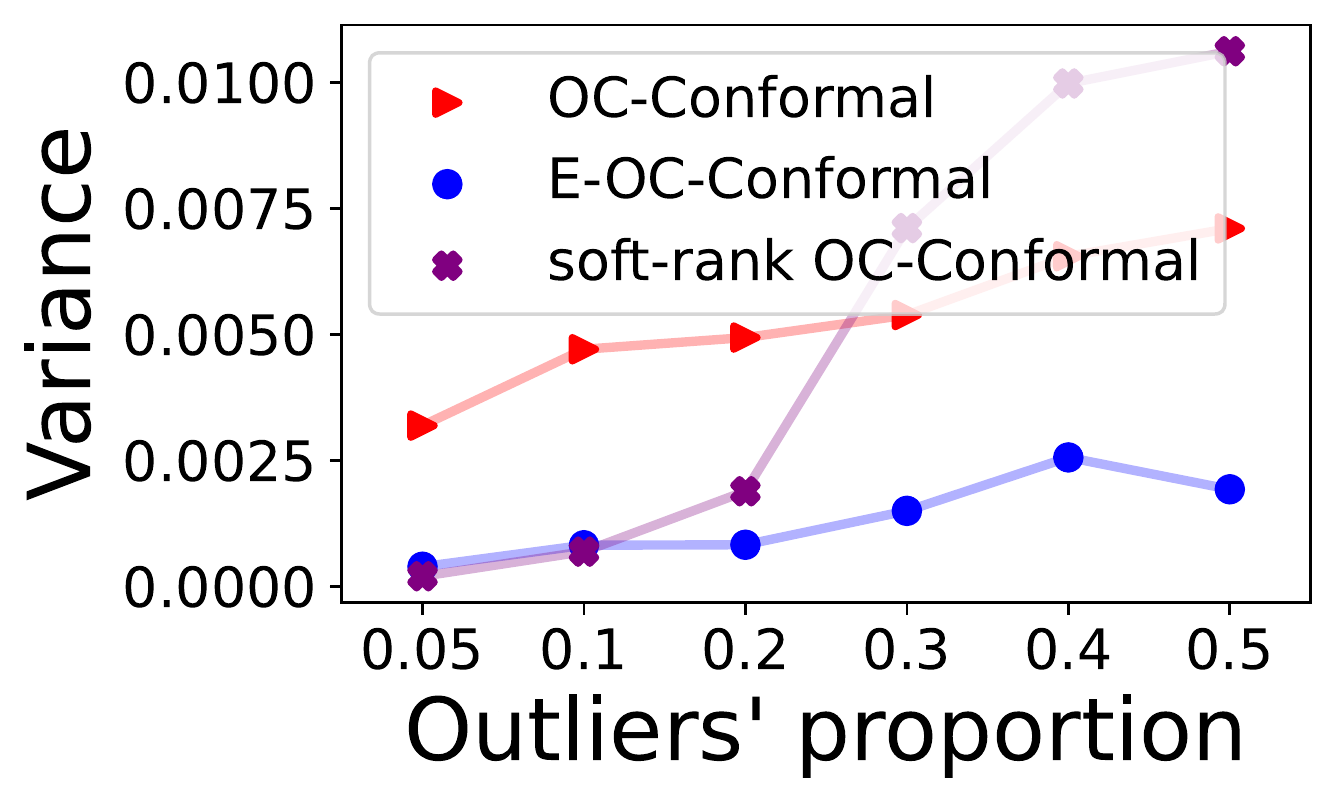}
    \caption{\texttt{OC-Conformal}}
    \label{fig:outliers-proportion-oc-conformal}
    \end{subfigure}
    \caption{Performance on synthetic data of the proposed derandomized outlier detection method, \texttt{E-AdaDetect} (\texttt{E-OC-Conformal}), applied with $K=10$, compared to that of its randomized benchmark, \texttt{AdaDetect} (\texttt{OC-Conformal}). We also compare these methods to the soft-rank method, applied with $K=10$, \texttt{soft-rank AdaDetect} (\texttt{soft-rank OC-Conformal}), as a function of the proportion of outliers in the test set with the corresponding signal strength that results in a stable strength of the randomized benchmarks.}
    \label{app-fig:outliers-proportion}
\end{figure}

\begin{figure}[!htb]
    \centering
    \begin{subfigure}[b]{0.32\textwidth}
\includegraphics[width=\textwidth]{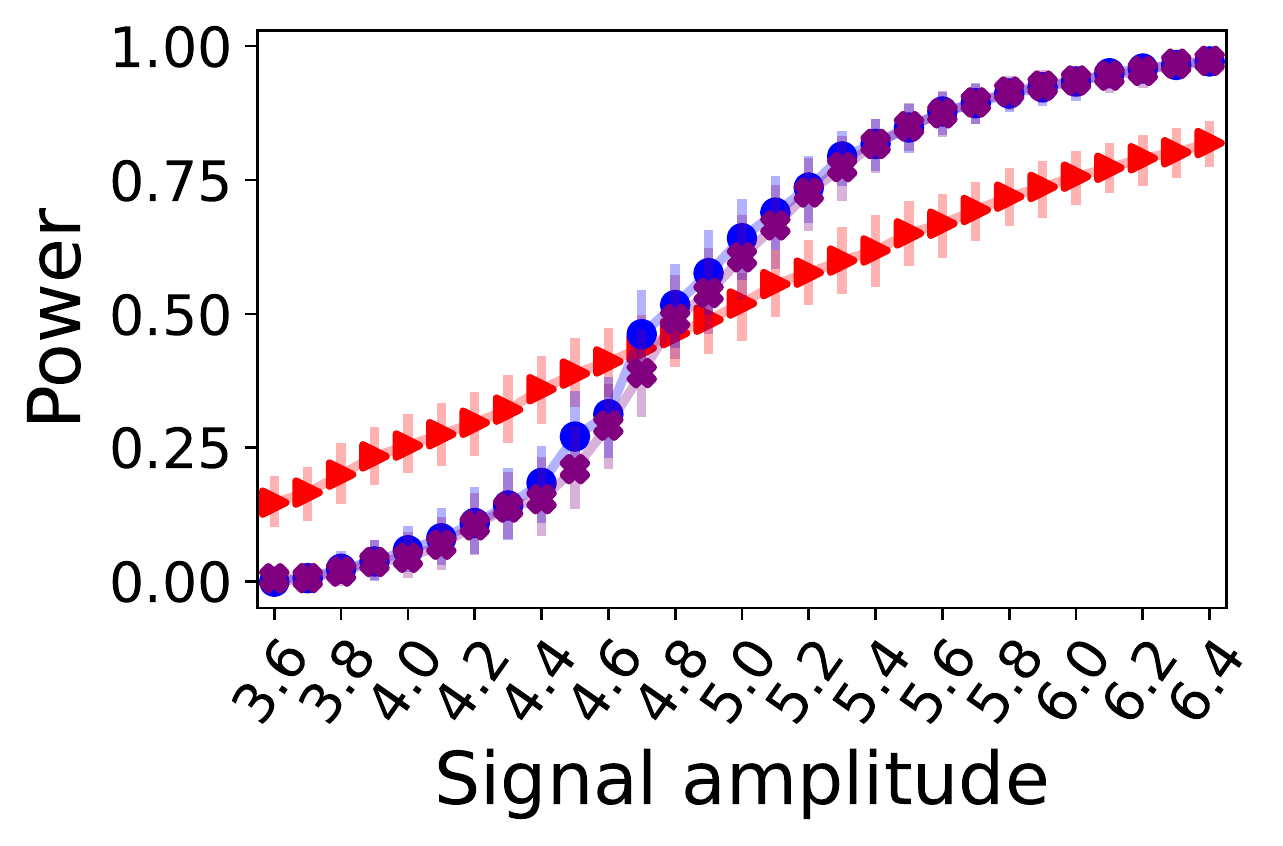}
    \caption{$5\%$ outliers}
\end{subfigure}
    \begin{subfigure}[b]{0.32\textwidth}
\includegraphics[width=\textwidth]{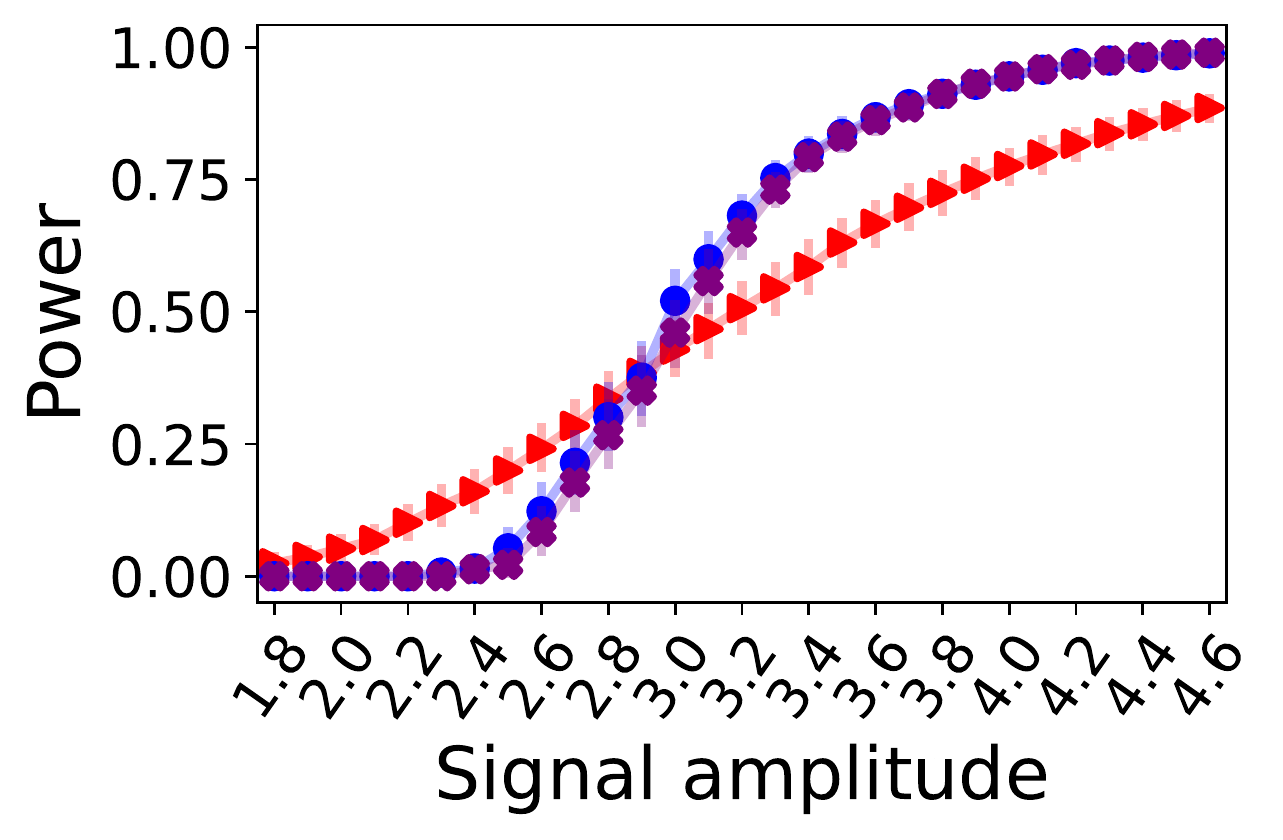}
    \caption{$10\%$ outliers}
\end{subfigure}
    \begin{subfigure}[b]{0.32\textwidth}
\includegraphics[width=\textwidth]{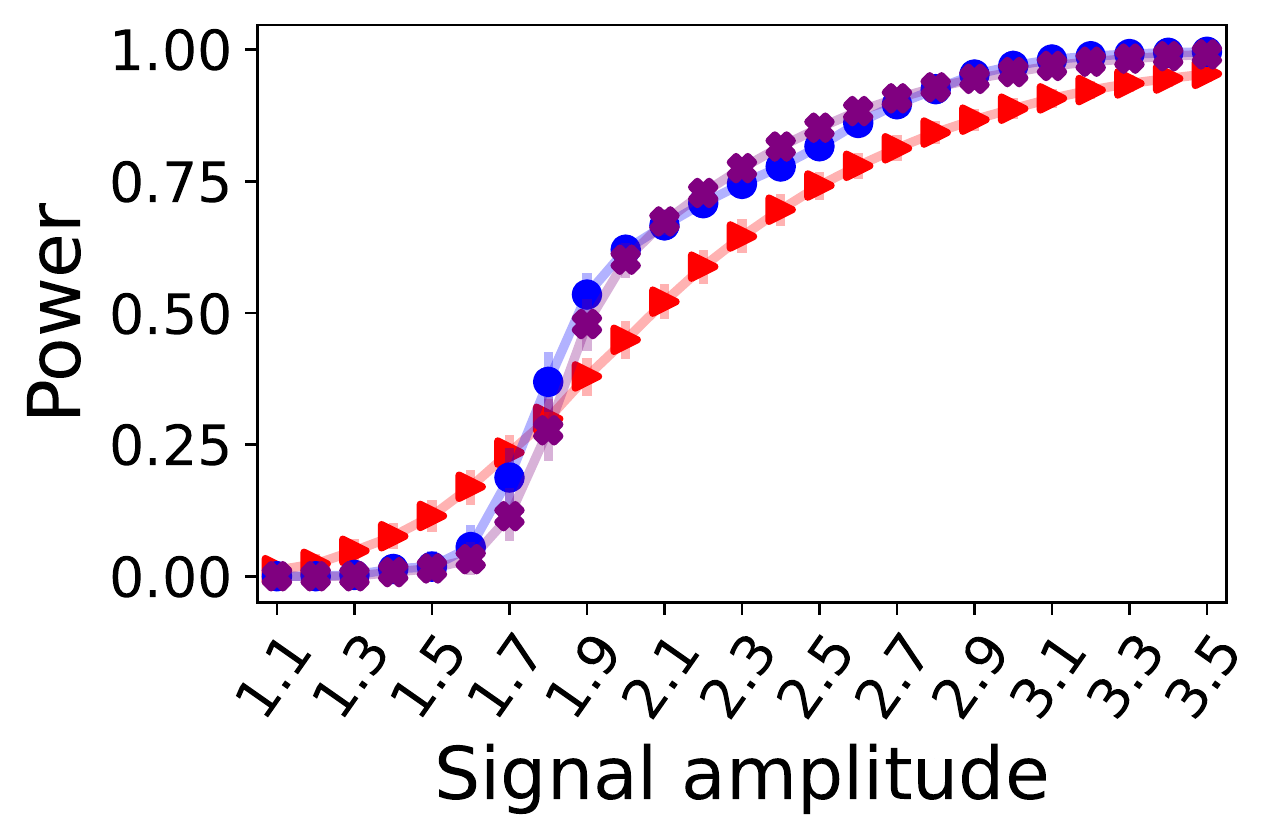}
    \caption{$20\%$ outliers}
\end{subfigure}
    \begin{subfigure}[b]{0.32\textwidth}
\includegraphics[width=\textwidth]{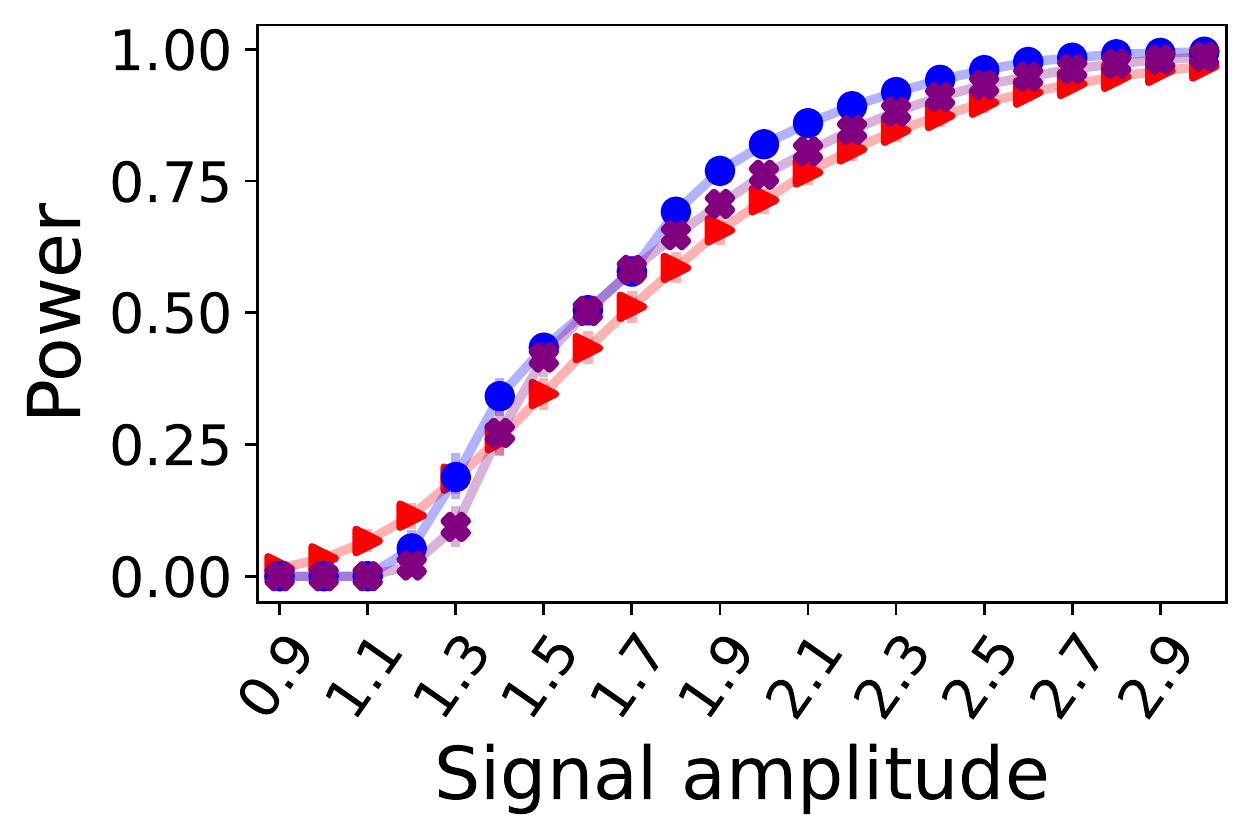}
    \caption{$30\%$ outliers}
\end{subfigure}
    \begin{subfigure}[b]{0.32\textwidth}
\includegraphics[width=\textwidth]{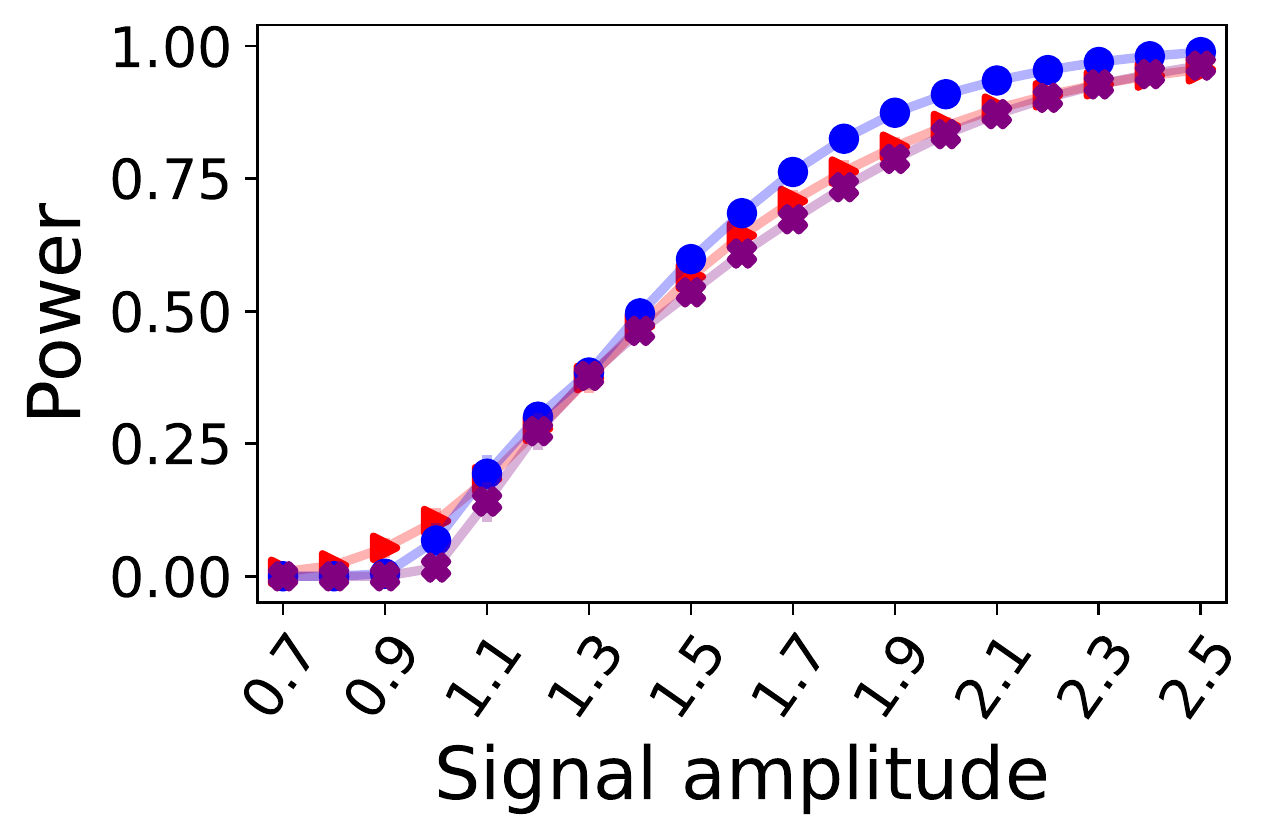}
    \caption{$40\%$ outliers}
\end{subfigure}
    \begin{subfigure}[b]{0.32\textwidth}
\includegraphics[width=\textwidth]{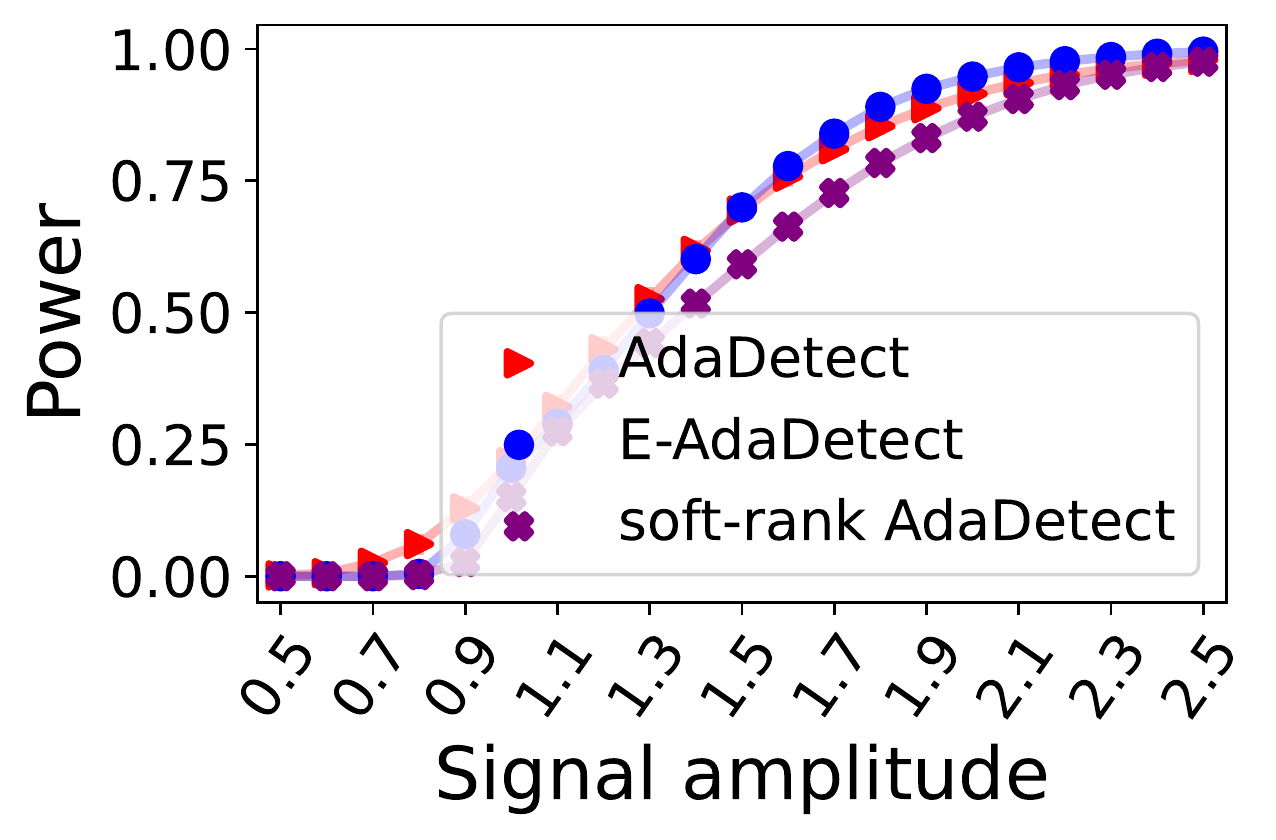}
    \caption{$50\%$ outliers}
\end{subfigure}
\caption{Performance on synthetic data of the proposed derandomized outlier detection method, \texttt{E-AdaDetect}, applied with $K=10$, compared to that of its randomized benchmark, \texttt{AdaDetect}. We compare these methods to the soft-rank method, applied with $K=10$, \texttt{soft-rank AdaDetect}, as a function of the signal strength for varying proportions of outliers in the test set. The results are averaged over 100 independent realizations of the data. Other results are as in Figure~\ref{fig:data_difficulty}.}
    \label{app-fig:signal-amplitude_LR}
\end{figure}

\begin{figure}[!htb]
    \centering
    \begin{subfigure}[b]{0.32\textwidth}
\includegraphics[width=\textwidth]{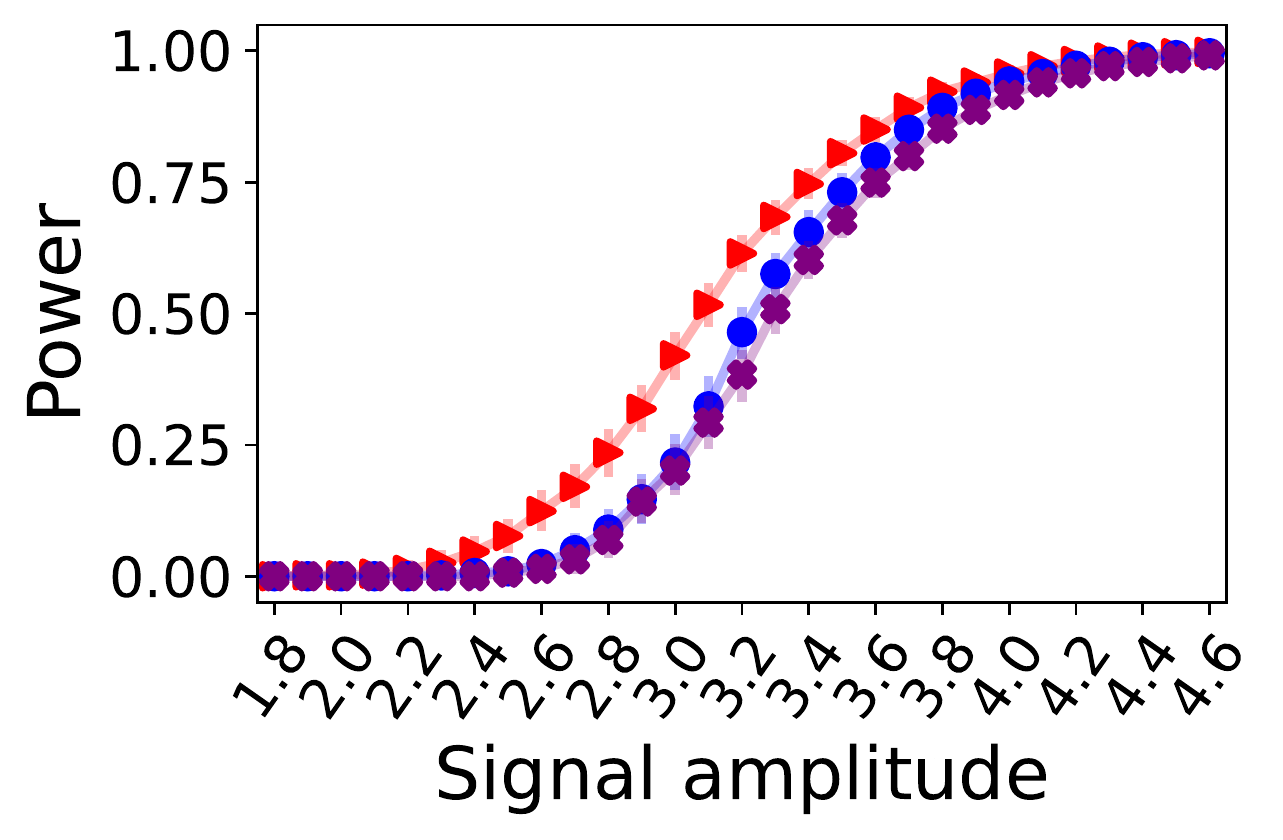}
    \caption{$5\%$ outliers}
\end{subfigure}
    \begin{subfigure}[b]{0.32\textwidth}
\includegraphics[width=\textwidth]{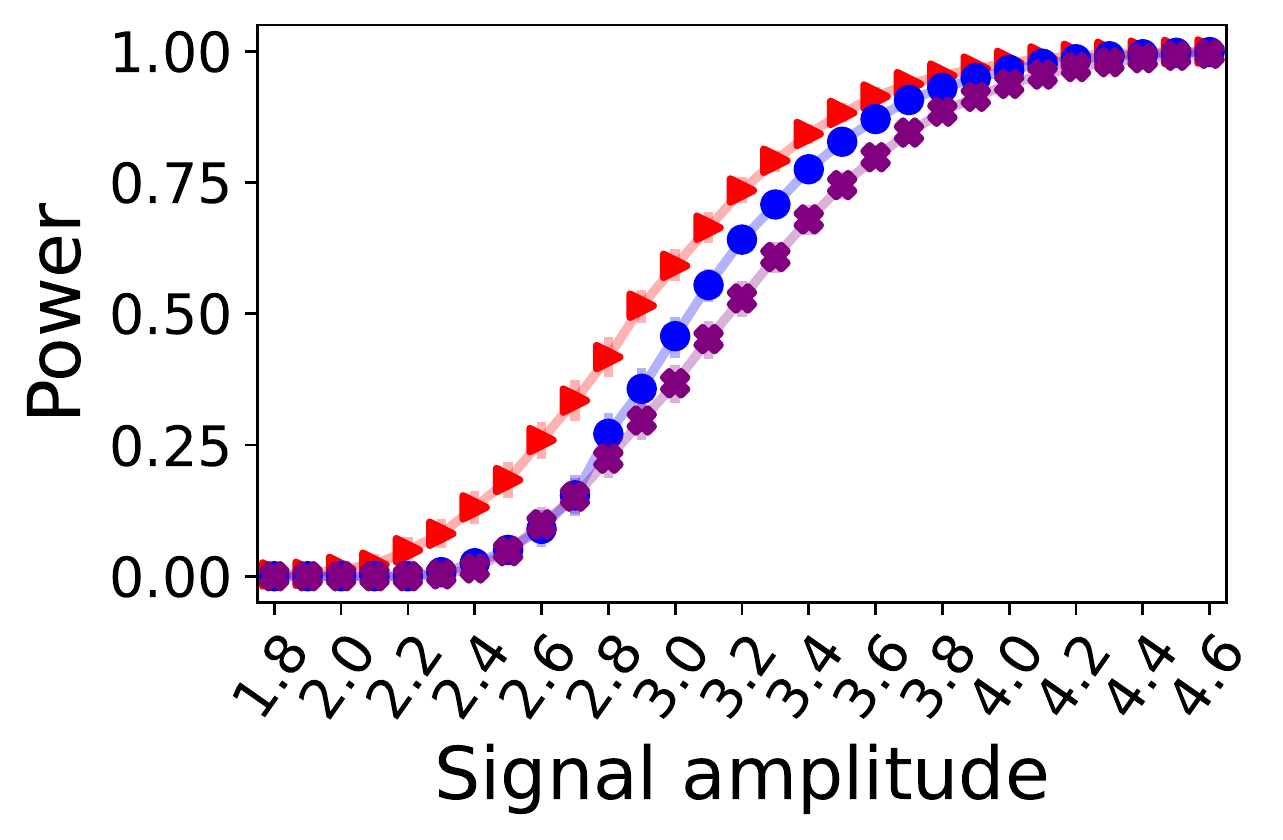}
    \caption{$10\%$ outliers}
\end{subfigure}
    \begin{subfigure}[b]{0.32\textwidth}
\includegraphics[width=\textwidth]{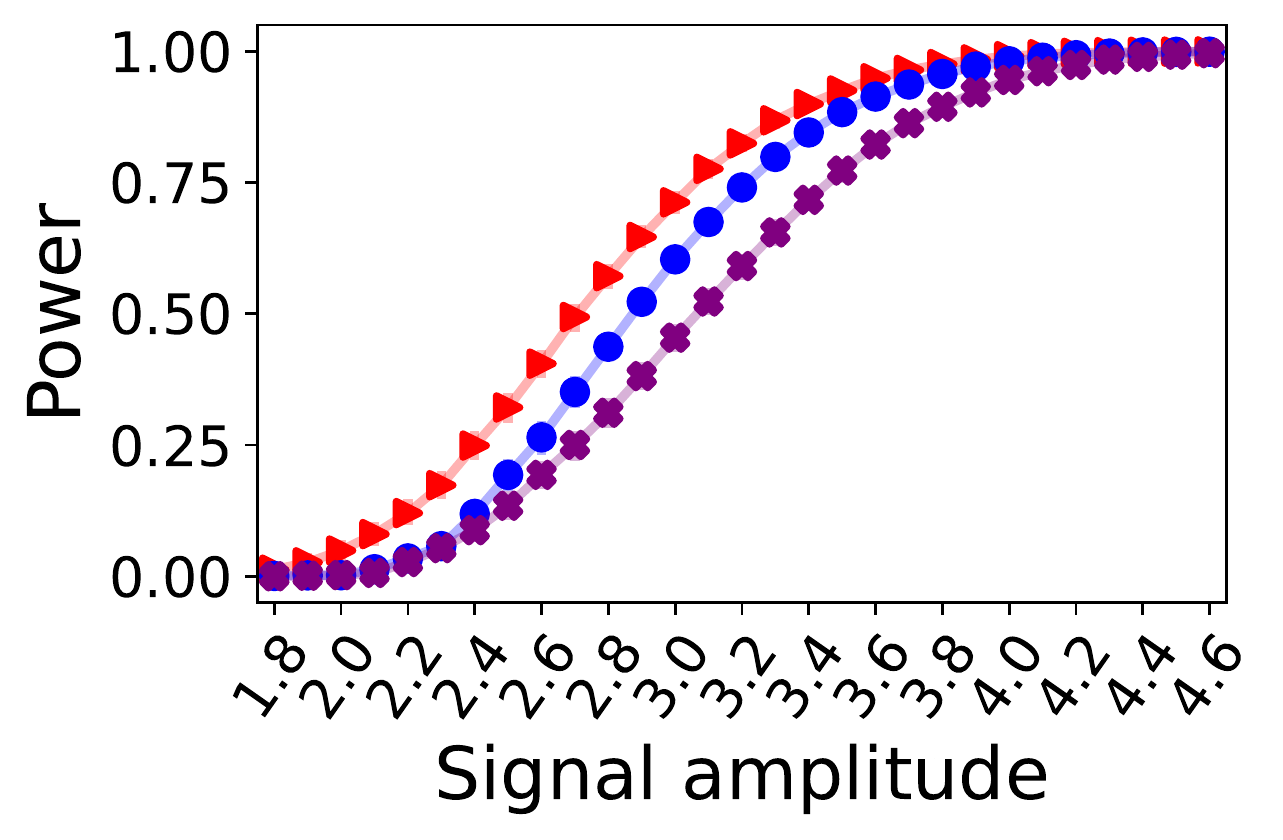}
    \caption{$20\%$ outliers}
\end{subfigure}
    \begin{subfigure}[b]{0.32\textwidth}
\includegraphics[width=\textwidth]{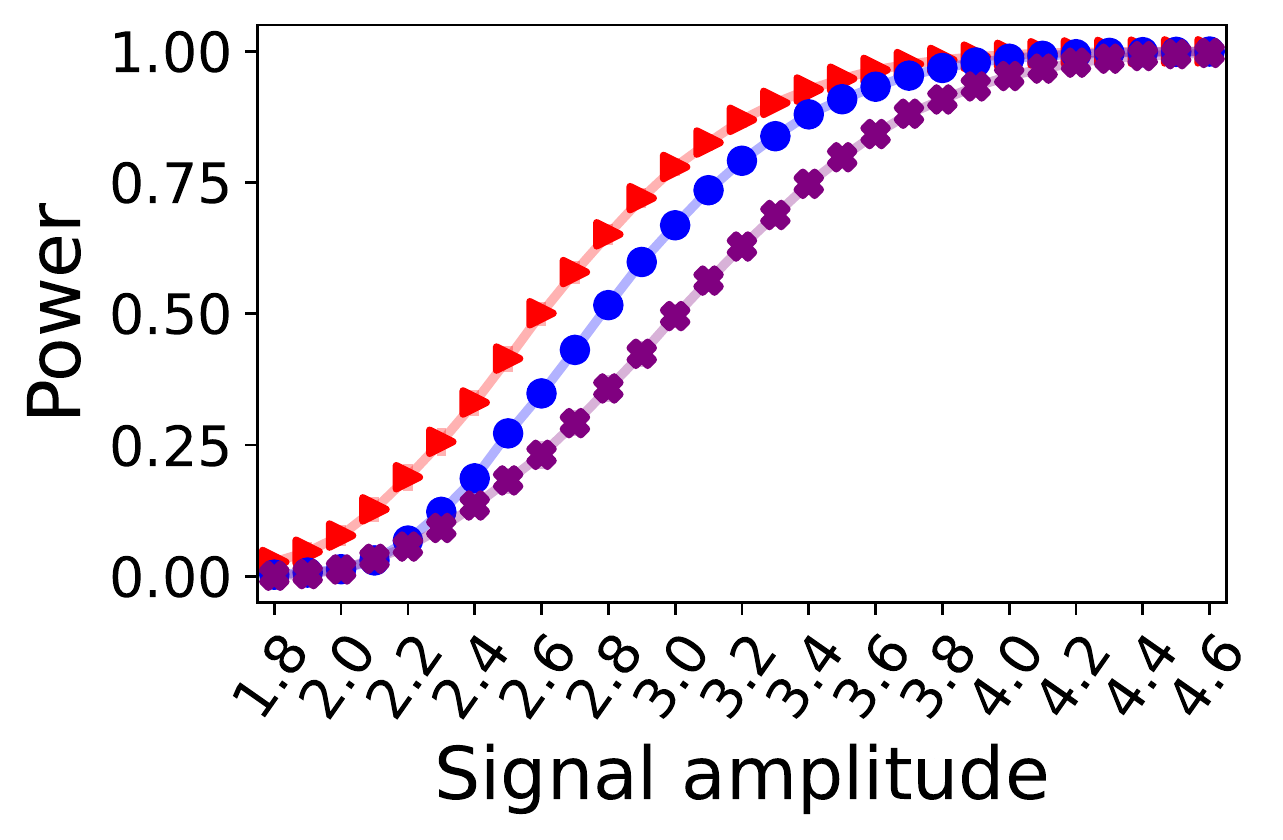}
    \caption{$30\%$ outliers}
\end{subfigure}
    \begin{subfigure}[b]{0.32\textwidth}
\includegraphics[width=\textwidth]{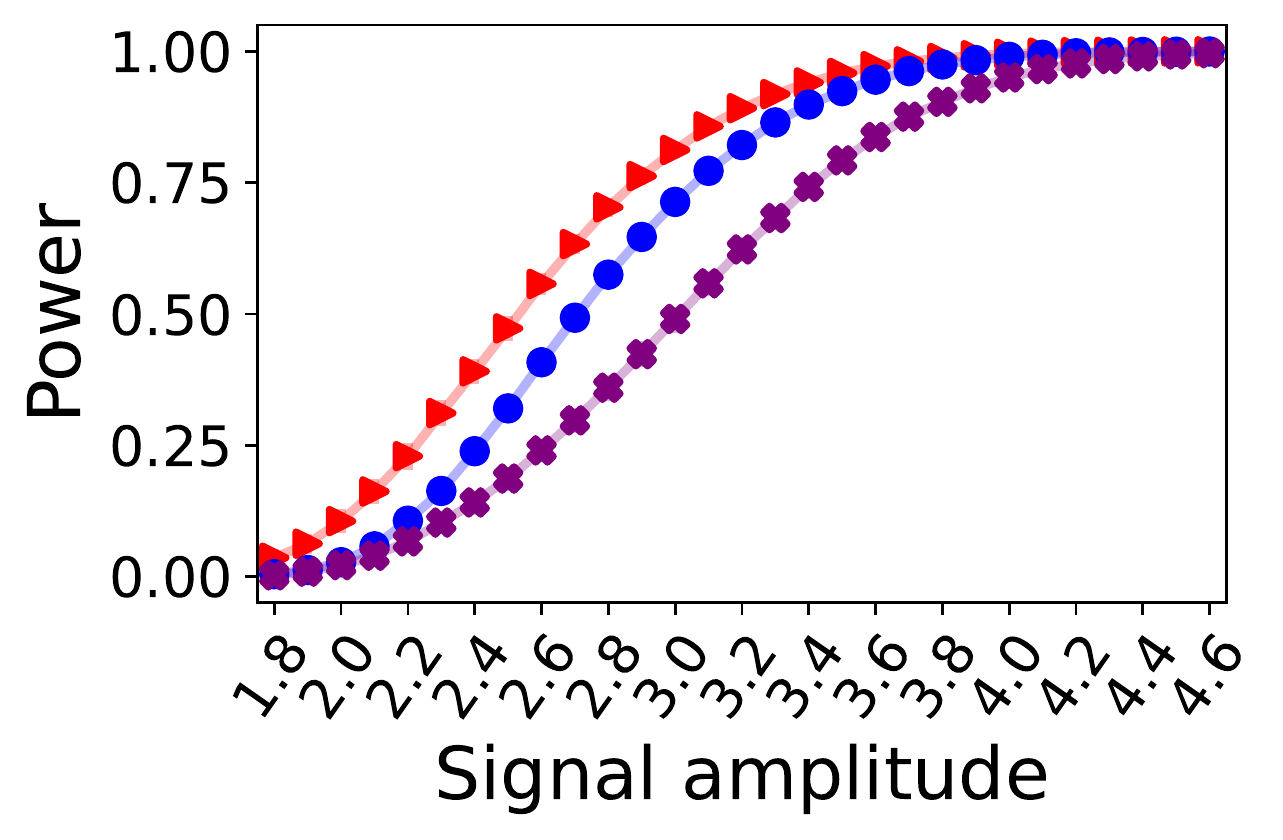}
    \caption{$40\%$ outliers}
\end{subfigure}
    \begin{subfigure}[b]{0.32\textwidth}
\includegraphics[width=\textwidth]{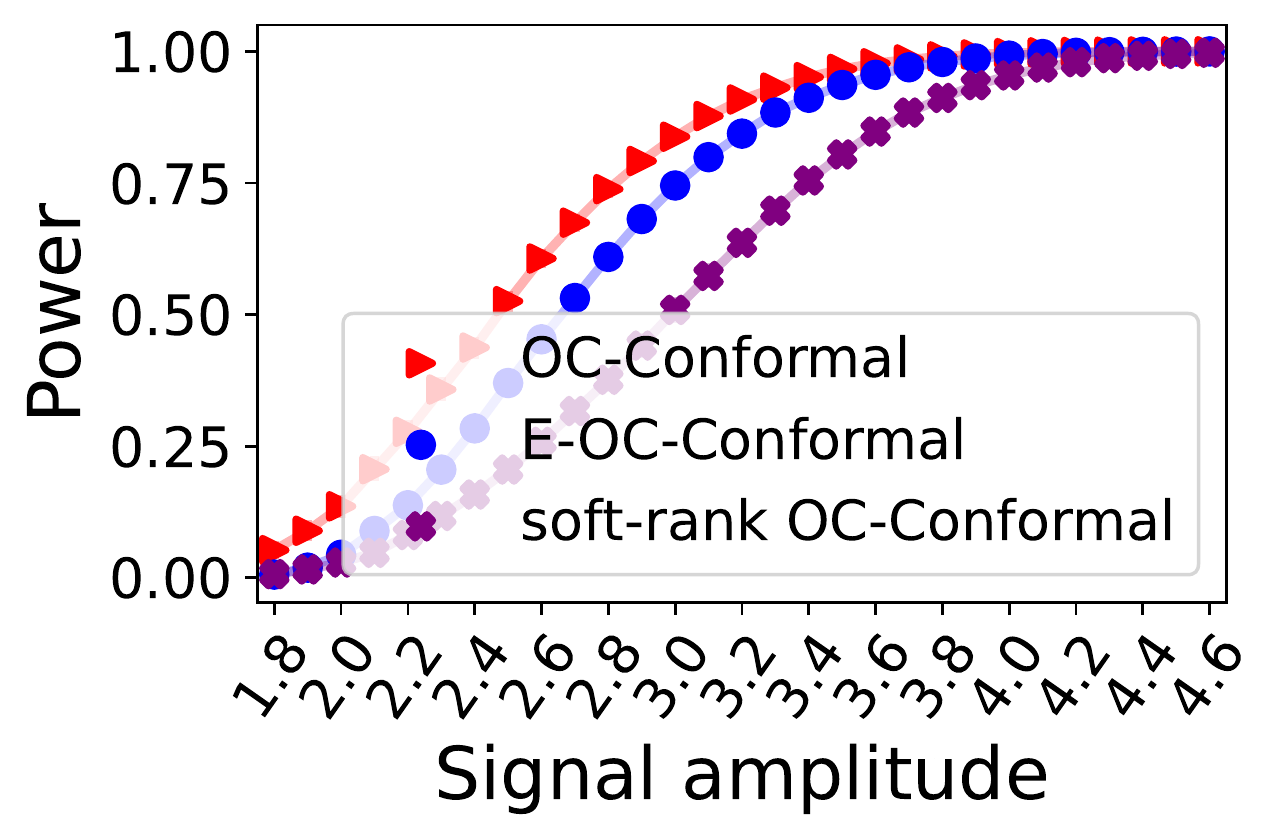}
    \caption{$50\%$ outliers}
\end{subfigure}
\caption{Performance on synthetic data of the proposed derandomized outlier detection method, \texttt{E-OC-Conformal}, applied with $K=10$, compared to that of its randomized benchmark, \texttt{OC-Conformal}. These methods are also compared to the soft-rank method, applied with $K=10$, \texttt{soft-rank OC-Conformal}, as a function of the signal strength for varying proportion of outliers in the test set. The results are averaged over 100 independent realizations of the data. Other results are as in Figure~\ref{fig-app:synth-classic-FDR-Power-data_difficulty_ocsvm}.}
    \label{app-fig:signal-amplitude_OC}
\end{figure}

\begin{figure}[!htb]
    \centering
    \begin{subfigure}[b]{\textwidth}
    \includegraphics[width=0.32\textwidth]{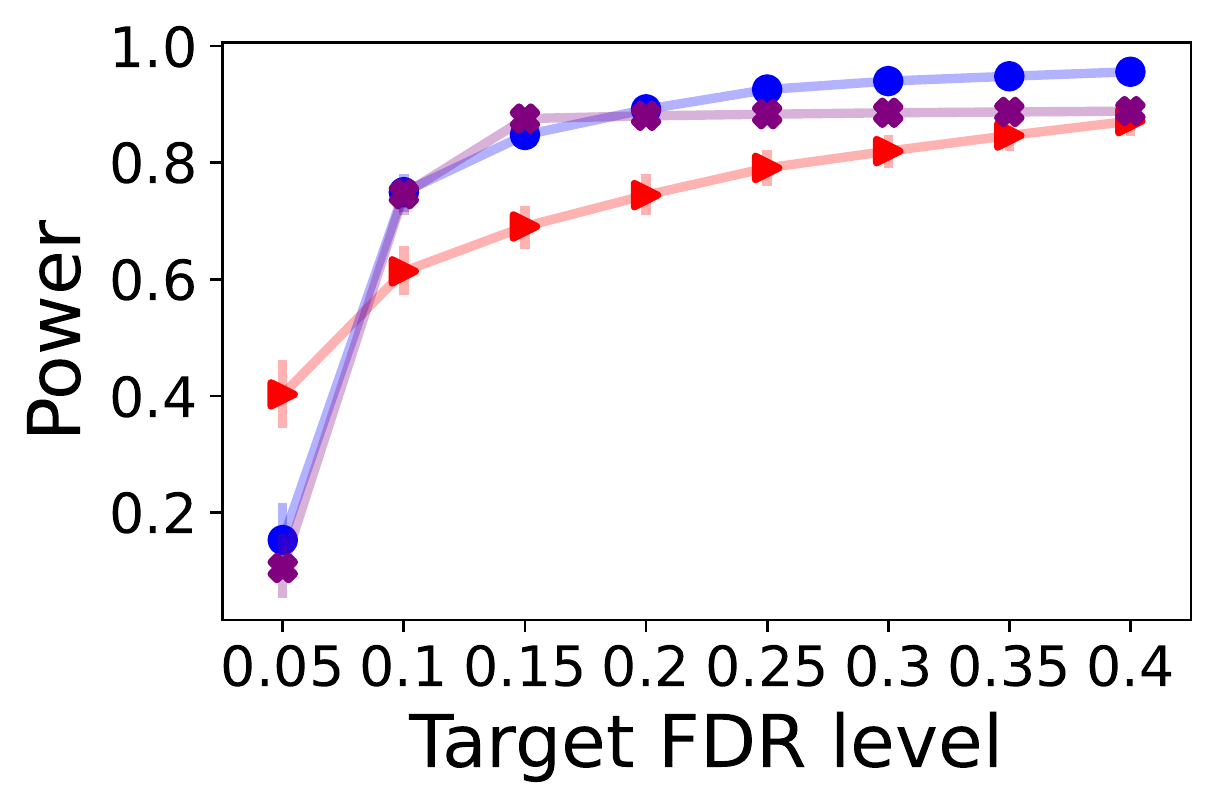}
    \includegraphics[width=0.32\textwidth]{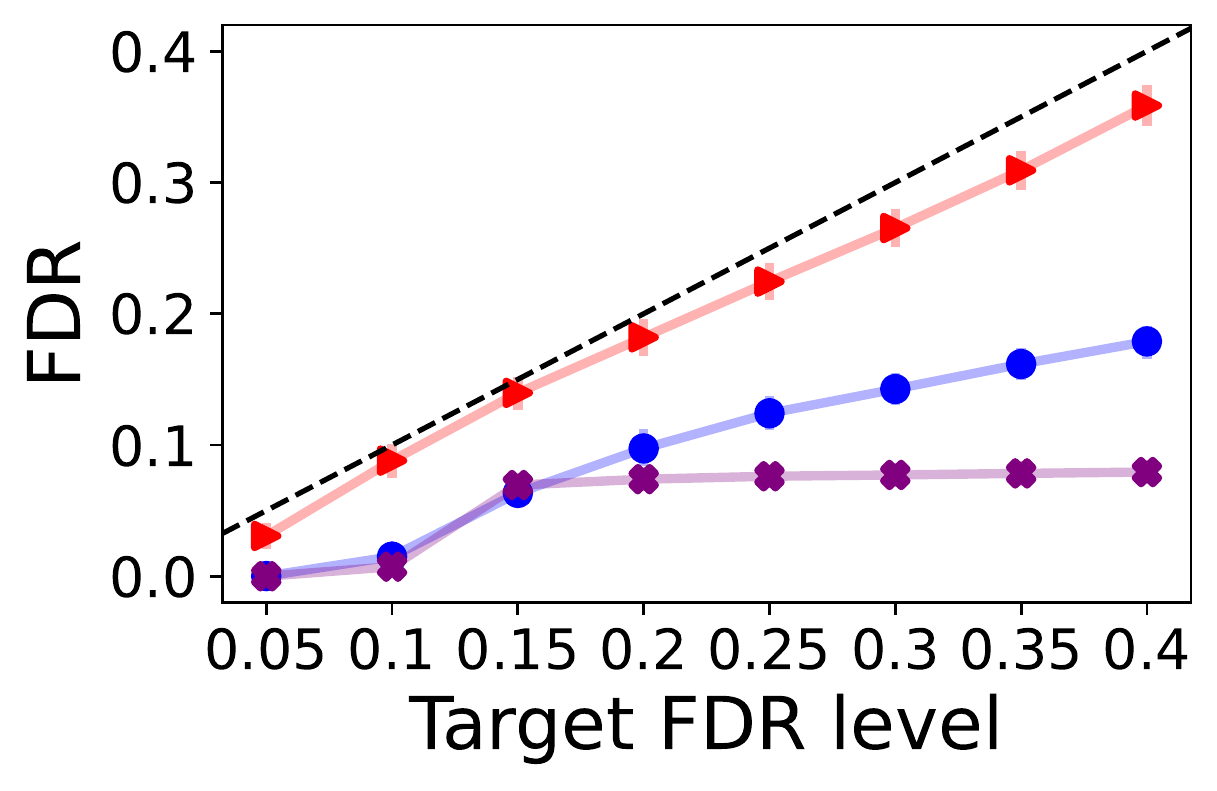}
    \includegraphics[width=0.32\textwidth]{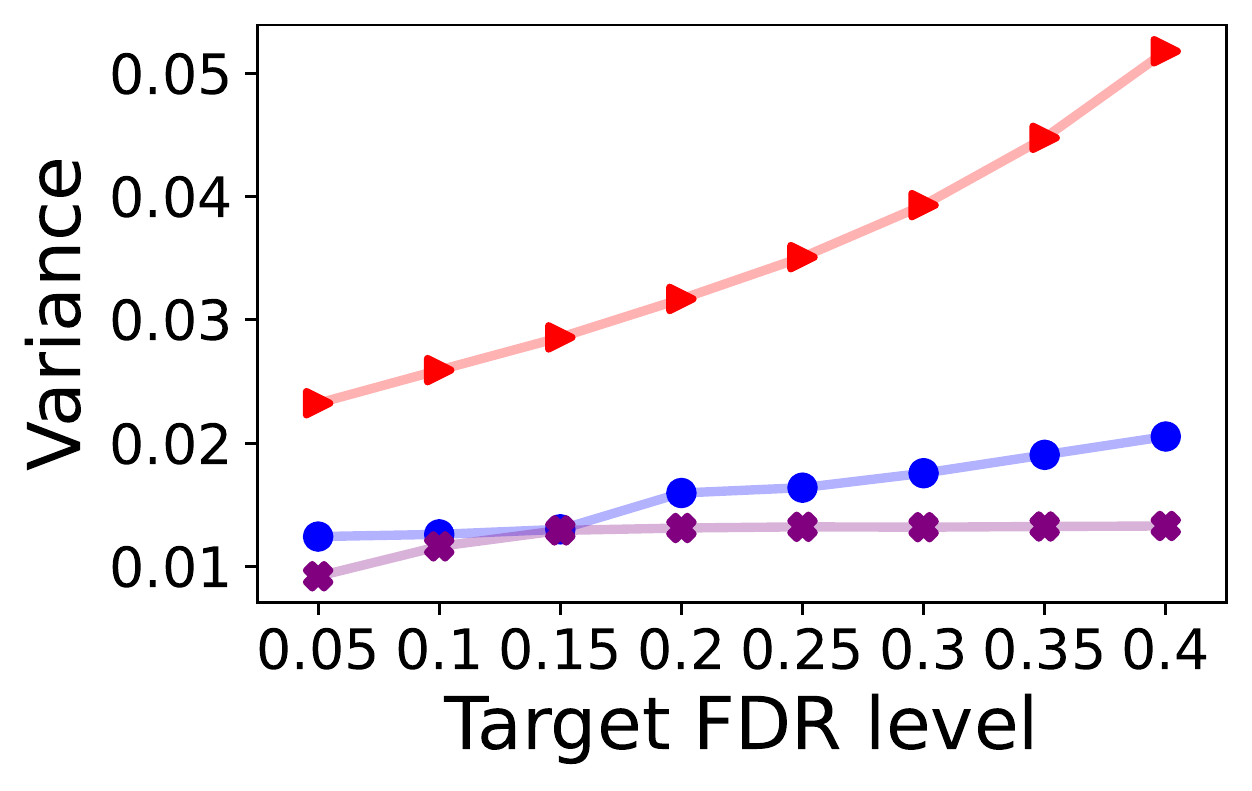}
    \caption{$10\%$ outliers}
    \label{app-fig:target-FDR-outliers-0.1}
    \end{subfigure}
    \\
    \begin{subfigure}[b]{\textwidth}
    \includegraphics[width=0.32\textwidth]{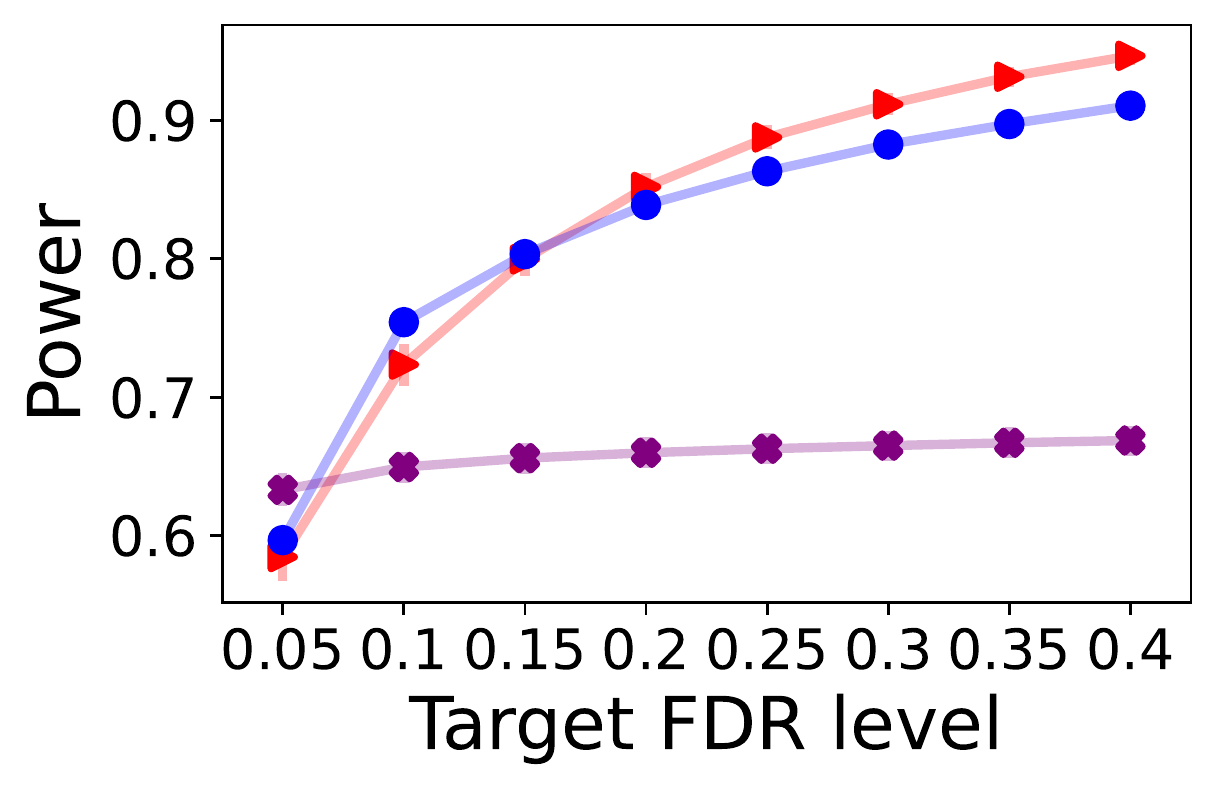}
    \includegraphics[width=0.32\textwidth]{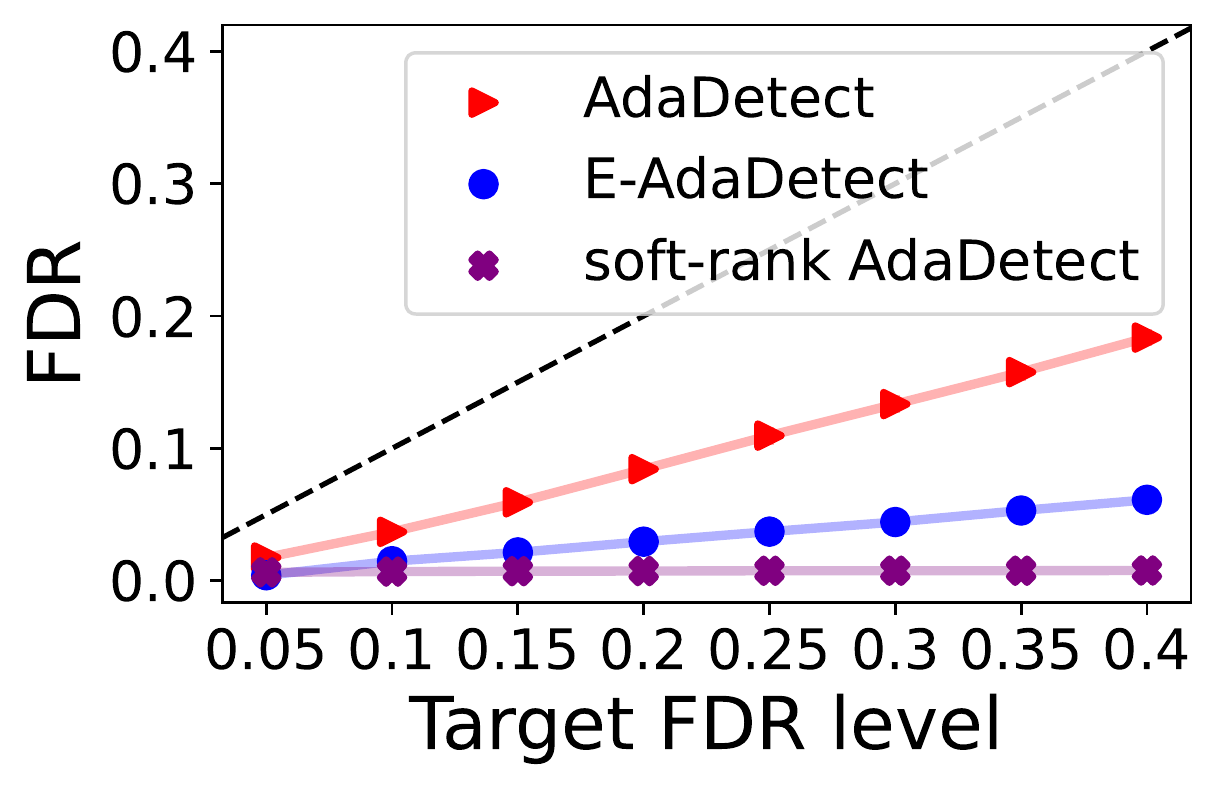}
    \includegraphics[width=0.32\textwidth]{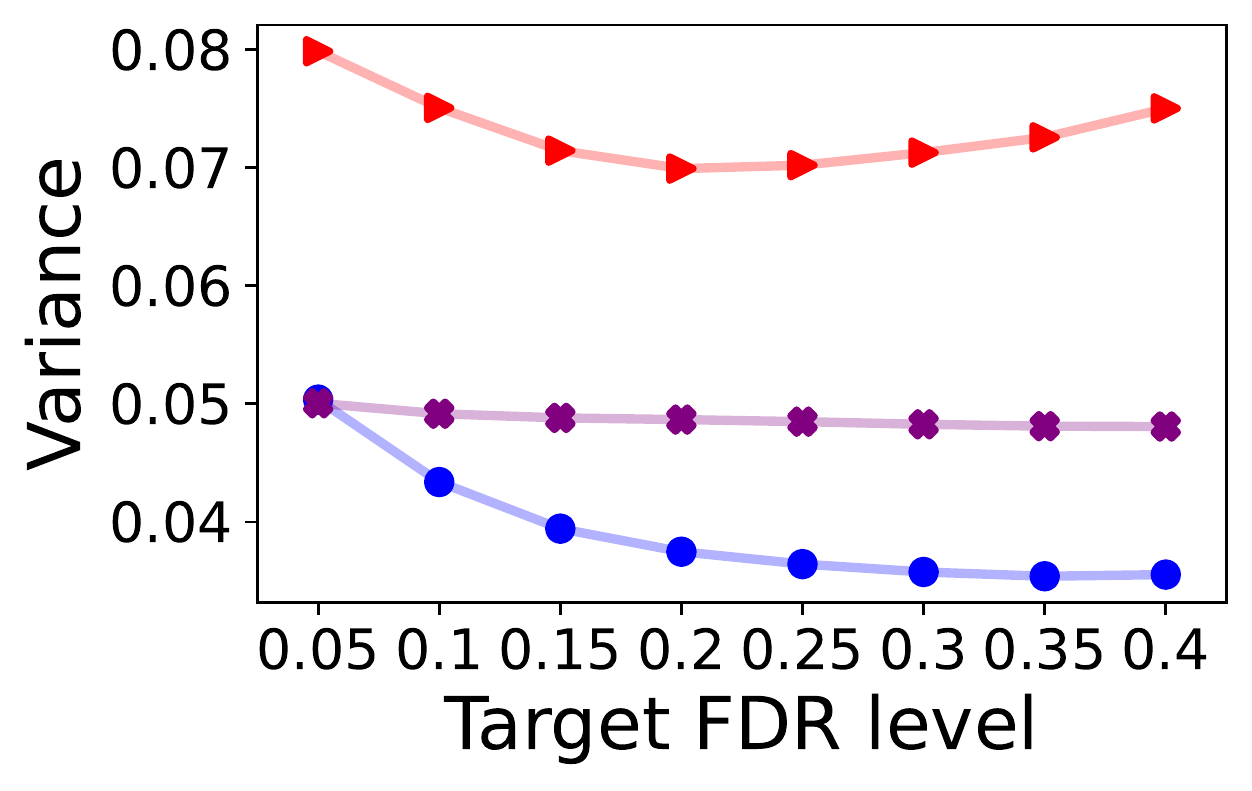}
    \caption{$50\%$ outliers}
    \label{app-fig:target-FDR-outliers-0.5}
    \end{subfigure}
    \caption{Performance on synthetic data of the proposed derandomized outlier detection method, \texttt{E-AdaDetect}, applied with $K=10$, compared to that of its randomized benchmark, \texttt{AdaDetect}. These methods are also compared to the soft-rank method, applied with $K=10$, \texttt{soft-rank AdaDetect}, as a function of the target FDR level. All methods leverage a logistic regression binary classifier.
  \ref{app-fig:target-FDR-outliers-0.1} presents the performance in high-power regime with signal amplitude $3.4$ when there are $10\%$ outliers. \ref{app-fig:target-FDR-outliers-0.5} presents the performance in high-power regime with signal amplitude $1.6$ when there are $50\%$ outliers. The dashed line indicates the nominal false discovery rate level. Note that these results correspond to 100 repeated experiments based on a single realization of the labeled and test data, hence why the results appear a little noisy.}
    \label{app-fig:target-FDR}
\end{figure}

\begin{figure}[!htb]
    \centering
    \begin{subfigure}[b]{\textwidth}
    \includegraphics[width=0.32\textwidth]{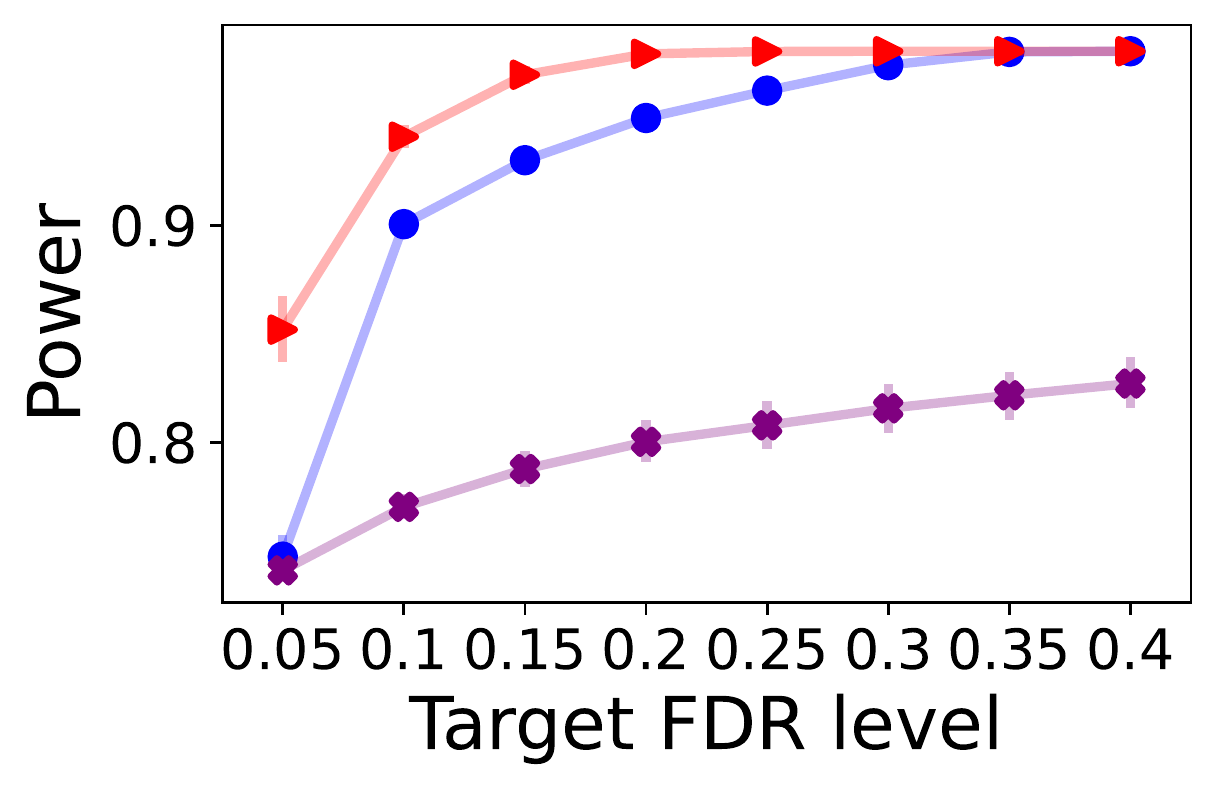}
    \includegraphics[width=0.32\textwidth]{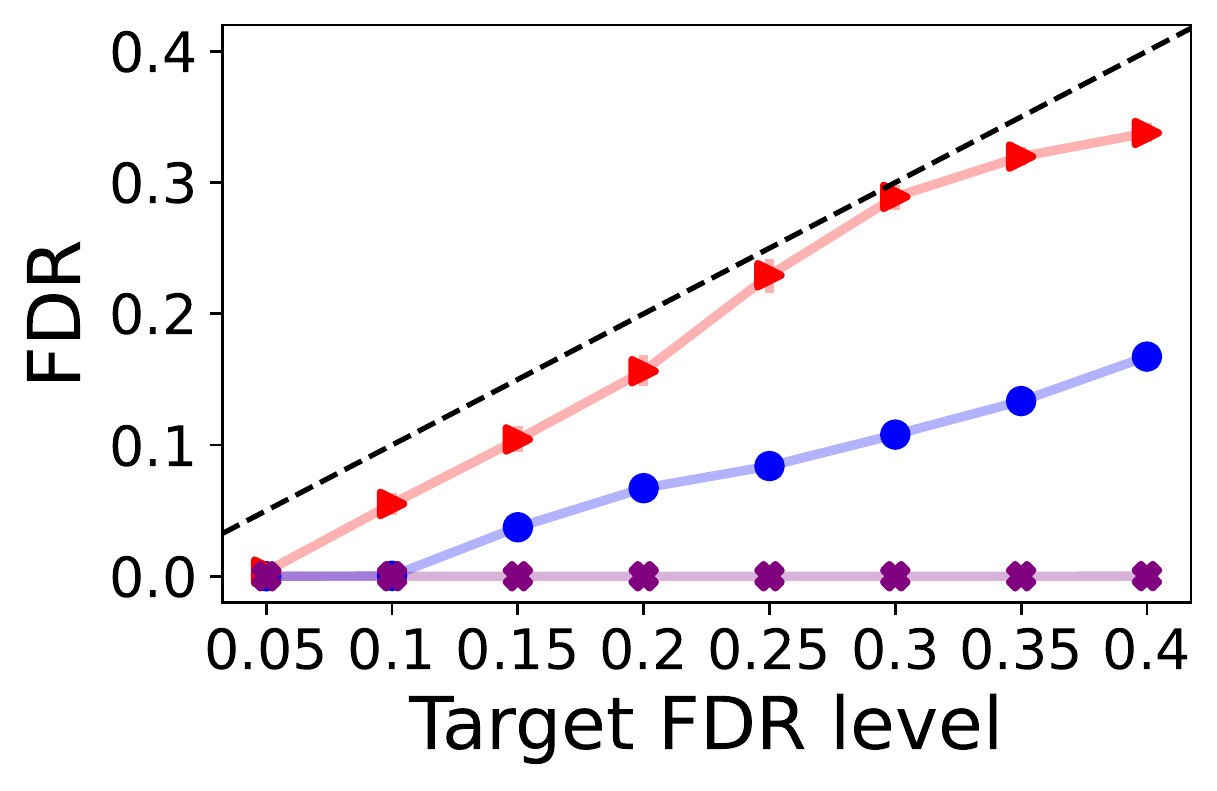}
    \includegraphics[width=0.32\textwidth]{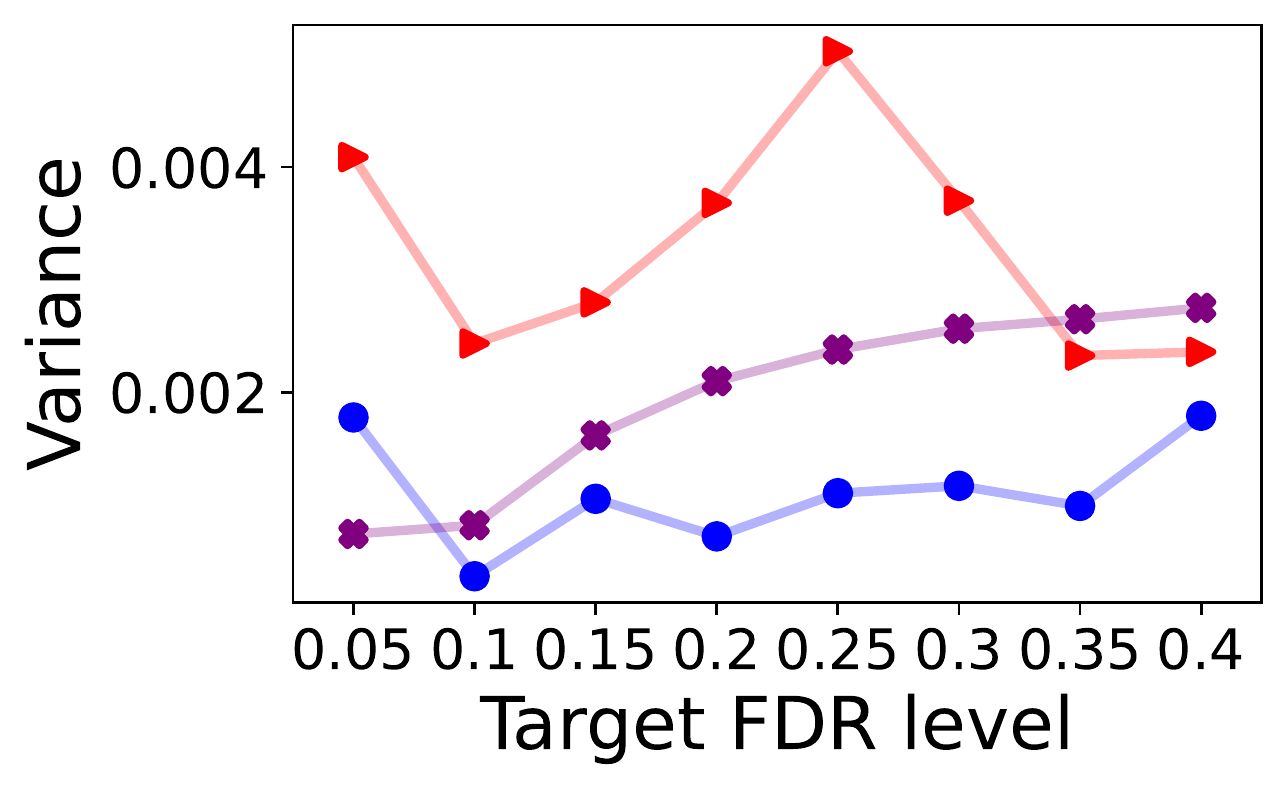}
    \caption{$10\%$ outliers}
    \label{app-fig:target-FDR-outliers-0.1-OC}
    \end{subfigure}
    \\
    \begin{subfigure}[b]{\textwidth}
    \includegraphics[width=0.32\textwidth]{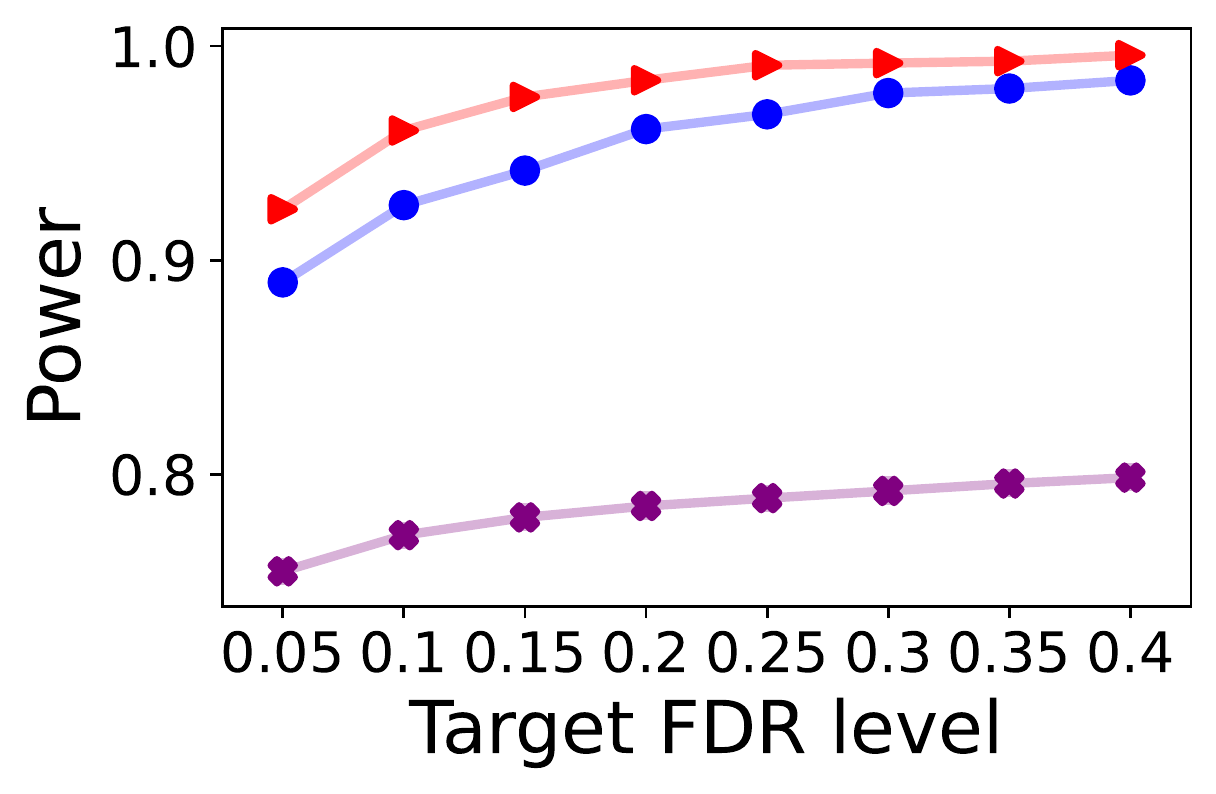}
    \includegraphics[width=0.32\textwidth]{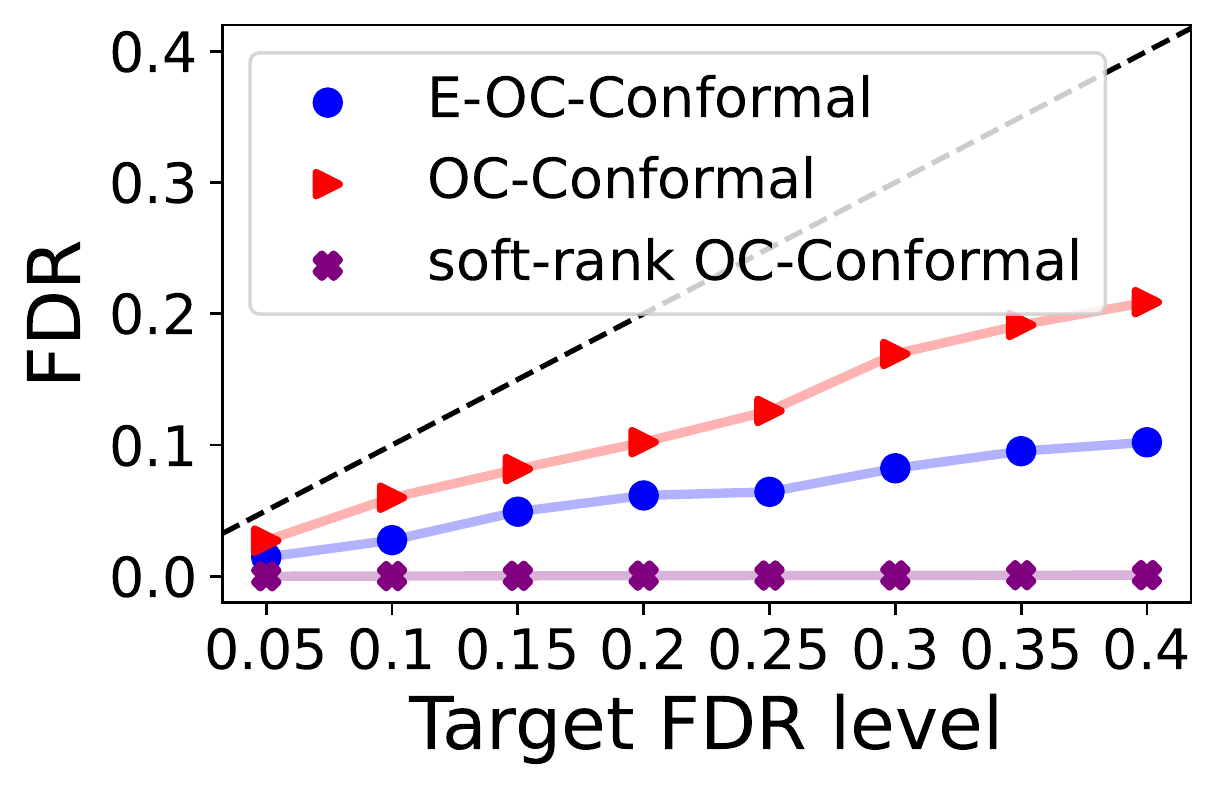}
    \includegraphics[width=0.32\textwidth]{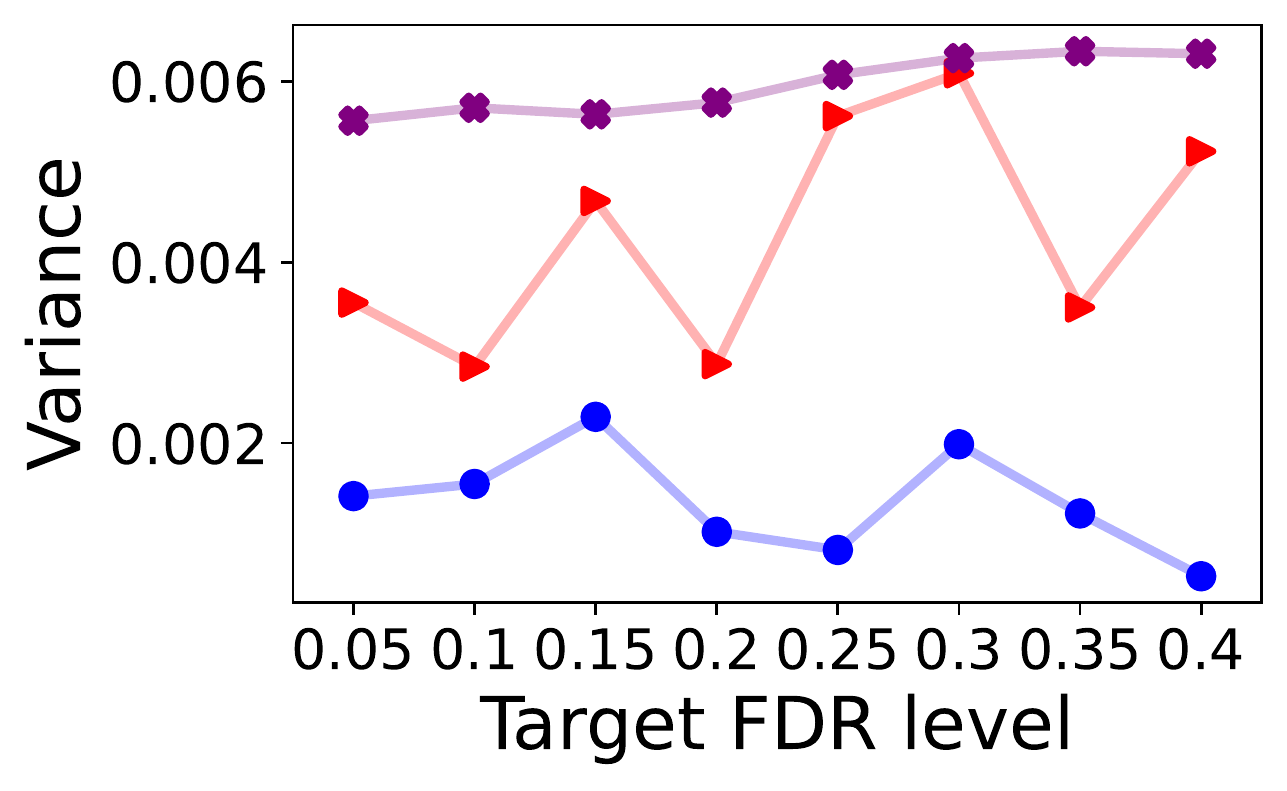}
    \caption{$50\%$ outliers}
    \label{app-fig:target-FDR-outliers-0.5-OC}
    \end{subfigure}
    \caption{Performance on synthetic data of the proposed derandomized outlier detection method, \texttt{E-OC-Conformal}, applied with $K=10$, compared to that of its randomized benchmark, \texttt{OC-Conformal}. These methods are also compared to the soft-rank method, applied with $K=10$, \texttt{soft-rank OC-Conformal}, as a function of the target FDR level. All methods leverage a one-class support vector classifier.
  \ref{app-fig:target-FDR-outliers-0.1-OC} presents the performance in high-power regime with signal amplitude $3.6$ when there are $10\%$ outliers. \ref{app-fig:target-FDR-outliers-0.5-OC} presents the performance in high-power regime with signal amplitude $3.4$ when there are $50\%$ outliers. The dashed line indicates the nominal false discovery rate level. Note that these results correspond to 100 repeated experiments based on a single realization of the labeled and test data, hence why the results appear a little noisy.}
    \label{app-fig:target-FDR-OC}
\end{figure}

\section{Experiments with real data} \label{app:real-experiments}

\subsection{Derandomized AdaDetect}
\label{app:real-experiments-adadetect}

In this section, we evaluate the performance of our method on several benchmark data sets for outlier detection, also studied in \citet{conformal-p-values} and \citet{ml-fdr}: 
{\em musk} \citep{musk}, {\em shuttle} \citep{shuttle}, {\em KDDCup99} \citep{KDDCup99}, and {\em credit card} \citep{creditcard}.
We refer to \citet{conformal-p-values} and \citet{ml-fdr} for more details about these data sets. 
Similarly to Section~\ref{sec:synthetic}, we construct a reference set and a test set through random sub-sampling. 
The reference set contains 3000 inliers, and the test set contains 1000 samples, of which we control the proportion of outliers. 
We apply our derandomization procedure using the proposed martingale-based e-values and soft-rank e-values implemented in combination with \texttt{AdaDetect}, 
using $K=10$ independent splits of the reference set into training and calibration subsets of size 2000 and 1000, respectively.
All the methods are repeatedly applied to carry out 100 independent analyses of the same data.
In the regime that the test set contains 10\% outliers, we can see from Figure~\ref{app-fig:real-data-outliers-0.1-RF} that all methods control the average proportion of false discoveries below $\alpha=0.1$ and achieve similar power, but the findings obtained with the derandomized methods are far more stable. By contrast, when increasing the proportion of outliers to $40\%$ (Figure~\ref{app-fig:real-data-outliers-0.4-RF}) we can see that the martingale-based approach tends to be more powerful than the soft-rank method. 
Finally, Figure~\ref{app-fig:real-data-outliers-proportion-AdaDetect-RF} confirms the reproducibility of these results by reporting the average FDR and power over 100 independent realizations of the sub-sampled data considered in Figure \ref{app-fig:real-data-soft_rank-RF}; 
these performance metrics are presented as a function of the outlier proportion.

\subsection{Derandomized One-Class Conformal} \label{app:real-experiments-OC}
We turn to study the effect of our approach on \texttt{OC-Conformal} on the same real data sets, by following the experimental protocol from Section~\ref{app:real-experiments-adadetect}.
In general, we observe that \texttt{OC-Conformal} is less powerful and less stable than \texttt{AdaDetect} on the studied data sets, and therefore we increase the number of analyses of our randomization procedure to $K=70$.

Figure~\ref{app-fig:real-data-soft_rank-IF} indicates our martingale-based derandomization procedure indeed reduces the algorithmic variability while controlling the false discovery proportion. These results also demonstrate the trade-off between stability and power: the selections of \texttt{E-OC-Conformal} are more stable at the cost of having lower power compared to the base \texttt{OC-Conformal}. Focusing on the derandomization methods, when the test set contains 10\% outliers
(Figure~\ref{app-fig:real-data-outliers-0.1-IF}), the soft-rank method has a comparable and possibly slightly higher power compared to our martingale-based method. On the other hand, our method exhibits greater power for a larger proportion of outliers (Figure~\ref{app-fig:real-data-outliers-0.4-IF}). One explanation for this behavior is in our choice of  $\alpha_{\mathrm{bh}} = 0.05$, which may not be optimal for situations with low proportions of outliers when using the one-class conformal algorithm. Observe also that for the KDDCup99 data set, when the test set contains 40\% outliers, the selection variance of the soft-rank approach is the highest among the methods we study. One way to reduce this variance is to further increase the number of analyses $K$. We conclude this experiment with Figure~\ref{app-fig:real-data-outliers-proportion-OC-Conformal}, which provides a comprehensive comparison of the FDR and power for varying proportions of outliers in the test set. This figure confirms the reproducibility of our method---observe how the FDR, evaluated over 100 random sub-samples of the data, is controlled.

\begin{figure*}[!htb]
  \centering
  \begin{subfigure}[b]{\textwidth}
\includegraphics[width=0.32\textwidth]{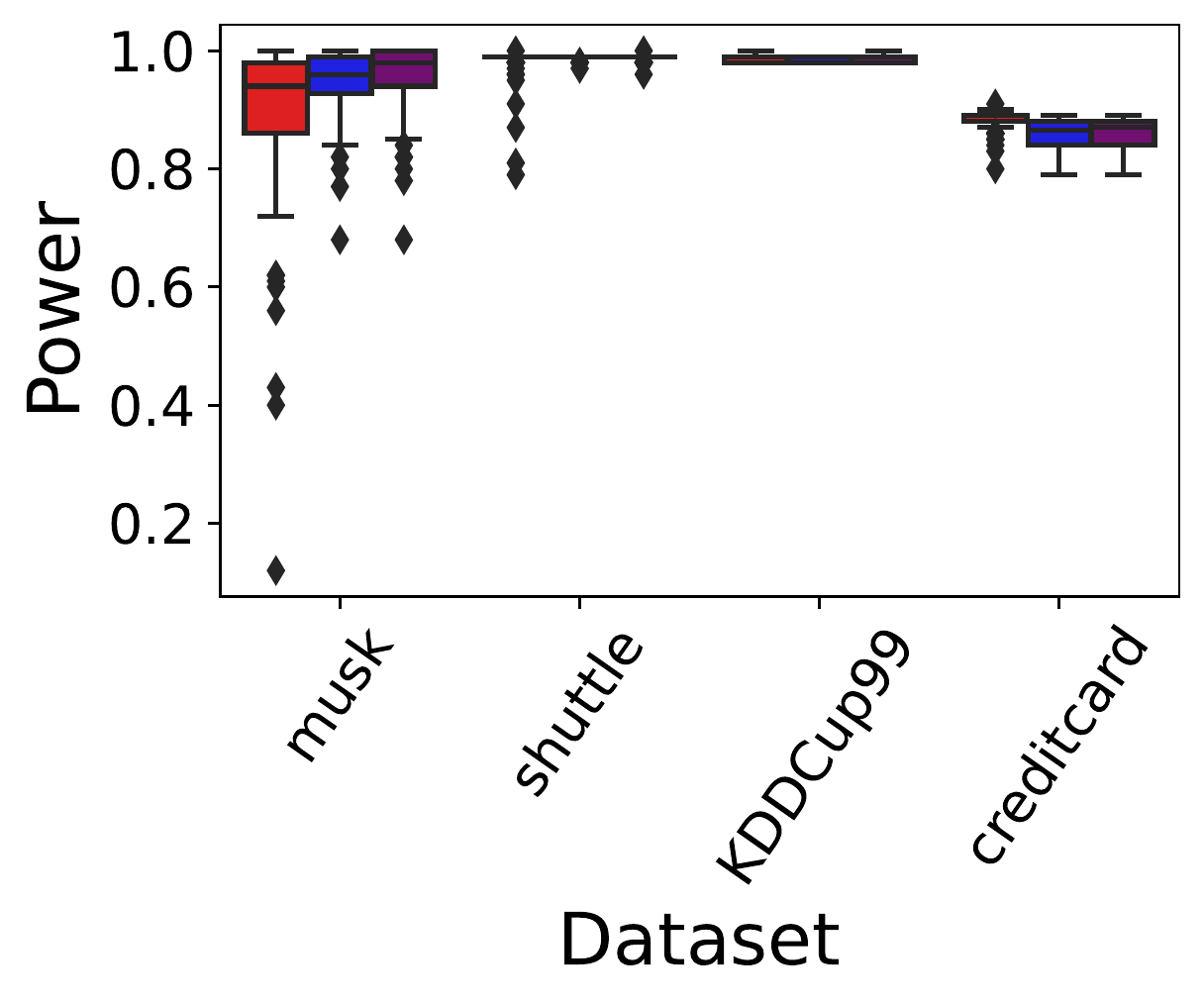}
\includegraphics[width=0.32\textwidth]{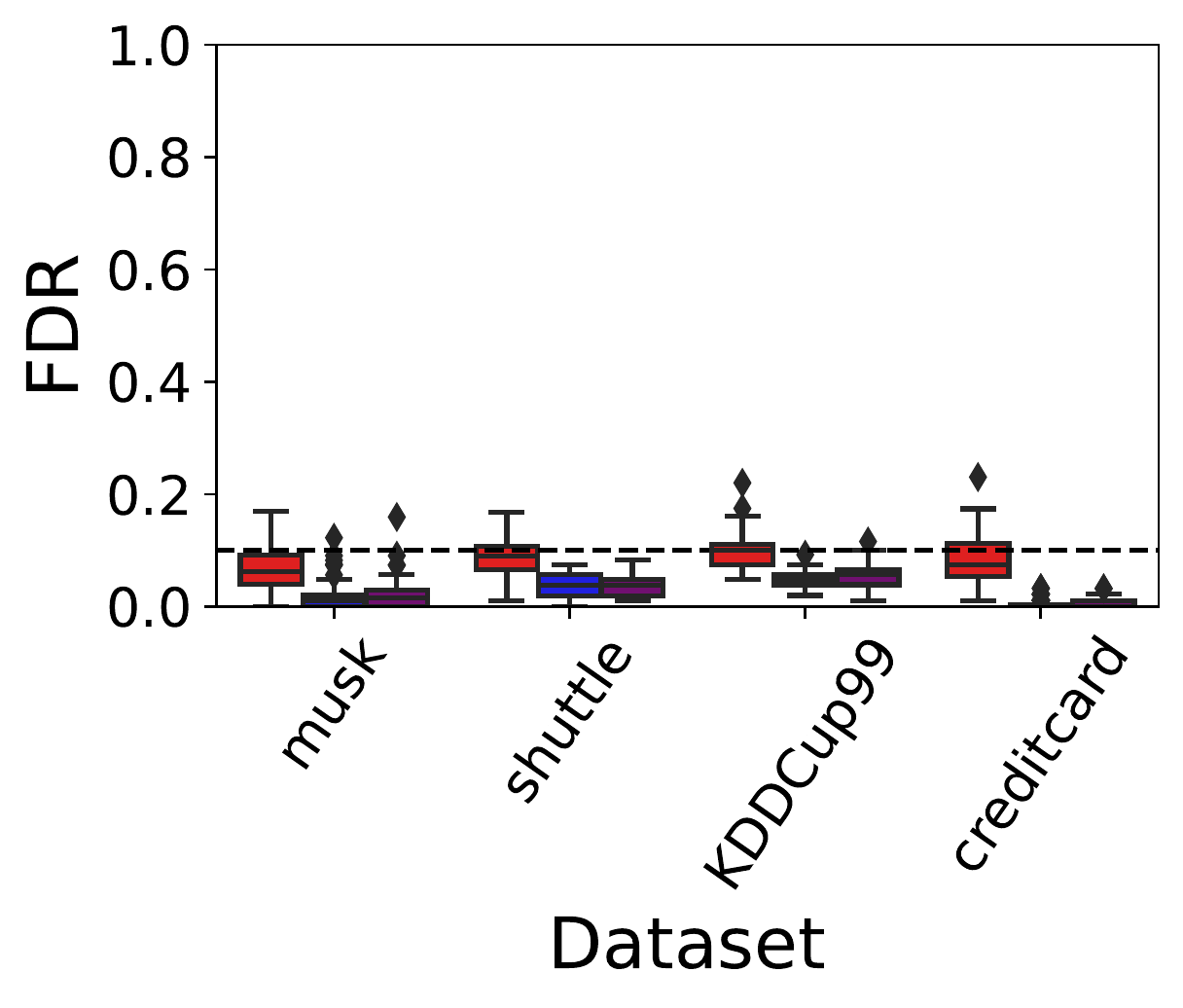}
\includegraphics[width=0.32\textwidth]{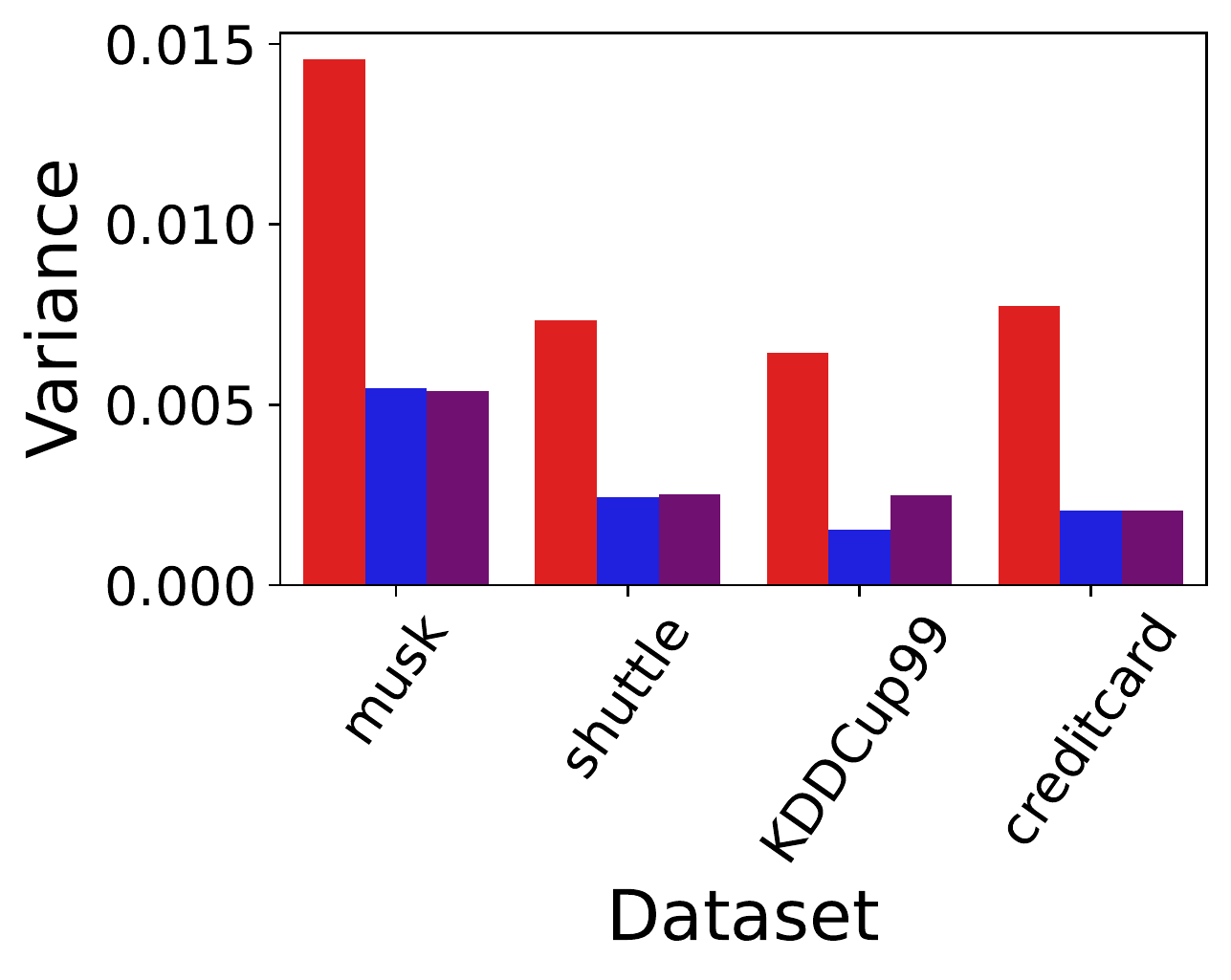}
\caption{$10\%$ outliers}
\label{app-fig:real-data-outliers-0.1-RF}
  \end{subfigure}
  \begin{subfigure}[b]{\textwidth}
\includegraphics[width=0.32\textwidth]{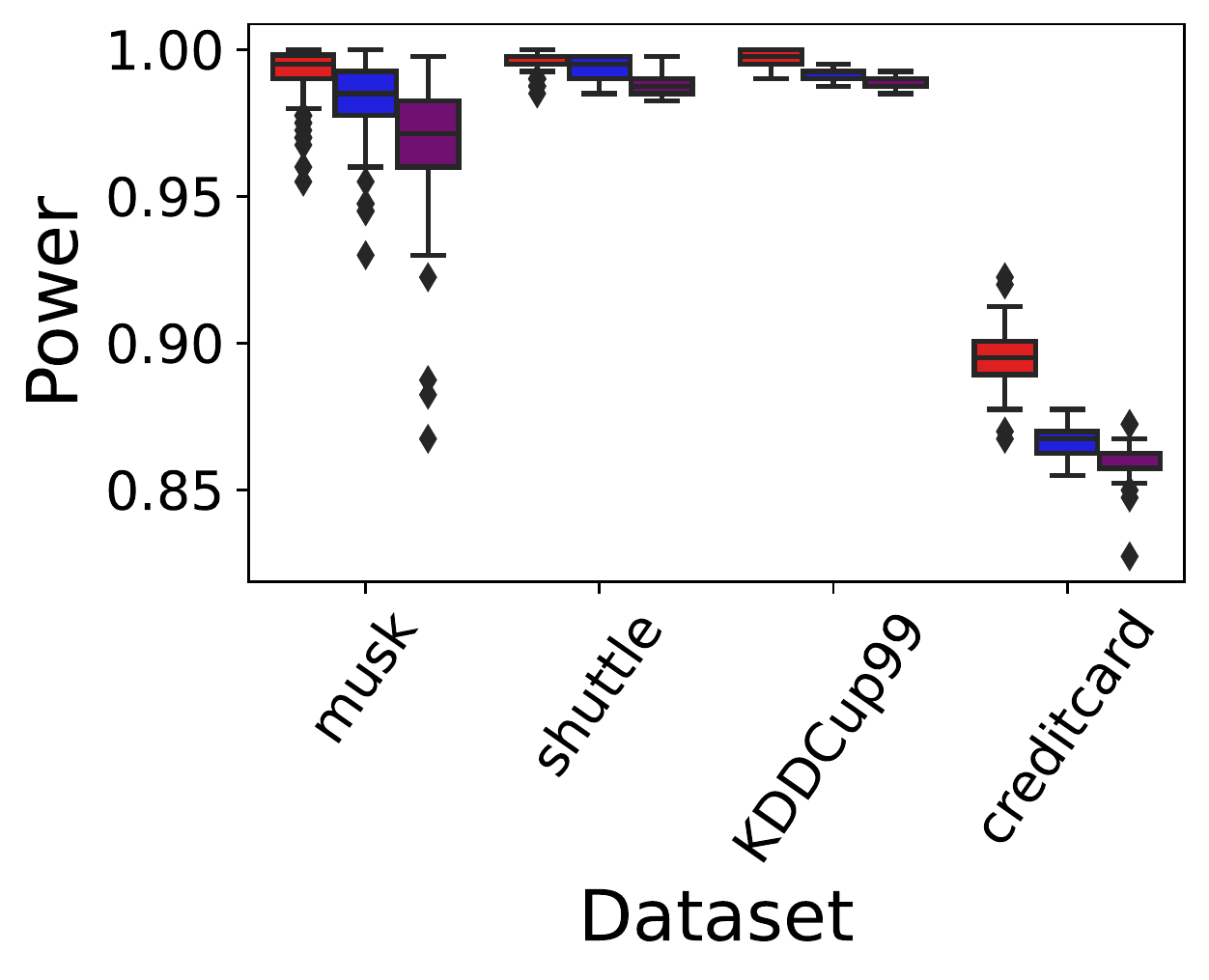}
\includegraphics[width=0.32\textwidth]{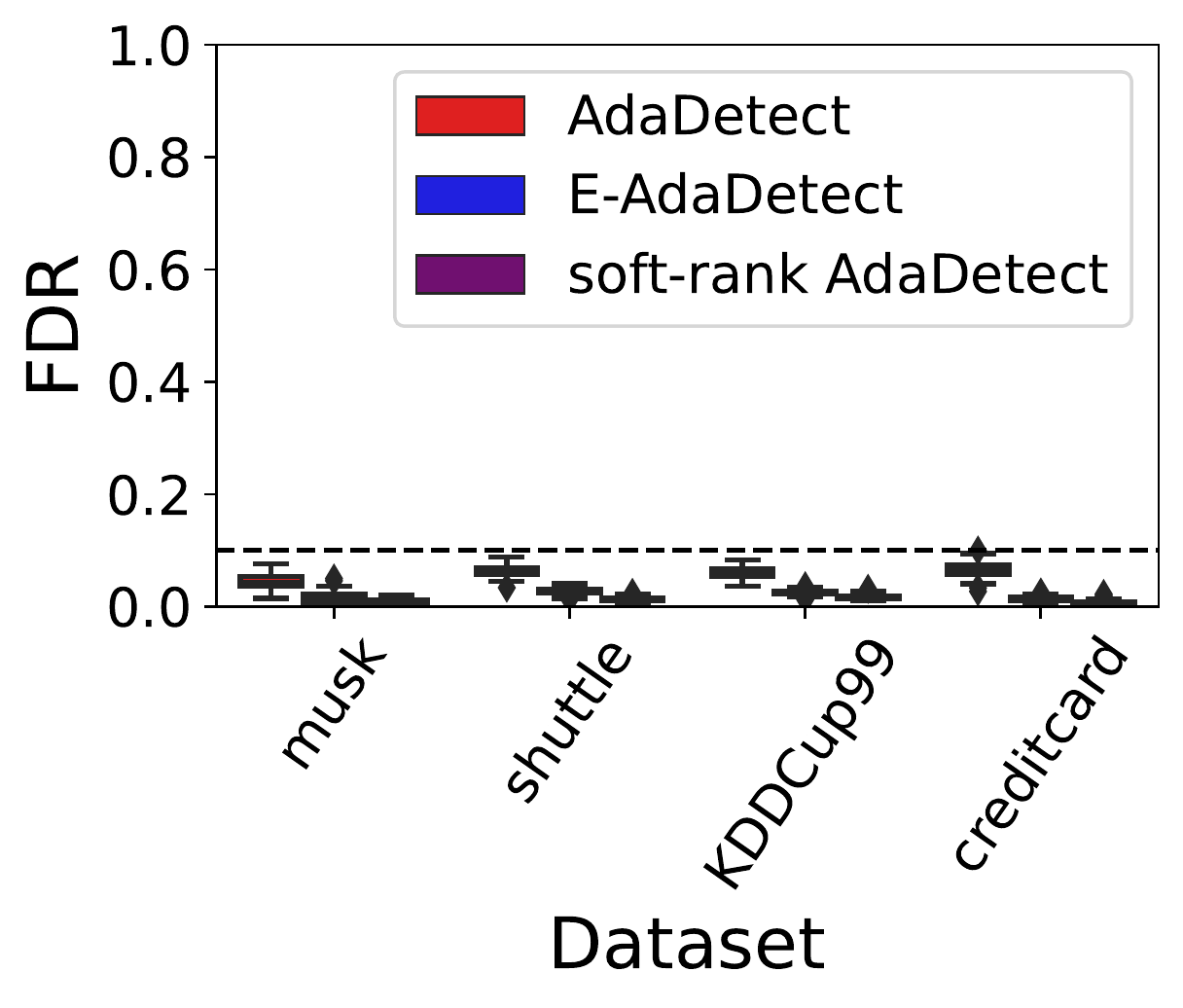}
\includegraphics[width=0.32\textwidth]{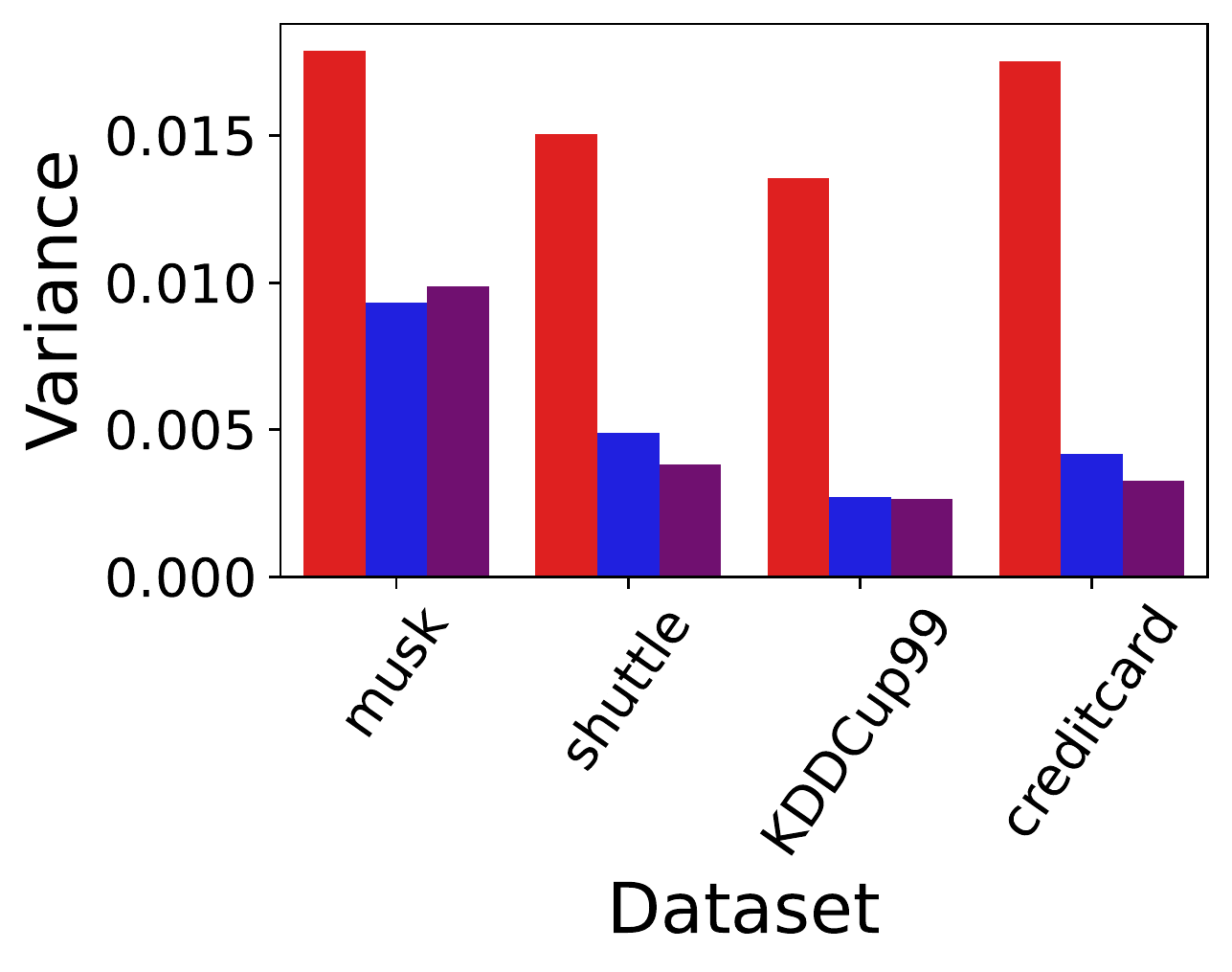}
\caption{$40\%$ outliers}
\label{app-fig:real-data-outliers-0.4-RF}
\end{subfigure}
  \caption{
    Performance on real data of \texttt{E-AdaDetect}, its randomized version, \texttt{AdaDetect}, and \texttt{soft-rank AdaDetect}. \ref{app-fig:real-data-outliers-0.1-RF} and \ref{app-fig:real-data-outliers-0.4-RF} present the performance of all methods for $10\%$ and $40\%$ outliers in the test-set, respectively.
All methods leverage a random forest binary classifier. Left: average proportion of true outliers that are discovered (higher is better). Right: variability of the findings (lower is better).
}
  \label{app-fig:real-data-soft_rank-RF}
  \end{figure*}

\begin{figure*}[!htb]
  \centering
  \begin{subfigure}[b]{\textwidth}
\includegraphics[width=0.32\textwidth]{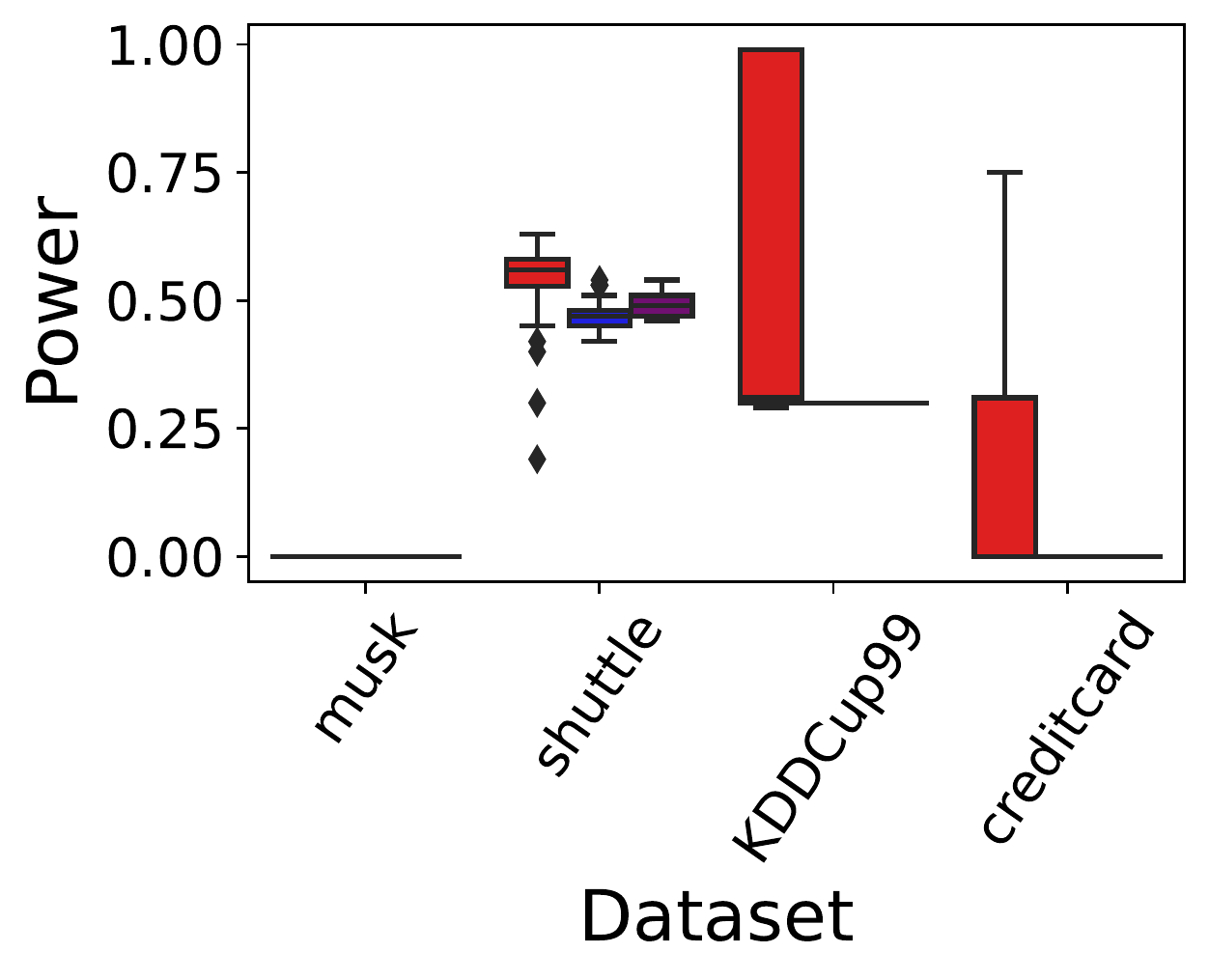}
\includegraphics[width=0.32\textwidth]{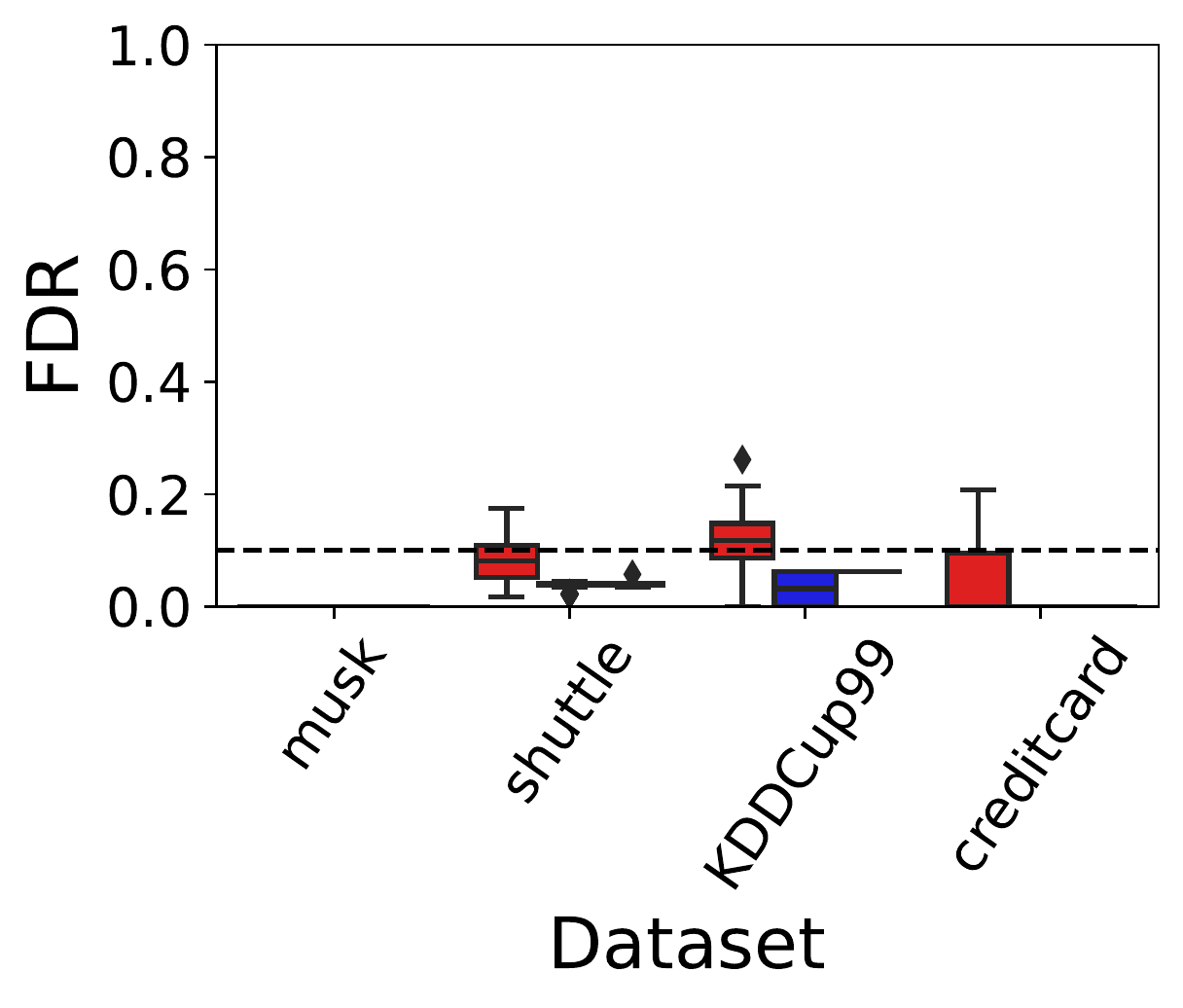}
\includegraphics[width=0.32\textwidth]{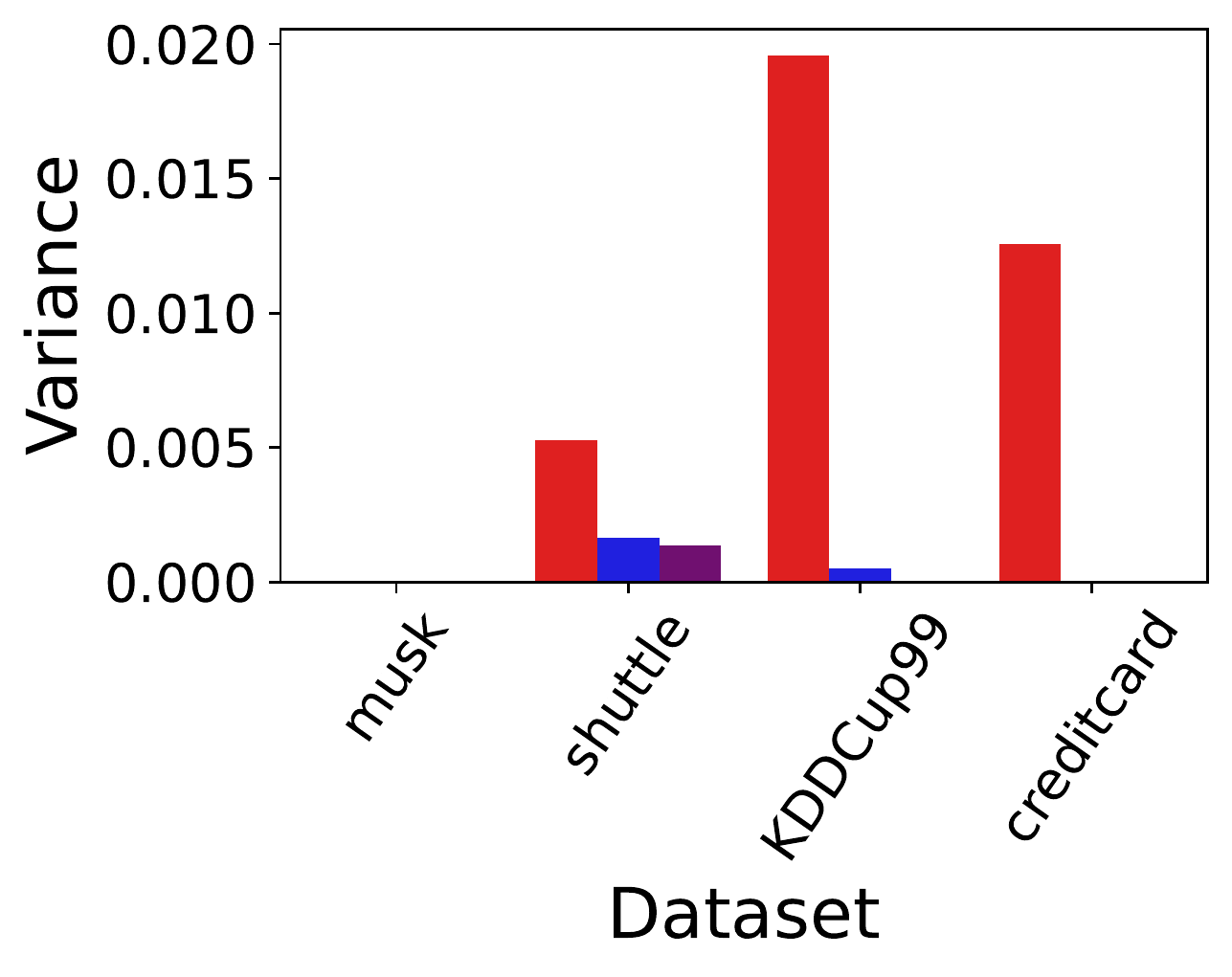}
\caption{$10\%$ outliers}
\label{app-fig:real-data-outliers-0.1-IF}
  \end{subfigure}
  \begin{subfigure}[b]{\textwidth}
\includegraphics[width=0.32\textwidth]{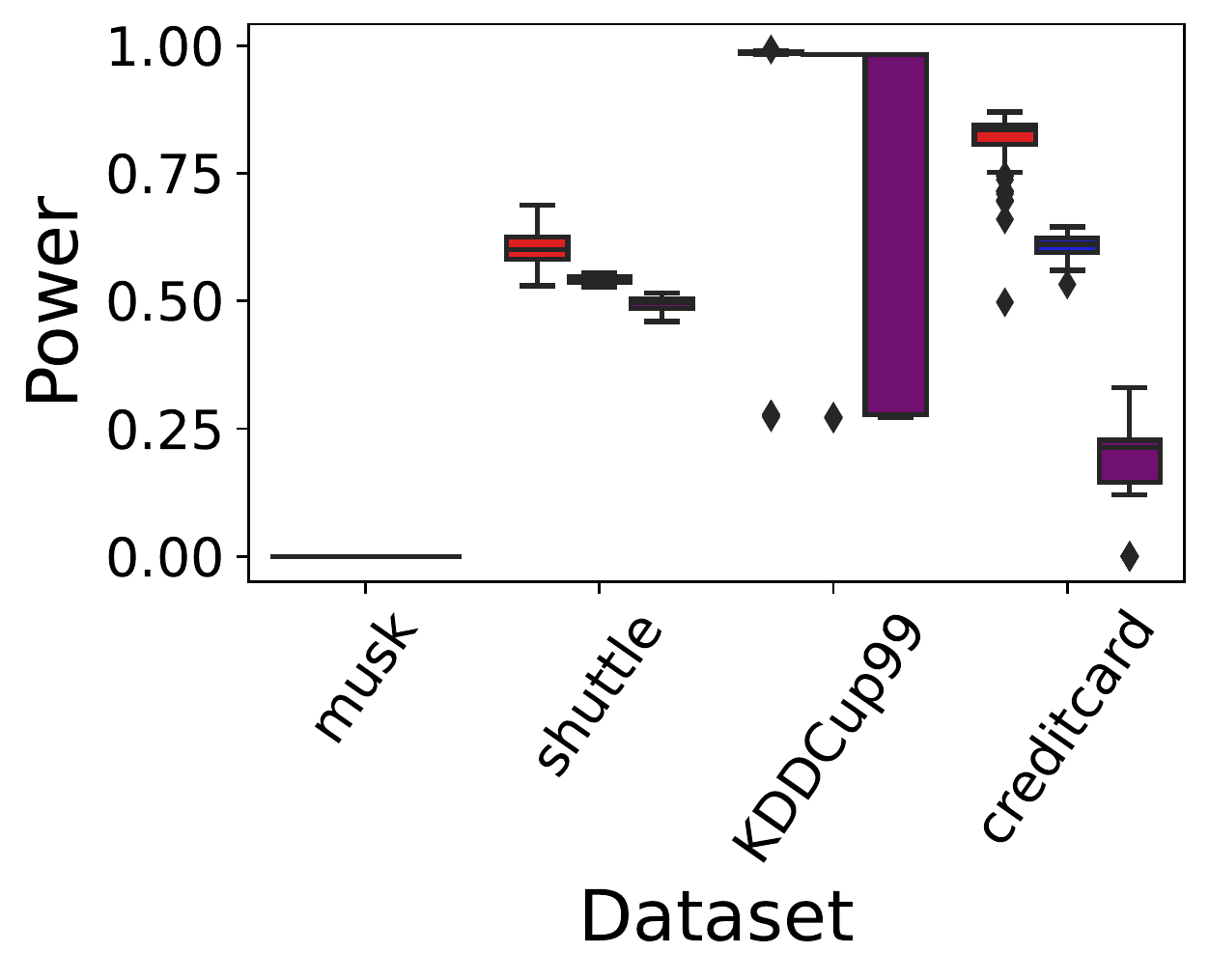}
\includegraphics[width=0.32\textwidth]{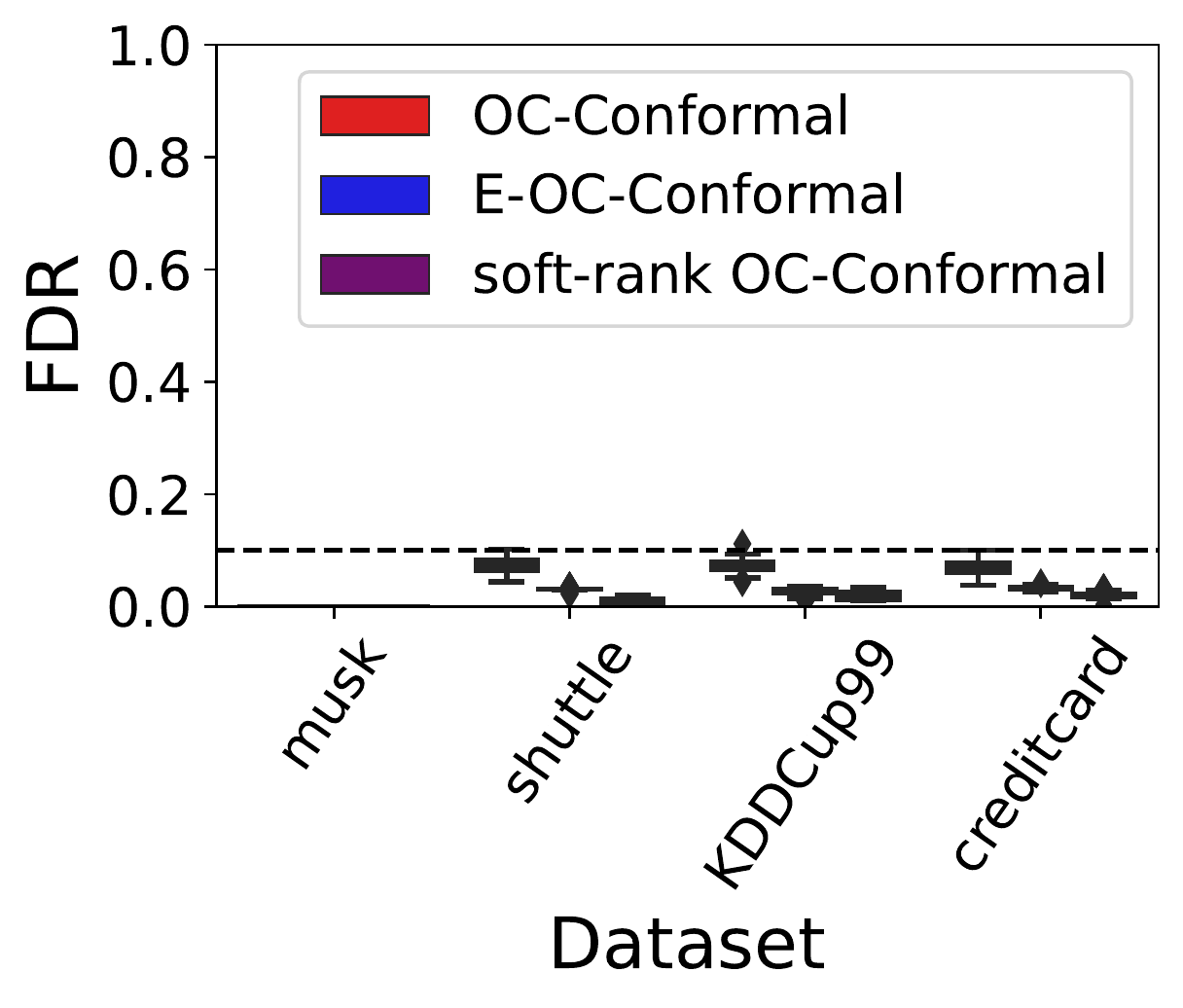}
\includegraphics[width=0.32\textwidth]{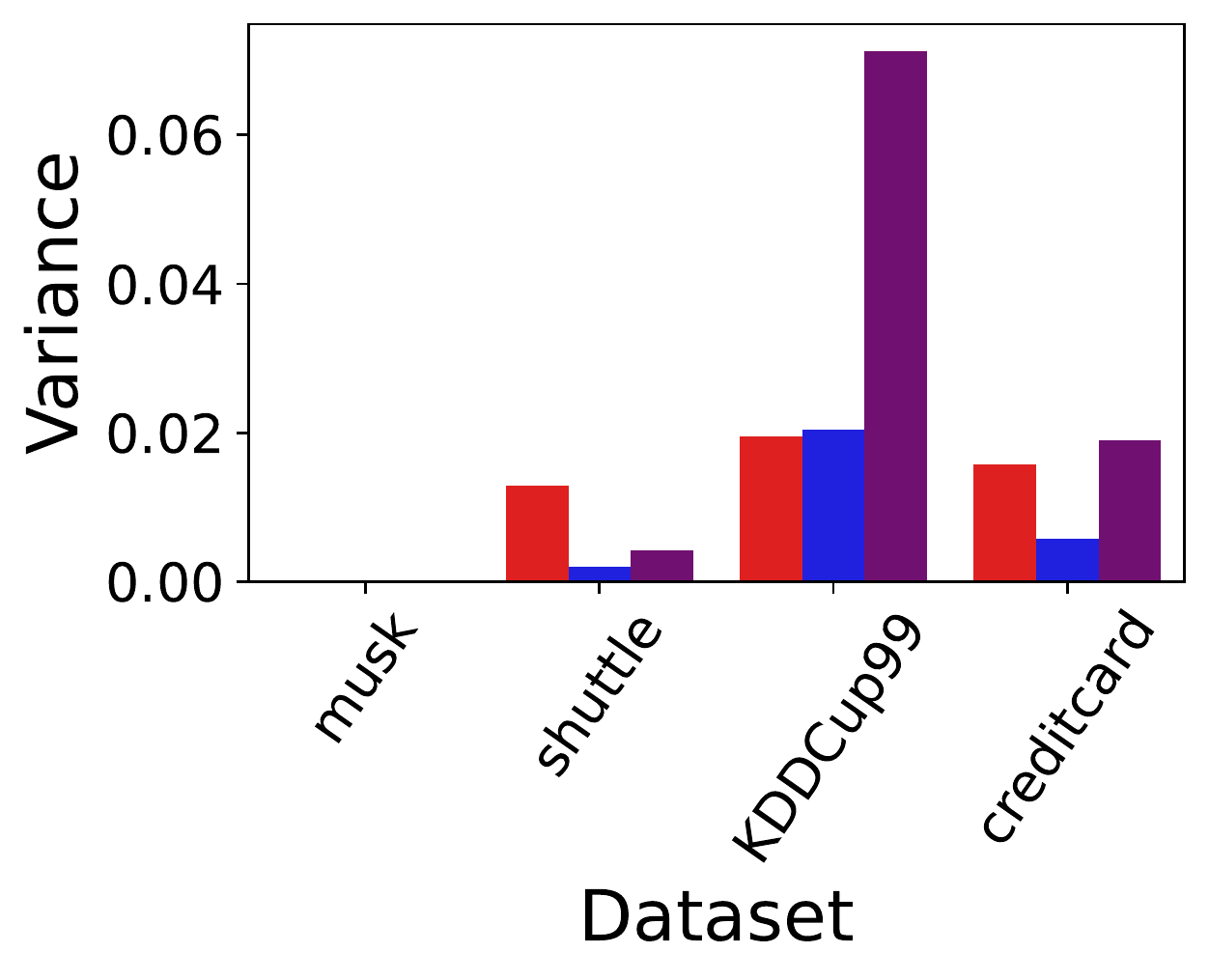}
\caption{$40\%$ outliers}
\label{app-fig:real-data-outliers-0.4-IF}
\end{subfigure}
  \caption{
    Performance on real data of \texttt{E-OC-Conformal}, its randomized version, \texttt{OC-Conformal}, and \texttt{soft-rank OC-Conformal}. \ref{app-fig:real-data-outliers-0.1-IF} and \ref{app-fig:real-data-outliers-0.4-IF} present the performance of all methods for $10\%$ and $40\%$ outliers in the test-set, respectively.
All methods leverage an isolation forest classifier. Left: average proportion of true outliers that are discovered (higher is better). Right: variability of the findings (lower is better).
}
  \label{app-fig:real-data-soft_rank-IF}
  \end{figure*}

\begin{figure}[!htb]
  \centering
  \begin{subfigure}[b]{\textwidth}
  \centering
\includegraphics[width=0.45\textwidth]{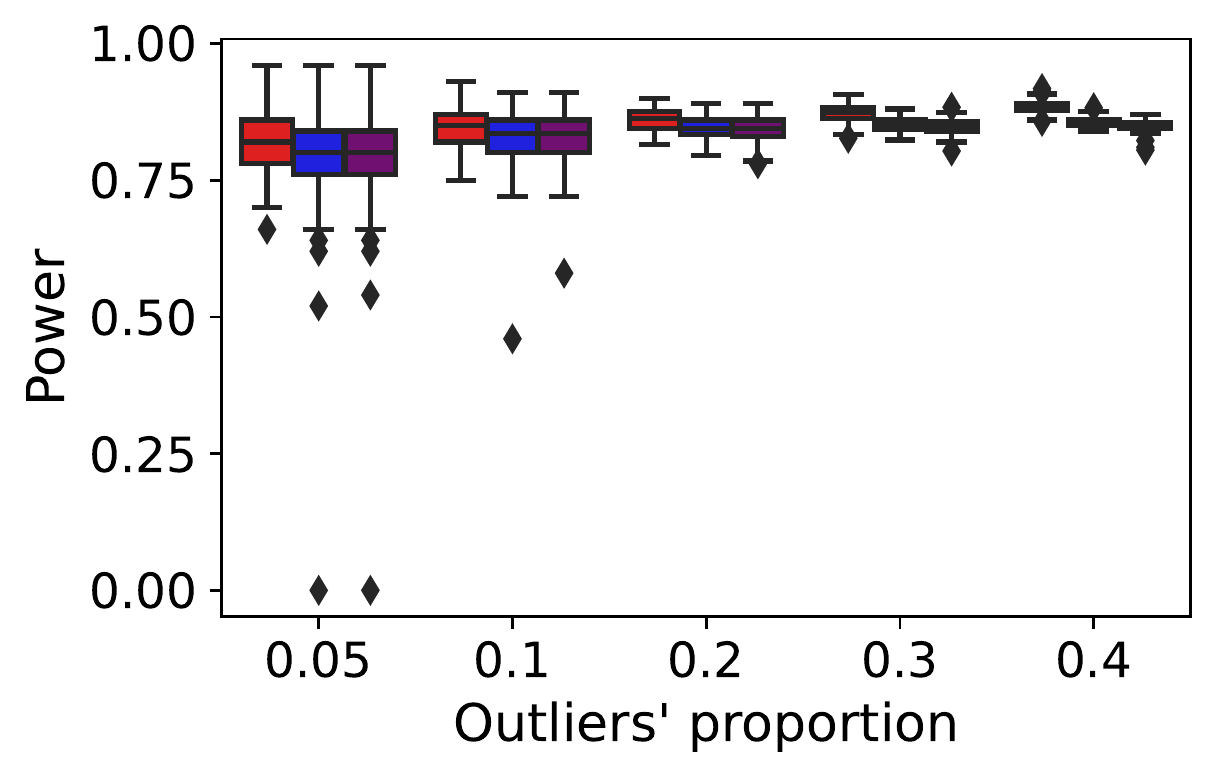}
\includegraphics[width=0.45\textwidth]{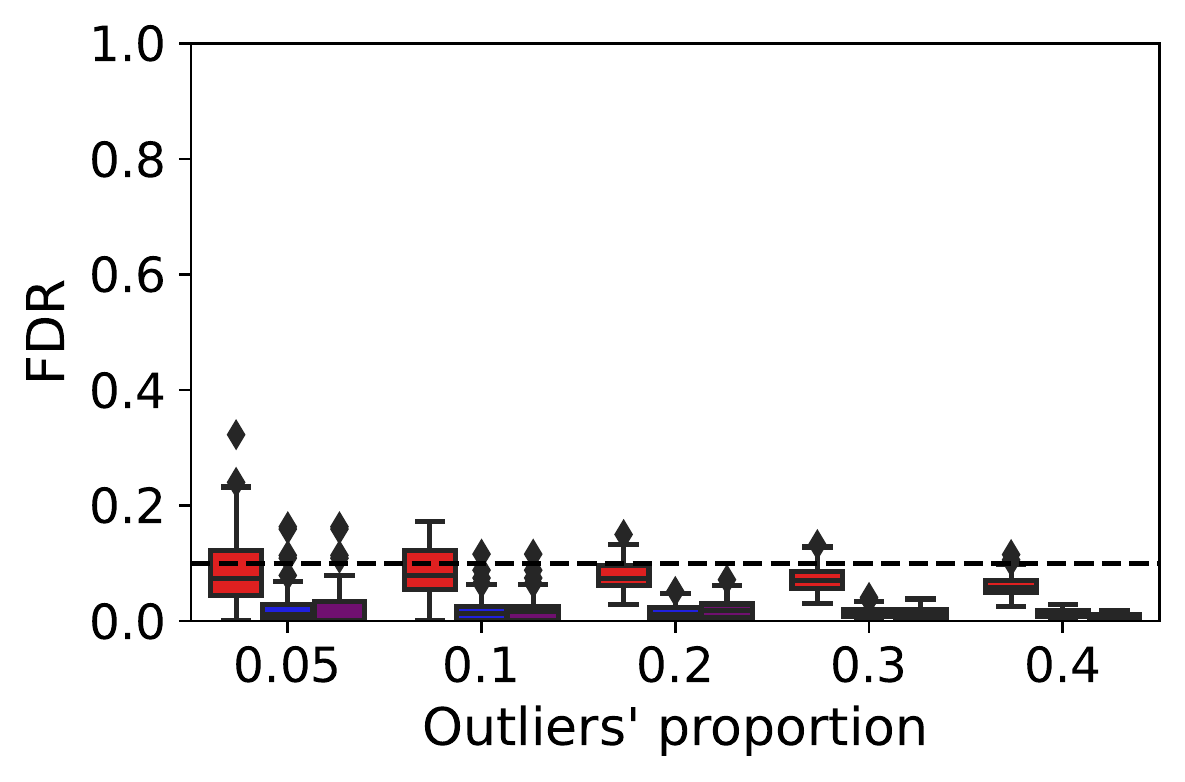}
\caption{creditcard}
\end{subfigure}
  \begin{subfigure}[b]{\textwidth}
  \centering
\includegraphics[width=0.45\textwidth]{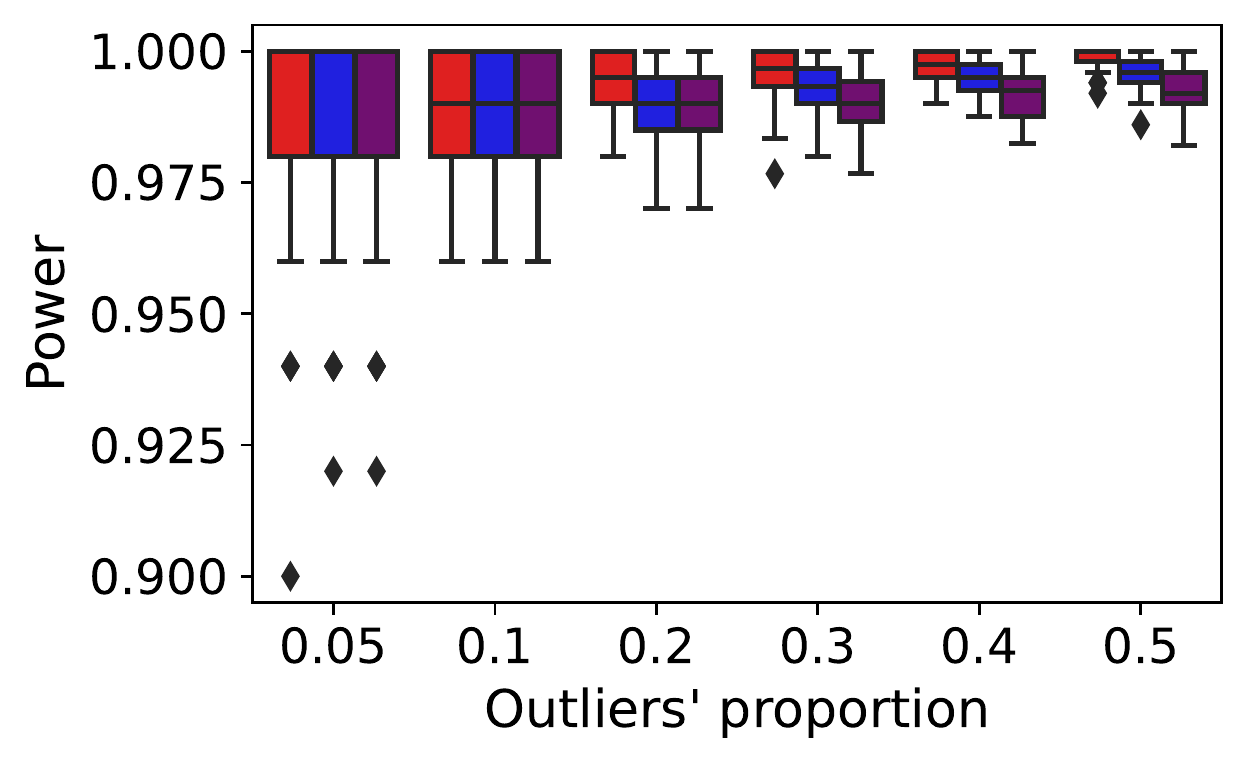}
\includegraphics[width=0.45\textwidth]{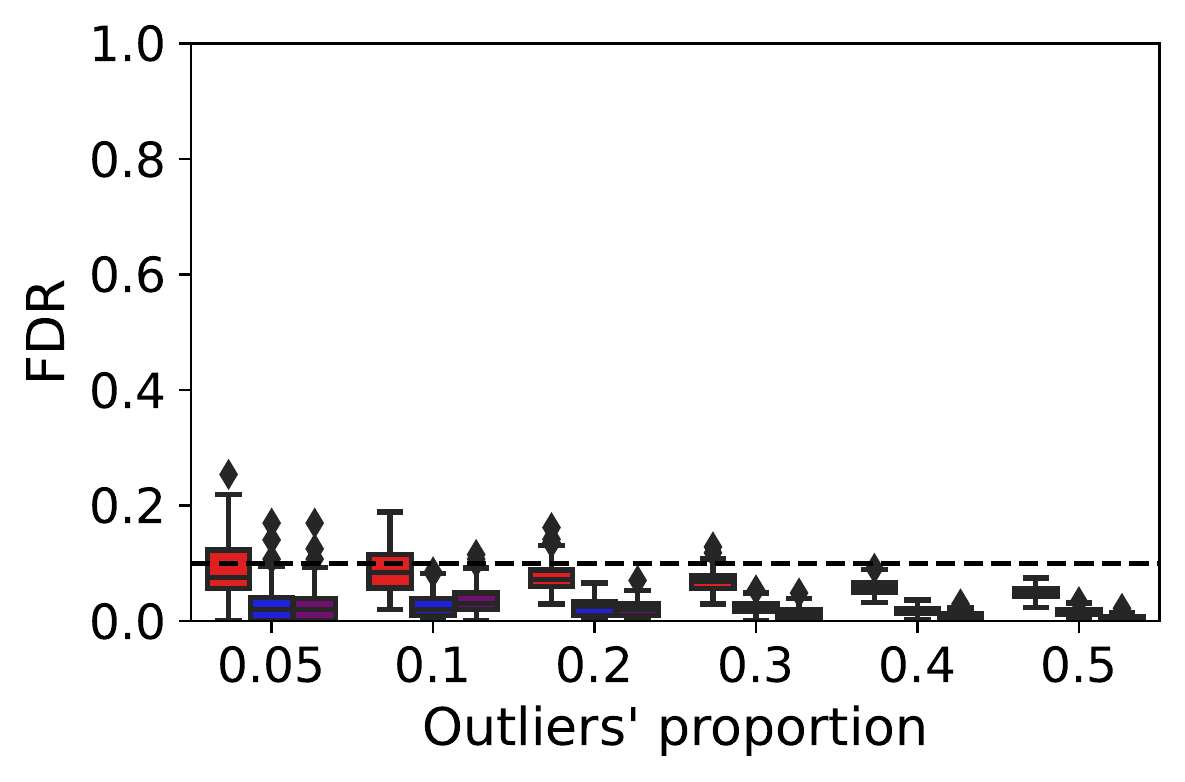}
\caption{KDDCup99}
\end{subfigure}
\\
  \begin{subfigure}[b]{\textwidth}
  \centering
\includegraphics[width=0.45\textwidth]{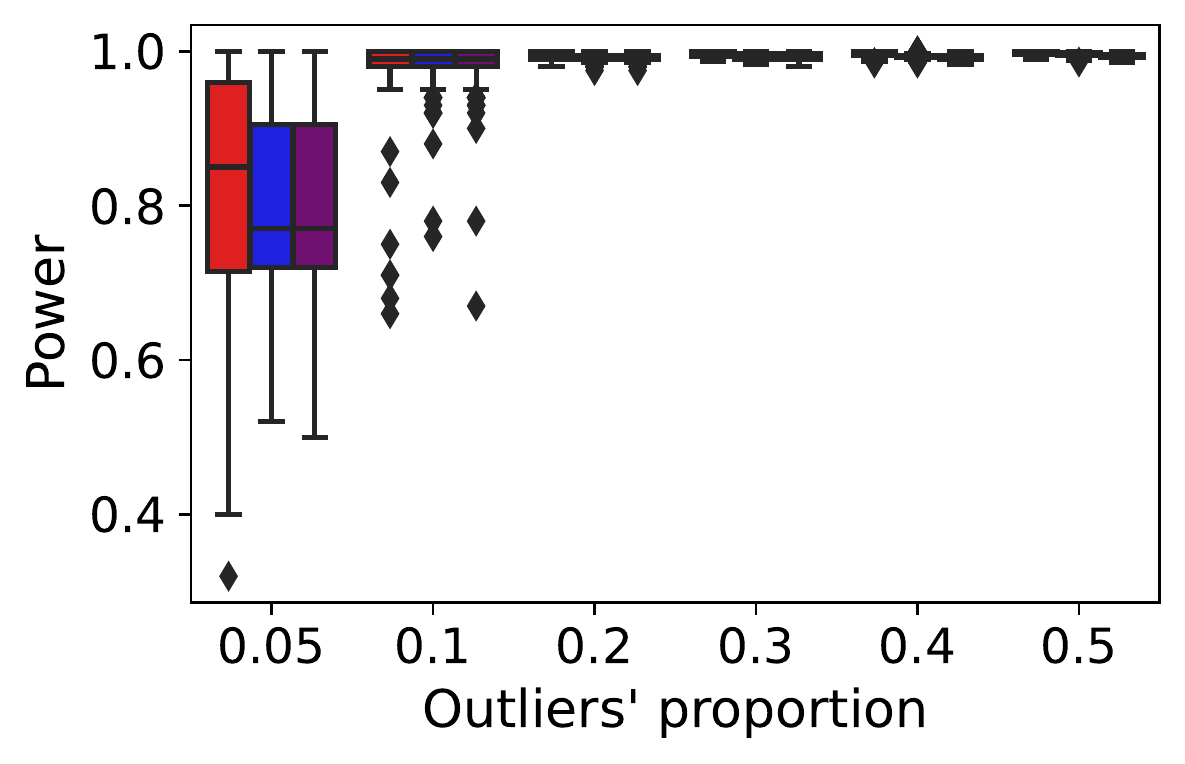}
\includegraphics[width=0.45\textwidth]{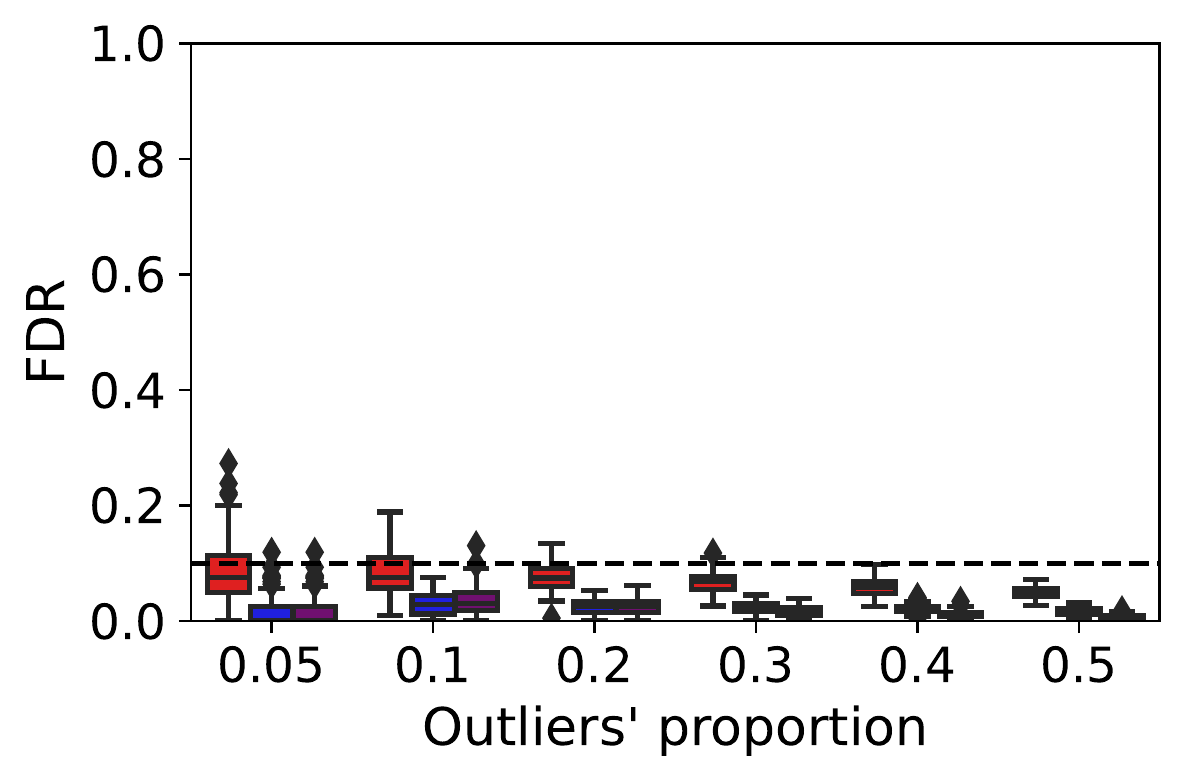}
\caption{shuttle}
\end{subfigure}
  \begin{subfigure}[b]{\textwidth}
  \centering
\includegraphics[width=0.45\textwidth]{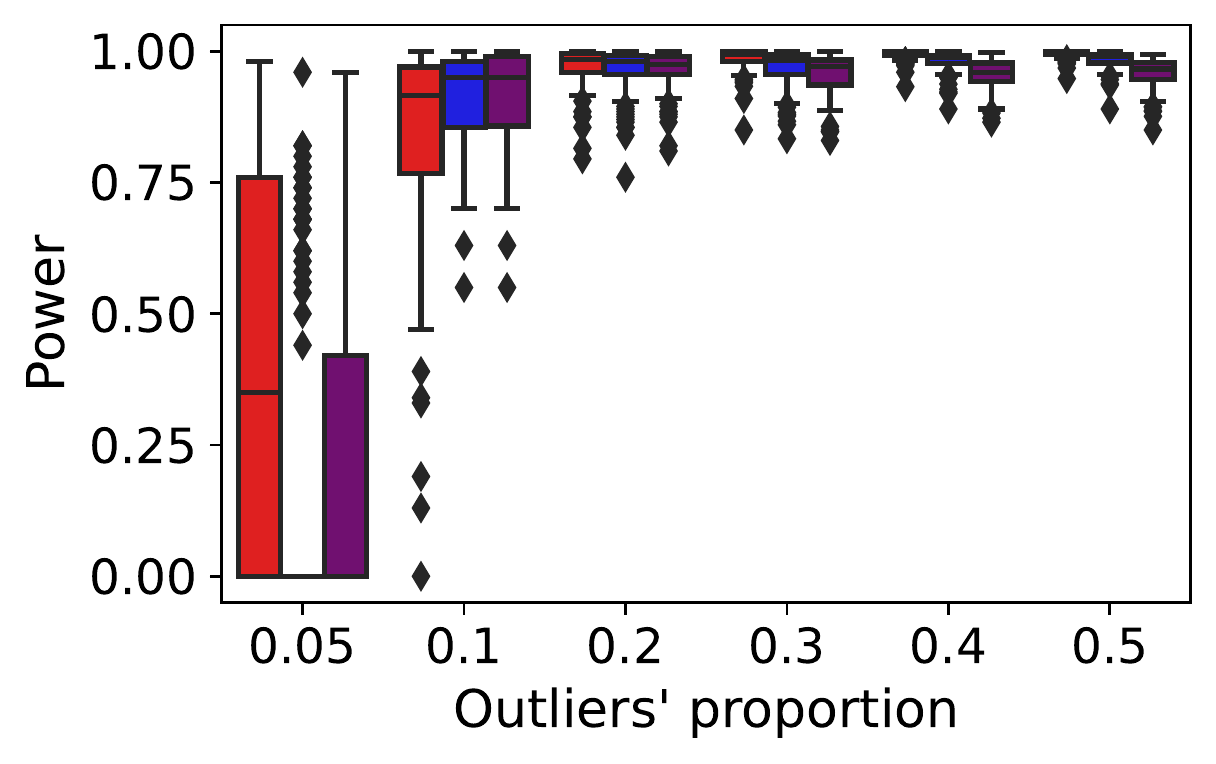}
\includegraphics[width=0.45\textwidth]{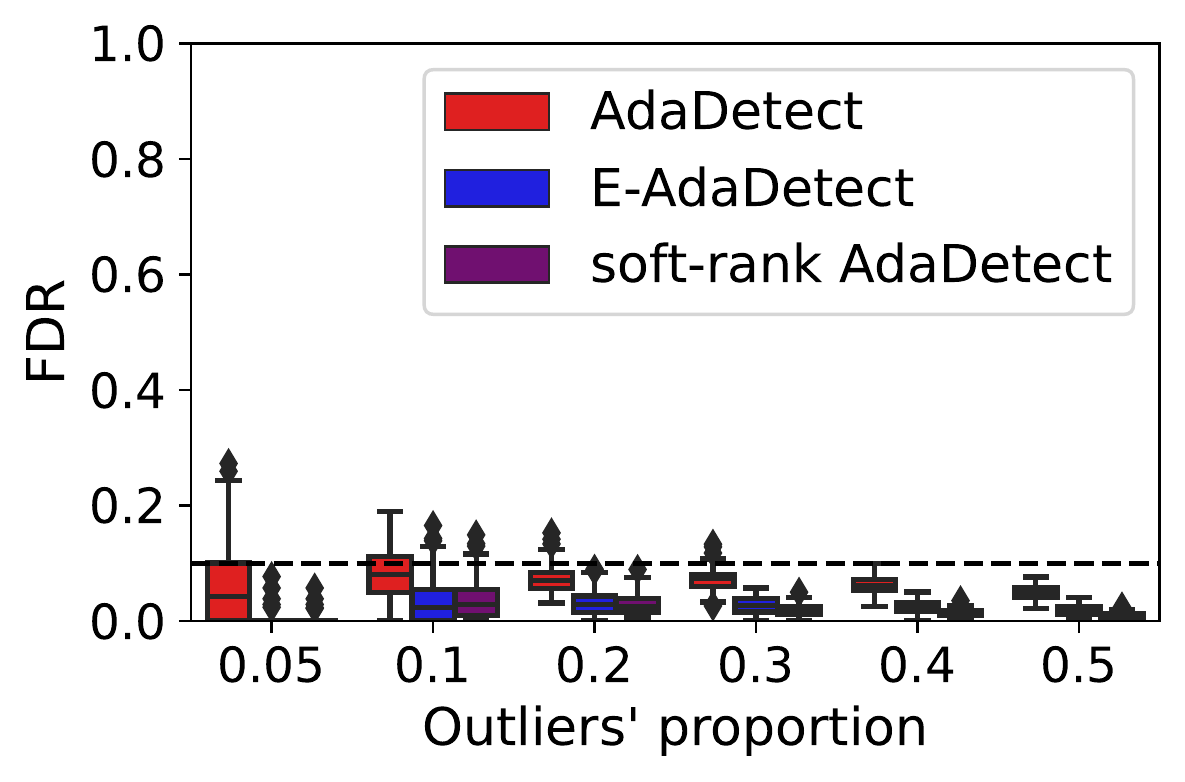}
\caption{musk}
\end{subfigure}
  \caption{Performance on real data of \texttt{E-AdaDetect}, its randomized version, \texttt{AdaDetect}, and \texttt{soft-rank AdaDetect} as a function of the outliers proportion in the test-set. Each sub-figure corresponds to a different dataset.
All methods leverage a random forest binary classifier. The results are averaged over 100 independent realizations of the data, which are randomly subsampled from the raw data sources. Other details are as in Figure~\ref{app-fig:real-data-soft_rank-RF}.}
  \label{app-fig:real-data-outliers-proportion-AdaDetect-RF}
\end{figure}

\begin{figure}[!htb]
  \centering
  \begin{subfigure}[b]{\textwidth}
  \centering
\includegraphics[width=0.45\textwidth]{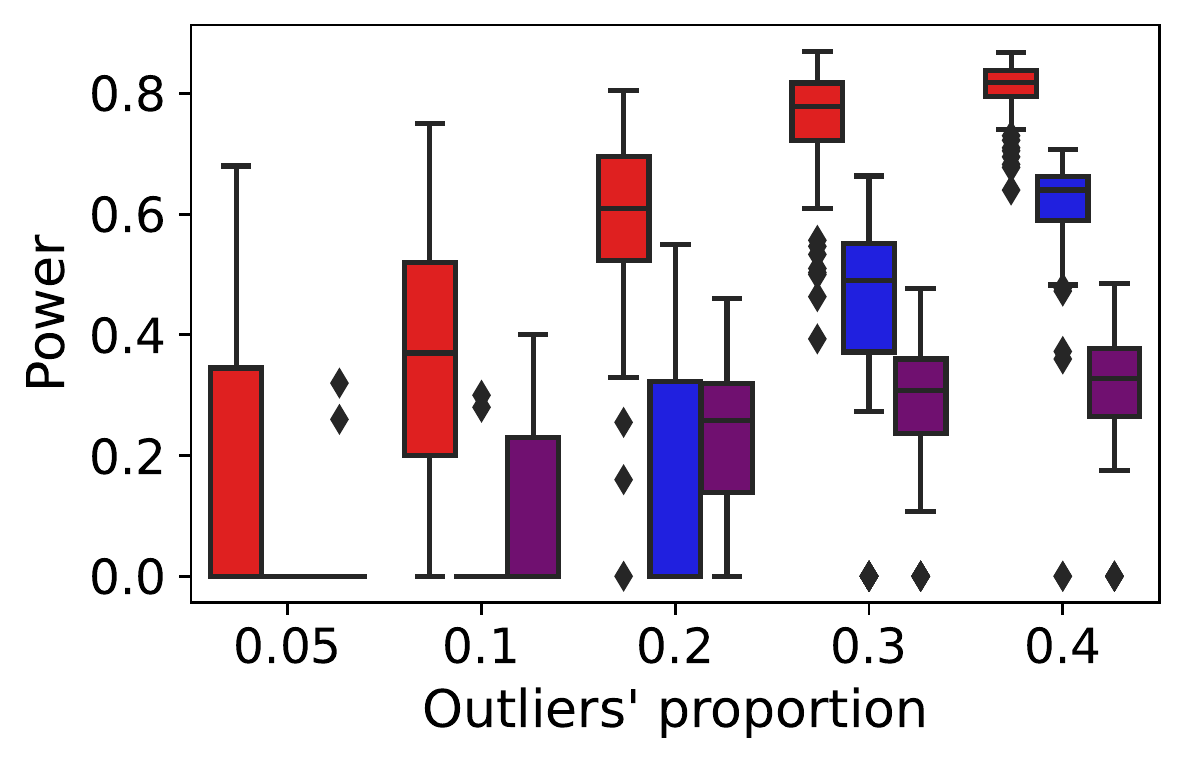}
\includegraphics[width=0.45\textwidth]{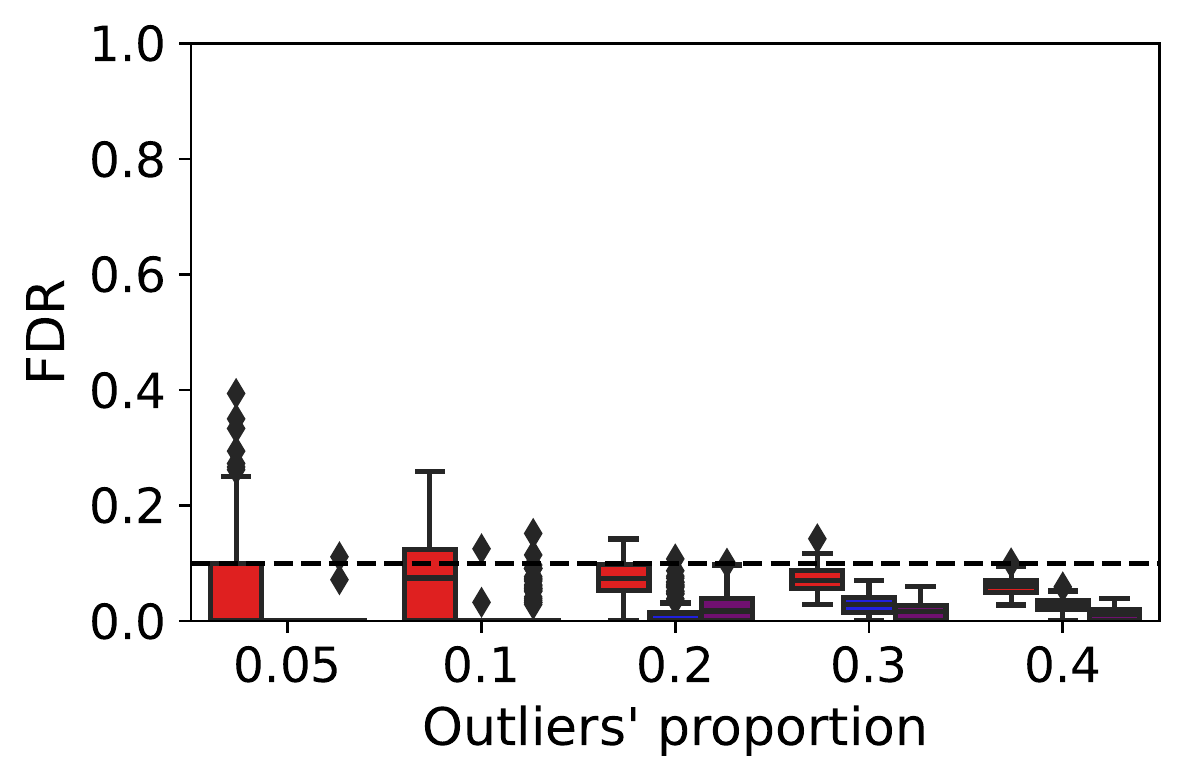}
\caption{creditcard}
\end{subfigure}
  \begin{subfigure}[b]{\textwidth}
  \centering
\includegraphics[width=0.45\textwidth]{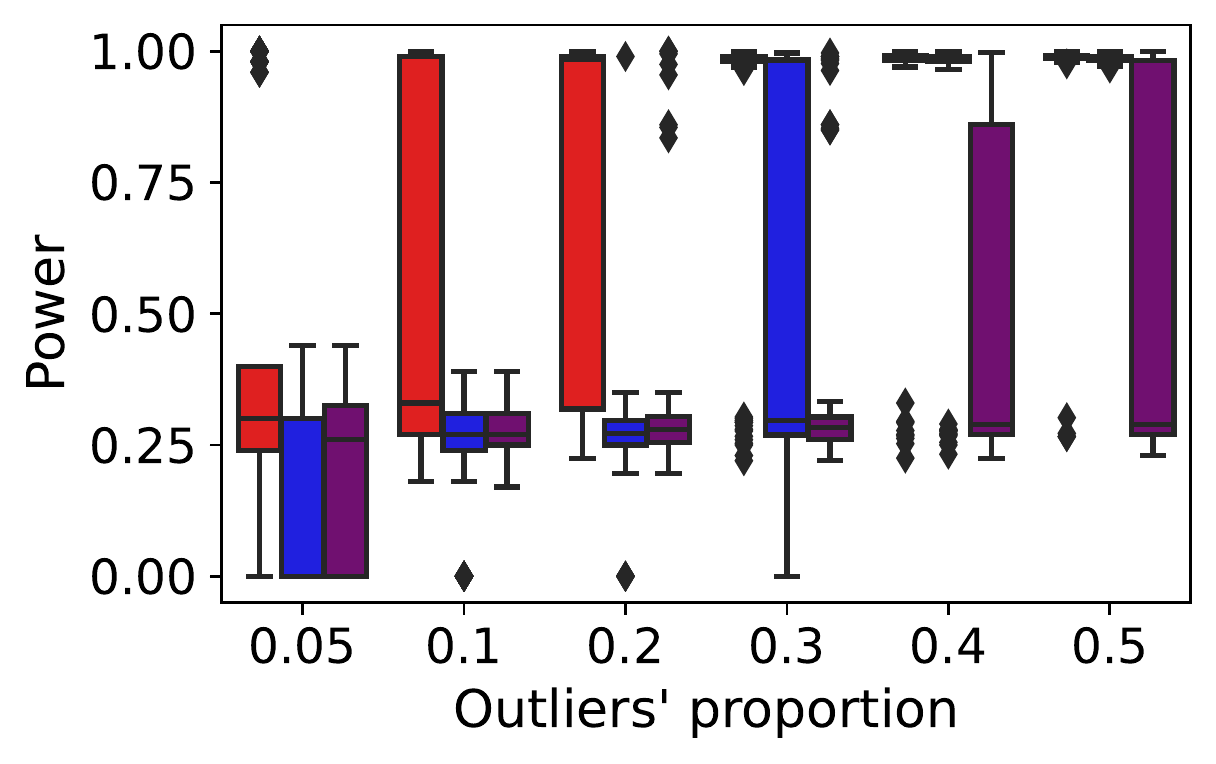}
\includegraphics[width=0.45\textwidth]{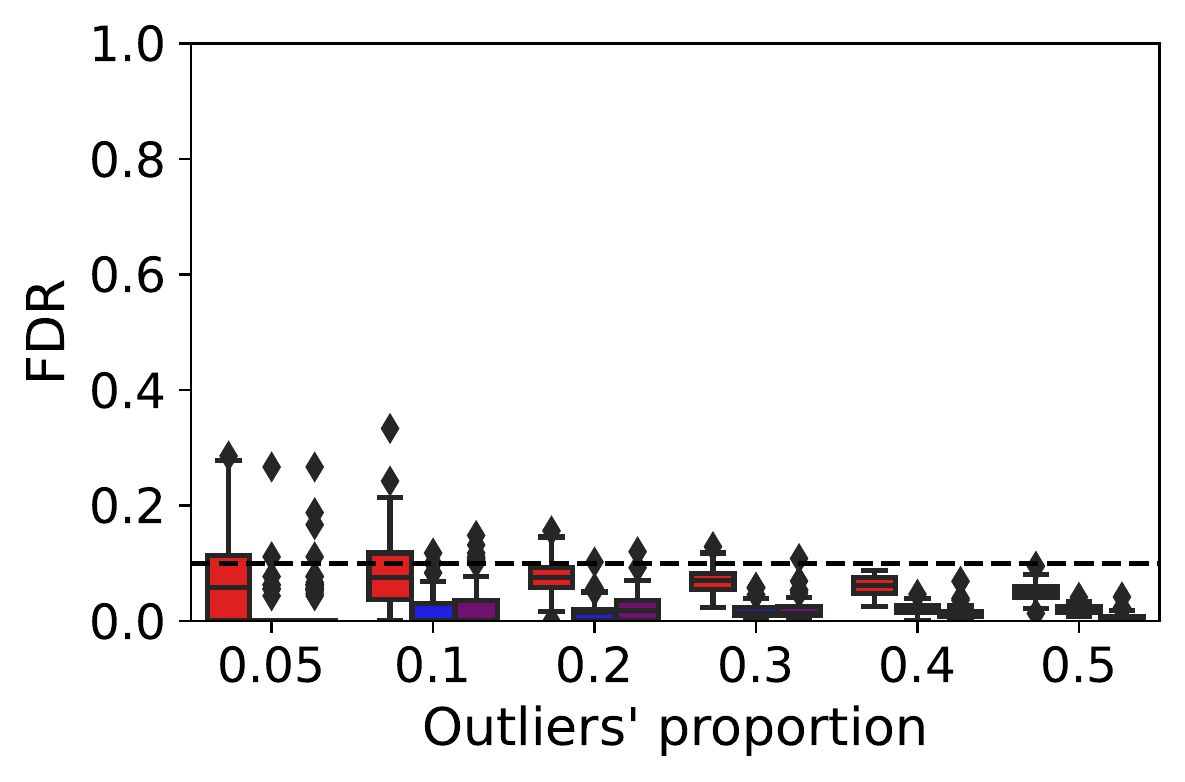}
\caption{KDDCup99}
\end{subfigure}
  \begin{subfigure}[b]{\textwidth}
  \centering
\includegraphics[width=0.45\textwidth]{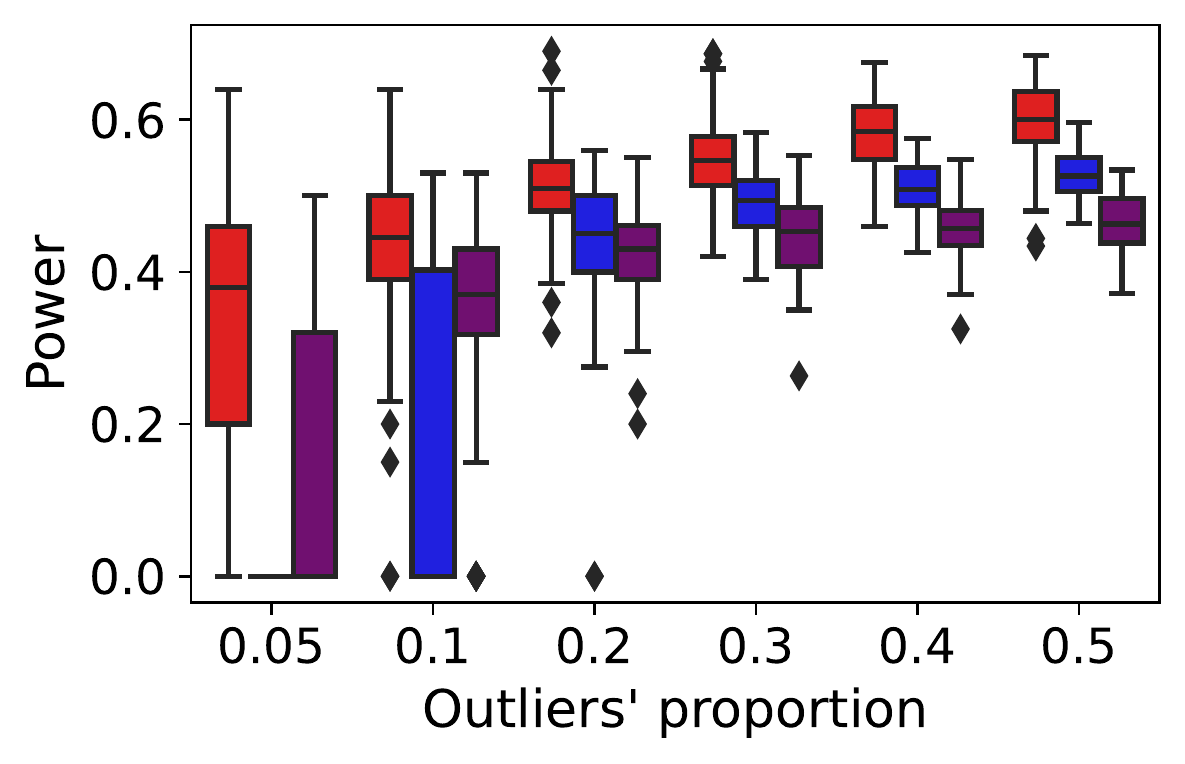}
\includegraphics[width=0.45\textwidth]{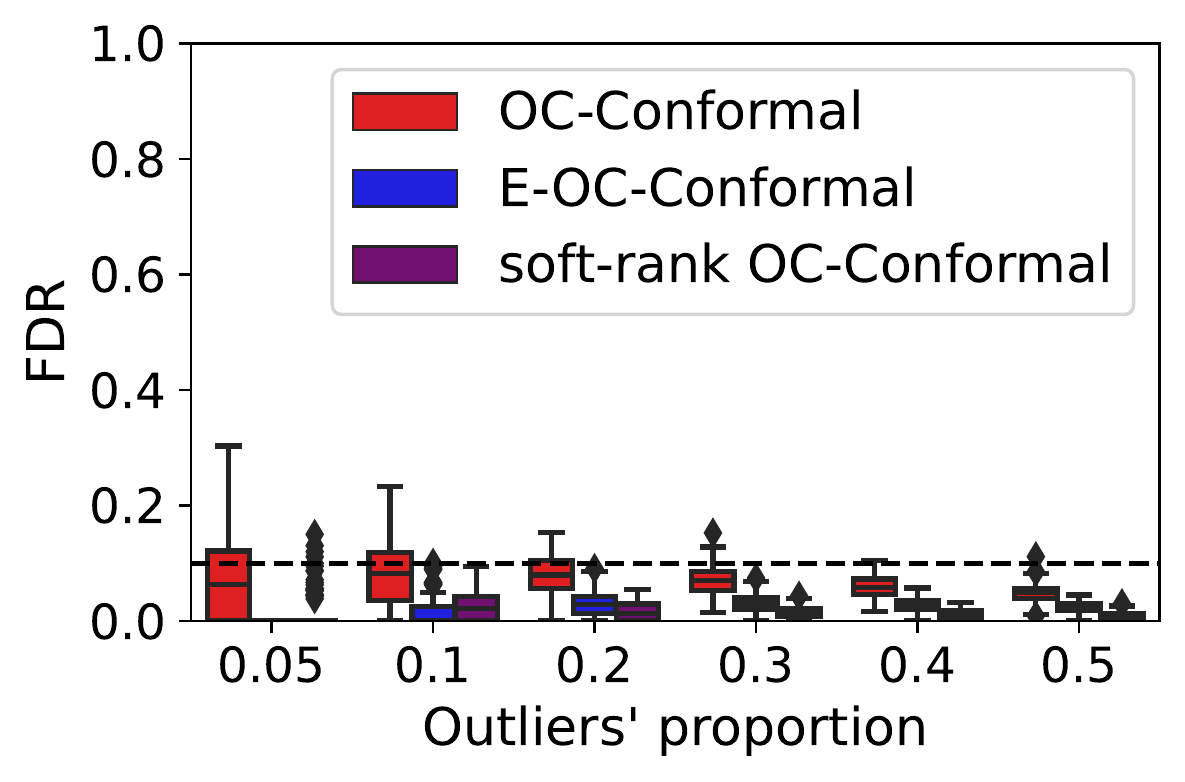}
\caption{shuttle}
\end{subfigure}
  \caption{Performance on real data of \texttt{E-OC-Conformal}, its randomized version, \texttt{OC-Conformal}, and \texttt{soft-rank OC-Conformal} as a function of the outliers proportion in the test-set. Each sub-figure corresponds to a different dataset. The power obtained for musk dataset is zero for all methods and thus is not shown.
All methods leverage an isolation forest classifier. The results are averaged over 100 independent realizations of the data, which are randomly subsampled from the raw data sources. Other details are as in Figure~\ref{app-fig:real-data-soft_rank-RF}.
  }
  \label{app-fig:real-data-outliers-proportion-OC-Conformal}
  \end{figure}



\end{document}